\documentclass{article}

\usepackage{iftex}
\ifPDFTeX\else
  \PackageError{icml2026}{ICML template expects pdfLaTeX (Times Type-1 fonts)}{Compile with pdflatex to avoid font substitutions.}
\fi

\usepackage{microtype}
\usepackage{graphicx}
\usepackage{subcaption}
\usepackage{booktabs} 
\usepackage{float}
\usepackage{hyperref}
\usepackage{url}
\usepackage{algorithm}
\usepackage{algorithmic}

\usepackage[accepted]{icml2026}

\fussy
\newcommand{\needspace}[1]{%
  \par\begingroup
  \dimen0=\pagegoal
  \advance\dimen0 by -\pagetotal
  \ifdim\dimen0<#1
    \newpage
  \fi
  \endgroup
}

\usepackage{amsmath}
\usepackage{amssymb}
\usepackage{mathtools}
\usepackage{amsthm}

\usepackage[capitalize,noabbrev]{cleveref}

\theoremstyle{plain}
\newtheorem{theorem}{Theorem}[section]
\newtheorem{proposition}[theorem]{Proposition}
\newtheorem{lemma}[theorem]{Lemma}
\newtheorem{corollary}[theorem]{Corollary}

\theoremstyle{definition}
\newtheorem{definition}[theorem]{Definition}
\newtheorem{assumption}[theorem]{Assumption}

\theoremstyle{remark}
\newtheorem{remark}[theorem]{Remark}

\makeatletter
\newif\if@balancecols
\def\balancecols{\global\@balancecolstrue}
\def\balance@columns{%
  \setbox\@outputbox\vbox{%
    \unvbox\@leftcolumn
    \unvbox\@outputbox
  }%
  \dimen@\ht\@outputbox
  \advance\dimen@\dp\@outputbox
  \divide\dimen@\tw@
  \splittopskip\topskip
  \splitmaxdepth\maxdepth
  \setbox\@leftcolumn=\vsplit\@outputbox to\dimen@
  \setbox\@leftcolumn=\vbox to\dimen@{\unvbox\@leftcolumn}%
  \setbox\@outputbox=\vbox to\dimen@{\unvbox\@outputbox}%
}
\let\orig@outputdblcol\@outputdblcol
\def\@outputdblcol{%
  \if@balancecols
    \if@firstcolumn
      \orig@outputdblcol
    \else
      \balance@columns
      \global\@balancecolsfalse
      \orig@outputdblcol
    \fi
  \else
    \orig@outputdblcol
  \fi
}
\makeatother

\icmltitlerunning{The Geometry of Updates: Fisher Alignment at Vocabulary Scale}

\begin{document}

\twocolumn[
  \icmltitle{The Geometry of Updates: Fisher Alignment at Vocabulary Scale}

  \begin{icmlauthorlist}
    \icmlauthor{John Sweeney}{inst}
  \end{icmlauthorlist}

  \icmlaffiliation{inst}{Sideplane AI}
  \icmlcorrespondingauthor{John Sweeney}{john.sweeney@sideplane.ai}

  \icmlkeywords{Fisher information, gradient alignment, neural network geometry, transfer learning, representation similarity, CKA, error covariance}

  \vskip 0.3in
]

\printAffiliationsAndNotice{\raggedright}

\begin{abstract}
Training-free source selection for LLM families with shared vocabularies arises in scientific string domains such as SMILES, protein, and genomic sequences, where candidate corpora share a tokenizer but differ in prediction targets.
This creates an activation-dark regime: representation-similarity metrics can be uninformative without assumptions about label-conditioned error geometry, while classical update-geometry metrics are computationally prohibitive at vocabulary scale.
We show that, in a shared-output head setting, representation metrics (e.g., CKA) are non-identifiable for transfer; models can share identical representations yet have orthogonal head updates.
The key identity is that head Fisher alignment is exactly a cosine between kernel mean embeddings in the joint activation-error space, exposing activation, error, and coupling factors rather than requiring a materialized Fisher matrix.
FisherSketch estimates this cosine directly in a single streaming pass, making $K{=}128{,}256$ head Fisher alignment practical with a 16\,KB task signature ($m{=}4096$) and a 192\,KB per-task streaming state--small enough to store next to a model hash, but encoding transfer-relevant update structure.
Beyond source selection, the same signatures and marginals provide a diagnostic instrument for studying whether LLM task similarity is driven by activations, errors, or their coupling; shared-parameter and internal-layer validations, together with Llama-3.1-8B verbalizer-shift experiments, show that FisherSketch remains informative when activation similarity cannot distinguish tasks.
\end{abstract}


\newcommand{\para}[1]{\textbf{#1}}
\newcommand{\headlevel}{head\mbox{-}level}

\section{Introduction}

\para{Activation-dark source selection.}
Which source data should adapt a shared-vocabulary LLM for a target string domain such as SMILES, proteins, or genomics?
This is the operational role of finite-label, label-aware transferability scores such as LogME/LEEP in small-output settings, but their standard forms are not direct plug-ins at $K\!\sim\!10^5$: they require label maps or task-specific class statistics rather than a streaming estimate of joint activation--error geometry.
The hard regime is \emph{activation dark}: tasks share tokens and frozen activations, yet label-conditioned errors point in different update directions.

\para{Transfer is geometry in update space.}
Transfer learning is a counterfactual question: given a pretrained model, how will its parameters move when optimized for a new objective?
Despite being counterfactual, transfer prediction often uses representation-similarity scores (e.g., CKA, RSA, and SVCCA) computed from frozen activations on a probe set.
These metrics implicitly assume that models (or tasks) that ``see'' the world similarly will also update similarly.
Our results show that this assumption can fail in a \headlevel{} setting unless one makes additional assumptions about error geometry or label structure.

\para{FisherSketch: a telescope for update geometry at vocabulary scale.}
We study transfer within shared-parameter families and quantify update similarity using Fisher alignment.
The central observation is that head Fisher alignment is an ordinary cosine after the right lift: Theorem~\ref{thm:three-factor} rewrites it as the cosine between kernel mean embeddings of joint $(a,e)$ features.
This turns a Fisher-matrix comparison into a distributional embedding problem while retaining activation geometry, error geometry, and their coupling.
At $K{=}128{,}256$, however, even the error second moment $\Gamma_e := \mathbb{E}[ee^\top]$ costs $\approx 61$\,GiB per task and naive error projections cost $O(mK)$ per sample.
FisherSketch estimates the product-kernel cosine directly in one pass, producing a 16\,KB signature (at $m{=}4096$), with a 192\,KB split-sample streaming state (Appendix~\ref{app:signature-storage}).
These task signatures give a portable, training-free source-selection object and a diagnostic probe for studying how activation, error, and coupling geometry organize LLM tasks (Appendix~\ref{app:task-signatures}).
This makes vocabulary-scale comparisons practical in the activation-dark setting: activation-only baselines can be competitive on natural shifts but can also be near-constant under fixed-prefix prompts, while FisherSketch remains informative (Section~\ref{sec:verbalizer-shift}).

This head-level focus requires a shared output basis.
Head update geometry depends on inner products between logit-space error vectors (e.g., $e_i^\top e_j$).
These inner products are only well-defined when tasks share an output coordinate system.
For LLM families, this is most naturally a shared vocabulary (e.g., BERT~\citep{devlin2019bert}, GPT~\citep{radford2019language}, T5~\citep{raffel2020exploring}, Mistral~\citep{jiang2023mistral}); for disjoint label subsets we use an aligned label taxonomy/masked-softmax embedding (Appendix~\ref{app:shared-output-space}).
When tasks have fundamentally different output spaces, we recommend full-network or internal-layer comparisons instead of head error geometry.

\para{Evaluation (source selection).}
We evaluate transfer as LoRA source selection across domains (Section~\ref{sec:empirical}).
Formal metric definitions and protocol details are in Appendix~\ref{app:vocab-scale}.
Crucially, in verbalizer shift (fixed-prefix classification), activation-only collapses to random (20\% top-1) while FisherSketch achieves 66.7\% top-1 (Section~\ref{sec:verbalizer-shift}).
On Llama-3.1-8B across 100 domains, FisherSketch is competitive with activation-only baselines on natural shifts (Section~\ref{sec:vocab-scale-exp}).
As a scientific-sequence proof of concept, Appendix~\ref{app:smiles-proof} evaluates nine molecular SMILES domains and finds that FisherSketch correlates with cross-domain perplexity reduction ($\rho_s{=}0.53$, $p{=}0.006$), while activation-only similarity is not significant ($p{=}0.081$).

\para{Why representation similarity can fail.}
Representation-only metrics (CKA, RSA, SVCCA) compare activations on a probe set, while head update geometry depends on the joint structure of activations and label-conditioned errors.
As a result, representation similarity alone does not determine head Fisher alignment.
Theorem~\ref{thm:rep-metric-limitation} formalizes this: there exist tasks with identical probe representations ($Z_i = Z_j$) whose head updates are orthogonal, so $A^{\mathrm{head}}_F(i,j)=0$.

We provide two concrete instantiations. First, in a shared-output masked-softmax embedding used to compare disjoint label sets, non-overlapping label blocks imply disjoint error support, so head gradients are orthogonal even when activations match (Appendix~\ref{app:shared-output-space}). Second, in a dense-vocabulary LLM setting (verbalizer shift, Section~\ref{sec:verbalizer-shift}), the prompt prefix is fixed so activations are identical under this setup, yet transfer varies because the label token changes error geometry. A compact ViT-B/16 orthogonality check under the shared-output embedding appears in Appendix~\ref{app:fisher-computation} (Table~\ref{tab:vit-regime}), and Appendix~\ref{app:overlap-sweep} shows the effect across partial label overlap. These results are specific to \headlevel{} comparisons and do not preclude other forms of transfer (e.g., if one retrains the head). We do not claim CKA is uninformative in all settings; when conditions are favorable (Appendix~\ref{app:full-network-fisher}), it can correlate with full-network Fisher, but such correlation is not guaranteed.

\para{Contributions.}
We make four contributions. First, we give a non-identifiability witness showing that representation-only metrics cannot determine head Fisher alignment without assumptions on label-conditioned error geometry (Theorem~\ref{thm:rep-metric-limitation}). Second, we prove that shared-output head Fisher alignment is a product-kernel cosine between joint activation-error mean embeddings, yielding an exact activation/error/coupling decomposition (Theorem~\ref{thm:three-factor}). Third, we introduce FisherSketch, a one-pass streaming random-feature estimator that avoids materializing $K^2$ error moments and produces portable 16\,KB task signatures--comparable in size to a small model hash, but encoding transfer-relevant geometry--via SRHT at vocabulary scale. Fourth, we use these signatures as task-geometry probes, separating activation, error, and coupling effects, and validate them on ViT-B/16 and LLM transfer settings, with shared-parameter stress tests in Appendix~\ref{app:shared-params}.

\para{Roadmap.}
Section~\ref{sec:preliminaries} defines Fisher alignment and the task setting; Section~\ref{sec:what-metrics-measure} proves the non-identifiability result; Section~\ref{sec:what-fisher-measures} derives the product-kernel structure; Section~\ref{sec:fisherSketch} introduces FisherSketch; and Section~\ref{sec:empirical} presents experiments.

\section{Preliminaries}
\label{sec:preliminaries}

We consider tasks indexed by $k$, each with a distribution $\mathcal{D}_k$ over $(x,y)$ and a per-example loss $\ell^{(k)}_\theta(x,y)$.
Tasks share architecture and parameterization, and we evaluate task $k$ at a checkpoint $\theta_k$ (in the common-checkpoint setting, $\theta_k \equiv \theta_0$).
For \headlevel{} comparisons, tasks must share an output coordinate system; otherwise $e_i^\top e_j$ and $\langle \Gamma_{e,i}, \Gamma_{e,j}\rangle$ are not defined (Appendix~\ref{app:shared-output-space}).

For task $k$, define the per-example gradient $g^{(k)}_{\theta}(x,y) := \nabla_{\theta}\ell^{(k)}_{\theta}(x,y)$ and the data/empirical Fisher
\[
\mathcal{F}_k(\theta_k)
:= \mathbb{E}_{(x,y)\sim \mathcal{D}_k}
\big[g^{(k)}_{\theta_k}(x,y)\,g^{(k)}_{\theta_k}(x,y)^\top \big].
\]
Fisher alignment between tasks $i,j$ is the normalized Frobenius inner product
\begin{equation}
A_{\mathrm{F}}(i,j)
:= \frac{\langle \mathcal{F}_i, \mathcal{F}_j\rangle_F}{\|\mathcal{F}_i\|_F\,\|\mathcal{F}_j\|_F}
= S_{\mathrm{cov}}(\mathcal{F}_i,\mathcal{F}_j),
\label{eq:fisher-alignment}
\end{equation}
where $S_{\mathrm{cov}}(A,B) := \langle A,B\rangle_F / (\|A\|_F\|B\|_F)$ for nonzero symmetric matrices.

\noindent\textbf{Hilbert-space view.}
The Frobenius inner product makes finite matrices a Hilbert space (Hilbert--Schmidt), so $S_{\mathrm{cov}}(\cdot,\cdot)$ is a cosine.
Section~\ref{sec:fisher-factorization} gives an RKHS analogue: head Fisher alignment is a cosine between kernel mean embeddings in the RKHS of a product kernel over $(a,e)$.
Fisher alignment compares second moments (not mean gradients); estimator details are in Appendix~\ref{app:fisher-detailed}, with full preliminaries in Appendix~\ref{app:preliminaries}.


\section{Why Representation Metrics Fail}
\label{sec:what-metrics-measure}

Representation similarity metrics (CKA, RSA, SVCCA) are widely used for transfer prediction.
Without assumptions on error geometry or label mapping, they are blind to \headlevel{} error structure (Appendix~\ref{app:rsa-svcca-representation-only}).

\begin{definition}[Representation-only metric]
\label{def:rep-metric}
A metric $M(Z_i, Z_j) \to [-1,1]$ is \textbf{representation-only} if it depends only on representations $Z_i$ and $Z_j$ computed on a shared probe set, and does not use errors $e$ or gradients $g$.
We make no normalization assumptions; for example, $M(Z_i, Z_i)$ need not equal $1$.
\end{definition}

\begin{theorem}[Non-identifiability witness for representation-only metrics]
\label{thm:rep-metric-limitation}
For any representation-only metric $M$, there exist tasks $i \neq j$ and a shared probe set such that $Z_i = Z_j$ and the head Fisher matrices are nonzero (so each self-alignment equals $1$).
Under a shared-output masked-softmax embedding with disjoint label blocks, the cross-task head Fisher alignment satisfies $A_{\mathrm{F}}^{\mathrm{head}}(i,j)=0$ while $A_{\mathrm{F}}^{\mathrm{head}}(i,i)=1$.
Consequently, $M$ cannot distinguish the cross-task pair from the self-pair using representations alone, even though head Fisher alignment can:
\[
\begin{aligned}
M(Z_i, Z_j) &= M(Z_i, Z_i), \\
A_{\mathrm{F}}^{\mathrm{head}}(i,j) &\neq A_{\mathrm{F}}^{\mathrm{head}}(i,i).
\end{aligned}
\]
\end{theorem}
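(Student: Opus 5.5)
The proof is an explicit construction. Fix a shared backbone and checkpoint $\theta_0$, and take a probe set $\{x_1,\dots,x_n\}$ with the same input marginal for both tasks. Let $Z_i$ and $Z_j$ be the frozen activations $a(x)$ on this probe set. Because the tasks share architecture, checkpoint, and inputs, $Z_i=Z_j$ holds by construction. Since $M$ depends only on its two representation arguments, this gives $M(Z_i,Z_j)=M(Z_i,Z_i)$ immediately, with no normalization assumption on $M$. The remaining work is to choose the label structures so that the head Fisher matrices separate.

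\textbf{Head Fisher computation.} The shared output space is $K=K_i+K_j$, partitioned into disjoint label blocks $B_i$ and $B_j$. Task $k$ uses a masked softmax over $B_k$, with logits $W a$ for the shared head $W\in\mathbb{R}^{K\times d}$. The head gradient of cross-entropy is $g=e\,a^\top$ with $e=p-\mathbf{1}_y$. Under the mask, $p$ and $\mathbf{1}_y$ are both supported on $B_k$, so $\operatorname{supp}(e)\subseteq B_k$. Writing $g$ as $\operatorname{vec}(e a^\top)$, we get
\[
\langle \mathcal{F}_i^{\mathrm{head}},\mathcal{F}_j^{\mathrm{head}}\rangle_F=\mathbb{E}_{i}\mathbb{E}_{j}\big[(e^\top e')^2 (a^\top a')^2\big].
\]
Here $e^\top e'=0$ pointwise because the supports are disjoint. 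Hence the cross inner product is $0$, and therefore $A_{\mathrm{F}}^{\mathrm{head}}(i,j)=0$.

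\textbf{Nondegeneracy.} The alignment is only defined if each head Fisher is nonzero, and then $A_{\mathrm{F}}^{\mathrm{head}}(i,i)=S_{\mathrm{cov}}(\mathcal{F}_i,\mathcal{F}_i)=1$ automatically. Nonzeroness follows from three choices:
\begin{itemize}
\item take $|B_k|\ge 2$, so the masked softmax is never a point mass and $e\neq 0$ for every label;
\item choose the probe inputs with $a(x)\neq 0$, for instance by including a bias coordinate;
\item give labels positive probability.
\end{itemize}
Then $\mathbb{E}\|ea^\top\|_F^4>0$, and $\|\mathcal{F}_k\|_F\ge \operatorname{tr}\mathcal{F}_k/\sqrt{\dim}>0$. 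Combining the three steps gives $0\neq 1$, which is the stated separation.

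\textbf{Main obstacle.} No step is technically hard. The point needing care is making sure the masked-softmax embedding really confines the error to $B_k$, in particular that the masked logits contribute neither probability nor gradient. I would fix the convention that masked logits are set to $-\infty$ before the softmax, so that $p_c=0$ for $c\notin B_k$ and the corresponding gradient rows vanish. A second point is to state explicitly that "representations" here means frozen activations shared across the tasks, so that $Z_i=Z_j$ is a genuine equality of inputs to $M$ and not merely an approximate similarity.
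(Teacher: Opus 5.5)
Your proposal is correct and follows essentially the same construction as the paper's proof: a shared encoder gives $Z_i=Z_j$, disjoint masked-softmax supports give $e^\top e'=0$ and hence $\langle \mathcal{F}_i^{\mathrm{head}},\mathcal{F}_j^{\mathrm{head}}\rangle_F=0$, and trace positivity of the PSD head Fisher gives self-alignment $1$. Your explicit choices ($|B_k|\ge 2$, a bias coordinate) make concrete the paper's nondegeneracy condition $\mathbb{P}(\|a\|_2>0\wedge\|e_k\|_2>0)>0$; one small slip is that $\mathrm{tr}\,\mathcal{F}_k=\mathbb{E}\|ea^\top\|_F^2$ rather than the fourth moment you wrote, though positivity of either yields the same conclusion.
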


\noindent\emph{Disjoint-label witness.}
Under the shared-output embedding, tasks that share an encoder but have disjoint label subsets have identical representations, yet orthogonal head gradients. A full proof is in Appendix~\ref{app:proof-thm1}.
\emph{Proof sketch.}
Under the shared-output masked-softmax embedding, disjoint label blocks imply error vectors have disjoint support, so head gradients are orthogonal even when representations match.
Representation-only metrics observe only activations, and therefore cannot detect this.
\emph{Scope.}
We use masked-softmax to give a clean orthogonality witness, not to claim that every transfer benchmark lies in the zero-overlap regime. More broadly, without additional assumptions linking errors to activations, representation covariances do not identify head Fisher alignment.
We do not claim that CKA is uninformative in all regimes. When the diagnostics are favorable (Appendix~\ref{app:full-network-fisher}), it can correlate with full-network Fisher. However, non-identifiability means it can still fail without assumptions on error geometry.

\noindent
\textbf{Dense-vocabulary instantiation.}
While the masked-softmax construction above provides a clean orthogonality witness, Section~\ref{sec:verbalizer-shift} demonstrates the same phenomenon in a fully dense-vocabulary LLM setting.
Fixing only the prompt prefix makes representations identical across verbalizers, yet changing the label token alters error geometry and transfer.
FisherSketch predicts this variation, while representation-only metrics collapse.

This yields a \emph{non-identifiability} result. Linear CKA admits the closed form (Appendix~\ref{app:cka-detailed})
\begin{equation}
\label{eq:cka-linear}
\mathrm{CKA}(Z_i,Z_j)
= \frac{\|\hat\Sigma_{ij}\|_F^2}{\|\hat\Sigma_i\|_F\,\|\hat\Sigma_j\|_F},
\end{equation}
Thus it depends only on $(\hat\Sigma_i,\hat\Sigma_j,\hat\Sigma_{ij})$ and not on error geometry $\Gamma_e$.

\begin{corollary}[Error geometry is not identifiable from CKA]
\label{cor:cka-nonidentifiable}
There is no function $g:[0,1]\to[0,1]$ such that for \emph{all} task pairs with well-defined CKA ($\|\hat\Sigma_i\|_F, \|\hat\Sigma_j\|_F > 0$),
\[
A_{\mathrm{F}}^{\mathrm{head}}(i,j) = g\!\left(\mathrm{CKA}(Z_i,Z_j)\right).
\]
\end{corollary}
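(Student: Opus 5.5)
We argue by contradiction. Suppose such a $g$ exists. It suffices to exhibit two task pairs with the same CKA value but different head Fisher alignments, since $g$ would then have to assign two different outputs to the same input. The natural source of such pairs is Theorem~\ref{thm:rep-metric-limitation}, which already supplies a self-pair and a cross-pair whose representations coincide.

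First, take the tasks $i\neq j$ and the shared probe set given by Theorem~\ref{thm:rep-metric-limitation}. There $Z_i = Z_j$, the head Fisher matrices are nonzero, and $A_{\mathrm{F}}^{\mathrm{head}}(i,j)=0$ while $A_{\mathrm{F}}^{\mathrm{head}}(i,i)=1$. We must ensure CKA is well defined, i.e.\ $\|\hat\Sigma_i\|_F>0$. Since the witness may be chosen freely, we pick probe activations that are not all equal after centering, for example two distinct probe inputs with distinct encoder outputs. This does not affect the disjoint-label construction, because the orthogonality there comes only from the disjoint error supports. The head gradient for an example is $e\,a^\top$ (or $e\otimes a$), so the Fisher inner product is $\mathbb{E}[(e_i^\top e_j)(a_i^\top a_j)]$, and the factor $e_i^\top e_j$ vanishes when the label blocks are disjoint. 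Nonzero Fisher matrices need only some example with $a\neq 0$ and $e\neq 0$, and we can arrange this at the same time.

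Second, compute CKA for both pairs using \eqref{eq:cka-linear}. Because $Z_j=Z_i$, we have $\hat\Sigma_j=\hat\Sigma_i$ and $\hat\Sigma_{ij}=\hat\Sigma_i$. Therefore $\mathrm{CKA}(Z_i,Z_j)=\mathrm{CKA}(Z_i,Z_i)=\|\hat\Sigma_i\|_F^2/\|\hat\Sigma_i\|_F^2 = 1$. Equivalently, this is just Theorem~\ref{thm:rep-metric-limitation} applied with $M=\mathrm{CKA}$, a representation-only metric.

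Finally, $g$ would have to satisfy $g(1)=A_{\mathrm{F}}^{\mathrm{head}}(i,i)=1$ and $g(1)=A_{\mathrm{F}}^{\mathrm{head}}(i,j)=0$ at the same time, which is a contradiction.

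The only point needing care is the first step: checking that the witness of Theorem~\ref{thm:rep-metric-limitation} can be taken with nondegenerate centered activations, so that CKA is well defined, while keeping the head Fisher matrices nonzero. We expect this to be routine once the head gradient is written as the outer product $e\,a^\top$.
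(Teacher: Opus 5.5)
Your proposal is correct and follows the paper's proof. You invoke the disjoint-label witness of Theorem~\ref{thm:rep-metric-limitation}, note that $Z_i=Z_j$ forces $\mathrm{CKA}(Z_i,Z_j)=\mathrm{CKA}(Z_i,Z_i)=1$ when $\|\hat\Sigma_i\|_F>0$, and conclude that $g(1)$ would have to equal both $0$ and $1$. One small slip that does not affect the argument: the Fisher inner product is $\mathbb{E}[(a_i^\top a_j)^2(e_i^\top e_j)^2]$, the expectation of the squared gradient inner product, not $\mathbb{E}[(e_i^\top e_j)(a_i^\top a_j)]$. Since $e_i^\top e_j=0$ for every pair of samples, it still vanishes.
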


\para{Implication.}
CKA depends only on centered representation covariances, while Fisher alignment also depends on the error second moment $\Gamma_e$.
The same non-identifiability holds for RSA and SVCCA (Appendices~\ref{app:rsa-svcca-representation-only},~\ref{app:rsa-representation-only}, and~\ref{app:svcca-representation-only}).
Thus representation-only metrics are fundamentally limited for \headlevel{} transfer without assumptions on error geometry or label mapping.
A correlation with full-network Fisher in some regimes does not guarantee accurate transfer ranking (Appendix~\ref{app:full-network-fisher}).
Empirically, on ViT-B/16 same-dataset pairs, CKA correlates only modestly with exact head Fisher (Pearson 0.43, Spearman 0.45; Table~\ref{tab:vit-proxy-vs-exact} in the Appendix).

To understand \emph{why} error geometry matters, we next use the product-kernel structure of head Fisher alignment (Section~\ref{sec:what-fisher-measures}).
Representation metrics see only activations, whereas Fisher is a cosine in the joint (activation, error) space.

\section{What Fisher Measures}
\label{sec:what-fisher-measures}

We show that Fisher alignment has a product-kernel structure: head-layer Fisher alignment is the cosine similarity in a joint (activation, error) feature space, decomposing into representation geometry, error geometry, and a coupling term.

\subsection{Layer-wise Kronecker Structure of Gradients}
\label{sec:layer-kronecker}

Affine layers have an exact Kronecker gradient structure, which underlies K-FAC-style approximations \citep{heskes2000natural,martens2015kfac,grosse2016kfc}.
Let $a_{l-1}$ be the input activation and $\delta_l := \partial \ell / \partial z_l$ be the backpropagated error (with $z_l = W_l a_{l-1}$).
\begin{lemma}[Kronecker structure for affine layers]
\label{lem:layer-kronecker}
For an affine layer $l$ with weight matrix $W_l$, the per-example gradient satisfies $\nabla_{W_l} \ell = \delta_l a_{l-1}^\top$, or equivalently:
\begin{equation}
    g_l := \mathrm{vec}(\nabla_{W_l} \ell) = a_{l-1} \otimes \delta_l.
    \label{eq:layer-grad-kron}
\end{equation}
Here $\mathrm{vec}$ stacks columns (MATLAB/Fortran convention), so $\mathrm{vec}(u v^\top)=v\otimes u$.
Using row-major vectorization swaps the Kronecker factors (a fixed permutation) and leaves all alignment/cosine quantities unchanged.
This is an \emph{exact} identity for affine transformations.
Bias gradients can be handled by augmenting $a_{l-1}$ with a constant 1; we focus on weights for clarity.
\end{lemma}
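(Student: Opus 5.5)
The identity is a direct computation from the chain rule, applied one example at a time. I write the affine layer as $z_l = W_l a_{l-1}$ with $W_l \in \mathbb{R}^{d_l\times d_{l-1}}$ and treat $\ell$ as a function of $z_l$ only through this layer. Entrywise, $(z_l)_r = \sum_s (W_l)_{rs}(a_{l-1})_s$, so
\[
\frac{\partial (z_l)_{r'}}{\partial (W_l)_{rs}} = \mathbb{1}[r'=r]\,(a_{l-1})_s .
\]
The chain rule then gives
\[
\frac{\partial \ell}{\partial (W_l)_{rs}} = \sum_{r'} \frac{\partial \ell}{\partial (z_l)_{r'}}\,\mathbb{1}[r'=r]\,(a_{l-1})_s = (\delta_l)_r (a_{l-1})_s .
\]
This is the $(r,s)$ entry of $\delta_l a_{l-1}^\top$, which establishes the matrix form $\nabla_{W_l}\ell = \delta_l a_{l-1}^\top$. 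The step uses nothing beyond linearity of $z_l$ in $W_l$. In particular, $a_{l-1}$ does not depend on $W_l$, because it is computed upstream.

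Next I vectorize. I use the column-stacking convention, in which $\mathrm{vec}(M)$ places column $s$ of $M$ in block $s$. For $M = u v^\top$, column $s$ equals $v_s u$, so $\mathrm{vec}(uv^\top)$ is the stacked vector with blocks $v_1 u, \dots, v_n u$. That stacked vector is exactly $v\otimes u$. Setting $u=\delta_l$ and $v=a_{l-1}$ yields $g_l = a_{l-1}\otimes\delta_l$, which is \eqref{eq:layer-grad-kron}.

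Two remarks cover the remaining claims of the lemma. Under row-major vectorization one gets $\delta_l\otimes a_{l-1}$ instead. This differs from the column-major vector by the commutation matrix $P$, a fixed permutation that is independent of the example and of the task. Every per-example gradient is mapped by the same $P$, so each Fisher matrix transforms as $\mathcal{F}\mapsto P\mathcal{F}P^\top$. Frobenius inner products are invariant under orthogonal conjugation, so $S_{\mathrm{cov}}$ and $A_{\mathrm{F}}$ are unchanged. For the bias, I write $W_l a_{l-1} + b_l = [W_l\; b_l]\,[a_{l-1};1]$ and apply the same argument to the augmented activation.

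No step here is a genuine obstacle. The only points that need care are the convention bookkeeping, namely which factor comes first in the Kronecker product, and the observation that the permutation is shared across tasks. The lemma needs that shared permutation to conclude that alignment quantities are unaffected.
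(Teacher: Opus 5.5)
Your proof is correct and follows essentially the same route as the paper: the entrywise chain-rule computation $\partial \ell/\partial (W_l)_{rs} = (\delta_l)_r (a_{l-1})_s$ followed by the column-stacking identity $\mathrm{vec}(uv^\top)=v\otimes u$ is exactly the argument the paper gives in Lemma~\ref{lem:shared-grad}, which reduces to this lemma when there is a single position. Your treatment of row-major vectorization as a shared permutation $\mathcal{F}\mapsto P\mathcal{F}P^\top$ that leaves Frobenius cosines unchanged, and of the bias by augmenting $a_{l-1}$ with a constant $1$, also matches the paper's remarks.
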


Since classification heads are typically affine (linear classifiers), the exact Kronecker structure~\eqref{eq:layer-grad-kron} holds without approximation.
For the output layer (head), with \textbf{uncentered activation} $a := a_{L-1}$ entering the head and error $e := \delta_L$, we have
\begin{equation}
    g_{\mathrm{head}} = a \otimes e.
    \label{eq:head-grad-kron}
\end{equation}
We analyze head-layer Fisher with respect to the output projection parameters only.\footnote{With tied input/output embeddings, this ignores the additional input-embedding gradient term, but the head gradient for the output matrix still satisfies~\eqref{eq:head-grad-kron} exactly.}
Crucially, this uses the \emph{uncentered} activation $a$, not the centered representation $z = a - \mu$ (Appendix~\ref{app:mean-correction}).
For shared-parameter affine maps (convolutions, token-wise linear maps), gradients are sums of Kronecker terms across positions; Appendix~\ref{app:shared-params} gives the exact kernel/mean-embedding view and provides an unbiased FisherSketch variant.
The exact activation--error product identity below applies to rank-one/no-sharing heads with a common output basis; for shared-parameter layers we sketch the corresponding shared-parameter Fisher kernel directly, and separable activation/error products are diagnostics rather than identities.

\subsection{Head Fisher Alignment as a Product-Kernel Mean Embedding}
\label{sec:fisher-factorization}

\para{A product-kernel view.}
Recall that for a linear head, the per-example gradient factorizes as
$g = a \otimes e$ (Lemma~\ref{lem:layer-kronecker}).
For two independent draws $(a,e)\sim\mathcal{D}_i$ and $(a',e')\sim\mathcal{D}_j$,
the head Fisher inner product can be written as a \emph{kernel expectation}:
\[
\langle \mathcal{F}_i, \mathcal{F}_j\rangle_F
= \mathbb{E}\big[(g^\top g')^2\big]
= \mathbb{E}\big[(a^\top a')^2 (e^\top e')^2\big].
\]
This is a degree-2 polynomial kernel in $g = a \otimes e$ (equivalently degree-4 in $(a,e)$) that factorizes. Define $k_a$, $k_e$, and $k_{ae}$ by
\[
\begin{aligned}
k_a(a,a') &:= (a^\top a')^2, \\
k_e(e,e') &:= (e^\top e')^2, \\
k_{ae}((a,e),(a',e')) &:= k_a(a,a')\,k_e(e,e').
\end{aligned}
\]
Both $k_a$ and $k_e$ are PSD kernels, hence so is their product $k_{ae}$.
Kernel perspectives also appear in wide-network limits and neural tangent kernels~\citep{jacot2018neural,lee2019wide}.
A convenient explicit feature map is obtained by symmetric vectorization:\footnote{We use upper-triangular vectorization with off-diagonals scaled by $\sqrt{2}$ (Lemma~\ref{lem:vecsym}, Appendix~\ref{app:linear-time-fisher}), so $\langle \mathrm{vec}_{\mathrm{sym}}(aa^\top), \mathrm{vec}_{\mathrm{sym}}(a'a'^\top) \rangle = (a^\top a')^2$.}
\[
\begin{aligned}
\phi_a(a) &:= \mathrm{vec}_{\mathrm{sym}}(aa^\top), \\
\phi_e(e) &:= \mathrm{vec}_{\mathrm{sym}}(ee^\top), \\
\phi(a,e) &:= \phi_a(a)\otimes \phi_e(e),
\end{aligned}
\]
so that $k_{ae}((a,e),(a',e')) = \langle \phi(a,e), \phi(a',e')\rangle$.

For each task $k$, define the (uncentered) \emph{kernel mean embedding}
\[
\mu_{ae,k} := \mathbb{E}_{(a,e)\sim\mathcal{D}_k}\big[\phi(a,e)\big].
\]
Assume finite fourth moments and nonzero embedding norms, so the cosines below are well-defined; in practice we use $\varepsilon$-regularized norms for degenerate cases.
We write $S_{ae}(i,j) := \langle \mu_{ae,i}, \mu_{ae,j}\rangle$ and define $S_a, S_e, \chi, \rho$
analogously in Appendix~\ref{app:linear-time-fisher}.

\begin{theorem}[Head Fisher alignment is a product-kernel cosine]
\label{thm:three-factor}
With the definitions above,
\[
\begin{aligned}
\langle \mathcal{F}_i, \mathcal{F}_j\rangle_F
&= \langle \mu_{ae,i}, \mu_{ae,j}\rangle, \\
A_{\mathrm{F}}^{\mathrm{head}}(i,j)
&= \frac{\langle \mu_{ae,i}, \mu_{ae,j}\rangle}{\|\mu_{ae,i}\|\,\|\mu_{ae,j}\|} \\
&= \cos(\mu_{ae,i}, \mu_{ae,j}).
\end{aligned}
\]
\end{theorem}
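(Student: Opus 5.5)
The plan is to reduce both sides to the same fourth-moment expectation and then apply the definition of $S_{\mathrm{cov}}$. By Lemma~\ref{lem:layer-kronecker}, the per-example head gradient is $g = a\otimes e$ as in~\eqref{eq:head-grad-kron}. Hence $\mathcal{F}_k = \mathbb{E}_{\mathcal{D}_k}[(a\otimes e)(a\otimes e)^\top] = \mathbb{E}_{\mathcal{D}_k}[(aa^\top)\otimes(ee^\top)]$, using the mixed-product property $(a\otimes e)(a^\top\otimes e^\top) = aa^\top\otimes ee^\top$. I would first write $\langle \mathcal{F}_i,\mathcal{F}_j\rangle_F = \mathrm{tr}(\mathcal{F}_i\mathcal{F}_j)$. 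Taking $(a,e)\sim\mathcal{D}_i$ and $(a',e')\sim\mathcal{D}_j$ independent, bilinearity and independence move the trace inside a product expectation:
\[
\langle \mathcal{F}_i,\mathcal{F}_j\rangle_F = \mathbb{E}\big[\langle gg^\top, g'g'^\top\rangle_F\big] = \mathbb{E}\big[(g^\top g')^2\big].
\]
Exchanging trace and expectation is justified by the assumed finite fourth moments, which make every entry of $\mathcal{F}_k$ finite and the integrand $(g^\top g')^2$ integrable; Cauchy--Schwarz gives $\mathbb{E}[(g^\top g')^2]\le \mathbb{E}\|g\|^4{}^{1/2}\,\mathbb{E}\|g'\|^4{}^{1/2}$ after using independence.

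Next I factor the pointwise kernel. Since $g^\top g' = (a\otimes e)^\top(a'\otimes e') = (a^\top a')(e^\top e')$, we get $(g^\top g')^2 = k_a(a,a')\,k_e(e,e') = k_{ae}((a,e),(a',e'))$. The feature-map identity in the footnote (Lemma~\ref{lem:vecsym}) gives $\langle\phi_a(a),\phi_a(a')\rangle = (a^\top a')^2$, and likewise for $\phi_e$. Together with $\langle x\otimes y, x'\otimes y'\rangle = \langle x,x'\rangle\langle y,y'\rangle$, this yields $k_{ae} = \langle \phi(a,e),\phi(a',e')\rangle$. Then $\mathbb{E}[\langle\phi(a,e),\phi(a',e')\rangle] = \langle \mathbb{E}\phi(a,e), \mathbb{E}\phi(a',e')\rangle = \langle\mu_{ae,i},\mu_{ae,j}\rangle$ by independence and linearity of the finite-dimensional inner product. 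The Bochner integrals exist because $\|\phi(a,e)\| = \|a\|^2\|e\|^2$ is integrable, again by the moment assumption. This proves the first identity.

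For the cosine statement, I would apply the first identity with $j=i$ to get $\|\mathcal{F}_i\|_F^2 = \|\mu_{ae,i}\|^2$, and similarly for $j$. Both are nonzero by assumption, so substituting into~\eqref{eq:fisher-alignment} gives $A_{\mathrm{F}}^{\mathrm{head}}(i,j) = \cos(\mu_{ae,i},\mu_{ae,j})$.

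The main obstacle is bookkeeping rather than depth. One point is the self-pair case: $\|\mathcal{F}_i\|_F^2 = \mathbb{E}[(g^\top g')^2]$ requires $g'$ to be an independent copy from $\mathcal{D}_i$, not the same sample, so I will state the independent-copy convention explicitly. The other point is checking that the vectorization convention does not matter. Column-major versus row-major ordering only permutes coordinates of $g$, which leaves $g^\top g'$ unchanged, as noted after Lemma~\ref{lem:layer-kronecker}. So the proof goes through for either convention.
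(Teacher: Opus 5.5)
Your proposal is correct and follows the paper's proof essentially step for step. You reduce $\langle\mathcal{F}_i,\mathcal{F}_j\rangle_F$ to $\mathbb{E}[(g^\top g')^2]$ over independent draws, factor it as $(a^\top a')^2(e^\top e')^2=\langle\phi(a,e),\phi(a',e')\rangle$, move the expectation inside the inner product using independence, and normalize. Your explicit independent-copy convention for the self-pair and your remark on vectorization order are useful bookkeeping that the paper leaves implicit.

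One small correction to a side remark on integrability. Combine the pointwise bound $(g^\top g')^2\le\|g\|^2\|g'\|^2$ with independence to get the bound $\mathbb{E}[\|a\|^2\|e\|^2]\,\mathbb{E}[\|a'\|^2\|e'\|^2]$, which finite fourth moments of $(a,e)$ control. Your version with $\mathbb{E}\|g\|^4=\mathbb{E}[\|a\|^4\|e\|^4]$ would need eighth-order moments.
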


\begin{proof}
Since $g=a\otimes e$, we have $g^\top g'=(a^\top a')(e^\top e')$.
Therefore,
\begin{align*}
\langle \mathcal{F}_i,\mathcal{F}_j\rangle_F
&= \mathbb{E}[(a^\top a')^2(e^\top e')^2] \\
&= \mathbb{E}[\langle \phi(a,e),\phi(a',e')\rangle] \\
&= \langle \mathbb{E}[\phi(a,e)], \mathbb{E}[\phi(a',e')]\rangle \\
&= \langle \mu_{ae,i}, \mu_{ae,j}\rangle.
\end{align*}
Normalization gives the cosine form.
\end{proof}

When the joint distribution is separable ($\mu_{ae} = \mu_a \otimes \mu_e$), Fisher alignment factors into activation and error cosines.
The coupling term $\rho$ measures deviation from separability.

\para{Three-factor form: \textnormal{rank-1 proxy} plus \textnormal{non-separability correction}.}
Define the marginal embeddings
\[
\begin{aligned}
\mu_{a,k} &:= \mathbb{E}[\phi_a(a)] = \mathrm{vec}_{\mathrm{sym}}(M_{a,k}), \\
\mu_{e,k} &:= \mathbb{E}[\phi_e(e)] = \mathrm{vec}_{\mathrm{sym}}(\Gamma_{e,k}),
\end{aligned}
\]
where $M_{a,k}:=\mathbb{E}[aa^\top]$ and $\Gamma_{e,k}:=\mathbb{E}[ee^\top]$.
We use $S_{\mathrm{cov}}(A,B) := \langle A,B\rangle_F / (\|A\|_F\|B\|_F)$ for symmetric $A,B$; here $A$ and $B$ are uncentered moments ($M_a$, $\Gamma_e$).
If the joint embedding were \emph{separable} (rank-1) in the product space,
then $\mu_{ae,k}=\mu_{a,k}\otimes \mu_{e,k}$. In that case,
\[
\begin{aligned}
\cos(\mu_{ae,i},\mu_{ae,j})
&= \cos(\mu_{a,i},\mu_{a,j})\,\cos(\mu_{e,i},\mu_{e,j}) \\
&= S_{\mathrm{cov}}(M_{a,i},M_{a,j})\cdot S_{\mathrm{cov}}(\Gamma_{e,i},\Gamma_{e,j}).
\end{aligned}
\]
In general, the joint embedding need not be separable, so we define the
\emph{non-separability ratio}
\[
\begin{aligned}
\rho_{ij}
&:= \frac{A_{\mathrm{F}}^{\mathrm{head}}(i,j)}
{S_{\mathrm{cov}}(M_{a,i},M_{a,j})\cdot S_{\mathrm{cov}}(\Gamma_{e,i},\Gamma_{e,j})} \\
&\quad(\text{well-defined when the denominator is nonzero}),
\end{aligned}
\]
If $S_{\mathrm{cov}}(M_{a,i},M_{a,j})=0$ or $S_{\mathrm{cov}}(\Gamma_{e,i},\Gamma_{e,j})=0$, then $A_{\mathrm{F}}^{\mathrm{head}}(i,j)=0$ and we set $\rho_{ij}=1$ by convention.
Empirically, we estimate this via $\hat{\rho}_{ij}$ using sample averages; see Appendix~\ref{app:rho-empirical}.
The following identity holds exactly:
\begin{equation}
\boxed{
A_{\mathrm{F}}^{\mathrm{head}}(i,j)
= S_{\mathrm{cov}}(M_{a,i}, M_{a,j})
\cdot S_{\mathrm{cov}}(\Gamma_{e,i}, \Gamma_{e,j})
\cdot \rho_{ij}.
}
\label{eq:fisher-kronecker}
\end{equation}

\para{Practical note.}
Dropping the coupling term $\rho$ in Eq.~\eqref{eq:fisher-kronecker} yields the separable/Kronecker proxy; this can mis-rank transfer, and on ViT-B/16 the proxy is negatively correlated with exact head Fisher (Appendix~\ref{app:rho-empirical}, Table~\ref{tab:vit-proxy-vs-exact}), so we do not rely on it in evaluation.

\begin{corollary}[Independence special case]
\label{cor:independence}
If $a$ and $e$ are independent within each task (in the degree-2 feature space),
then $\mu_{ae,k}=\mu_{a,k}\otimes \mu_{e,k}$ and thus $\rho_{ij}=1$ for all pairs.
\end{corollary}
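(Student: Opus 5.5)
The plan is to show that independence of the degree-2 features makes the joint mean embedding factor. Then Theorem~\ref{thm:three-factor} and the identity~\eqref{eq:fisher-kronecker} force $\rho_{ij}=1$.

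First, I will make the hypothesis precise. "Independent in the degree-2 feature space" will mean that, within task $k$, the random vectors $\phi_a(a)=\mathrm{vec}_{\mathrm{sym}}(aa^\top)$ and $\phi_e(e)=\mathrm{vec}_{\mathrm{sym}}(ee^\top)$ are independent. Weaker is enough: it suffices that every product of coordinates is uncorrelated,
\[
\mathbb{E}\big[\phi_a(a)_p\,\phi_e(e)_q\big]=\mathbb{E}[\phi_a(a)_p]\,\mathbb{E}[\phi_e(e)_q]\quad\text{for all } p,q .
\]
Ordinary independence of $a$ and $e$ implies this. Finite fourth moments, assumed before Theorem~\ref{thm:three-factor}, make all of these expectations finite.

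Second, I compute the embedding coordinatewise. Since $\phi(a,e)=\phi_a(a)\otimes\phi_e(e)$, its $(p,q)$ coordinate is $\phi_a(a)_p\phi_e(e)_q$. Taking expectations and using the factorization above gives
\[
\mu_{ae,k}=\mathbb{E}[\phi_a(a)]\otimes\mathbb{E}[\phi_e(e)]=\mu_{a,k}\otimes\mu_{e,k}.
\]

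Third, I apply the mixed-product rule for inner products of Kronecker products, $\langle u\otimes v,u'\otimes v'\rangle=\langle u,u'\rangle\langle v,v'\rangle$. By Theorem~\ref{thm:three-factor},
\[
A_{\mathrm F}^{\mathrm{head}}(i,j)=\cos(\mu_{a,i},\mu_{a,j})\,\cos(\mu_{e,i},\mu_{e,j}),
\]
since norms factor the same way. By Lemma~\ref{lem:vecsym}, $\mathrm{vec}_{\mathrm{sym}}$ is an isometry from the Frobenius inner product, so these cosines equal $S_{\mathrm{cov}}(M_{a,i},M_{a,j})$ and $S_{\mathrm{cov}}(\Gamma_{e,i},\Gamma_{e,j})$.

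Finally, I split into two cases. If the denominator in the definition of $\rho_{ij}$ is nonzero, the ratio is exactly $1$. If either factor is zero, the convention sets $\rho_{ij}=1$; this is consistent, because the product formula then gives $A_{\mathrm F}^{\mathrm{head}}(i,j)=0$.

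The only delicate step is the first: the interpretation of "independent in the degree-2 feature space." The argument needs only uncorrelatedness of the quadratic features, not full independence. Everything after that is routine algebra.
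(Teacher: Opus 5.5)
Your proof is correct and takes the same route as the paper. Independence (indeed, mere uncorrelatedness) of the quadratic features gives $\mu_{ae,k}=\mu_{a,k}\otimes\mu_{e,k}$, which is the separable case displayed just before the corollary and the content of Lemma~\ref{lem:psi-cov-factorization}. Inner products and norms then factor, so the cosine in Theorem~\ref{thm:three-factor} equals $S_{\mathrm{cov}}(M_{a,i},M_{a,j})\,S_{\mathrm{cov}}(\Gamma_{e,i},\Gamma_{e,j})$, and $\rho_{ij}=1$ follows from its definition as a ratio or from the zero-denominator convention, which you handle explicitly.
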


The non-separability ratio $\rho_{ij}$ captures how the joint embedding deviates from the rank-1 proxy.
It can be adversarial; see Appendix~\ref{app:rho-empirical}.

\begin{remark}[Coupling residual]
\label{rem:independence-violation}
Let $R_k := \mathbb{E}[(a \otimes e)(a \otimes e)^\top] - M_{a,k} \otimes \Gamma_{e,k}$.
When $R_k=0$ (activation--error independence), the Kronecker factorization holds exactly; empirical magnitudes are reported in Appendices~\ref{app:independence} and~\ref{app:rho-empirical}.
\end{remark}

\begin{remark}[Proxy approximation bound]
\label{rem:proxy-bound}
Let $C_k := \mu_{ae,k} - \mu_{a,k}\otimes \mu_{e,k}$ and
$\varepsilon_k := \frac{\|C_k\|}{\|\mu_{a,k}\|\,\|\mu_{e,k}\|}$.
If $\varepsilon_i, \varepsilon_j < 1$ and $\eta_k := 2\varepsilon_k/(1-\varepsilon_k)$, then
\[
\big|A_F^{\mathrm{head}}(i,j) - A_{\mathrm{proxy}}(i,j)\big|
\le \eta_i + \eta_j + \eta_i\eta_j.
\]
Appendix~\ref{app:proxy-bound-proof} provides the proof and uses Appendix~\ref{app:explicit-bound}.
\end{remark}

\needspace{6\baselineskip}
\subsection{Beyond the Head: Full-Network Diagnostics}
\label{sec:full-network-diagnostics}
To relate head/block proxies to full-network Fisher alignment, Appendix~\ref{app:full-network-fisher} partitions parameters into $L$ blocks (e.g., layers) and decomposes each task Fisher as $F_k = D_k + O_k$, where $D_k$ contains within-block terms and $O_k$ contains cross-block terms.
For any tasks $i,j$, normalized Frobenius alignment decomposes exactly as
\begin{equation}
\begin{aligned}
A^{\mathrm{full}}_F(i,j)
&= w_{\mathrm{blk}}\,A^{\mathrm{blk}}_F(i,j) + w_{\mathrm{off}}\,A^{\mathrm{off}}_F(i,j), \\
w_{\mathrm{blk}} &:= \frac{\|D_i\|_F\|D_j\|_F}{\|F_i\|_F\|F_j\|_F}, \\
w_{\mathrm{off}} &:= \frac{\|O_i\|_F\|O_j\|_F}{\|F_i\|_F\|F_j\|_F}.
\end{aligned}
\end{equation}
Here $A^{\mathrm{blk}}_F$ is the alignment computed from block-diagonal Fisher blocks and $A^{\mathrm{off}}_F$ is the alignment computed from off-diagonal blocks.
The \emph{profile cosine} $c_r := w_{\mathrm{blk}} + w_{\mathrm{off}} \in (0,1]$ measures how similarly the two tasks distribute Fisher energy between block-diagonal and cross-block terms (equivalently, the cosine between $(\|D_i\|_F,\|O_i\|_F)$ and $(\|D_j\|_F,\|O_j\|_F)$).
The \emph{off-diagonal discrepancy} $\Delta_{\mathrm{off}} := \big|A^{\mathrm{off}}_F - A^{\mathrm{blk}}_F\big|$ captures whether cross-block interactions behave adversarially relative to the block-diagonal geometry.
These quantities yield the deterministic certificate
\begin{equation}
\big|A^{\mathrm{full}}_F - A^{\mathrm{blk}}_F\big|
\le (1-c_r) + w_{\mathrm{off}}\,\Delta_{\mathrm{off}}.
\label{eq:full-to-block-bound}
\end{equation}
Table~\ref{tab:full-network-summary} (Appendix~\ref{app:full-network-fisher}) summarizes these diagnostics and the observed full-to-block gaps.

\paragraph{How to read the diagnostics.}
The certificate fails with low profile cosine or large off-diagonal discrepancy.
In our ViT/LLM validations, profile cosines are high ($c_r \ge 0.90$), so proxy tightness is typically limited by the off-diagonal discrepancy term; see Table~\ref{tab:full-network-summary} (Appendix~\ref{app:full-network-fisher}).

\noindent\textbf{Computational consequence (Hilbert-space/RKHS view).}
Theorem~\ref{thm:three-factor} can be read as a kernel mean embedding identity in an RKHS (i.e., a Hilbert space).
Define the product polynomial kernel
$k_{ae}((a,e),(a',e')) := (a^\top a')^2\,(e^\top e')^2$,
and let $\mathcal{H}_{ae}$ be its RKHS with feature map $\Phi$.
Then $\mu_{ae,k} := \mathbb{E}_{(a,e)\sim \mathcal{D}_k}[\Phi(a,e)] \in \mathcal{H}_{ae}$ and
$A_{\mathrm{F}}^{\mathrm{head}}(i,j) = \langle \mu_{ae,i}, \mu_{ae,j}\rangle_{\mathcal{H}_{ae}} / (\|\mu_{ae,i}\|_{\mathcal{H}_{ae}}\,\|\mu_{ae,j}\|_{\mathcal{H}_{ae}})$.
Identifying $\Phi$ with our explicit polynomial map $\phi$ recovers the finite-dimensional product feature space statement.
FisherSketch exploits this by approximating each $\mu_{ae,k}$ with an $m$-dimensional random feature map $\Psi_m$, so all $T^2$ alignments become dot products between compact task signatures.
\emph{Crucially, we do not need to compute the three factors separately; instead, we estimate the joint embedding directly in the random feature space.}

\section{FisherSketch: Scalable Estimation}
\label{sec:fisherSketch}

The product-kernel view suggests a streaming estimator.
We fix a random feature map $\Psi_m$ and estimate
$\tilde{\mu}_k := \mathbb{E}_{(a,e)\sim \mathcal{D}_k}[\Psi_m(a,e)]$.
We then compute alignments as dot products in $\mathbb{R}^m$.
The exact embedding $\mu_{ae,k}$ has dimension $O(d^2 K^2)$, which is too large to store, so we use \emph{factored} Random Maclaurin features that exploit the Kronecker structure $g = a \otimes e$.

\begin{proposition}[Factored sketch for Kronecker products]
\label{prop:factored-rm}
Let $r_a, r'_a \in \{-1,+1\}^d$ and $r_e, r'_e \in \{-1,+1\}^K$ be mutually independent Rademacher vectors. Define the sketch coordinate:
\[
\psi(a,e) := (r_a^\top a)(r_e^\top e)({r'_a}^\top a)({r'_e}^\top e).
\]
Then (over the sketch randomness) $\mathbb{E}[\psi(a,e)\psi(a',e')] = (a^\top a')^2(e^\top e')^2$.
With $m$ i.i.d.\ Random Maclaurin coordinates (dense RM), define the normalized sketch vector
\[
\Psi_m(a,e) := \frac{1}{\sqrt{m}}\,[\psi_1(a,e),\ldots,\psi_m(a,e)] \in \mathbb{R}^m.
\]
Then $\mathbb{E}[\langle \Psi_m(a,e), \Psi_m(a',e') \rangle] = (a^\top a')^2(e^\top e')^2$.
\end{proposition}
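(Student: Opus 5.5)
The plan is to expand the product $\psi(a,e)\psi(a',e')$ into four scalar factors, one per independent Rademacher vector. Then independence lets the expectation split into a product of four second moments. For fixed inputs $(a,e)$ and $(a',e')$, regroup by vector:
\[
\psi(a,e)\psi(a',e') = \big[(r_a^\top a)(r_a^\top a')\big]\big[({r'_a}^\top a)({r'_a}^\top a')\big]\big[(r_e^\top e)(r_e^\top e')\big]\big[({r'_e}^\top e)({r'_e}^\top e')\big].
\]
Each bracket depends on exactly one of $r_a, r'_a, r_e, r'_e$. Because these four vectors are mutually independent, the expectation of the product equals the product of the four bracket expectations.

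The one computation needed is the basic Rademacher identity $\mathbb{E}[(r^\top u)(r^\top v)] = u^\top \mathbb{E}[rr^\top] v = u^\top v$. It holds because $\mathbb{E}[r_p r_q] = \delta_{pq}$ for i.i.d.\ signs, so $\mathbb{E}[rr^\top] = I$. Applying it to each bracket gives $(a^\top a')\cdot(a^\top a')\cdot(e^\top e')\cdot(e^\top e') = (a^\top a')^2(e^\top e')^2$, which is the single-coordinate claim. The reason for using two independent copies $r_a, r'_a$, rather than squaring $r_a^\top a$, is visible here. With one vector we would need $\mathbb{E}[(r^\top a)^2(r^\top a')^2]$, a fourth moment. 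That moment picks up extra diagonal correction terms, since $\mathbb{E}[r_p^4] = 1$ breaks the Gaussian-like pairing count. With independent copies the fourth moment factors cleanly into second moments. I expect this point, keeping the factors tied to distinct independent vectors, to be the only place needing care.

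For the normalized vector,
\[
\langle \Psi_m(a,e),\Psi_m(a',e')\rangle = \frac{1}{m}\sum_{t=1}^m \psi_t(a,e)\psi_t(a',e').
\]
By linearity of expectation, each of the $m$ i.i.d.\ coordinates contributes $(a^\top a')^2(e^\top e')^2$, so the average has the same expectation. Integrability is automatic, because for fixed inputs each $\psi_t$ is a bounded random variable: the vectors take finitely many values.

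Combined with Theorem~\ref{thm:three-factor}, this suggests a further step. Averaging over $(a,e)\sim\mathcal{D}_i$ and an independent $(a',e')\sim\mathcal{D}_j$, and exchanging expectations by Fubini under the finite fourth moments assumed in Section~\ref{sec:fisher-factorization}, gives $\mathbb{E}\langle\tilde\mu_i,\tilde\mu_j\rangle = \langle\mathcal{F}_i,\mathcal{F}_j\rangle_F$. Finiteness follows from the assumed finite fourth moments together with $|r^\top u|\le \sqrt{d}\,\|u\|$. This extension is not required by the statement itself.
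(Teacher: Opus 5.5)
Your proof is correct; it differs from the paper's only in how the expectation is factored. The paper writes out the unbiasedness argument only for the shared-parameter version (Proposition~\ref{prop:shared-rm}, which it notes reduces to this one at $S=1$). There it groups the signs into two Kronecker projections $r=r_a\otimes r_e$ and $r'=r'_a\otimes r'_e$. It then verifies $\mathbb{E}[rr^\top]=I$, which needs a line because the entries of $r$ are not mutually independent. Finally it reads $\psi=(r^\top g)({r'}^{\top} g)$ as a standard degree-2 Random Maclaurin feature of the gradient $g=a\otimes e$, giving $(g^\top g')^2=(a^\top a')^2(e^\top e')^2$ by the mixed-product rule.

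You instead split over all four sign vectors at once. This is the same four-way split the paper uses only in its variance bound. You therefore need isotropy only for i.i.d.\ signs and never handle the $dK$-dimensional Kronecker vector, which is more elementary and shows directly why $a$ and $e$ can be projected separately. The paper's two-factor grouping instead ties the estimator to the Fisher kernel $(g^\top g')^2$. It also carries over verbatim to $g=\sum_t a_t\otimes\delta_t$ and to the structured full-network projection. There $r^\top g$ is a sum over positions or layers and no longer factors into one $r_a$-bracket times one $r_e$-bracket, so your grouping would have to be redone with a double sum.

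One correction to your aside on why two independent copies are needed. With a single sign vector per side, $\mathbb{E}[(r^\top a)^2(r^\top a')^2]=\|a\|^2\|a'\|^2+2(a^\top a')^2-2\sum_p a_p^2(a'_p)^2$. The obstruction is therefore mainly the extra pairings, not the Rademacher diagonal term. Even Gaussian projections would give $\|a\|^2\|a'\|^2+2(a^\top a')^2\neq(a^\top a')^2$.
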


This requires only $O(m(d{+}K))$ operations per sample (rather than $O(mdK)$), because projections onto $a$ and $e$ are computed separately.

\para{Algorithm (in words).}
Sample Rademacher projections for $a$ and $e$.
For each task, stream samples and compute $\psi_j(a,e)$ for $j=1,\ldots,m$.
Accumulate and normalize these values to obtain a task signature $\hat{\mu}_k$.
Cosines between signatures estimate head Fisher alignment.
Pseudocode and implementation details are in Algorithm~\ref{alg:fishersketch} (Appendix).

\para{Estimation (informal).}
For dense Random Maclaurin features, the unnormalized kernel inner product is unbiased with variance $O(1/m)$; SRHT preserves the same first-moment target but uses dependent coordinates, and the normalized cosine is a plug-in estimate. Monte Carlo sampling error decays as $O(1/\sqrt{n})$ for $n$ samples per task.
We use split-half log-ratio estimation with shared projections (a low-variance within-split estimator; we also report a cross-fit variant), U-statistic diagonal correction, and small-norm clamping for stability.
For robust rankings at $K\!\approx\!10^5$, we further stabilize the coupling term by empirical-Bayes shrinkage in log space:
let $\log\hat\rho_{ij} := \tfrac12(\log\hat\rho^{A}_{ij}+\log\hat\rho^{B}_{ij})$ and $\mathrm{SE}^2_{ij}:=\tfrac14(\log\hat\rho^{A}_{ij}-\log\hat\rho^{B}_{ij})^2$; then
$w_{ij}=\tfrac{\tau^2}{\tau^2+\mathrm{SE}^2_{ij}}$ and $\log\hat\rho^{\mathrm{shr}}_{ij}=w_{ij}\log\hat\rho_{ij}$ (details in Appendix~\ref{app:rho_shrink}).

\para{Scaling and memory.}
At $K=128{,}256$, storing $\Gamma_e$ requires $\sim61$~GiB per task, whereas FisherSketch stores a $16$~KiB float32 per-task signature.
During streaming, split-half estimation keeps six float64 accumulators per task (A/B splits for $S_{ae},S_a,S_e$), which is $192$~KiB.
Fixed projections fit on a single GPU by (i) storing dense Rademacher sign matrices as \texttt{int8} (two $m{\times}d$ matrices for activations; $32$~MiB at $m{=}d{=}4096$) and
(ii) using an SRHT for the $K$-dimensional error (about $1$~MiB of sign-vectors/indices), implemented with a batched Walsh--Hadamard butterfly on GPU (Appendices~\ref{app:srht}--\ref{app:srht_impl}).
This yields per-sample cost $O(md + K\log K + m)$ in the vocabulary-scale setting.
Dense Rademacher and Gaussian projections are used for controlled small-dimensional estimator comparisons; vocabulary-scale LLM error sketches use the SRHT backend on the $K$-dimensional error side.

\para{Estimator validation.}
We validate the estimator on ViT-B/16 using 20 fine-tuned checkpoints across four datasets.
We compare sketched estimates with exact computation at moderate $K$.
Spearman correlations are $0.97 \pm 0.01$ for $\hat{S}_{ae}$, $0.94 \pm 0.01$ for $\hat{\chi}$, and $0.95 \pm 0.01$ for $\hat{\rho}$.
$\hat{A}_F$ is lower ($0.79 \pm 0.06$) due to ratio normalization.
Figure~\ref{fig:sketch-validation} shows accuracy versus sketch dimension, and Table~\ref{tab:timing-benchmark} in Appendix~\ref{app:fisher-computation} shows linear-time scaling. At $n=2000$ and $m=4096$, the method achieves an 89$\times$ speedup.
For shared-parameter layers, we validate the corresponding shared-layer kernel rather than the no-sharing head factorization: at $m=4096$, FisherSketch preserves exact-Fisher rankings with Spearman $0.997$--$0.998$ on ResNet-18 convolutional layers and $0.994$--$0.997$ on ViT-B/16 MLP/QKV layers (Appendix~\ref{app:shared-params}, Table~\ref{tab:shared-param-validation}).

\begin{figure}[t]
\centering
\includegraphics[width=0.95\columnwidth]{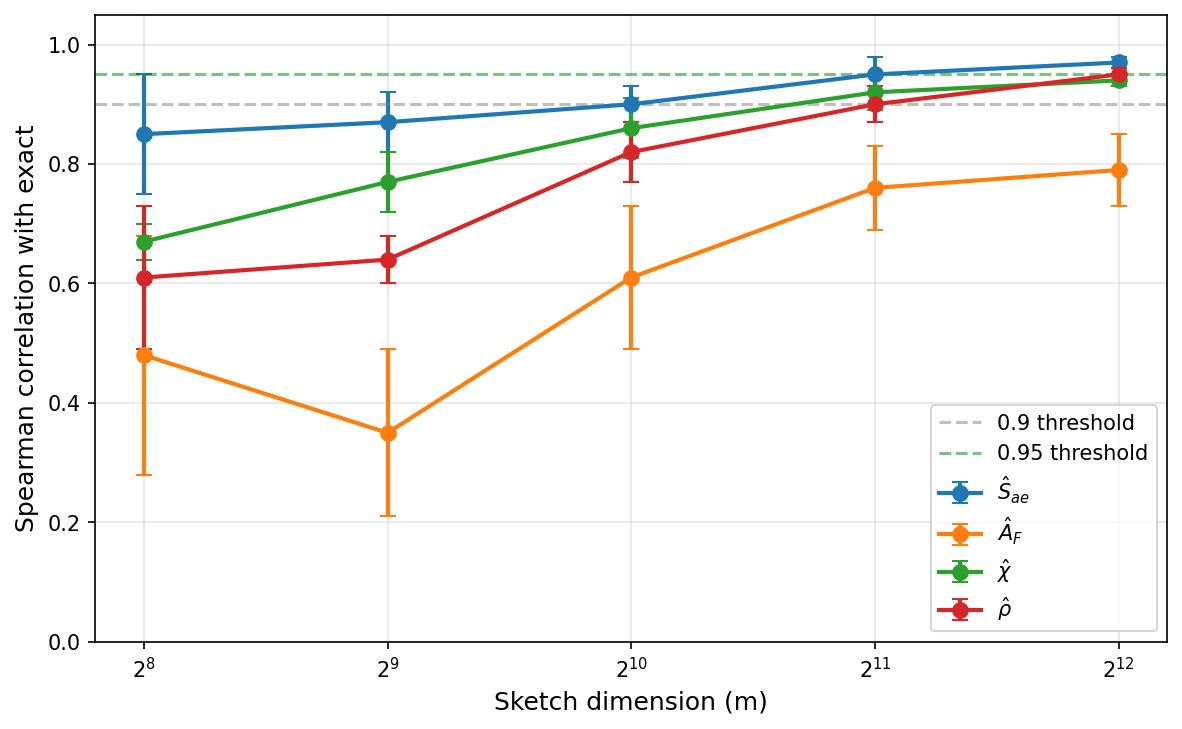}
\caption{Spearman correlations on ViT-B/16 (same-dataset pairs). $\hat S_{ae}$ (joint similarity), $\hat\chi=\hat S_{ae}/(\hat S_a\hat S_e)$, and $\hat\rho$ (normalized coupling) exceed 0.94 at $m=4096$, while $\hat A_F$ (Fisher-alignment cosine) is lower ($\sim$0.79) due to ratio normalization.}
\label{fig:sketch-validation}
\end{figure}



\section{Experiments}
\label{sec:empirical}

We evaluate FisherSketch along three axes: (i) sketch accuracy relative to exact computation for moderate $K$ (Fig.~\ref{fig:sketch-validation}; Table~\ref{tab:timing-benchmark} in Appendix~\ref{app:fisher-computation}), (ii) vocabulary-scale source selection, and (iii) ablations that isolate activation vs.\ error geometry.
This section focuses on (ii) and (iii). See Appendices~\ref{app:protocol-details},~\ref{app:estimation}, and~\ref{app:additional-validation-diagnostics} for experimental details, estimator diagnostics, and additional calibration checks.
We also evaluate training-free task retrieval and open-set addition using task signatures (Appendix~\ref{app:task-signatures}) and a small SMILES scientific-sequence proof of concept (Appendix~\ref{app:smiles-proof}).

\para{Evaluation metrics and protocol.}
We evaluate source selection using log-perplexity improvements under LoRA fine-tuning. Let $\ell_t^{\mathrm{base}}$ denote the base model's log-perplexity on target domain $t$, $\ell_t^{\mathrm{self}}$ the log-perplexity after training on $t$ itself, and $\ell_{s\to t}$ the log-perplexity after training on source $s$ and evaluating on $t$.
We define normalized transfer as
\[
U_{s\to t} = \frac{\ell_t^{\mathrm{base}} - \ell_{s\to t}}{\max(\ell_t^{\mathrm{base}} - \ell_t^{\mathrm{self}}, 0.01)}.
\]
This quantity measures the fraction of the target's self-training gain recovered by transferring from $s$; it is negative when transfer hurts.
For each target $t$, the oracle source is $s^\star(t)=\arg\max_s U_{s\to t}$, and we report Top-1/Top-3 accuracy for identifying $s^\star(t)$ from the fixed candidate set.
Given a predicted source $s_{\mathrm{pred}}(t)$, define per-target regret $r_t := U_{s^\star(t)\to t} - U_{s_{\mathrm{pred}}(t)\to t}$ and report \textbf{Max Regret} (a tail-risk metric) $:= \max_t r_t$.
We also report \textbf{Percent Oracle}, the mean over targets of $U_{s_{\mathrm{pred}}(t)\to t}/U_{s^\star(t)\to t}$.
As a simple baseline, PPL similarity scores sources by $-|\ell_s^{\mathrm{base}} - \ell_t^{\mathrm{base}}|$.

\subsection{Vocabulary-Scale Transfer Prediction with Tail-Risk Metrics}
\label{sec:vocab-scale-exp}
\begin{table}[t]
\centering
\caption{\textbf{Vocabulary-scale transfer prediction for source selection on Llama-3.1-8B.} We evaluate source selection over 100 domains with a shared vocabulary ($K{=}128{,}256$) using 24 fixed candidates per target (Appendix~\ref{app:vocab-scale-data}). Entries report mean $\pm$ std over seeds 43--45. FisherSketch uses SRHT projections for the $K$-dimensional error features and dense projections for activations. $S_{\mathrm{cov}}(\Gamma_e)$ uses $\Gamma_e=\mathbb{E}[ee^\top]$; $S_{\mathrm{cov}}(M_a)$ is the uncentered activation second-moment alignment. \emph{Caveat:} activation-only can be near-constant under fixed-prefix prompts; see Section~\ref{sec:verbalizer-shift} and Fig.~\ref{fig:verbalizer-heatmaps}.}
\label{tab:vocab-scale}
{\scriptsize
\setlength{\tabcolsep}{2pt}
\begin{tabular}{@{}lcccc@{}}
\toprule
Method & Top-1 & Top-3 & Max Regret$\downarrow$ & \% Oracle \\
\midrule
Oracle & 100\% & 100\% & 0 & 100\% \\
FisherSketch & 45.7\%$\pm$5.0 & 87.3\%$\pm$1.2 & 0.119$\pm$0.013 & 98.44\%$\pm$0.30 \\
\shortstack[l]{$S_{\mathrm{cov}}(\Gamma_e)$\\alone} & 41.3\%$\pm$1.7 & 86.7\%$\pm$2.1 & 0.210$\pm$0.011 & 97.70\%$\pm$0.20 \\
\shortstack[l]{$S_{\mathrm{cov}}(\mathrm{diag}(\Gamma_e))$\\proxy} & 44.0\%$\pm$2.2 & 87.7\%$\pm$2.1 & 0.210$\pm$0.011 & 97.65\%$\pm$0.24 \\
\shortstack[l]{$S_{\mathrm{cov}}(M_a)$\\alone} & 46.3\%$\pm$3.7 & 85.0\%$\pm$3.3 & 0.114$\pm$0.006 & 98.42\%$\pm$0.07 \\
PPL similarity & 28\% & -- & -- & -- \\
Random ($1/24$) & 4.2\% & 12.5\% & -- & 14.8\% \\
\bottomrule
\end{tabular}
}
\end{table}

Table~\ref{tab:vocab-scale} reports the headline vocabulary-scale source selection results; we summarize and then unpack them below. FisherSketch compresses each task from $\sim$61\,GiB to a 16\,KB float32 signature at vocabulary scale ($K{=}128{,}256$); the streaming estimator maintains a 192\,KB per-task state for split-sample estimation (Appendix~\ref{app:signature-storage}).
On Llama-3.1-8B with 100 domains drawn from eight HF datasets, FisherSketch (SRHT) achieves 45.7\% $\pm$ 5.0 top-1 source selection ($\approx$10.9$\times$ the random baseline) with max regret 0.119 $\pm$ 0.013 (Table~\ref{tab:vocab-scale}; Appendix~\ref{app:vocab-scale-data}).
It reaches 98.44\% $\pm$ 0.30 of oracle normalized transfer.
The per-target Spearman correlation with normalized transfer is 0.47 $\pm$ 0.02 (Appendix~\ref{app:vocab-scale}).
Holding LoRA outcomes fixed, sketch stability across SRHT seeds is high (off-diagonal Spearman $0.982 \pm 0.003$; Appendix~\ref{app:vocab-scale}).
The strongest trivially scalable baseline (PPL similarity) reaches 28\% top-1 source selection, suggesting that error geometry adds signal beyond perplexity matching.
Activation-only geometry is competitive on natural shifts, but Section~\ref{sec:verbalizer-shift} shows fixed-prefix prompt regimes where it collapses and FisherSketch remains informative.
As a controlled sanity check, in a small-$K$ linear-probe sweep (five overlap levels), $S_{\mathrm{cov}}(\Gamma_e)$ correlates 0.977 with transfer gain, matching LogME (0.976) and exceeding LEEP (0.964; Appendix~\ref{app:controlled-transfer}).
Other label-aware baselines are similarly high because overlap varies monotonically across the sweep (Table~\ref{tab:controlled-transfer}; Appendix~\ref{app:controlled-transfer}).

\para{Ablation: activation vs.\ error geometry.}
At 500 samples, error geometry $S_{\mathrm{cov}}(\Gamma_e)$ reaches 41.3\% $\pm$ 1.7 top-1 with max regret 0.210 $\pm$ 0.011.
Activation geometry performs better: 46.3\% $\pm$ 3.7 top-1 with lower tail risk (max regret 0.114 $\pm$ 0.006).
The diagonal error proxy matches the error-only tail risk (max regret 0.210 $\pm$ 0.011).
FisherSketch (SRHT) remains competitive (45.7\% $\pm$ 5.0 top-1; max regret 0.119 $\pm$ 0.013) and retains high oracle performance (98.44\% $\pm$ 0.30).
Conditions under which an error-only metric can win are in Appendix~\ref{app:ranking-vs-screening}.
We report the activation-only marginal $S_{\mathrm{cov}}(M_a)$ (not probe-centered CKA), computed in the same streaming pass as FisherSketch.
Practically, if $S_{\mathrm{cov}}(M_a)$ is nearly constant across candidate sources (e.g., fixed-prefix prompt regimes), activation-only scores are uninformative for ranking, whereas FisherSketch remains informative via label-conditioned error geometry (Section~\ref{sec:verbalizer-shift}).
Table~\ref{tab:vocab-scale} should not be read as uniform dominance over all proxy baselines: on these natural shifts, the activation-only marginal has slightly higher top-1 accuracy than FisherSketch, while FisherSketch has competitive tail risk and oracle-normalized transfer. The intended use case is the regime where a marginal signal is misleading or incomplete, and activation, error, and coupling geometry should be considered jointly.

\subsection{Identifiability Stress Test: Verbalizer Shift}
\label{sec:verbalizer-shift}

Table~\ref{tab:vocab-scale} shows that activation geometry can be competitive on natural domain shifts.
We next study verbalizer shift under fixed-prefix prompting, where activation similarity is constant.
For each dataset (BoolQ, RTE, CB), we fix an identical prompt prefix ending with \texttt{The answer is} and vary only the single-token verbalizer (e.g., \texttt{Yes/No}, \texttt{True/False}, \texttt{Correct/Wrong}, \texttt{No/Yes}).
By causality, the answer-position hidden state depends only on the shared prefix, so $S_{\mathrm{cov}}(M_a)\approx 1$ for all verbalizer pairs.

\begin{table}[t]
\centering
\caption{\textbf{Verbalizer-shift source selection} (Llama-3.1-8B; 3 datasets $\times$ 3 seeds = 9 runs; 6 verbalizers per run). Activations are identical under the fixed-prefix setup ($S_{\mathrm{cov}}(M_a)\approx 1$), making activation-only scores constant. Error geometry remains informative: both FisherSketch and the error-only marginal $S_{\mathrm{cov}}(\Gamma_e)$ predict LoRA transfer.}
\label{tab:verbalizer-shift}
{\small
\setlength{\tabcolsep}{4pt}
\begin{tabular}{@{}lccc@{}}
\toprule
Method & Top-1 & \% Oracle & Random \\
\midrule
FisherSketch & 66.7\% & 95.7\% & -- \\
$S_{\mathrm{cov}}(\Gamma_e)$ alone & 72.2\% & 100.0\% & -- \\
$S_{\mathrm{cov}}(M_a)$ (constant) & 20\% & -- & 20\% \\
\bottomrule
\end{tabular}
}
\end{table}

\begin{figure*}[t]
\centering
\includegraphics[width=0.92\textwidth]{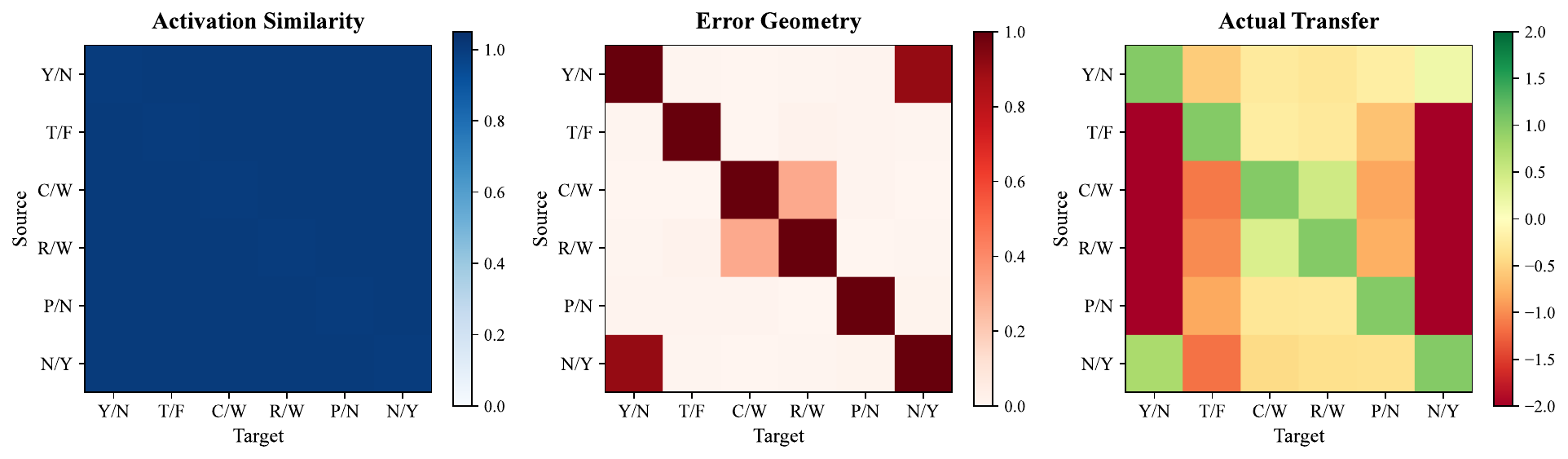}
\caption{\textbf{When Activation Similarity Fails.}
Activation similarity $S_{\mathrm{cov}}(M_a)$ (left) is constant across all verbalizer pairs, yet actual transfer (right) varies dramatically.
Error geometry (middle) captures this variation.
Under fixed-prefix prompting, representation-only metrics provide no signal for source selection.
(Llama-3.1-8B, BoolQ, seed 42.)}
\label{fig:verbalizer-heatmaps}
\end{figure*}

LoRA transfer varies sharply across verbalizers, and FisherSketch predicts this by sketching errors under the full $K{=}128{,}256$ softmax used to compute $U_{s\to t}$ (token loss on the designated verbalizer, not a renormalized two-way softmax).
Across 9 runs (3 seeds $\times$ 3 datasets), FisherSketch achieves 66.7\% top-1 source selection and 95.7\% of oracle normalized transfer (Table~\ref{tab:verbalizer-shift}; Fig.~\ref{fig:verbalizer-heatmaps}), while $S_{\mathrm{cov}}(M_a)$ reduces to the random baseline.
In this controlled regime, the error-only marginal $S_{\mathrm{cov}}(\Gamma_e)$ is also strong (72.2\% top-1; 100\% of oracle), as expected from the three-factor identity: when $S_{\mathrm{cov}}(M_a)$ is constant, the ranking signal must come from label-conditioned error geometry.
This is a dense-vocabulary instantiation of Theorem~\ref{thm:rep-metric-limitation}.
Holding representations fixed does not determine head update geometry; the missing information is label-conditioned error structure, which FisherSketch measures at vocabulary scale.
Full experimental details and per-run breakdowns are in Appendix~\ref{app:verbalizer-shift}.
Here the error-only marginal is especially strong and exceeds FisherSketch on top-1, which is consistent with the decomposition: when activation geometry contributes no ranking signal, the error marginal can be sufficient.

\para{Protocol and auditability.}
We release the fixed candidate sets and sketch sample indices, and enforce disjoint sketch vs.\ LoRA splits; correlations use the union of evaluated pairs, while top-1 and regret use the uniform 24-candidate protocol (Appendix~\ref{app:vocab-scale}). Additional controlled-regime experiments and full-network validation details are in Appendices~\ref{app:controlled-transfer}--\ref{app:llm-validation}.


\section{Discussion}
\label{sec:discussion}

\para{Perspective.}
FisherSketch makes the non-identifiability \mbox{insight} operational: it turns label-conditioned head update geometry into a comparable object at vocabulary scale by embedding each task as a random-feature approximation to the product-kernel mean embedding (Theorem~\ref{thm:three-factor}).
In verbalizer shift, this remains informative precisely when representation similarity is constant under fixed-prefix prompts (Section~\ref{sec:verbalizer-shift}).

\para{Byproduct.}
A separable proxy $\mathrm{FA\text{-}CKA} := \mathrm{CKA} \cdot S_{\mathrm{cov}}(\Gamma_e)$ follows when the joint embedding is approximately rank-1.
However, it requires $K^2$ storage, so we do not use it at vocabulary scale (Appendix~\ref{app:cka-fisher-gap}).
Beyond source selection, the same $\ell_2$-normalized signatures serve as a task-geometry probe: their activation marginal, error marginal, and coupling ratio ask whether task neighborhoods are representation-driven, error-driven, or activation-dark (Appendices~\ref{app:retrieval-eval} and~\ref{app:rho-delta-calibration}). They also support nearest-neighbor task retrieval and open-set addition without retraining (Appendices~\ref{app:task-signatures} and~\ref{app:retrieval-procedure}; 55.5\% top-1 on 700-domain Natural Instructions retrieval).

\subsection{Limitations \& Broader Applicability}
Our formal results concern \emph{head (final-layer)} empirical Fisher alignment, where per-example gradients satisfy $g=a\otimes e$.
For deep networks, Appendix~\ref{app:full-network-fisher} provides a full-network decomposition and diagnostics (profile cosine $c_r$ and off-diagonal discrepancy $\Delta_{\mathrm{off}}$), with the exact depth-coupling form in Appendix~\ref{app:full-network-exact}; Appendix~\ref{app:full-network-sketch} describes a structured-projection variant for sketching full-network FisherSketch.
Our ViT and LLM validations satisfy these diagnostics; for new architectures, we recommend reporting $(c_r,\Delta_{\mathrm{off}})$.

Head-level error geometry further requires an aligned output space (shared vocabulary or label taxonomy); otherwise error alignment needs an explicit output mapping (Appendix~\ref{app:shared-output-space}).
For heterogeneous outputs, head Fisher is not directly comparable across tasks. Appendix~\ref{app:heterogeneous-internal} reports internal-layer directional Fisher descriptors as a gradient diagnostic; they are complementary to CKA rather than uniformly stronger in that validation.

We use empirical (data) Fisher at a fixed checkpoint, so results may vary across checkpoints, population Fisher, and adaptation methods.
Signatures are checkpoint-conditional but require only one streaming pass and scale linearly in streamed tokens (Appendix~\ref{app:fisher-computation}; Table~\ref{tab:timing-benchmark}; e.g., 4.7s for a 200-sample domain signature on an A100; Appendix~\ref{app:dynamic-addition}).
The 16\,KB figure refers to the stored fp32 descriptor at $m=4096$; internal shared-token layers require transient $O(nm)$ sketch-feature state before pooling, e.g., about 8\,MiB per task per layer at $n=500$, $m=4096$ in fp32. In our heterogeneous-output ViT-B/16 validation, four internal layers across eight models used about 250\,MiB of transient sketch-feature state and peaked at 4.19\,GiB GPU memory during Fisher collection.
The unnormalized random-feature inner-product estimates are unbiased (variance $O(1/m)$ for fixed pairs), while normalized cosines and rankings can degrade for small $n$ or noisy domains; shrinkage, proxy calibration, and sketch-dimension behavior are documented in Appendices~\ref{app:rho_shrink},~\ref{app:rho-delta-calibration}, and~\ref{app:m-sweep-camera-ready}.

Our vocabulary-scale evaluation is LoRA transfer on 100 domains under a fixed candidate-set protocol; despite disjoint sampling and released candidate sets/sketch indices for auditability, absolute numbers may shift with domain construction, candidate draws, or transfer hyperparameters (Appendix~\ref{app:vocab-scale}).
We expect strongest applicability in within-family, aligned-output settings (LLMs, subset classification); heterogeneous outputs likely require an output embedding/transport map or full-network/shared-parameter comparisons.

\subsection{Related Work}
CKA~\citep{kornblith2019similarity}, RSA~\citep{kriegeskorte2008rsa}, and SVCCA~\citep{raghu2017svcca} compare representations.
Related representation-similarity analyses and deconfounded variants have been proposed~\citep{dwivedi2019representation,nguyen2021wide,cui2022deconfounded}, and we formalize their blindness to error geometry.
Even after controlling for function-preserving symmetries, representation similarity can remain non-identifiable for label-conditioned update geometry.

Label-aware transfer metrics such as LEEP and LogME use finite-label information \citep{nguyen2020leep,you2021logme}.
Appendix~\ref{app:controlled-transfer} compares them with $S_{\mathrm{cov}}(\Gamma_e)$ in a small-$K$ overlap sweep, and Appendix~\ref{app:leep-connection} formalizes LEEP, in a stylized hard-prediction limit, as a diagonal error-geometry proxy; we do not claim that LogME is identical to this Fisher marginal.
Operationally, FisherSketch plays the source-selection role of these scores for shared-vocabulary LLMs, but compares joint update geometry instead of small-output evidence or source-model label transfer.
Related checkpoint-ranking and transferability scores can scale poorly at $K{\sim}10^5$ in their exact form \citep{li2021nleep,huang2022transrate}.

Task2Vec~\citep{achille2019task2vec} embeds tasks via diagonal Fisher.
Information-geometric task distances offer another Fisher-style view~\citep{gao2021information}.
Neural Fisher Kernels apply Fisher-kernel ideas to neural networks for data-level representation extraction and low-rank approximations~\citep{zhang2022learningnfk}; our focus is task-level Fisher alignment under a shared output basis and a streaming activation/error sketch at vocabulary scale.
Mean-gradient cosine compares the first moment $\mathbb{E}[g]$, diagonal-Fisher/task-vector similarities keep only diagonal entries of the second moment $\mathbb{E}[gg^\top]$, and influence-style scores answer pointwise perturbation questions (Appendix~\ref{app:gradient-baselines}).
FisherSketch instead sketches the full head-gradient second-moment kernel mean $\mathbb{E}[\phi(a,e)]$, so it can retain off-diagonal error geometry and activation-error coupling while remaining vocabulary-scale.
It is therefore best read as a vocabulary-scale source-selection analogue of finite-label transferability scoring, not as a claim that Task2Vec, LogME/LEEP, CKA, or influence baselines are uninformative in their native regimes.
K-FAC~\citep{martens2015kfac} approximates Fisher for optimization, whereas we study cross-task alignment and the coupling term $\rho$ (Theorem~\ref{thm:three-factor}); Appendix~\ref{app:kfac-relationship} details the connection.

Broader work on transfer examines what is transferred and when transfer fails~\citep{yosinski2014transferable,neyshabur2020being,raghu2019transfusion,wang2019characterizing}.
Multi-task and continual learning study task grouping and gradient interactions~\citep{caruana1997multitask,standley2020tasks,zenke2017continual,fifty2021efficiently,yu2020gradient,chen2018gradnorm}, Fisher-based continual-learning regularization (EWC)~\citep{kirkpatrick2017overcoming}, and sequential fine-tuning order effects~\citep{sweeney2026geometrysequential}.
These order-level analyses are complementary to FisherSketch: they study how updates compose over time, while FisherSketch supplies a vocabulary-scale pairwise update-compatibility signal.

\section{Conclusion}

Representation-only metrics cannot recover head Fisher alignment without assumptions on error geometry or label mapping (Corollary~\ref{cor:cka-nonidentifiable}).
Head Fisher alignment is a product-kernel cosine with a three-factor decomposition that includes a coupling term $\rho$ (Theorem~\ref{thm:three-factor}).
FisherSketch makes this alignment tractable at vocabulary scale, compressing each task from $\sim$61\,GiB to a 16\,KB float32 signature (with a 192\,KB per-task streaming state for split-sample estimation); on Llama-3.1-8B it achieves 45.7\% $\pm$ 5.0 top-1 (98.44\% $\pm$ 0.30 of oracle normalized transfer), competitive with activation-only on natural domain shifts and robust when activation geometry collapses (Section~\ref{sec:verbalizer-shift}).
It thereby serves both as a LogME/LEEP-like training-free source selector in shared-vocabulary activation-dark regimes and as a compact probe for how activation geometry, error geometry, and their coupling organize LLM tasks. Diagnostics connect head alignment to full-network behavior in our ViT/LLM validations (Appendix~\ref{app:vit-validation}, Appendix~\ref{app:llm-validation}), and the shared-output requirement delineates the scope of \headlevel{} comparisons.

\section*{Impact Statement}
This work makes update-geometry comparisons and task indexing practical at LLM vocabulary scale by compressing each task into a small signature. That enables new control-plane designs for adaptation---automated fine-tuning source selection, interference-aware continual and multi-task learning, and inference-time routing/open-set expansion---grounded in predicted parameter-update compatibility rather than representation similarity. More broadly, we provide a scalable way to estimate whether parameter updates learned from one dataset/task are likely to help or harm another, supporting better transfer decisions before incurring the cost of full fine-tuning. In practical automated adaptation pipelines, this can reduce negative transfer, avoid wasted training runs, and surface interference patterns and rare but severe regressions earlier, improving reliability while saving compute and energy.

A key risk is that automated source selection or routing policies built on these signals could amplify bias or reduce coverage by systematically deprioritizing underrepresented, safety-critical, or long-tail data that appears ``misaligned.'' Compatibility estimates are also predictive and can be wrong, potentially excluding beneficial sources or creating blind spots. We therefore recommend using update-compatibility signals as one input among others (coverage, fairness, safety evaluation, and domain requirements), and auditing any automated selection/routing mechanism for disparate impact and robustness across minority and edge-case data.

\balancecols
\bibliography{references}

@inproceedings{kornblith2019similarity,
  title={Similarity of Neural Network Representations Revisited},
  author={Kornblith, Simon and Norouzi, Mohammad and Lee, Honglak and Hinton, Geoffrey},
  booktitle={International Conference on Machine Learning},
  pages={3519--3529},
  year={2019},
  organization={PMLR}
}

@inproceedings{nguyen2021wide,
  title={Do Wide and Deep Networks Learn the Same Things? {U}ncovering How Neural Network Representations Vary with Width and Depth},
  author={Nguyen, Thao and Raghu, Maithra and Kornblith, Simon},
  booktitle={International Conference on Learning Representations},
  year={2021}
}

@inproceedings{raghu2017svcca,
  title={{SVCCA}: Singular Vector Canonical Correlation Analysis for Deep Learning Dynamics and Interpretability},
  author={Raghu, Maithra and Gilmer, Justin and Yosinski, Jason and Sohl-Dickstein, Jascha},
  booktitle={Advances in Neural Information Processing Systems},
  volume={30},
  pages={6076--6085},
  year={2017}
}

@misc{raghu2017svcca-code,
  title={{SVCCA} Official Implementation},
  author={Raghu, Maithra and Gilmer, Justin and Yosinski, Jason and Sohl-Dickstein, Jascha},
  howpublished={\url{https://github.com/google/svcca}},
  year={2017},
  note={Accessed: 2025-12-12}
}

@inproceedings{morcos2018insights,
  title={Insights on Representational Similarity in Neural Networks with Canonical Correlation},
  author={Morcos, Ari and Raghu, Maithra and Bengio, Samy},
  booktitle={Advances in Neural Information Processing Systems},
  volume={31},
  pages={5732--5741},
  year={2018}
}

@article{kriegeskorte2008rsa,
  title={Representational Similarity Analysis--Connecting the Branches of Systems Neuroscience},
  author={Kriegeskorte, Nikolaus and Mur, Marieke and Bandettini, Peter A.},
  journal={Frontiers in Systems Neuroscience},
  volume={2},
  pages={4},
  year={2008},
  publisher={Frontiers},
  doi={10.3389/neuro.06.004.2008},
  url={https://doi.org/10.3389/neuro.06.004.2008}
}

@article{amari1998natural,
  title={Natural Gradient Works Efficiently in Learning},
  author={Amari, Shun-Ichi},
  journal={Neural Computation},
  volume={10},
  number={2},
  pages={251--276},
  year={1998},
  publisher={MIT Press}
}

@article{martens2020new,
  title={New Insights and Perspectives on the Natural Gradient Method},
  author={Martens, James},
  journal={Journal of Machine Learning Research},
  volume={21},
  number={146},
  pages={1--76},
  year={2020}
}

@inproceedings{pascanu2013revisiting,
  title={Revisiting Natural Gradient for Deep Networks},
  author={Pascanu, Razvan and Bengio, Yoshua},
  booktitle={International Conference on Learning Representations},
  year={2014}
}

@article{kunstner2019limitations,
  title={Limitations of the Empirical {F}isher Approximation for Natural Gradient Descent},
  author={Kunstner, Frederik and Hennig, Philipp and Balles, Lukas},
  journal={Advances in Neural Information Processing Systems},
  volume={32},
  year={2019}
}

@inproceedings{martens2015kfac,
  title={Optimizing Neural Networks with {K}ronecker-factored Approximate Curvature},
  author={Martens, James and Grosse, Roger},
  booktitle={International Conference on Machine Learning},
  pages={2408--2417},
  year={2015},
  organization={PMLR},
  volume={37}
}

@inproceedings{grosse2016kfc,
  title={A {K}ronecker-factored Approximate {F}isher Matrix for Convolution Layers},
  author={Grosse, Roger and Martens, James},
  booktitle={International Conference on Machine Learning},
  pages={573--582},
  year={2016},
  organization={PMLR},
  volume={48}
}

@article{heskes2000natural,
  title={On ``Natural'' Learning and Pruning in Multilayered Perceptrons},
  author={Heskes, Tom},
  journal={Neural Computation},
  volume={12},
  number={4},
  pages={881--901},
  year={2000},
  publisher={MIT Press}
}

@inproceedings{george2018ekfac,
  title={Fast Approximate Natural Gradient Descent in a {K}ronecker-factored Eigenbasis},
  author={George, Thomas and Laurent, C{\'e}sar and Bouthillier, Xavier and Ballas, Nicolas and Vincent, Pascal},
  booktitle={Advances in Neural Information Processing Systems},
  volume={31},
  year={2018}
}

@inproceedings{ba2017distributed,
  title={Distributed Second-Order Optimization using {K}ronecker-Factored Approximations},
  author={Ba, Jimmy and Grosse, Roger and Martens, James},
  booktitle={International Conference on Learning Representations},
  year={2017}
}

@article{kirkpatrick2017overcoming,
  title={Overcoming Catastrophic Forgetting in Neural Networks},
  author={Kirkpatrick, James and Pascanu, Razvan and Rabinowitz, Neil and Veness, Joel and Desjardins, Guillaume and Rusu, Andrei A and Milan, Kieran and Quan, John and Ramalho, Tiago and Grabska-Barwinska, Agnieszka and others},
  journal={Proceedings of the National Academy of Sciences},
  volume={114},
  number={13},
  pages={3521--3526},
  year={2017},
  publisher={National Acad Sciences}
}

@inproceedings{zenke2017continual,
  title={Continual Learning Through Synaptic Intelligence},
  author={Zenke, Friedemann and Poole, Ben and Ganguli, Surya},
  booktitle={International Conference on Machine Learning},
  pages={3987--3995},
  year={2017},
  organization={PMLR}
}

@inproceedings{dwivedi2019representation,
  title={Representation Similarity Analysis for Efficient Task Taxonomy \& Transfer Learning},
  author={Dwivedi, Kshitij and Roig, Gemma},
  booktitle={Proceedings of the IEEE/CVF Conference on Computer Vision and Pattern Recognition},
  pages={12387--12396},
  year={2019}
}

@inproceedings{cui2022deconfounded,
  title={Deconfounded Representation Similarity for Comparison of Neural Networks},
  author={Cui, Tianyu and Kumar, Yogesh and Marttinen, Pekka and Kaski, Samuel},
  booktitle={Advances in Neural Information Processing Systems},
  volume={35},
  pages={19138--19151},
  year={2022}
}

@inproceedings{wang2019characterizing,
  title={Characterizing and Avoiding Negative Transfer},
  author={Wang, Zirui and Dai, Zihang and P{\'o}czos, Barnab{\'a}s and Carbonell, Jaime},
  booktitle={Proceedings of the IEEE/CVF Conference on Computer Vision and Pattern Recognition},
  pages={11293--11302},
  year={2019}
}

@inproceedings{yosinski2014transferable,
  title={How Transferable Are Features in Deep Neural Networks?},
  author={Yosinski, Jason and Clune, Jeff and Bengio, Yoshua and Lipson, Hod},
  booktitle={Advances in Neural Information Processing Systems},
  volume={27},
  year={2014}
}

@inproceedings{neyshabur2020being,
  title={What is Being Transferred in Transfer Learning?},
  author={Neyshabur, Behnam and Sedghi, Hanie and Zhang, Chiyuan},
  booktitle={Advances in Neural Information Processing Systems},
  volume={33},
  pages={512--523},
  year={2020}
}

@inproceedings{raghu2019transfusion,
  title={Transfusion: Understanding Transfer Learning for Medical Imaging},
  author={Raghu, Maithra and Zhang, Chiyuan and Kleinberg, Jon and Bengio, Samy},
  booktitle={Advances in Neural Information Processing Systems},
  volume={32},
  year={2019}
}

@article{caruana1997multitask,
  title={Multitask Learning},
  author={Caruana, Rich},
  journal={Machine Learning},
  volume={28},
  number={1},
  pages={41--75},
  year={1997},
  publisher={Springer}
}

@inproceedings{standley2020tasks,
  title={Which Tasks Should Be Learned Together in Multi-task Learning?},
  author={Standley, Trevor and Zamir, Amir and Chen, Dawn and Guibas, Leonidas and Malik, Jitendra and Savarese, Silvio},
  booktitle={International Conference on Machine Learning},
  pages={9120--9132},
  year={2020},
  organization={PMLR}
}

@inproceedings{yu2020gradient,
  title={Gradient Surgery for Multi-Task Learning},
  author={Yu, Tianhe and Kumar, Saurabh and Gupta, Abhishek and Levine, Sergey and Hausman, Karol and Finn, Chelsea},
  booktitle={Advances in Neural Information Processing Systems},
  volume={33},
  pages={5824--5836},
  year={2020}
}

@inproceedings{chen2018gradnorm,
  title={{GradNorm}: Gradient Normalization for Adaptive Loss Balancing in Deep Multitask Networks},
  author={Chen, Zhao and Badrinarayanan, Vijay and Lee, Chen-Yu and Rabinovich, Andrew},
  booktitle={International Conference on Machine Learning},
  pages={794--803},
  year={2018},
  organization={PMLR}
}

@inproceedings{fifty2021efficiently,
  title={Efficiently Identifying Task Groupings for Multi-Task Learning},
  author={Fifty, Chris and Amid, Ehsan and Zhao, Zhe and Yu, Tianhe and Anil, Rohan and Finn, Chelsea},
  booktitle={Advances in Neural Information Processing Systems},
  volume={34},
  pages={27503--27516},
  year={2021}
}

@inproceedings{achille2019task2vec,
  title={Task2Vec: Task Embedding for Meta-Learning},
  author={Achille, Alessandro and Lam, Michael and Tewari, Rahul and Ravichandran, Avinash and Maji, Subhransu and Fowlkes, Charless C. and Soatto, Stefano and Perona, Pietro},
  booktitle={International Conference on Computer Vision},
  pages={6430--6439},
  year={2019}
}

@inproceedings{jacot2018neural,
  title={Neural Tangent Kernel: Convergence and Generalization in Neural Networks},
  author={Jacot, Arthur and Gabriel, Franck and Hongler, Cl{\'e}ment},
  booktitle={Advances in Neural Information Processing Systems},
  volume={31},
  year={2018}
}

@article{lee2019wide,
  title={Wide Neural Networks of Any Depth Evolve as Linear Models Under Gradient Descent},
  author={Lee, Jaehoon and Xiao, Lechao and Schoenholz, Samuel and Bahri, Yasaman and Novak, Roman and Sohl-Dickstein, Jascha and Pennington, Jeffrey},
  journal={Advances in Neural Information Processing Systems},
  volume={32},
  year={2019}
}

@inproceedings{cristianini2001kernel,
  title={On Kernel-Target Alignment},
  author={Cristianini, Nello and Shawe-Taylor, John and Elisseeff, Andre and Kandola, Jaz},
  booktitle={Advances in Neural Information Processing Systems},
  volume={14},
  pages={367--373},
  year={2001}
}

@article{cortes2012algorithms,
  title={Algorithms for Learning Kernels Based on Centered Alignment},
  author={Cortes, Corinna and Mohri, Mehryar and Rostamizadeh, Afshin},
  journal={Journal of Machine Learning Research},
  volume={13},
  pages={795--828},
  year={2012}
}

@inproceedings{smola2007hilbert,
  title     = {A Hilbert Space Embedding for Distributions},
  author    = {Smola, Alex and Gretton, Arthur and Song, Le and Sch{\"o}lkopf, Bernhard},
  booktitle = {Algorithmic Learning Theory (ALT 2007)},
  editor    = {Hutter, Marcus and Servedio, Rocco A. and Takimoto, Eiji},
  series    = {Lecture Notes in Computer Science},
  volume    = {4754},
  pages     = {13--31},
  publisher = {Springer},
  address   = {Berlin, Heidelberg},
  year      = {2007},
  doi       = {10.1007/978-3-540-75225-7_5}
}

@inproceedings{he2016deep,
  title={Deep Residual Learning for Image Recognition},
  author={He, Kaiming and Zhang, Xiangyu and Ren, Shaoqing and Sun, Jian},
  booktitle={Proceedings of the IEEE Conference on Computer Vision and Pattern Recognition},
  pages={770--778},
  year={2016}
}

@inproceedings{nguyen2020leep,
  title={{LEEP}: A New Measure to Evaluate Transferability of Learned Representations},
  author={Nguyen, Cuong V and Hassner, Tal and Seeger, Matthias and Archambeau, C{\'e}dric},
  booktitle={International Conference on Machine Learning},
  pages={7294--7305},
  year={2020},
  organization={PMLR}
}

@inproceedings{you2021logme,
  title={{LogME}: Practical Assessment of Pre-trained Models for Transfer Learning},
  author={You, Kaichao and Liu, Yong and Wang, Jianmin and Long, Mingsheng},
  booktitle={International Conference on Machine Learning},
  pages={12133--12143},
  year={2021},
  organization={PMLR}
}

@inproceedings{bao2019information,
  title={An Information-Theoretic Approach to Transferability in Task Transfer Learning},
  author={Bao, Yajie and Li, Yang and Huang, Shao-Lun and Zhang, Lin and Zheng, Lizhong and Zamir, Amir and Guibas, Leonidas},
  booktitle={2019 IEEE International Conference on Image Processing (ICIP)},
  pages={2309--2313},
  year={2019},
  month={sep},
  publisher={IEEE},
  doi={10.1109/ICIP.2019.8803726}
}

@inproceedings{tran2019transferability,
  title={Transferability and Hardness of Supervised Classification Tasks},
  author={Tran, Anh T. and Nguyen, Cuong V. and Hassner, Tal},
  booktitle={Proceedings of the IEEE/CVF International Conference on Computer Vision (ICCV)},
  pages={1395--1405},
  year={2019}
}

@inproceedings{li2021nleep,
  title={Ranking Neural Checkpoints},
  author={Li, Yandong and Jia, Xuhui and Sang, Ruoxin and Zhu, Yukun and Green, Bradley and Wang, Liqiang and Gong, Boqing},
  booktitle={IEEE Conference on Computer Vision and Pattern Recognition},
  pages={2663--2673},
  year={2021}
}

@inproceedings{huang2022transrate,
  title={Frustratingly Easy Transferability Estimation},
  author={Huang, Long-Kai and Huang, Junzhou and Rong, Yu and Yang, Qiang and Wei, Ying},
  booktitle={International Conference on Machine Learning},
  pages={9201--9225},
  year={2022},
  organization={PMLR}
}

@inproceedings{martens2018kronecker,
  title={Kronecker-factored Curvature Approximations for Recurrent Neural Networks},
  author={Martens, James and Ba, Jimmy and Johnson, Matt},
  booktitle={International Conference on Learning Representations},
  year={2018}
}

@inproceedings{gao2021information,
  title={An Information-Geometric Distance on the Space of Tasks},
  author={Gao, Yansong and Chaudhari, Pratik},
  booktitle={International Conference on Machine Learning},
  pages={3553--3563},
  year={2021},
  organization={PMLR}
}

@inproceedings{kar2012random,
  title={Random Feature Maps for Dot Product Kernels},
  author={Kar, Purushottam and Karnick, Harish},
  booktitle={Artificial Intelligence and Statistics},
  pages={583--591},
  year={2012},
  organization={PMLR}
}

@inproceedings{pham2013fast,
  title={Fast and Scalable Polynomial Kernels via Explicit Feature Maps},
  author={Pham, Ninh and Pagh, Rasmus},
  booktitle={Proceedings of the 19th ACM SIGKDD International Conference on Knowledge Discovery and Data Mining},
  pages={239--247},
  year={2013},
  organization={ACM}
}

@techreport{radford2019language,
  title={Language Models are Unsupervised Multitask Learners},
  author={Alec Radford and Jeffrey Wu and Rewon Child and David Luan and Dario Amodei and Ilya Sutskever},
  year={2019},
  institution={OpenAI},
  type={Technical Report},
  url={https://cdn.openai.com/better-language-models/language_models_are_unsupervised_multitask_learners.pdf},
  note={Technical report linked from the OpenAI blog post ``Better Language Models and Their Implications'' (February 14, 2019).}
}

@inproceedings{devlin2019bert,
  title={{BERT}: Pre-training of Deep Bidirectional Transformers for Language Understanding},
  author={Devlin, Jacob and Chang, Ming-Wei and Lee, Kenton and Toutanova, Kristina},
  booktitle={Proceedings of the 2019 Conference of the North American Chapter of the Association for Computational Linguistics: Human Language Technologies},
  pages={4171--4186},
  year={2019}
}

@inproceedings{dosovitskiy2021image,
  title={An Image is Worth 16x16 Words: Transformers for Image Recognition at Scale},
  author={Dosovitskiy, Alexey and Beyer, Lucas and Kolesnikov, Alexander and Weissenborn, Dirk and Zhai, Xiaohua and Unterthiner, Thomas and Dehghani, Mostafa and Minderer, Matthias and Heigold, Georg and Gelly, Sylvain and Uszkoreit, Jakob and Houlsby, Neil},
  booktitle={International Conference on Learning Representations},
  year={2021}
}

@article{raffel2020exploring,
  title={Exploring the Limits of Transfer Learning with a Unified Text-to-Text Transformer},
  author={Raffel, Colin and Shazeer, Noam and Roberts, Adam and Lee, Katherine and Narang, Sharan and Matena, Michael and Zhou, Yanqi and Li, Wei and Liu, Peter J},
  journal={Journal of Machine Learning Research},
  volume={21},
  number={140},
  pages={1--67},
  year={2020}
}

@article{grattafiori2024llama3,
  title={The {Llama} 3 Herd of Models},
  author={Grattafiori, Aaron and Dubey, Abhimanyu and Jauhri, Abhinav and Pandey, Abhinav and Kadian, Abhishek and Al-Dahle, Ahmad and Letman, Aiesha and Mathur, Akhil and Schelten, Alan and Vaughan, Alex and Yang, Amy and Fan, Angela and Goyal, Anirudh and Hartshorn, Anthony and Yang, Aobo and Mitra, Archi and Sravankumar, Archie and Korenev, Artem and Hinsvark, Arthur and others},
  year={2024},
  eprint={2407.21783},
  archivePrefix={arXiv},
  primaryClass={cs.AI},
  doi={10.48550/arXiv.2407.21783},
  journal={arXiv preprint arXiv:2407.21783},
  url={https://arxiv.org/abs/2407.21783}
}

@article{yang2024qwen25,
  title={{Qwen2.5} Technical Report},
  author={{Qwen Team}},
  year={2024},
  eprint={2412.15115},
  archivePrefix={arXiv},
  primaryClass={cs.CL},
  doi={10.48550/arXiv.2412.15115},
  journal={arXiv preprint arXiv:2412.15115},
  url={https://arxiv.org/abs/2412.15115}
}

@article{jiang2023mistral,
  title={Mistral {7B}},
  author={Albert Q. Jiang and Alexandre Sablayrolles and Arthur Mensch and Chris Bamford and
          Devendra Singh Chaplot and Diego de las Casas and Florian Bressand and Gianna Lengyel and
          Guillaume Lample and Lucile Saulnier and L{\'e}lio {Renard Lavaud} and Marie-Anne Lachaux and
          Pierre Stock and Teven {Le Scao} and Thibaut Lavril and Thomas Wang and Timoth{\'e}e Lacroix and
          William {El Sayed}},
  year={2023},
  eprint={2310.06825},
  archivePrefix={arXiv},
  primaryClass={cs.CL},
  doi={10.48550/arXiv.2310.06825},
  journal={arXiv preprint arXiv:2310.06825},
  url={https://arxiv.org/abs/2310.06825}
}

@inproceedings{wang2022supernaturalinstructions,
  title={Super-{N}atural{I}nstructions: Generalization via Declarative Instructions on 1600+ {NLP} Tasks},
  author={Wang, Yizhong and Mishra, Swaroop and Alipoormolabashi, Pegah and Kordi, Yeganeh and Mirzaei, Amirreza and Arunkumar, Anjana and Ashok, Arjun and Dhanasekaran, Arut Selvan and Naik, Atharva and Stap, David and others},
  booktitle={Proceedings of the 2022 Conference on Empirical Methods in Natural Language Processing},
  pages={5085--5109},
  year={2022}
}

@inproceedings{clark2019boolq,
  title={{B}ool{Q}: Exploring the Surprising Difficulty of Natural Yes/No Questions},
  author={Clark, Christopher and Lee, Kenton and Chang, Ming-Wei and Kwiatkowski, Tom and Collins, Michael and Toutanova, Kristina},
  booktitle={Proceedings of NAACL-HLT 2019},
  pages={2924--2936},
  year={2019}
}

@article{wang2019superglue,
  title={Super{GLUE}: A Stickier Benchmark for General-Purpose Language Understanding Systems},
  author={Wang, Alex and Pruksachatkun, Yada and Nangia, Nikita and Singh, Amanpreet and Michael, Julian and Hill, Felix and Levy, Omer and Bowman, Samuel R.},
  journal={Advances in Neural Information Processing Systems},
  volume={32},
  year={2019}
}

@inproceedings{zhang2022learningnfk,
  title={Learning Representation from {Neural Fisher Kernel} with Low-rank Approximation},
  author={Zhang, Ruixiang and Zhai, Shuangfei and Littwin, Etai and Susskind, Joshua M.},
  booktitle={International Conference on Learning Representations},
  year={2022},
  url={https://openreview.net/forum?id=J1rhANsCY9}
}

@inproceedings{sweeney2026geometrysequential,
  title={The Geometry of Sequential Learning: {Lie-Bracket} Prediction of Transfer Order},
  author={Sweeney, John},
  booktitle={Proceedings of the 43rd International Conference on Machine Learning},
  volume={306},
  series={Proceedings of Machine Learning Research},
  publisher={PMLR},
  year={2026}
}

@inproceedings{hu2022lora,
  title={{LoRA}: Low-Rank Adaptation of Large Language Models},
  author={Hu, Edward J. and Shen, Yelong and Wallis, Phillip and Allen-Zhu, Zeyuan and Li, Yuanzhi and Wang, Shean and Wang, Lu and Chen, Weizhu},
  booktitle={International Conference on Learning Representations},
  year={2022}
}
\bibliographystyle{icml2026}

\newpage
\appendix
\onecolumn
\section*{Appendix Guide}
The appendices serve three purposes: they prove the identities used in the main text, document the estimator and implementation choices, and give the validation/provenance details that cannot fit in the nine-page body. The guide below is exhaustive at the section level and points to subsection labels when a result is meant to support a specific main-text claim.

\paragraph{Core Fisher identity and estimator.}
Appendix~\ref{app:fisher-computation} defines exact head empirical Fisher alignment, the separable Kronecker proxy, and the exact kernel computation used for validation tables. Appendix~\ref{app:linear-time-fisher} gives the linear-time product-kernel mean-embedding view, factored Random Maclaurin sketch, Algorithm~\ref{alg:fishersketch}, empirical-Bayes shrinkage (Appendix~\ref{app:rho_shrink}), structured full-network sketching (Appendix~\ref{app:full-network-sketch}), SRHT implementation details (Appendices~\ref{app:srht}--\ref{app:srht_impl}), and the vocabulary-scale storage/validation setup (Appendix~\ref{app:vocab-scale}). Appendix~\ref{app:shared-params} extends the estimator from no-sharing heads to shared-parameter affine maps such as convolutions and token-wise transformer projections.

\paragraph{Assumptions, proofs, and representation-metric limits.}
Appendix~\ref{app:preliminaries} collects Fisher/CKA preliminaries, while Appendix~\ref{app:protocol-details} records training, data, and metric details for the controlled vision experiments. Appendix~\ref{app:shared-output-space} states the shared-output masked-softmax embedding used by the non-identifiability witness. Appendices~\ref{app:cka-fisher-gap}--\ref{app:rho-empirical} analyze the activation/error/coupling decomposition, its K-FAC relationship, estimator diagnostics, and empirical coupling distributions. Appendices~\ref{app:rsa-svcca-representation-only}--\ref{app:proof-thm1} cover RSA/SVCCA non-identifiability, perturbation and mean-correction lemmas, and the full proof of Theorem~\ref{thm:rep-metric-limitation}.

\paragraph{Full-network scope and transfer diagnostics.}
Appendix~\ref{app:full-network-fisher} explains how head Fisher relates to block/full-network Fisher, including the exact depth-coupling decomposition (Appendix~\ref{app:full-network-exact}) and the controlled CIFAR, ViT-B/16, and LLM validations (Appendices~\ref{app:controlled-transfer}--\ref{app:llm-validation}). Appendix~\ref{app:ranking-vs-screening} explains why activation geometry can screen candidates while error geometry ranks them, and Appendix~\ref{app:leep-connection} relates LEEP-style scoring to diagonal error geometry.

\paragraph{Task signatures, retrieval, and validation diagnostics.}
Appendix~\ref{app:task-signatures} documents the 16\,KB signature storage claim, retrieval protocol/evaluation, Natural Instructions retrieval, molecular-SMILES proof of concept, dynamic open-set addition, and the label-overlap continuum (Appendices~\ref{app:signature-storage}--\ref{app:overlap-sweep}). Appendix~\ref{app:additional-validation-diagnostics} collects additional validation diagnostics: rho/delta calibration, gradient and diagonal-Fisher baselines, and sketch-dimension sweep (Appendices~\ref{app:rho-delta-calibration}--\ref{app:m-sweep-camera-ready}). Appendix~\ref{app:heterogeneous-internal} covers incompatible output heads and the scoped one-step LoRA/internal-layer diagnostic bridge. Appendix~\ref{app:verbalizer-shift} gives the dense-vocabulary verbalizer-shift experiment, including SRHT backend configuration and compute details.

Captions and subsection introductions state whether each quantity is exact head Fisher, a separable proxy, FisherSketch/SRHT, a controlled dense/Gaussian sketch, a parameter-subsampled full-network diagnostic, or an internal-layer directional descriptor.

\section{Fisher Alignment: Exact vs.\ Proxy Computation}
\label{app:fisher-computation}

\subsection{Definitions}

For head-layer parameters, the per-example gradient can be written in the exact Kronecker form:
\[
g_i^{\mathrm{head}} = a_i \otimes e_i,
\]
where $a_i \in \mathbb{R}^d$ is the penultimate activation and $e_i \in \mathbb{R}^K$ is the logit-space error
(gradient of the loss with respect to the logits). For cross-entropy, $e_i = \mathrm{softmax}(W a_i) - y_i$.
For heads with a bias term, we augment $a_i$ with a constant $1$, so $d$ includes the bias. This keeps $g_i^{\mathrm{head}} = a_i \otimes e_i$ unchanged.
We use column-major vectorization, so $\mathrm{vec}(e_i a_i^\top) = a_i \otimes e_i$.
If a row-major flattening is used in code, it yields a fixed permutation (equivalently $e_i \otimes a_i$), which leaves dot products and alignment cosines invariant.

\paragraph{Exact head empirical Fisher.}
The exact head empirical Fisher is the uncentered second moment of per-example gradients:
\[
\mathcal{F}_k^{\mathrm{head}} = \mathbb{E}_{i \sim \mathcal{D}_k}\big[g_i^{\mathrm{head}} (g_i^{\mathrm{head}})^\top\big]
= \mathbb{E}\big[(a_i \otimes e_i)(a_i \otimes e_i)^\top\big]
= \mathbb{E}\big[(a_i a_i^\top) \otimes (e_i e_i^\top)\big].
\]

The \textbf{exact head empirical Fisher alignment} (computed via the kernel identity, i.e., without a Kronecker approximation) is:
\[
A_{\mathrm{F}}^{\mathrm{exact}}(i,j) = \frac{\langle \mathcal{F}_i^{\mathrm{head}}, \mathcal{F}_j^{\mathrm{head}} \rangle_{\mathrm{F}}}{\|\mathcal{F}_i^{\mathrm{head}}\|_{\mathrm{F}} \, \|\mathcal{F}_j^{\mathrm{head}}\|_{\mathrm{F}}}.
\]
This is the empirical estimator of the population head alignment $A_{\mathrm{F}}^{\mathrm{head}}$ used in the theory.
We use ``exact'' to emphasize that no Kronecker approximation is used.

Naively computing this requires materializing a $(dK) \times (dK)$ matrix and scales as $O((dK)^2)$ storage, which motivates the proxy and sketch estimators for large $d$ and $K$.

\paragraph{Kronecker proxy Fisher.}
The Kronecker proxy approximates the expectation of a Kronecker product by a Kronecker product of expectations:
\[
\mathcal{F}_k^{\mathrm{head}} = \mathbb{E}\big[(a_i a_i^\top) \otimes (e_i e_i^\top)\big]
\approx \mathbb{E}[a_i a_i^\top] \otimes \mathbb{E}[e_i e_i^\top]
=: M_{a,k} \otimes \Gamma_{e,k}.
\]

This approximation equals the exact Fisher when $a_i$ and $e_i$ are statistically independent.
Independence is \textbf{generically violated} in trained networks ($\delta \approx 0.34$; Table~\ref{tab:independence}).
However, we can still write the exact multiplicative decomposition
$A_{\mathrm{F}}^{\mathrm{head}}(i,j) = A_{\mathrm{proxy}}(i,j) \cdot \rho_{ij}$ (Theorem~\ref{thm:three-factor}) when the denominator is nonzero (we $\varepsilon$-regularize in practice).
We define the population coupling coefficient as $\rho_{ij} := \chi_{ij}/\sqrt{\chi_{ii}\chi_{jj}}$ (see below for $\chi$).
Under separability, the substantive approximation is $\rho_{ij} \approx 1$.
Empirically, we estimate it as $\hat{\rho}_{ij} := A_{\mathrm{F}}^{\mathrm{exact}}(i,j) / A_{\mathrm{proxy}}(i,j)$.

The \textbf{proxy Fisher alignment} is:
\[
A_{\mathrm{F}}^{\mathrm{proxy}}(i,j) = S_{\mathrm{cov}}(M_{a,i}, M_{a,j}) \cdot S_{\mathrm{cov}}(\Gamma_{e,i}, \Gamma_{e,j}),
\]
where $S_{\mathrm{cov}}(A,B) := \langle A, B \rangle_F / (\|A\|_F \|B\|_F)$ is the Frobenius cosine for uncentered second moments.
It requires only $O(d^2 + K^2)$ \emph{storage} (for the second-moment matrices) instead of $O((dK)^2)$ for the full Fisher.
However, the proxy loses the coupling factor $\rho$ and can invert rankings on some architectures (see below).

\paragraph{FisherSketch.}
FisherSketch (Section~5 of main text) uses factored random features to estimate Fisher alignment in a single streaming pass.
Unlike the Kronecker proxy, FisherSketch \emph{preserves} the coupling factor $\rho$ by sketching the product kernel $k(g_s, g_t) = (\langle a_s, a_t \rangle \cdot \langle e_s, e_t \rangle)^2$ directly.

\textbf{Complexity}: $O(m(d+K))$ operations per sample and $O(m)$ storage per task.

The key insight is \textbf{factored projection}.
Instead of forming $g = a \otimes e$ (dimension $dK$), we compute
\[
\psi_l(a,e) = (r_{a,l} \cdot a)(r_{e,l} \cdot e)(r'_{a,l} \cdot a)(r'_{e,l} \cdot e),
\]
exploiting $(r_a \otimes r_e) \cdot (a \otimes e) = (r_a \cdot a)(r_e \cdot e)$.
Each of the four dot products is $O(d)$ or $O(K)$; with $m$ features, the total cost is $O(m(d+K))$.

The main text vocabulary-scale experiments use FisherSketch.

\subsection{Which Method is Used Where}

\paragraph{Head-level validation tables use exact Fisher.}
The correlations in controlled experiments are computed against the \textbf{exact head empirical Fisher alignment} $A_{\mathrm{F}}^{\mathrm{exact}}$.

\textbf{Computation method}: We exploit the kernel-alignment interpretation. For head Fisher with gradient $g_i = a_i \otimes e_i$, the Frobenius inner product can be computed as:
\[
\langle \mathcal{F}_i, \mathcal{F}_j \rangle_{\mathrm{F}}
= \mathbb{E}_{s \sim i, t \sim j}\big[\langle g_s, g_t \rangle^2\big]
= \mathbb{E}_{s \sim i, t \sim j}\big[(\langle a_s, a_t \rangle \cdot \langle e_s, e_t \rangle)^2\big].
\]

This avoids materializing the $(dK) \times (dK)$ matrix and requires $O(n_i n_j (d + K))$ operations for $n_i, n_j$ samples.

\textbf{Implementation}: Exact head Fisher alignment is computed without materializing the Fisher matrix or forming $(dK)$-dimensional gradient vectors, using only inner products:
\begin{align*}
&\text{for } s \in \{1,\dots,n_1\},\, t \in \{1,\dots,n_2\}: \\
&\quad \text{fisher\_exact} \mathrel{+}= (\langle a_s^{(1)}, a_t^{(2)} \rangle \cdot \langle e_s^{(1)}, e_t^{(2)} \rangle)^2 \\
&\text{fisher\_exact} \gets \text{fisher\_exact} / (n_1 n_2 \cdot \|\mathcal{F}_1\|_F \cdot \|\mathcal{F}_2\|_F)
\end{align*}
We compute $\|\mathcal{F}_k\|_F$ using the same kernel identity on within-task pairs ($s,t \sim k$), which costs $O(n_k^2(d+K))$ per task.
We use V-statistics (including $s=t$), so differences from the population identity are $O(1/n)$.
For FisherSketch norms in the main text, we apply a U-statistic correction to remove diagonal bias; this does not affect the exact baseline reported here.

\paragraph{Why exact for validation?}
Using exact Fisher eliminates concerns about circularity: the FA-CKA correction (which uses $S_{\mathrm{cov}}(\Gamma_e)$) is validated against Fisher alignment computed \emph{without assuming Kronecker structure}.

A frozen-backbone stress test uses a shared-backbone MLP (fixed features; linear heads) and synthetic labels with controlled overlap.
We report both exact and proxy values to show the discrepancy.
At 75\% overlap, the exact alignment is 0.37 while the proxy is 0.65 (77\% relative error), and proxy error ranges from 0--77\% across the sweep (Table~\ref{tab:frozen-backbone-proxy}).

\begin{table}[t]
\centering
\small
\caption{Frozen-backbone label-overlap sweep (shared-backbone MLP; synthetic labels). The Kronecker proxy overestimates exact head Fisher alignment at intermediate overlaps.}
\label{tab:frozen-backbone-proxy}
\begin{tabular}{lccc}
\toprule
Label overlap & Exact $A_{\mathrm{F}}$ & Proxy $A_{\mathrm{proxy}}$ & Rel.\ error \\
\midrule
100\% & 0.79 & 0.94 & 19\% \\
75\% & 0.37 & 0.65 & 77\% \\
50\% & 0.35 & 0.57 & 65\% \\
25\% & 0.25 & 0.36 & 43\% \\
0\% & 0.06 & 0.06 & 0\% \\
\bottomrule
\end{tabular}
\end{table}

\paragraph{Practical use: Proxy accuracy is architecture-dependent.}
Since $A_F^{\mathrm{head}} = A_{\mathrm{proxy}} \cdot \rho$ exactly (Theorem~\ref{thm:three-factor}),
a positive $\mathrm{Corr}(\hat{\rho}, A_{\mathrm{proxy}})$ typically coincides with positive proxy--Fisher correlation,
while a negative value can indicate inversion (this is a diagnostic, not a guarantee).
Unless stated otherwise, $\mathrm{Corr}(\cdot,\cdot)$ denotes Pearson correlation.
Here $\hat{\rho}$ denotes the empirical estimate of $\rho$ from finite samples (defined in Appendix~\ref{app:rho-empirical}).
On ResNet-50 ($n=75$ same-dataset pairs), $\mathrm{Corr}(\hat{\rho}, A_{\mathrm{proxy}}) = +0.88$ (Pearson), and the proxy correlates positively with exact Fisher (Pearson $= 0.98$; Spearman $= 0.69$).
On ViT-B/16 ($n=40$ same-dataset pairs), $\mathrm{Corr}(\hat{\rho}, A_{\mathrm{proxy}}) = -0.85$ (Pearson), and the proxy is negatively correlated with exact Fisher (Pearson $= -0.34$; Spearman $= -0.30$; Table~\ref{tab:vit-proxy-vs-exact}).
All head-level validation tables use exact Fisher; the proxy is conceptual motivation only.
Full-network LLM validation uses parameter-subsampled gradients across all layers for computational tractability.

\paragraph{Estimator usage across experiments.}
Head-level validation tables use exact Fisher via the kernel identity described above (Appendix~\ref{app:fisher-computation}).
Vocabulary-scale experiments use FisherSketch with factored random features (Section~5).
Full-network LLM validation uses parameter-subsampled gradients across layers; Appendix~\ref{app:full-network-sketch} describes a structured-projection full-network sketch.

\paragraph{Why proxy shows negative rank correlation on ViT (and why FA-CKA still works).}
On ViT-B/16 ($n=40$ same-dataset pairs), $\mathrm{Corr}(\hat{\rho}, A_{\mathrm{proxy}}) = -0.85$ (Pearson).
High-proxy pairs tend to have low $\hat{\rho}$, so the proxy is negatively correlated with exact Fisher (Pearson $= -0.34$; Spearman $= -0.30$; Table~\ref{tab:vit-proxy-vs-exact}), indicating rank inversion.
FA-CKA remains strongly correlated with exact Fisher (Pearson $= 0.79$; Spearman $= 0.90$) because it is validated directly against exact Fisher---it does not use the proxy.
CKA alone is only moderately correlated with exact Fisher (Pearson corr.\ $= 0.43$, Spearman corr.\ $= 0.45$), reflecting stable representations but varying error geometry.

The 40 pairs comprise all same-dataset seed pairs across 4 datasets (CIFAR-100, Flowers102, DTD, Oxford-Pets) with 5 training seeds each.
Cross-dataset pairs (10 total, subsampled) have exact Fisher $=0$ by construction because their label spaces are disjoint under the shared-output masked-softmax embedding (Assumption~\ref{ass:shared-output-space}).
These pairs serve as a sanity check that $\mathrm{CKA}\cdot S_{\mathrm{cov}}(\Gamma_e)$ outputs zero in this regime (Table~\ref{tab:vit-regime}).

\begin{table}[t]
  \caption{ViT-B/16 regime sanity check (shared-output masked-softmax). Cross-dataset pairs have orthogonal error geometry under a disjoint-label embedding; CKA remains moderate, whereas $S_{\mathrm{cov}}(\Gamma_e)$ and $\mathrm{CKA}\cdot S_{\mathrm{cov}}(\Gamma_e)$ drop to zero.}
  \label{tab:vit-regime}
  \centering
  \small
  \setlength{\tabcolsep}{3pt}
  \begin{tabular}{lcccc}
    \toprule
    Regime & $n$ pairs & CKA & $S_{\mathrm{cov}}(\Gamma_e)$ & \shortstack{$\mathrm{CKA}\cdot$\\$S_{\mathrm{cov}}(\Gamma_e)$} \\
    \midrule
    \shortstack[l]{Same-dataset\\(same task, diff.\ seeds)} & 40 & 0.75 & 0.89 & 0.67 \\
    \shortstack[l]{Cross-dataset\\(different tasks)} & 10 & 0.57 & 0.00 & 0.00 \\
    \bottomrule
  \end{tabular}
\end{table}

\begin{table}[t]
\centering
\small
\caption{ViT-B/16: correlation with exact head Fisher on same-dataset pairs (4 datasets $\times$ 5 seeds). Proxy is negatively correlated with exact Fisher; FA-CKA is validated against exact Fisher.}
\label{tab:vit-proxy-vs-exact}
\begin{tabular}{lcc}
\toprule
Method & Pearson corr. & Spearman corr. \\
\midrule
Kronecker proxy & $-0.34$ & $-0.30$ \\
CKA & $0.43$ & $0.45$ \\
FA-CKA & $\mathbf{0.79}$ & $\mathbf{0.90}$ \\
\bottomrule
\end{tabular}
\end{table}

\section{Linear-Time Head Fisher Alignment via Kernel Mean Embeddings}
\label{app:linear-time-fisher}

This appendix develops an $O(n)$-in-samples estimator for the head Fisher alignment
quantities in Theorem~\ref{thm:three-factor}. The method applies when the output
dimension $K$ is moderate (classification, multiple-choice, retrieval).
Vocabulary-scale validation appears in Section~\ref{app:vocab-scale}.

\subsection{Quadratic Feature Map Identity}

\begin{lemma}[Symmetric vectorization preserves Frobenius inner products]
\label{lem:vecsym}
Define $\mathrm{vec}_{\mathrm{sym}}(M)$ as the vector of upper-triangular entries
(including the diagonal) of a symmetric matrix $M \in \mathbb{R}^{d \times d}$,
with off-diagonals scaled by $\sqrt{2}$. Then for any symmetric $A, B$:
\[
\langle \mathrm{vec}_{\mathrm{sym}}(A), \mathrm{vec}_{\mathrm{sym}}(B) \rangle
= \langle A, B \rangle_F = \mathrm{tr}(A^\top B)
\]
\end{lemma}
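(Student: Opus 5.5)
The plan is to expand both sides entrywise and match terms, splitting the index pairs into diagonal and off-diagonal groups. First I will fix notation. For a symmetric $M\in\mathbb{R}^{d\times d}$, $\mathrm{vec}_{\mathrm{sym}}(M)\in\mathbb{R}^{d(d+1)/2}$ has one coordinate per unordered pair $\{p,q\}$ with $p\le q$. That coordinate equals $M_{pp}$ when $p=q$ and $\sqrt{2}\,M_{pq}$ when $p<q$.

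Next I will expand the left-hand side coordinatewise:
\[
\langle \mathrm{vec}_{\mathrm{sym}}(A), \mathrm{vec}_{\mathrm{sym}}(B)\rangle
= \sum_{p} A_{pp}B_{pp} + \sum_{p<q} (\sqrt{2}A_{pq})(\sqrt{2}B_{pq})
= \sum_{p} A_{pp}B_{pp} + 2\sum_{p<q} A_{pq}B_{pq}.
\]

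On the right-hand side, $\mathrm{tr}(A^\top B)=\sum_{p,q}A_{pq}B_{pq}=\langle A,B\rangle_F$. I will split this double sum into three groups: $p=q$, $p<q$, and $p>q$. Because $A$ and $B$ are symmetric, $A_{qp}B_{qp}=A_{pq}B_{pq}$. Reindexing the $p>q$ group therefore turns it into a copy of the $p<q$ group. The right-hand side becomes $\sum_p A_{pp}B_{pp}+2\sum_{p<q}A_{pq}B_{pq}$, which is exactly the expression above.

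As a consequence used in the main text, I will take $A=aa^\top$ and $B=a'a'^\top$. Then $\langle aa^\top,a'a'^\top\rangle_F=\mathrm{tr}(aa^\top a'a'^\top)=(a^\top a')^2$, which justifies the feature-map footnote.

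There is no real obstacle here. The only point needing care is the symmetry step that merges the $p>q$ terms into the $p<q$ terms. This step is exactly why the $\sqrt{2}$ scaling is needed, and it would fail for non-symmetric matrices.
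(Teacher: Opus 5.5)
Your proof is correct and follows the same route as the paper: both write the Frobenius inner product of symmetric matrices as $\sum_p A_{pp}B_{pp} + 2\sum_{p<q}A_{pq}B_{pq}$ and note that the $\sqrt{2}$-scaled upper-triangular vectorization reproduces exactly this sum. You spell out the symmetry step that folds the $p>q$ terms into the $p<q$ terms, which the paper only asserts. Your added check that $\langle aa^\top, a'a'^\top\rangle_F = (a^\top a')^2$ is also correct.
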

\begin{proof}
For symmetric matrices, the Frobenius inner product is
\[
\langle A, B \rangle_F = \sum_i A_{ii}B_{ii} + 2\sum_{i<j} A_{ij}B_{ij}.
\]
The vectorization leaves diagonal entries unchanged and scales off-diagonals by $\sqrt{2}$:
\[
\begin{aligned}
\langle \mathrm{vec}_{\mathrm{sym}}(A), \mathrm{vec}_{\mathrm{sym}}(B) \rangle
&= \sum_i A_{ii}B_{ii} + \sum_{i<j}(\sqrt{2}A_{ij})(\sqrt{2}B_{ij}) \\
&= \langle A, B \rangle_F.
\end{aligned}
\]
\end{proof}

\subsection{Mean Embedding Representation}

Define feature maps:
\[
\begin{aligned}
\phi_a(a) &:= \mathrm{vec}_{\mathrm{sym}}(aa^\top) \in \mathbb{R}^{d(d+1)/2}, \\
\phi_e(e) &:= \mathrm{vec}_{\mathrm{sym}}(ee^\top) \in \mathbb{R}^{K(K+1)/2},
\end{aligned}
\]
\[
\phi(a,e) := \phi_a(a) \otimes \phi_e(e) \in \mathbb{R}^{d(d+1)/2 \cdot K(K+1)/2}
\]

We assume finite fourth moments so the expectations below exist.
When we state explicit bounds, we assume $\|a\|_2 \le R_a$ and $\|e\|_2 \le R_e$
almost surely.

\begin{lemma}[Kronecker second-moment structure]
\label{lem:psi-cov-factorization}
If $(a,e)$ are independent, then
\[
\Sigma_{\psi,k} := \mathbb{E}[(a \otimes e)(a \otimes e)^\top] = M_{a,k} \otimes \Gamma_{e,k},
\]
where $M_{a,k} := \mathbb{E}[aa^\top]$ and $\Gamma_{e,k} := \mathbb{E}[ee^\top]$.
\end{lemma}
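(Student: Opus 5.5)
The plan is to compute the second moment directly from the Kronecker structure of the head gradient and then use independence to split the expectation. By Lemma~\ref{lem:layer-kronecker}, $a\otimes e$ is the per-example head gradient. The mixed-product property gives, pointwise,
\[
(a\otimes e)(a\otimes e)^\top=(a\otimes e)(a^\top\otimes e^\top)=(aa^\top)\otimes(ee^\top).
\]
This uses $(u\otimes v)^\top=u^\top\otimes v^\top$ and $(A\otimes B)(C\otimes D)=AC\otimes BD$ with $A=a$, $B=e$, $C=a^\top$, $D=e^\top$. The identity is purely algebraic and holds for every realization of $(a,e)$.

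Next I take expectations entrywise. With column-major indexing, the entry of $(aa^\top)\otimes(ee^\top)$ at row index $(p,r)$ and column index $(q,s)$ equals $a_pa_q\,e_re_s$. Under independence of $a$ and $e$ within task $k$, and with finite second moments (which suffice here), each entry factorizes:
\[
\mathbb{E}[a_pa_q e_re_s]=\mathbb{E}[a_pa_q]\,\mathbb{E}[e_re_s]=(M_{a,k})_{pq}(\Gamma_{e,k})_{rs}.
\]
That quantity is exactly the $((p,r),(q,s))$ entry of $M_{a,k}\otimes\Gamma_{e,k}$. Hence $\Sigma_{\psi,k}=M_{a,k}\otimes\Gamma_{e,k}$.

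There is no genuine obstacle; the step needing most care is the bookkeeping of the Kronecker index convention. I would fix the column-major convention stated in Lemma~\ref{lem:layer-kronecker}, and note that the row-major convention merely applies the same permutation to both sides, so the identity is unaffected. Integrability is guaranteed because each entry of $\Sigma_{\psi,k}$ is a product of integrable factors $a_pa_q$ and $e_re_s$, and the expectation of a product of independent integrable variables is the product of their expectations.

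Finally, I would remark that the hypothesis can be weakened. The proof only uses that the degree-2 features $aa^\top$ and $ee^\top$ are uncorrelated entrywise, which matches the ``in the degree-2 feature space'' phrasing of Corollary~\ref{cor:independence}. Equivalently, the residual $R_k$ of Remark~\ref{rem:independence-violation} vanishes.
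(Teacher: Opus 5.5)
Your proof is correct and takes the same route as the paper, whose proof is the single line that independence splits $\mathbb{E}[(aa^\top)\otimes(ee^\top)]$ into $\mathbb{E}[aa^\top]\otimes\mathbb{E}[ee^\top]$; you make the mixed-product step and the entrywise factorization $\mathbb{E}[a_pa_qe_re_s]=\mathbb{E}[a_pa_q]\,\mathbb{E}[e_re_s]$ explicit. Your closing remark is also accurate: entrywise uncorrelatedness of the degree-2 features suffices, matching Corollary~\ref{cor:independence}, provided the products $a_pa_qe_re_s$ are integrable, which the paper's finite-fourth-moment assumption supplies.
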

\begin{proof}
By independence, $\mathbb{E}[(a \otimes e)(a \otimes e)^\top] = \mathbb{E}[aa^\top] \otimes \mathbb{E}[ee^\top]$.
\end{proof}

For tasks $i,j$, define the kernel expectations (with independent draws from each task):
\[
\begin{aligned}
S_{ae}(i,j)
&:= \mathbb{E}_{(a,e)\sim \mathcal{D}_i,\ (a',e')\sim \mathcal{D}_j}
\big[(a^\top a')^2 (e^\top e')^2\big], \\
S_a(i,j)
&:= \mathbb{E}_{a\sim \mathcal{D}_i,\ a'\sim \mathcal{D}_j}
\big[(a^\top a')^2\big], \\
S_e(i,j)
&:= \mathbb{E}_{e\sim \mathcal{D}_i,\ e'\sim \mathcal{D}_j}
\big[(e^\top e')^2\big].
\end{aligned}
\]
When the denominators are nonzero, we also define the coupling ratios
\[
\chi_{ij} := \frac{S_{ae}(i,j)}{S_a(i,j)\,S_e(i,j)}, \qquad
\rho_{ij} := \frac{\chi_{ij}}{\sqrt{\chi_{ii}\chi_{jj}}}.
\]
In practice we $\varepsilon$-regularize near-zero denominators (i.e., add $\varepsilon$
before division).

\begin{corollary}[Fourth moments are inner products of mean embeddings]
\label{cor:mean-embedding}
For \textbf{independent} samples $(a,e) \sim \mathcal{D}_i$ and $(a',e') \sim \mathcal{D}_j$:
\begin{align}
S_{ae}(i,j) &= \langle \mu_{ae,i}, \mu_{ae,j} \rangle \\
S_a(i,j)    &= \langle \mu_{a,i},  \mu_{a,j}  \rangle \\
S_e(i,j)    &= \langle \mu_{e,i},  \mu_{e,j}  \rangle
\end{align}
where
\[
\begin{aligned}
\mu_{ae,i} &:= \mathbb{E}[\phi(a,e)], \\
\mu_{a,i} &:= \mathbb{E}[\phi_a(a)], \\
\mu_{e,i} &:= \mathbb{E}[\phi_e(e)].
\end{aligned}
\]
\end{corollary}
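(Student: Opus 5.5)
The plan is to derive each identity by writing the kernel pointwise as an inner product of explicit feature vectors, then using independence of the two draws to factor the expectation of that inner product into an inner product of expectations.

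For the activation identity, Lemma~\ref{lem:vecsym} applied to the symmetric rank-one matrices $aa^\top$ and $a'a'^\top$ gives
\[
\langle \phi_a(a), \phi_a(a')\rangle = \langle aa^\top, a'a'^\top\rangle_F = \mathrm{tr}(aa^\top a'a'^\top) = (a^\top a')^2 .
\]
The identical computation on $ee^\top$ and $e'e'^\top$ gives $\langle \phi_e(e),\phi_e(e')\rangle = (e^\top e')^2$. For the joint map I will use the mixed-product property of Kronecker products, $\langle u\otimes v, u'\otimes v'\rangle = \langle u,u'\rangle\langle v,v'\rangle$. This yields
\[
\langle \phi(a,e),\phi(a',e')\rangle = (a^\top a')^2 (e^\top e')^2 ,
\]
so each of $S_{ae}, S_a, S_e$ is the expectation of the inner product of the corresponding feature vectors.

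Next I pass the expectation inside the inner product. Write $\langle \phi(a,e),\phi(a',e')\rangle = \sum_r \phi_r(a,e)\,\phi_r(a',e')$, which is a finite sum over the coordinates of the finite-dimensional feature space. By linearity, and because $(a,e)\sim\mathcal{D}_i$ is independent of $(a',e')\sim\mathcal{D}_j$, each term satisfies $\mathbb{E}[\phi_r(a,e)\phi_r(a',e')] = \mathbb{E}[\phi_r(a,e)]\,\mathbb{E}[\phi_r(a',e')]$. Summing over $r$ gives $\langle \mu_{ae,i},\mu_{ae,j}\rangle$. The marginal identities for $S_a$ and $S_e$ follow in the same way; for these, the relevant marginals of $\mathcal{D}_i$ and $\mathcal{D}_j$ are automatically independent.

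The main point needing care is integrability, since the factorization requires each $\mathbb{E}[\phi_r]$ to exist and the product expectation to be finite. Each coordinate of $\phi(a,e)$ is, up to a constant, a product $a_p a_q e_s e_t$. Under the finite-fourth-moment assumption stated above, these are integrable by Cauchy--Schwarz:
\[
|\mathbb{E}[a_pa_qe_se_t]| \le \mathbb{E}\big[\|a\|^2\|e\|^2\big] \le \sqrt{\mathbb{E}\|a\|^4\,\mathbb{E}\|e\|^4}.
\]
Independence then makes the product of two such integrable variables integrable, which justifies the factorization. Under the explicit almost-sure bounds $\|a\|\le R_a$ and $\|e\|\le R_e$ this step is immediate.
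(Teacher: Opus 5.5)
Your proposal is correct and follows the paper's own route: the paper calls this the standard kernel mean-embedding identity, and its argument is exactly the chain in the proof of Theorem~\ref{thm:three-factor} — write $(a^\top a')^2(e^\top e')^2=\langle\phi(a,e),\phi(a',e')\rangle$ via Lemma~\ref{lem:vecsym} and the Kronecker inner-product rule, then use independence of the two draws to move the expectation inside the inner product. Your coordinatewise integrability check, bounding each coordinate by $\|a\|^2\|e\|^2$ and applying Cauchy--Schwarz, is sound and supplies a justification the paper leaves implicit.
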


This is the standard kernel mean embedding identity; see, e.g., \citet{smola2007hilbert}.

\textbf{Key insight}: The $n^2$ barrier is not fundamental---it arises from
computing the expectation naively. The true barrier is that
$\mu_{ae,i}$ has dimension $O(d^2 K^2)$, which is too large to store exactly.

\subsection{Factored Random Maclaurin Sketching}

We use the Random Maclaurin construction~\citep{kar2012random}, which provides
an unbiased estimator for squared inner products.

\begin{proposition}[Factored Random Maclaurin for Kronecker products]
\label{prop:factored-rm-appendix}
Let $r_a, r'_a \in \{-1,+1\}^d$ and $r_e, r'_e \in \{-1,+1\}^K$ be independent
Rademacher vectors. Define the sketch coordinate:
\[
\psi(a,e) := (r_a^\top a)(r_e^\top e)({r'_a}^\top a)({r'_e}^\top e)
\]
Then $\mathbb{E}[\psi(a,e)\psi(a',e')] = (a^\top a')^2(e^\top e')^2$.

For analysis, define the normalized sketch vector
$\Psi_m(a,e) := \frac{1}{\sqrt{m}}[\psi_1(a,e), \ldots, \psi_m(a,e)] \in \mathbb{R}^m$.
Then the kernel estimator
\[
\hat{k}_m := \langle \Psi_m(a,e), \Psi_m(a',e') \rangle
= \frac{1}{m}\sum_{j=1}^m \psi_j(a,e)\psi_j(a',e')
\]
is unbiased for $(a^\top a')^2(e^\top e')^2$ with variance
\[
\mathrm{Var}[\hat{k}_m] \;\le\; \frac{81}{m}\|a\|^4\|a'\|^4\|e\|^4\|e'\|^4.
\]
\end{proposition}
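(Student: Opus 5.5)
The plan is to work coordinate by coordinate, using independence of the four Rademacher vectors to factor every expectation into simple one-vector moments. Write $\psi(a,e)\psi(a',e')$ as a product of four pairs:
\[
\psi(a,e)\psi(a',e') = \big[(r_a^\top a)(r_a^\top a')\big]\big[({r'_a}^\top a)({r'_a}^\top a')\big]\big[(r_e^\top e)(r_e^\top e')\big]\big[({r'_e}^\top e)({r'_e}^\top e')\big].
\]
Each bracket depends on exactly one of the mutually independent vectors $r_a,r'_a,r_e,r'_e$, so the expectation of the product is the product of the four bracket expectations. For a Rademacher vector $r$ we have $\mathbb{E}[rr^\top]=I$, so $\mathbb{E}[(r^\top u)(r^\top v)] = u^\top \mathbb{E}[rr^\top] v = u^\top v$. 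The four factors are therefore $a^\top a'$, $a^\top a'$, $e^\top e'$ and $e^\top e'$, which gives $\mathbb{E}[\psi(a,e)\psi(a',e')]=(a^\top a')^2(e^\top e')^2$.

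Unbiasedness of $\hat k_m$ then follows by linearity, since $\hat k_m$ is the average of $m$ i.i.d.\ copies of $\psi_j(a,e)\psi_j(a',e')$. The same i.i.d.\ structure gives
\[
\mathrm{Var}[\hat k_m] = \tfrac1m \mathrm{Var}[\psi\psi'] \le \tfrac1m \mathbb{E}[(\psi\psi')^2],
\]
where $\psi\psi'$ abbreviates $\psi(a,e)\psi(a',e')$. So it remains to bound this second moment.

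Squaring the four-bracket factorization and again using independence, $\mathbb{E}[(\psi\psi')^2]$ is a product of four terms of the form $\mathbb{E}[(r^\top u)^2(r^\top v)^2]$. Two of these terms have $(u,v)=(a,a')$ and two have $(u,v)=(e,e')$. I bound each term by Cauchy--Schwarz:
\[
\mathbb{E}[(r^\top u)^2(r^\top v)^2] \le \sqrt{\mathbb{E}[(r^\top u)^4]\,\mathbb{E}[(r^\top v)^4]}.
\]

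The step requiring actual computation, and the main obstacle, is the Rademacher fourth moment. Expanding $\mathbb{E}[(\sum_i r_i u_i)^4]$, only index patterns with every index appearing an even number of times survive. The term $i=j=k=l$ contributes $\sum_i u_i^4$. The pairings with two distinct indices, $i=j\neq k=l$ and its two rearrangements, contribute $3\sum_{i\ne k}u_i^2u_k^2$. Altogether this gives
\[
\mathbb{E}[(r^\top u)^4] = 3\|u\|^4 - 2\textstyle\sum_i u_i^4 \le 3\|u\|^4.
\]
Hence each of the four terms is at most $3\|u\|^2\|v\|^2$.

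Multiplying the four bounds gives $\mathbb{E}[(\psi\psi')^2]\le 3^4\|a\|^4\|a'\|^4\|e\|^4\|e'\|^4$, and therefore $\mathrm{Var}[\hat k_m]\le \frac{81}{m}\|a\|^4\|a'\|^4\|e\|^4\|e'\|^4$. The main-text statement (Proposition~\ref{prop:factored-rm}) is the first half of this argument verbatim.
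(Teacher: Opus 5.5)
Your proof is correct and follows essentially the same route as the paper: you factor the first and second moments using independence of $r_a, r'_a, r_e, r'_e$, bound $\mathrm{Var}[\hat k_m]\le \mathbb{E}[(\psi\psi')^2]/m$, and control each factor $\mathbb{E}[(r^\top u)^2(r^\top v)^2]$ by Cauchy--Schwarz together with $\mathbb{E}[(r^\top u)^4]\le 3\|u\|^4$. The only differences are cosmetic: you derive that fourth-moment bound by direct expansion (obtaining $3\|u\|^4-2\sum_i u_i^4$) where the paper cites Khintchine, and you write out the unbiasedness step that the paper's appendix proof leaves implicit.
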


\begin{algorithm}[htbp]
\caption{FisherSketch: Streaming Head Fisher Alignment}
\label{alg:fishersketch}
\begin{algorithmic}[1]
\REQUIRE Tasks $\{k\}_{k=1}^T$, sketch dimension $m$, samples per task $n_k$
\STATE Sample i.i.d.\ Rademacher $r_a^{(j)}, {r'_a}^{(j)} \in \{-1,+1\}^d$, $r_e^{(j)}, {r'_e}^{(j)} \in \{-1,+1\}^K$ for $j=1,\ldots,m$
\FOR{each task $k$}
  \STATE $\hat\mu_k \gets \mathbf{0} \in \mathbb{R}^m$
  \FOR{each sample $(a, e)$ from task $k$}
    \STATE $\psi_j \gets (r_a^{(j)\top} a)(r_e^{(j)\top} e)({r'_a}^{(j)\top} a)({r'_e}^{(j)\top} e)$ for $j=1,\ldots,m$
    \STATE $\Psi_m(a,e) \gets \frac{1}{\sqrt{m}}[\psi_1,\ldots,\psi_m]$
    \STATE $\hat\mu_k \gets \hat\mu_k + \Psi_m(a,e)$
  \ENDFOR
  \STATE $\hat\mu_k \gets \hat\mu_k / n_k$
\ENDFOR
\STATE \textbf{return} $\hat A_F(i,j) = \langle \hat\mu_i, \hat\mu_j \rangle / (\|\hat\mu_i\| \|\hat\mu_j\|)$ for all pairs
\end{algorithmic}
\end{algorithm}

\begin{proof}[Proof of variance bound]
Write $\hat{k}_m = \frac{1}{m}\sum_{j=1}^m X_j$ where $X_j := \psi_j(a,e)\psi_j(a',e')$ are i.i.d.
Then $\mathrm{Var}(\hat{k}_m) = \mathrm{Var}(X_1)/m \le \mathbb{E}[X_1^2]/m$.
Expanding $X_1^2$ and using independence across $r_a, r'_a, r_e, r'_e$ yields
\[
\begin{aligned}
\mathbb{E}[X_1^2]
&= \Big(\mathbb{E}[(r_a^\top a)^2(r_a^\top a')^2]\Big)^2 \\
&\quad \cdot \Big(\mathbb{E}[(r_e^\top e)^2(r_e^\top e')^2]\Big)^2.
\end{aligned}
\]
For Rademacher $r$, Khintchine gives $\mathbb{E}(r^\top u)^4 \le 3\|u\|_2^4$, so by Cauchy--Schwarz
\[
\mathbb{E}[(r^\top u)^2(r^\top v)^2] \le 3\|u\|_2^2\|v\|_2^2
\]
(tight in the dense aligned limit).
Applying this to activation and error terms:
\[
\mathbb{E}[X_1^2]
\le (3\|a\|^2\|a'\|^2)^2 (3\|e\|^2\|e'\|^2)^2
= 81\|a\|^4\|a'\|^4\|e\|^4\|e'\|^4.
\]
\end{proof}

\begin{proof}[Proof of streaming mean embedding error]
Fix the random feature map $\Psi_m$ and define
$\tilde{\mu}_k := \mathbb{E}_{(a,e)\sim\mathcal{D}_k}[\Psi_m(a,e)]$.
The streaming mean embedding is $\hat{\mu}_k = \frac{1}{n}\sum_{t=1}^n \Psi_m(a_t, e_t)$
for i.i.d.\ samples $(a_t, e_t)$, so by independence
\[
\mathbb{E}_{\text{data}}\|\hat{\mu}_k - \tilde{\mu}_k\|_2^2
= \frac{1}{n}\mathrm{Var}_{\text{data}}(\Psi_m(a,e)\mid\text{sketch})
\le \frac{1}{n}\mathbb{E}_{\text{data}}\|\Psi_m(a,e)\|_2^2.
\]
For a single coordinate,
$\psi_j(a,e) = (r_a^\top a)(r_e^\top e)({r'_a}^\top a)({r'_e}^\top e)$,
and taking expectation over the sketch yields
\[
\mathbb{E}_{\text{sketch}}[\psi_j(a,e)^2] = \|a\|^4\|e\|^4.
\]
Therefore
$\mathbb{E}_{\text{data,sketch}}\|\Psi_m(a,e)\|_2^2
= \mathbb{E}_{\text{data}}\|a\|^4\|e\|^4 \le R_a^4 R_e^4$,
and
\[
\mathbb{E}_{\text{data,sketch}}\|\hat{\mu}_k - \tilde{\mu}_k\|_2^2
\le \frac{R_a^4 R_e^4}{n}.
\]
All bounds above are in expectation over the sketch; for a fixed sketch,
\[
\mathbb{E}_{\text{data}}\|\hat{\mu}_k - \tilde{\mu}_k\|_2^2
\le \frac{1}{n}\mathbb{E}_{\text{data}}\|\Psi_m(a,e)\|_2^2,
\]
with the right-hand side depending on the realized Rademacher draws.
Moreover, for any fixed sketch,
$|r_a^\top a| \le \sqrt{d}\|a\|_2$ and $|r_e^\top e| \le \sqrt{K}\|e\|_2$,
so $|\psi_j(a,e)| \le dK\|a\|_2^2\|e\|_2^2$ and
\[
\|\Psi_m(a,e)\|_2^2 \le d^2 K^2 \|a\|_2^4 \|e\|_2^4,
\]
which yields
$\mathbb{E}_{\text{data}}\|\hat{\mu}_k - \tilde{\mu}_k\|_2^2
\le d^2 K^2 R_a^4 R_e^4 / n$ for a fixed sketch.
\end{proof}

\textbf{Crucially}, this requires only $O(m(d+K))$ operations per sample---not
$O(mdK)$---because we compute $(r_a^\top a)$ and $(r_e^\top e)$ separately.

\subsection{Implementation Details}

\paragraph{Split-sample estimation.}
To control ratio bias when computing $\chi = S_{ae}/(S_a \cdot S_e)$, we primarily use
split-half log-ratio estimation with shared projections (within-split) for lower variance;
we also report a cross-fit variant that computes numerator and denominator on disjoint
data splits, which reduces finite-sample bias and yields a consistent estimator
as $n \to \infty$; we also report jackknife standard errors as a robustness check.

\paragraph{U-statistic correction.}
For diagonal entries ($i=j$), the mean embedding inner product includes
self-terms ($s=t$) that overestimate the population quantity. We apply the
U-statistic correction to all three matrices $S_{ae}(i,i)$, $S_a(i,i)$, $S_e(i,i)$
(since $\rho_{ij}$ depends on $\chi_{ii}$ and $\chi_{jj}$):
\[
\widehat{S}^U = \frac{1}{n(n-1)} \left(
\left\| \sum_{s=1}^n \psi(x_s) \right\|^2
- \sum_{s=1}^n \| \psi(x_s) \|^2
\right)
\]
where $\psi$ is the feature map for the quantity of interest
($\psi=\Psi_m$, $x_s=(a_s,e_s)$ for $S_{ae}$; $\psi=\phi_a$ with $x_s=a_s$ for $S_a$;
and $\psi=\phi_e$ with $x_s=e_s$ for $S_e$), or their corresponding sketches.

\paragraph{Positivity clamping.}
True values $S_a, S_e, S_{ae}$ are nonnegative, but sketch estimates can be
slightly negative at modest $m$ due to variance. We clamp:
$\hat{S} \leftarrow \max(\hat{S}, \epsilon)$ with fixed $\epsilon = 10^{-10}$
before computing ratios.

\subsection{Noise Robustness via Empirical-Bayes Shrinkage}
\label{app:rho_shrink}

\paragraph{Split-half identifiability and Empirical Bayes shrinkage.}
For each task $k$, we randomly split its samples into two halves $s \in \{A,B\}$
(50/50 in expectation) and compute $S_{ae}^s, S_a^s, S_e^s$ on each half using
the same sketch projections, U-statistic diagonal correction, and the clamping
above. Because the sketch is fixed across halves, the diagnostic below
conditions on the projection draws and primarily reflects sampling variability.
(This split-half diagnostic uses within-split estimates; the cross-fit ratio in
the preceding paragraph is a separate bias-control variant.)
Define
\[
\log \hat{\chi}_{ij}^s := \log S_{ae}^s(i,j) - \log S_a^s(i,j) - \log S_e^s(i,j),
\qquad
\log \hat{\rho}_{ij}^s := \log \hat{\chi}_{ij}^s - \tfrac{1}{2}\log \hat{\chi}_{ii}^s - \tfrac{1}{2}\log \hat{\chi}_{jj}^s .
\]
The split-half reliability diagnostic is the off-diagonal correlation
\[
r := \mathrm{Corr}_{i \neq j}\!\left(\log \hat{\rho}_{ij}^A, \log \hat{\rho}_{ij}^B\right),
\]
computed over off-diagonal pairs (equivalently $i < j$ up to duplication). Let
$\bar{\ell}_{ij} = \tfrac{1}{2}(\log \hat{\rho}_{ij}^A + \log \hat{\rho}_{ij}^B)$
and estimate per-pair noise via
\[
\widehat{\mathrm{SE}}_{ij}^2 = \frac{(\log \hat{\rho}_{ij}^A - \log \hat{\rho}_{ij}^B)^2}{4},
\]
the usual variance estimate for the mean under independent halves.
Define
\[
V_{\mathrm{total}} = \mathrm{Var}_{i \neq j}(\bar{\ell}_{ij}), \quad
V_{\mathrm{noise}} = \mathbb{E}_{i \neq j}[\widehat{\mathrm{SE}}_{ij}^2], \quad
\tau^2 = \max(0, V_{\mathrm{total}} - c\,V_{\mathrm{noise}}),
\]
with conservative $c=1.5$ (we use $c \in [1.5, 2.0]$). This is a conservative
method-of-moments estimate of between-pair variance, inflating noise to
encourage shrinkage toward $\rho=1$ when coupling is weak.
The empirical Bayes shrinkage map is
\[
w_{ij} = \frac{\tau^2}{\tau^2 + \widehat{\mathrm{SE}}_{ij}^2 + 10^{-10}}, \qquad
\log \hat{\rho}_{ij}^{\mathrm{EB}} = w_{ij}\bar{\ell}_{ij}, \qquad
\hat{\rho}_{ij}^{\mathrm{EB}} = \exp(\log \hat{\rho}_{ij}^{\mathrm{EB}}),
\]
which shrinks toward $\log \rho = 0$ (i.e., $\rho = 1$, the separable proxy).
Because shrinkage is applied in log space, $\hat{\rho}_{ij}^{\mathrm{EB}}$
corresponds to the posterior median of $\rho_{ij}$ under the log-normal model.
We report the scalar diagnostic $\alpha_{\mathrm{EB}} := \mathbb{E}_{i \neq j}[w_{ij}]$
but apply the pairwise weights in the final estimator.
Using split-averaged $S_a = \tfrac{1}{2}(S_a^A + S_a^B)$ and
$S_e = \tfrac{1}{2}(S_e^A + S_e^B)$, the shrunk alignment is
\[
\hat{A}_F^{\mathrm{EB}}(i,j) = \hat{A}_{\mathrm{proxy}}(i,j)\,\hat{\rho}_{ij}^{\mathrm{EB}}, \qquad
\hat{A}_{\mathrm{proxy}}(i,j)=\frac{S_a(i,j)S_e(i,j)}{\sqrt{S_a(i,i)S_a(j,j)S_e(i,i)S_e(j,j)}}.
\]
All diagnostics are computed after clamping; when clamping is active for some
pairs, it can inflate $\widehat{\mathrm{SE}}_{ij}^2$ and thus increase shrinkage,
which is conservative.

\paragraph{Complexity.}
Per-task: $O(n_i \cdot m \cdot (d + K))$ for sketch computation.
Pairwise: $O(m)$ per dot product.
Total: $O(\sum_i n_i \cdot m(d+K) + T^2 m)$.

\paragraph{Memory and reproducibility.}
Storing Rademacher matrices requires $O(m(d+K))$ memory, plus $O(Tm)$ for
task embeddings. For large $d$ and $m=4096$, this can be nontrivial.
An alternative is stateless on-the-fly generation using seeded hashing
(4-wise independent), which trades compute for memory.
Seed sensitivity is reported for multi-run validations; Table~\ref{tab:seed-summary}
summarizes seeds and averaging across experiment blocks.

\subsection{Validation}

\paragraph{Synthetic data.}
On synthetic data with varying activation-error coupling, the sketched estimator
achieves Spearman corr.\ $> 0.99$ with exact $S_{ae}$ computation at
$m = 4096$. The diagonal $\chi_{ii}$ values (within-task coupling) achieve
Spearman corr.\ $> 0.94$ with exact values.
Note that off-diagonal $\chi_{ij}$ values are typically close to $1.0$ for
settings where $a$ and $e$ are approximately independent within each task
(so $S_{ae}(i,j) \approx S_a(i,j)S_e(i,j)$ for any pair), which is expected
and correctly captured by the sketched estimator.

\paragraph{Real models.}
On ViT-B/16 with 20 fine-tuned checkpoints across 4 datasets (CIFAR-100,
Flowers102, DTD, Oxford-Pets), we validate all key quantities by comparing
sketched estimates to exact computation.
In this moderate-$K$ setting, we compute $S_a$ and $S_e$ exactly from their
quadratic feature maps; for large $K$ we sketch any quantity that cannot be
materialized.
For same-dataset pairs (where Fisher alignment is non-zero), at $m = 4096$:
\begin{itemize}
  \item $\hat{S}_{ae}$: Spearman corr.\ $0.97 \pm 0.01$
  \item $\hat{\chi}$: Spearman corr.\ $0.94 \pm 0.01$
  \item $\hat{\rho}$: Spearman corr.\ $0.95 \pm 0.01$
  \item $\hat{A}_F$: Spearman corr.\ $0.79 \pm 0.06$ (lower due to ratio normalization)
\end{itemize}
Rankings for $\hat{S}_{ae}$, $\hat{\chi}$, and $\hat{\rho}$ remain $> 0.90$
at $m = 2048$, while $\hat{A}_F$ is 0.76.
Cross-dataset pairs (disjoint label spaces under the shared-output embedding) have Fisher $\approx 0$ by
construction and are excluded from correlation analysis.
Figure~\ref{fig:sketch-validation} shows how correlation improves with sketch dimension.

\paragraph{Wall-clock timing.}
To demonstrate the practical speedup from linear-time estimation, we benchmark exact $O(n^2)$ computation against sketched $O(n)$ computation.
Setup: 5 synthetic tasks, $d=256$ (activation dimension), $K=100$ (error dimension), $m=4096$ (sketch dimension), averaged over 3 seeds.
Table~\ref{tab:timing-benchmark} shows runtime scaling:
The sketched estimator achieves 89$\times$ speedup at $n=2000$ samples/task with $m=4096$, with runtime scaling linearly in $n$ (vs.\ quadratically for exact computation).
Smaller sketch dimensions yield larger speedups (e.g., 437$\times$ at $m=1024$) with slightly reduced accuracy.
This makes head Fisher alignment practical at scale. Full-network sketching via structured projections is also linear in hidden sizes and sequence length (independent of parameter count; Appendix~\ref{app:full-network-sketch}); our LLM experiments instead use parameter-subsampled gradients across all layers for computational tractability, while head-level validation uses the kernel identity.

\begin{table}[t]
\centering
\small
\caption{Wall-clock timing: exact vs.\ sketched Fisher alignment computation at $m=4096$ on synthetic tasks (5 tasks; $d{=}256$, $K{=}100$; mean over 3 seeds).}
\label{tab:timing-benchmark}
\begin{tabular}{rccc}
\toprule
Samples/task & Exact (s) & Sketched (s) & Speedup \\
\midrule
50 & 0.16 & 0.11 & 1.5$\times$ \\
100 & 0.62 & 0.17 & 3.7$\times$ \\
200 & 2.50 & 0.30 & 8$\times$ \\
500 & 15.6 & 0.68 & 23$\times$ \\
1000 & 62.3 & 1.31 & 48$\times$ \\
2000 & 247.4 & 2.78 & 89$\times$ \\
\bottomrule
\end{tabular}
\end{table}

\subsection{Full-Network FisherSketch via Structured Rademacher Projections}
\label{app:full-network-sketch}

The head-layer sketching above extends naturally to full-network Fisher alignment by exploiting the per-layer Kronecker structure of gradients.

\paragraph{The computational challenge.}
For a full network with $L$ layers and total parameters $p = \sum_{\ell=1}^L d_{\ell-1} d_\ell$, naively computing $r^\top g$ for a random Rademacher vector $r \in \{\pm 1\}^p$ requires $O(p)$ operations per sample---intractable for modern LLMs with billions of parameters.

\paragraph{Structured projection.}
Because the full gradient $g = [a_0 \otimes \delta_1; \ldots; a_{L-1} \otimes \delta_L]$ is a concatenation of Kronecker products, we can factor the projection.
For each layer $\ell$, generate independent Rademacher vectors:
\[
r_\ell^{(a)} \in \{\pm 1\}^{d_{\ell-1}}, \quad r_\ell^{(\delta)} \in \{\pm 1\}^{d_\ell}.
\]
The block of $r$ corresponding to layer $\ell$ is implicitly $r_\ell = r_\ell^{(a)} \otimes r_\ell^{(\delta)}$, and the projection becomes:
\begin{equation}
r^\top g(x) = \sum_{\ell=1}^L (r_\ell^{(a)\top} a_{\ell-1}(x)) \cdot (r_\ell^{(\delta)\top} \delta_\ell(x)).
\label{eq:structured-projection}
\end{equation}
This requires only $O(\sum_\ell (d_{\ell-1} + d_\ell))$ operations---linear in hidden sizes, not in parameter count (with a linear factor in positions for shared-parameter layers).
For a shared-parameter affine layer used at positions $t \in \{1,\dots,S_\ell\}$, the per-layer term becomes
\[
(r_\ell^{(a)} \otimes r_\ell^{(\delta)})^\top g_\ell(x)
=
\sum_{t=1}^{S_\ell} (r_\ell^{(a)\top} a_{\ell-1,t}(x))\,(r_\ell^{(\delta)\top} \delta_{\ell,t}(x)),
\]
so the structured projection remains computable in time linear in hidden sizes and the number of positions.

\paragraph{Algorithm: Full FisherSketch.}
For task $k$ with samples $\{x_t\}_{t=1}^{n_k}$:
\begin{enumerate}
\item Sample $m$ pairs of structured Rademacher vectors $(r^{(j)}, s^{(j)})$ with per-layer factorization.
\item For each sample $x_t$: run forward pass to get $\{a_{\ell-1}(x_t)\}_{\ell=1}^L$; run backward pass to get $\{\delta_\ell(x_t)\}_{\ell=1}^L$.
\item Compute sketch features $\psi^{(j)}(x_t) = (r^{(j)\top} g(x_t))(s^{(j)\top} g(x_t))$ using \eqref{eq:structured-projection}.
\item Average to get mean embedding $\mu_k = \frac{1}{n_k} \sum_t \psi(x_t) \in \mathbb{R}^m$.
\end{enumerate}
Full-network Fisher alignment is then $A_F^{\mathrm{full}}(i,j) \approx \cos(\mu_i, \mu_j)$.

\paragraph{Complexity.}
Per sample: $O(L \cdot d_{\max} \cdot m)$ for $m$ sketch dimensions.
Per task: $O(n_k \cdot L \cdot d_{\max} \cdot m)$.
Cross-task alignment: $O(T^2 \cdot m)$.
This is independent of parameter count $p$ (linear in hidden sizes and positions), enabling full-network Fisher at 70B scale.

\paragraph{Implementation note.}
We present this structured-projection estimator for completeness; its per-sample cost is linear in layer widths.
Our experiments use parameter-subsampled gradients across all layers for computational tractability; the structured-projection estimator provides a full-network sketching option.

\paragraph{Relation to exact theorem.}
Full FisherSketch approximates the exact full-network Fisher alignment (Theorem~\ref{thm:full-network-exact}), just as head FisherSketch approximates head Fisher.
The structured Rademacher projection preserves the polynomial kernel structure: $\mathbb{E}[\psi(x)\psi(x')] = (g(x)^\top g(x'))^2 = (u^\top v)^2$, where $u, v$ are the depth-profile vectors from Definition~\ref{def:depth-profile}.

\subsection{Fast FisherSketch via SRHT (Implementation Note)}
\label{app:srht}

For vocab-scale applications ($K > 10^5$), even the factored Random Maclaurin projection can become compute-limited: computing $m$ dot products of the form $r^\top e$ with $r \in \{\pm 1\}^K$ requires $O(mK)$ operations per sample.
The Subsampled Randomized Hadamard Transform (SRHT) reduces this $K$-side cost to $O(K \log K + m)$; if activations use dense projections, the total per-sample cost is $O(md + K \log K + m)$.

\paragraph{Key construction.}
Walsh--Hadamard transforms are defined for lengths $N=2^p$.
Let $N := 2^{\lceil \log_2 K \rceil}$ and $H_N \in \{\pm 1\}^{N \times N}$ be the (unnormalized) Walsh--Hadamard matrix.
For $e \in \mathbb{R}^K$, pad with zeros to $\tilde{e} \in \mathbb{R}^N$.
Let $D = \mathrm{diag}(s)$ with $s \in \{\pm 1\}^N$ i.i.d.\ Rademacher.
For a fixed row index $t \in [N]$, define $r := H_{t,:} \cdot D$.
Then $r$ has i.i.d.\ Rademacher entries (each coordinate is a fixed $\pm 1$ from $H_N$ times an independent Rademacher sign from $D$); in particular, the first $K$ coordinates are i.i.d.\ Rademacher and $r^\top \tilde{e} = \sum_{k=1}^K r_k e_k$.

\paragraph{Fast projection.}
Given indices $t_1, \ldots, t_m \in [N]$, all $m$ projections can be computed as:
\begin{enumerate}
    \item Pad and sign flip: $\tilde{e}' \leftarrow D \odot \tilde{e}$ \hfill $O(N)$
    \item Compute Hadamard transform: $u \leftarrow H_N \tilde{e}'$ \hfill $O(N \log N)$
    \item Gather entries: $\{u_{t_j}\}_{j=1}^m$ \hfill $O(m)$
\end{enumerate}
Total: $O(N \log N + m) = O(K \log K + m)$ (since $N < 2K$), instead of $O(mK)$ on the error side.

\paragraph{Memory reduction.}
Dense Rademacher matrices require $O(mK)$ entries (about $\sim$2\,GB if stored as \texttt{float32}, or $\sim$0.5\,GB if stored as \texttt{int8} signs).
SRHT requires only $O(N)$ float32 signs for diagonal matrices $(D, D')$ plus $O(m)$ integers for row indices; since $N < 2K$ this is still $O(K)$.
At $K=128{,}256$ (so $N=131{,}072$) and $m=4096$, this reduces memory to $\sim$1\,MB.

\paragraph{Unbiasedness.}
The SRHT construction preserves the polynomial kernel property. For independent $(D, t)$ and $(D', t')$, and zero-padded $\tilde{x}, \tilde{y} \in \mathbb{R}^N$:
\[
\mathbb{E}_{D,D',t,t'}\big[(r^\top \tilde{x})(r^\top \tilde{y})(r'^\top \tilde{x})(r'^\top \tilde{y})\big] = (x^\top y)^2,
\]
exactly as with dense Rademacher. Thus the SRHT backend targets the same unnormalized product-kernel inner product in expectation; the reported Fisher alignment cosine is the corresponding normalized plug-in estimate.

\paragraph{Dependence across coordinates.}
SRHT uses a shared $(H_N,D)$ and row subsampling, so the $m$ coordinates are not independent.
Unbiasedness still holds, but variance constants can differ from the i.i.d.\ Rademacher case; we use the $1/m$ scaling as a practical guide.

\paragraph{When to use SRHT.}
SRHT is beneficial when $mK$ is large. For typical LLM settings ($K \approx 128{,}000$, $m \approx 4096$), the theoretical complexity reduction is $O(mK) / O(K \log K) \approx 255\times$, though empirical speedups depend on hardware (memory bandwidth, cache effects).
For smaller vocabularies or moderate $m$, dense Rademacher may be faster due to simpler cache patterns and better batching.
Our implementation automatically selects based on dimension.

\subsection{Fast SRHT Error Projections in PyTorch}
\label{app:srht_impl}

For vocabulary-scale errors $e\in\mathbb{R}^{K}$, FisherSketch uses an SRHT to avoid storing dense $m{\times}K$ Rademacher matrices.
Let $\tilde K$ be the next power of two $\ge K$ and let $H_{\tilde K}$ denote the $\tilde K{\times}\tilde K$ Hadamard matrix.
We sample sign vectors $D,D'\in\{\pm1\}^{\tilde K}$ and index sets $I,I'\subseteq[\tilde K]$ with $|I|=|I'|=m$.
Given a batch $E\in\mathbb{R}^{B\times K}$, we pad to $\tilde K$ and compute
\[
U = H_{\tilde K}\bigl((E_{\mathrm{pad}})\odot D\bigr),\qquad
U' = H_{\tilde K}\bigl((E_{\mathrm{pad}})\odot D'\bigr),
\]
then return $U_{[:,I]}$ and $U'_{[:,I']}$.
This gives $O(\tilde K\log\tilde K + m)$ cost per batch for the error-side projections.

\paragraph{Batched FWHT (vectorized butterfly).}
PyTorch does not provide a native FWHT; we implement the Hadamard transform via an iterative butterfly that uses only reshape/slice/concatenate operations.
This avoids Python recursion and runs efficiently on GPU:

\begin{algorithm}[t]
\caption{Batched FWHT used by SRHT (\texttt{view}+\texttt{cat} butterfly)}
\label{alg:fwht}
\begin{algorithmic}[1]
\REQUIRE $Y\in\mathbb{R}^{B\times \tilde K}$ with $\tilde K$ a power of two
\STATE $h\gets 1$
\WHILE{$h<\tilde K$}
\STATE $Y \gets \mathrm{view}(Y,\;B,\;-1,\;2,\;h)$
\STATE $A\gets Y[:,:,0,:]$, \;\; $B\gets Y[:,:,1,:]$
\STATE $Y \gets \mathrm{cat}(A{+}B,\;A{-}B,\;\mathrm{dim}{=}2)$
\STATE $Y \gets \mathrm{view}(Y,\;B,\;\tilde K)$
\STATE $h \gets 2h$
\ENDWHILE
\STATE \textbf{return} $Y$
\end{algorithmic}
\end{algorithm}

\subsection{Low-Precision Sign Storage for Fixed Projection Memory}
Dense Rademacher projections for activations (and errors, when dense) are stored as \texttt{int8} signs on GPU and cast on-the-fly to the compute dtype during matmul.
This reduces persistent projection storage by $4\times$ compared to float32 while leaving the estimator unchanged.

\subsection{Vocabulary-Scale Validation}
\label{app:vocab-scale}
\label{app:vocab-scale-data}

We validate FisherSketch at vocabulary scale ($K=128{,}256$) using Llama-3.1-8B
on 100 domains constructed from eight HF datasets (Table~\ref{tab:vocab-scale-datasets}).
At this scale, storing the error second moment $\Gamma_e$ per task requires $K^2 \times 4$ bytes
$\approx 61$\,GiB, so 100 tasks would require $\sim$6.0\,TiB.
FisherSketch stores activation projections as dense \texttt{int8} sign matrices: $2dm$ entries (32\,MiB for $d{=}m{=}4096$).
In our PyTorch implementation these signs are cast to the compute dtype (\texttt{bf16}/\texttt{fp16}/\texttt{fp32}) on-the-fly for GEMM; this is an exact re-encoding of the same Rademacher $\{\pm 1\}$ projections and yields indistinguishable transfer metrics versus storing the same signs in \texttt{float32}.
For error projections we use SRHT, which stores only two sign vectors $(D,D')$ and row indices (approximately 1\,MB at $K{=}128{,}256$, $m{=}4096$).
During streaming, we maintain six float64 accumulators per task for split-sample estimation (192\,KB per task), but the stored signature is just the float32 $\hat{\mu}_{ae}$ vector (16\,KB per task; 48\,KB if $\hat{\mu}_a,\hat{\mu}_e$ are also stored).
Fixed projection memory is about 33\,MiB; for retrieval, the FAISS index stores another copy of $\hat{\mu}_{ae}$.
SRHT on activations can reduce fixed memory further.
This is a $\mathbf{\sim 4.0\times 10^6}$ reduction in per-task storage (61\,GiB $\rightarrow$ 16\,KB); the working-memory reduction is $\sim 3.3\times 10^5$ if counting the streaming state (192\,KB per task).
Accounting for the fixed projections, the total reduction for 100 tasks is $\sim 1.8\times 10^5$;
with dense error projections, storing the two $m{\times}K$ sign matrices would require $\approx 1.0$\,GiB fixed if stored as \texttt{int8} (or $\approx 3.9$\,GiB if stored as \texttt{float32}), and the total reduction would be $\sim 6\times 10^3$ (or $\sim 1.6\times 10^3$ for \texttt{float32}).

\paragraph{Datasets and domain construction.}
We draw from eight HF datasets, using the train split and the specific text fields shown in Table~\ref{tab:vocab-scale-datasets}.
Each domain is a deterministic subsample; domain names follow the pattern \textit{family} and \textit{family\_i} (12--13 slices per family), with a fixed seed per domain.
We record dataset provenance and sample indices in \textit{domain\_metadata.json}; no fallback or synthetic data were used in the reported runs.
We do not redistribute raw text, and dataset licenses follow the respective HF dataset cards.

\begin{table}[t]
\centering
\caption{\textbf{Vocabulary-scale datasets} (HF \textit{load\_dataset} arguments). Each family contributes 12--13 domains via deterministic subsampling.}
\label{tab:vocab-scale-datasets}
\begin{small}
\begin{tabular}{@{}lllll@{}}
\toprule
Family & Dataset ID & Config/version & Split & Text field \\
\midrule
Wikipedia & wikitext & wikitext-103-raw-v1 & train & text \\
QA & squad & -- & train & context \\
Reviews & imdb & -- & train & text \\
NLI & multi\_nli & -- & train & premise \\
Paraphrase & glue & mrpc & train & sentence1 \\
News & cnn\_dailymail & 3.0.0 & train & article \\
Translation & wmt14 & de-en & train & translation.en \\
Commonsense & hellaswag & -- & train & ctx \\
\bottomrule
\end{tabular}
\end{small}
\end{table}

\paragraph{Sampling and splits.}
FisherSketch uses 500 samples per domain ($m{=}4096$), keeping texts with $\ge 100$ characters and truncating to \textit{max\_seq\_length}$\times$4.
LoRA validation draws 300 texts per domain (200 train, 100 eval) using a separate data seed (default: \textit{seed+1}); eval samples are disjoint from train by construction.
Early stopping uses the source-domain eval split.

\paragraph{Disjointness and auditability.}
Validation explicitly excludes the FisherSketch sample indices for each domain (recorded in \textit{domain\_metadata.json}), preventing overlap between sketching and LoRA train/eval data.

\paragraph{Candidate sampling and pair counts.}
For each target, we sample 24 candidate sources uniformly from the other 99 domains (uniform protocol), giving $100 \times 24 = 2400$ off-diagonal pairs.
We also sample a stratified set (3 sources per family across 8 families, filled to 24 if needed); we evaluate the union of uniform and stratified pairs.
For the reported run this yields 4,188 unique off-diagonal pairs, plus 100 diagonal pairs for self-PPL.
Top-1 and regret are computed within the uniform 24-candidate sets; correlation analyses use the union of evaluated pairs.
Candidate sets and metadata are saved in \textit{candidate\_sources\_*.json} and \textit{pair\_sampling\_metadata.json}.

\paragraph{Transfer validation.}
To verify that FisherSketch at $K{>}10^5$ produces meaningful alignment, we validate
against actual LoRA transfer performance: for each source-target pair, we fine-tune
a LoRA adapter on the source domain and evaluate perplexity on the target.
The normalized transfer score measures what fraction of possible improvement (relative to
self-training) the source provides in loss space (log PPL). On 4,188 off-diagonal pairs (100 domains), FisherSketch with SRHT error projections achieves
45.7\% $\pm$ 5.0 top-1 source selection ($\approx$10.9$\times$ random) with max regret 0.119 $\pm$ 0.013 (mean $\pm$ std over seeds 43--45), confirming that FisherSketch
predicts cross-domain transfer at vocabulary scale where baseline methods cannot run.

\begin{table}[t]
\centering
\caption{Vocabulary-scale validation summary. Baseline and FisherSketch columns report
\textit{per-task} stored signature size (float32, $\hat{\mu}_{ae}$ only); FisherSketch also requires
$\sim$33\,MiB fixed projections with SRHT for $d{=}m{=}4096$ and maintains a 192\,KB per-task streaming state during extraction.
Top-1 is mean $\pm$ std over seeds 43--45; improvement uses mean top-1 / random.}
\label{tab:vocab-scale-appendix}
\begin{tabular}{lccccccc}
\toprule
Model & $K$ & Domains & Baseline & FisherSketch & Reduction & Top-1 & Improv. \\
\midrule
Llama-3.1-8B & 128,256 & 100 & 61\,GiB & 16\,KB & $4.0{\times}10^6$ & 45.7\% $\pm$ 5.0 & 10.9$\times$ \\
\bottomrule
\end{tabular}
\end{table}

\paragraph{Loss-space regret.}
For completeness, Table~\ref{tab:logppl-regret} reports per-target regret in log-PPL units
for the uniform 24-candidate protocol. These values are unnormalized and not comparable
across targets; normalized-transfer regret in Table~\ref{tab:vocab-scale} remains the
primary metric.

\begin{table}[H]
\centering
\caption{Loss-space (log PPL) regret summary (predicted minus oracle) under the uniform
24-candidate protocol. Values are mean $\pm$ std over seeds 43--45 for the mean/median/95th percentile of per-target regret.}
\label{tab:logppl-regret}
\begin{tabular}{lccc}
\toprule
Method & Mean & Median & 95th \\
\midrule
FisherSketch (SRHT) & 0.0025 $\pm$ 0.0004 & 0.0005 $\pm$ 0.0005 & 0.0087 $\pm$ 0.0007 \\
$S_{\mathrm{cov}}(M_a)$ & 0.0023 $\pm$ 0.0002 & 0.0003 $\pm$ 0.0002 & 0.0086 $\pm$ 0.0003 \\
$S_{\mathrm{cov}}(\Gamma_e)$ & 0.0030 $\pm$ 0.0003 & 0.0009 $\pm$ 0.0001 & 0.0118 $\pm$ 0.0007 \\
\bottomrule
\end{tabular}
\end{table}

\needspace{10\baselineskip}
\section{Shared-Parameter Affine Layers: Convolutions and Token-Wise Transformer Projections}
\label{app:shared-params}

Many modern architectures reuse the same affine parameters across multiple \emph{positions} within an example:
convolution kernels are shared across spatial locations, and transformer projection matrices (e.g.,\ $W_Q, W_K, W_V, W_O$) and MLP matrices are shared across tokens.
In this regime, the per-example gradient is a \emph{sum of outer products} over positions rather than a single outer product.
This section provides (i) an exact kernel identity for the shared-parameter gradient and (ii) an efficient factored sketch that generalizes FisherSketch.
Table~\ref{tab:shared-param-validation} gives additional exact-Fisher validation for shared-parameter layers. These results validate the shared-layer FisherSketch estimator itself; they do not assume the separable activation/error proxy that is specific to no-sharing heads. As an implementation check, the shared-layer kernel identity matched materialized per-example gradients to machine precision in the Q1 validation (mean absolute discrepancies at most $2\times 10^{-16}$ and maxima at most $6\times 10^{-16}$).

\begin{table}[H]
\centering
\small
\caption{Shared-parameter FisherSketch validation. Spearman values compare sketched shared-layer Fisher alignment against exact materialized-gradient Fisher alignment over task pairs; mean relative error is for $m=4096$.}
\label{tab:shared-param-validation}
\begin{tabular}{lrrrr}
\toprule
Layer & Tokens & $m=256$ & $m=1024$ & $m=4096$ / rel. err. \\
\midrule
ResNet-18 conv & 196 & 0.893 & 0.976 & 0.998 / 6.1\% \\
ResNet-18 conv & 784 & 0.970 & 0.985 & 0.997 / 3.9\% \\
ViT-B/16 MLP & 197 & 0.971 & 0.991 & 0.997 / 4.8\% \\
ViT-B/16 QKV & 197 & 0.964 & 0.992 & 0.994 / 0.5\% \\
\bottomrule
\end{tabular}
\end{table}

\paragraph{Setup.}
Fix a task distribution $\mathcal{D}_k$ over examples $(x,y)$ and fix model parameters $\theta$.
Consider a shared affine parameter matrix $W\in\mathbb{R}^{d_{\mathrm{out}}\times d_{\mathrm{in}}}$ used at $S$ positions (possibly varying by example).
For an example $(x,y)$, let $a_t(x)\in\mathbb{R}^{d_{\mathrm{in}}}$ denote the input vector at position $t\in\{1,\dots,S\}$. Define
\[
z_t(x)=W\,a_t(x)\in\mathbb{R}^{d_{\mathrm{out}}}.
\]
Let $\delta_t(x):=\nabla_{z_t}\ell_\theta(x,y)\in\mathbb{R}^{d_{\mathrm{out}}}$ be the backpropagated gradient with respect to the pre-activation $z_t$.
Stack these as column matrices
\[
A(x):=[a_1(x),\dots,a_S(x)]\in\mathbb{R}^{d_{\mathrm{in}}\times S},\qquad
\Delta(x):=[\delta_1(x),\dots,\delta_S(x)]\in\mathbb{R}^{d_{\mathrm{out}}\times S}.
\]
We focus on the per-example gradient with respect to $W$ and its induced Fisher block.
If the shared affine map includes a bias term $b$, we can augment each $a_t$ with a constant $1$ and treat $b$ as an extra column of $W$. All identities below then apply unchanged.

\begin{lemma}[Per-example gradient for a shared affine parameter]
\label{lem:shared-grad}
The per-example gradient of $\ell_\theta(x,y)$ with respect to $W$ is
\begin{equation}
\nabla_W \ell_\theta(x,y)
=
\sum_{t=1}^S \delta_t(x)\,a_t(x)^\top
=
\Delta(x)\,A(x)^\top.
\end{equation}
In vectorized form (where $\mathrm{vec}$ stacks columns),
\begin{equation}
g_W(x,y)
:=\mathrm{vec}(\nabla_W \ell_\theta(x,y))
=
\sum_{t=1}^S a_t(x)\otimes \delta_t(x).
\end{equation}
\end{lemma}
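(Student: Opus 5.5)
We prove the identity by the multivariable chain rule, treating $W$ as entering the loss only through the pre-activations $z_1(x),\dots,z_S(x)$. Since $W$ is shared, the loss depends on $W$ through the composite map $W \mapsto (Wa_1(x),\dots,Wa_S(x))$. The chain rule then expresses the total derivative as a sum of contributions, one per position. We hold the inputs $a_t(x)$ fixed in this computation. This is legitimate because, for the affine layer in question, the $a_t$ are computed upstream of $W$. Any downstream dependence of later positions on $z_t$ is already folded into $\delta_t(x)=\nabla_{z_t}\ell_\theta$, which is defined as the total backpropagated gradient.

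Concretely, we would compute the partial derivative with respect to a single entry $W_{pq}$. Since $(z_t)_r=\sum_q W_{rq}(a_t)_q$, we have $\partial (z_t)_r/\partial W_{pq}=\mathbb{1}[r=p]\,(a_t)_q$. Summing over positions and coordinates gives
\[
\frac{\partial \ell}{\partial W_{pq}}=\sum_{t=1}^S\sum_r (\delta_t)_r\,\mathbb{1}[r=p]\,(a_t)_q=\sum_{t=1}^S (\delta_t)_p (a_t)_q .
\]
This is the $(p,q)$ entry of $\sum_t \delta_t a_t^\top$. Equivalently, it equals $\Delta(x)A(x)^\top$ by the column expansion of a matrix product, $\Delta A^\top=\sum_t \Delta_{:,t}(A_{:,t})^\top$.

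The vectorized form follows by linearity of $\mathrm{vec}$ together with the column-major identity $\mathrm{vec}(uv^\top)=v\otimes u$ already used in Lemma~\ref{lem:layer-kronecker}. Applied to each term with $u=\delta_t$ and $v=a_t$, it gives $\mathrm{vec}(\nabla_W\ell)=\sum_t a_t\otimes\delta_t$. Each summand is exactly the single-use gradient of Lemma~\ref{lem:layer-kronecker}, so the shared case reduces to a superposition of that lemma across positions.

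For a bias term, we would note that augmenting each $a_t$ with a constant $1$ turns $b$ into an extra column of $W$, so the same computation applies unchanged. The only real subtlety, and thus the step we would state carefully, is the justification that $\delta_t$ is the total derivative with respect to $z_t$, including effects through later layers and other positions (for example, attention mixing). This ensures the chain rule sum over $t$ is complete and does not double count. Once $\ell$ is viewed as a function of the tuple $(z_1,\dots,z_S)$ plus parameters other than $W$, with the $a_t$ independent of $W$, this justification is immediate.
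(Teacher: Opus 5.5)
Your proposal is correct and follows essentially the same route as the paper: an entrywise chain-rule computation giving $\partial\ell/\partial W_{pq}=\sum_t(\delta_t)_p(a_t)_q$, followed by the identity $\mathrm{vec}(uv^\top)=v\otimes u$ and linearity of $\mathrm{vec}$. Your explicit remark that $\delta_t$ must be the full backpropagated gradient with respect to $z_t$, with the $a_t$ not depending on $W$, states an assumption that the paper's proof uses only implicitly.
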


\begin{proof}
For each position $t$, $z_t = W a_t$ depends linearly on $W$.
For any entry $(i,j)$ of $W$, $\partial z_{t,i}/\partial W_{ij}=a_{t,j}$, whereas $\partial z_{t,i'}/\partial W_{ij}=0$ for $i'\neq i$.
Thus, by the chain rule, only the $i'=i$ term contributes, and
\[
\frac{\partial \ell}{\partial W_{ij}}
=
\sum_{t=1}^S \sum_{i'=1}^{d_{\mathrm{out}}}
\frac{\partial \ell}{\partial z_{t,i'}}\frac{\partial z_{t,i'}}{\partial W_{ij}}
=
\sum_{t=1}^S \frac{\partial \ell}{\partial z_{t,i}}\,a_{t,j}
=
\sum_{t=1}^S \delta_{t,i}\,a_{t,j}.
\]
This is exactly the $(i,j)$ entry of $\sum_t \delta_t a_t^\top = \Delta A^\top$.
Vectorization uses the identity $\mathrm{vec}(u v^\top)=v\otimes u$, so
$\mathrm{vec}(\delta_t a_t^\top)=a_t\otimes\delta_t$. Linearity then yields the sum.
\end{proof}

\begin{lemma}[Token/position Gram identity for gradient inner products]
\label{lem:shared-dot}
Let $g_W$ and $g_W'$ be the vectorized shared-parameter gradients for two examples, corresponding to $(A,\Delta)$ and $(A',\Delta')$ (possibly with different numbers of positions $S$ and $S'$).
Then
\begin{align}
g_W^\top g_W'
&=
\sum_{t=1}^{S}\sum_{s=1}^{S'}
\bigl(a_t^\top a_s'\bigr)\,\bigl(\delta_t^\top \delta_s'\bigr)
\label{eq:shared-dot-sum}
\\
&=
\langle A^\top A',\;\Delta^\top \Delta'\rangle_F,
\label{eq:shared-dot-gram}
\end{align}
where $\langle X,Y\rangle_F:=\mathrm{tr}(X^\top Y)$ is the Frobenius inner product.
\end{lemma}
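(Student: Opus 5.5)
The plan is to start from the vectorized form in Lemma~\ref{lem:shared-grad}, $g_W=\sum_{t=1}^S a_t\otimes\delta_t$ and $g_W'=\sum_{s=1}^{S'} a_s'\otimes\delta_s'$, and expand the inner product bilinearly. This gives
\[
g_W^\top g_W'=\sum_{t=1}^{S}\sum_{s=1}^{S'}(a_t\otimes\delta_t)^\top(a_s'\otimes\delta_s').
\]
The only ingredient needed beyond linearity is the mixed-product property of the Kronecker product, $(u\otimes v)^\top(u'\otimes v')=(u^\top u')(v^\top v')$. Applying it term by term yields \eqref{eq:shared-dot-sum}. This is the same factorization already used for $g^\top g'$ in the proof of Theorem~\ref{thm:three-factor}, now applied inside a double sum. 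The dimensions are compatible because $a_t,a_s'\in\mathbb{R}^{d_{\mathrm{in}}}$ and $\delta_t,\delta_s'\in\mathbb{R}^{d_{\mathrm{out}}}$. The numbers of positions $S$ and $S'$ may differ, since nothing requires pairing positions across the two examples.

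For the Gram form \eqref{eq:shared-dot-gram}, I will identify the entries of the two matrices. By the column stacking $A=[a_1,\dots,a_S]$, we have $(A^\top A')_{ts}=a_t^\top a_s'$. Likewise $(\Delta^\top\Delta')_{ts}=\delta_t^\top\delta_s'$, and both matrices lie in $\mathbb{R}^{S\times S'}$. Then $\langle X,Y\rangle_F=\mathrm{tr}(X^\top Y)=\sum_{t,s}X_{ts}Y_{ts}$ recovers the double sum, which completes the identity.

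As a cross-check, I can instead write $g_W^\top g_W'=\langle \Delta A^\top,\Delta'A'^\top\rangle_F=\mathrm{tr}(A\Delta^\top\Delta'A'^\top)$ and use cyclicity of the trace to obtain $\mathrm{tr}\bigl((A^\top A')^\top\Delta^\top\Delta'\bigr)$. This uses the fact that vec is an isometry from the Frobenius inner product to the Euclidean one.

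There is no real obstacle here. The only point needing care is the vectorization convention: the argument needs $\mathrm{vec}(\delta a^\top)=a\otimes\delta$, which Lemma~\ref{lem:shared-grad} fixes. With row-major vectorization the factors would be swapped, but the inner product would be unchanged.
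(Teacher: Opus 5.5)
Your proof is correct and follows the paper's argument essentially step for step: bilinear expansion of $g_W=\sum_t a_t\otimes\delta_t$, the Kronecker inner-product identity $(u\otimes v)^\top(u'\otimes v')=(u^\top u')(v^\top v')$, and an entrywise reading of $A^\top A'$ and $\Delta^\top\Delta'$ to get the Frobenius form. Your trace-cyclicity cross-check via $\langle \Delta A^\top,\Delta' A'^\top\rangle_F$ is also valid; it is a harmless extra that the paper does not include.
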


\begin{proof}
By Lemma~\ref{lem:shared-grad}, $g_W=\sum_t a_t\otimes \delta_t$ and $g_W'=\sum_s a_s'\otimes \delta_s'$.
Using bilinearity and $(u\otimes v)^\top(u'\otimes v')=(u^\top u')(v^\top v')$, we obtain
\[
g_W^\top g_W'
=
\sum_{t,s} (a_t^\top a_s')(\delta_t^\top \delta_s'),
\]
which is~\eqref{eq:shared-dot-sum}.
Now note $(A^\top A')_{t,s}=a_t^\top a_s'$ and $(\Delta^\top\Delta')_{t,s}=\delta_t^\top\delta_s'$.
Therefore,
\[
\langle A^\top A',\Delta^\top\Delta'\rangle_F
=
\sum_{t,s} (A^\top A')_{t,s}(\Delta^\top\Delta')_{t,s}
=
\sum_{t,s}(a_t^\top a_s')(\delta_t^\top \delta_s'),
\]
which is~\eqref{eq:shared-dot-gram}.
\end{proof}

\paragraph{A shared-layer Fisher kernel.}
Define the degree-2 polynomial kernel induced by the shared-parameter gradient:
\begin{equation}
k_{\mathrm{share}}\bigl((A,\Delta),(A',\Delta')\bigr)
:=
\bigl(g_W^\top g_W'\bigr)^2
=
\langle A^\top A',\Delta^\top\Delta'\rangle_F^2.
\label{eq:kshare}
\end{equation}
This kernel is positive semidefinite because it is the homogeneous polynomial kernel of degree 2 applied to the feature vector $g_W$.

\begin{theorem}[RKHS mean embedding view (shared parameters)]
\label{thm:shared-rkhs}
Let $F_{k,W}:=\mathbb{E}_{(x,y)\sim \mathcal{D}_k}[g_W(x,y)g_W(x,y)^\top]$ be the empirical Fisher block associated with the shared parameter $W$ under task $k$.
Then for any tasks $i,j$,
\begin{equation}
\langle F_{i,W},F_{j,W}\rangle_F
=
\mathbb{E}_{(A,\Delta)\sim \mathcal{D}_i,\ (A',\Delta')\sim \mathcal{D}_j}
\Bigl[
k_{\mathrm{share}}\bigl((A,\Delta),(A',\Delta')\bigr)
\Bigr].
\end{equation}
Equivalently, letting $\mathcal{H}_{\mathrm{share}}$ denote the RKHS of $k_{\mathrm{share}}$ and $\mu_{k,W}\in\mathcal{H}_{\mathrm{share}}$ its mean embedding,
\begin{equation}
\langle F_{i,W},F_{j,W}\rangle_F=\langle \mu_{i,W},\mu_{j,W}\rangle_{\mathcal{H}_{\mathrm{share}}},\qquad
A_{F,W}(i,j)=\frac{\langle\mu_{i,W},\mu_{j,W}\rangle}{\|\mu_{i,W}\|\ \|\mu_{j,W}\|}.
\end{equation}
\end{theorem}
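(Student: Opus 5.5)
The argument follows the head-level proof of Theorem~\ref{thm:three-factor}, with Lemma~\ref{lem:shared-dot} in place of the single-outer-product identity. First I expand the Frobenius inner product of two second-moment matrices. For independent draws $(x,y)\sim\mathcal{D}_i$ and $(x',y')\sim\mathcal{D}_j$, write $g:=g_W(x,y)$ and $g':=g_W(x',y')$. Then
\[
\langle F_{i,W},F_{j,W}\rangle_F=\mathrm{tr}\bigl(\mathbb{E}[gg^\top]\,\mathbb{E}[g'g'^\top]\bigr)=\mathbb{E}\bigl[\mathrm{tr}(gg^\top g'g'^\top)\bigr]=\mathbb{E}\bigl[(g^\top g')^2\bigr].
\]
Pulling the two expectations outside the trace is justified by independence and linearity of trace. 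Integrability follows from finite fourth moments of $g_W$, which I assume in the same way the head section assumes finite fourth moments. By Lemma~\ref{lem:shared-dot}, $g^\top g'=\langle A^\top A',\Delta^\top\Delta'\rangle_F$. Squaring gives exactly $k_{\mathrm{share}}$ as defined in \eqref{eq:kshare}, which proves the first displayed identity.

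For the RKHS form, I use an explicit feature map. I take $\Phi(A,\Delta):=\mathrm{vec}_{\mathrm{sym}}(gg^\top)$ with $g=g_W=\mathrm{vec}(\Delta A^\top)$, which depends only on $(A,\Delta)$. By Lemma~\ref{lem:vecsym},
\[
\langle\Phi(A,\Delta),\Phi(A',\Delta')\rangle=\langle gg^\top,g'g'^\top\rangle_F=(g^\top g')^2=k_{\mathrm{share}}.
\]
So $\Phi$ is a feature map for $k_{\mathrm{share}}$, and $\mathcal{H}_{\mathrm{share}}$ can be identified isometrically with the closed span of $\Phi$ inside a finite-dimensional Euclidean space. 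The same identification is used in the head-level RKHS remark. The mean embedding is $\mu_{k,W}=\mathbb{E}_{\mathcal{D}_k}[\Phi]=\mathrm{vec}_{\mathrm{sym}}(F_{k,W})$, using linearity of $\mathrm{vec}_{\mathrm{sym}}$. Exchanging expectation with the inner product, as in Corollary~\ref{cor:mean-embedding}, gives $\langle\mu_{i,W},\mu_{j,W}\rangle=\langle F_{i,W},F_{j,W}\rangle_F$. Applying this with $i=j$ gives $\|\mu_{k,W}\|=\|F_{k,W}\|_F$. Dividing then yields the cosine form of $A_{F,W}(i,j)$, provided both Fisher blocks are nonzero.

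No step is genuinely hard. The main point of care is that $S$ may vary by example, so $A$ and $A'$ have different column counts and $A^\top A'$ is rectangular. This causes no trouble because Lemma~\ref{lem:shared-dot} already allows $S\neq S'$. The RKHS statement is also representation-independent: any feature map for $k_{\mathrm{share}}$ gives the same inner products of mean embeddings, because the Bochner mean commutes with the canonical feature map. Besides this, the only remaining check is that the expectations are finite, which reduces to $\mathbb{E}\|g_W\|^4<\infty$.
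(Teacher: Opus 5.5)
Your proposal is correct and follows the paper's proof. The first identity comes from trace cyclicity plus independence followed by Lemma~\ref{lem:shared-dot}, and your explicit feature map $\mathrm{vec}_{\mathrm{sym}}(g_Wg_W^\top)$, with $\mu_{k,W}=\mathrm{vec}_{\mathrm{sym}}(F_{k,W})$, simply spells out what the paper defers to standard kernel mean embedding theory.

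One small refinement: integrability needs only $\mathbb{E}\|g_W\|^2<\infty$, not fourth moments of $g_W$. This is because $(g^\top g')^2\le\|g\|^2\|g'\|^2$, independence factorizes the expectation, and $\|\mathrm{vec}_{\mathrm{sym}}(gg^\top)\|=\|g\|^2$. The head section's fourth-moment assumption is on $(a,e)$, which amounts to second moments of $g=a\otimes e$.
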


\begin{proof}
By definition, $F_{k,W}=\mathbb{E}[g_W g_W^\top]$. Using bilinearity and trace-cyclicity, and taking independent draws from tasks $i$ and $j$,
\[
\langle F_{i,W},F_{j,W}\rangle_F
=
\mathrm{tr}\Bigl(\mathbb{E}[g_W g_W^\top]\ \mathbb{E}[g_W' g_W'^\top]\Bigr)
=
\mathbb{E}\Bigl[\mathrm{tr}(g_W g_W^\top g_W' g_W'^\top)\Bigr]
=
\mathbb{E}\bigl[(g_W^\top g_W')^2\bigr],
\]
where $g_W$ and $g_W'$ are independent draws from tasks $i$ and $j$ respectively.
By Lemma~\ref{lem:shared-dot}, $(g_W^\top g_W')^2=k_{\mathrm{share}}\bigl((A,\Delta)\allowbreak,(A',\Delta')\bigr)$, proving the first identity.
The RKHS statement follows from standard kernel mean embedding theory.
\end{proof}

\paragraph{SharedFisherSketch: an efficient factored sketch.}
Although $g_W$ (as a vector in $\mathbb{R}^{d_{\mathrm{in}}d_{\mathrm{out}}}$) can be very large, it admits a structured projection via Lemma~\ref{lem:shared-grad}.

\begin{proposition}[Factored Random Maclaurin sketch for shared-parameter gradients]
\label{prop:shared-rm}
Let $r_a,r_a'\in\{\pm1\}^{d_{\mathrm{in}}}$ and $r_\delta,r_\delta'\in\{\pm1\}^{d_{\mathrm{out}}}$ be independent Rademacher vectors.
Define the scalar projection
\begin{equation}
s(A,\Delta;\,r_a,r_\delta)
:=
\sum_{t=1}^S (r_a^\top a_t)(r_\delta^\top \delta_t)
=
(r_a\otimes r_\delta)^\top g_W.
\label{eq:shared-proj}
\end{equation}
Define one sketch coordinate
\begin{equation}
\psi(A,\Delta)
:=
s(A,\Delta;\,r_a,r_\delta)\ \cdot\ s(A,\Delta;\,r_a',r_\delta').
\end{equation}
Then for independent examples $(A,\Delta)$ and $(A',\Delta')$,
\begin{equation}
\mathbb{E}\bigl[\psi(A,\Delta)\psi(A',\Delta')\bigr]
=
k_{\mathrm{share}}\bigl((A,\Delta),(A',\Delta')\bigr)
=
(g_W^\top g_W')^2.
\end{equation}
With $m$ i.i.d.\ coordinates $\psi_1,\dots,\psi_m$, define
\[
\Psi_m:=\frac{1}{\sqrt{m}}[\psi_1,\dots,\psi_m]\in\mathbb{R}^m.
\]
Then the kernel estimator $\hat{k}_m=\langle \Psi_m(A,\Delta),\Psi_m(A',\Delta')\rangle$ is unbiased for $k_{\mathrm{share}}$.
\end{proposition}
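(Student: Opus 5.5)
The proof reduces everything to the Kronecker identity and the standard Random Maclaurin computation for a degree-2 polynomial kernel. First, the projection identity in \eqref{eq:shared-proj}. By Lemma~\ref{lem:shared-grad}, $g_W=\sum_t a_t\otimes\delta_t$. Using $(r_a\otimes r_\delta)^\top(a_t\otimes\delta_t)=(r_a^\top a_t)(r_\delta^\top\delta_t)$ and linearity, we get $s(A,\Delta;r_a,r_\delta)=R^\top g_W$ with $R:=r_a\otimes r_\delta\in\{\pm1\}^{d_{\mathrm{in}}d_{\mathrm{out}}}$. Likewise $s(\cdot;r_a',r_\delta')=R'^\top g_W$ with $R':=r_a'\otimes r_\delta'$. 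Hence $\psi(A,\Delta)=(R^\top g_W)(R'^\top g_W)$, and
\[
\psi(A,\Delta)\psi(A',\Delta')=(R^\top g_W)(R^\top g_W')\,(R'^\top g_W)(R'^\top g_W').
\]

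Next, I condition on the two examples, so that $g_W,g_W'$ are fixed; the expectation is taken over the sketch randomness. Since $R$ and $R'$ are independent, the expectation factors as $\mathbb{E}[(R^\top g_W)(R^\top g_W')]\cdot\mathbb{E}[(R'^\top g_W)(R'^\top g_W')]$. The key step is showing $\mathbb{E}[RR^\top]=I$. Because $r_a$ and $r_\delta$ are independent with identity second moments, $\mathbb{E}[RR^\top]=\mathbb{E}[r_ar_a^\top]\otimes\mathbb{E}[r_\delta r_\delta^\top]=I_{d_{\mathrm{in}}}\otimes I_{d_{\mathrm{out}}}=I$. Each factor therefore equals $g_W^\top\mathbb{E}[RR^\top]g_W'=g_W^\top g_W'$, and the product is $(g_W^\top g_W')^2$. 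This equals $k_{\mathrm{share}}$ by \eqref{eq:kshare} and Lemma~\ref{lem:shared-dot}. Only second moments of $R$ enter here, so the fact that the entries of $R$ are not mutually independent (they are products of shared signs) causes no trouble. This point is the only one needing care, and it is the step I expect to be the main obstacle.

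Finally, for $\Psi_m$ built from $m$ i.i.d.\ draws of $(r_a,r_\delta,r_a',r_\delta')$, we have $\hat k_m=\frac1m\sum_{j}\psi_j(A,\Delta)\psi_j(A',\Delta')$. Each summand has expectation $k_{\mathrm{share}}$, so linearity of expectation gives unbiasedness. If the examples are random rather than fixed, the tower property extends the conditional statement to the joint expectation, provided fourth moments of $g_W$ are finite.
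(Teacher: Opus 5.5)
Your proposal is correct and follows essentially the same route as the paper: it writes $\psi=(r^\top g_W)(r'^\top g_W)$ with $r=r_a\otimes r_\delta$ and $r'=r_a'\otimes r_\delta'$, factors the sketch expectation using independence of $r$ and $r'$, and uses $\mathbb{E}[rr^\top]=I$ to get $(g_W^\top g_W')^2$ (the paper checks this entrywise via $\mathbb{E}[r_{ij}r_{i'j'}]=\mathbf{1}\{i=i'\}\mathbf{1}\{j=j'\}$, which is equivalent to your Kronecker-of-expectations argument), then concludes that $\hat k_m$ is unbiased because it averages i.i.d.\ unbiased coordinates. Your explicit check of the projection identity and your tower-property remark for random examples are harmless additions that the paper does not spell out.
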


\begin{proof}
Fix $(A,\Delta)$ and $(A',\Delta')$ and abbreviate $g=g_W$, $g'=g_W'$.
Let $r:=r_a\otimes r_\delta$ and $r':=r_a'\otimes r_\delta'$.
By~\eqref{eq:shared-proj}, $\psi=(r^\top g)(r'^\top g)$ and similarly $\psi'=(r^\top g')(r'^\top g')$.
Since $(r,r')$ are independent and each has $\mathbb{E}[r r^\top]=I$,
\[
\mathbb{E}[\psi\psi']
=
\mathbb{E}_r[(r^\top g)(r^\top g')]\ \mathbb{E}_{r'}[(r'^\top g)(r'^\top g')]
=
(g^\top g')\ (g^\top g')
=
(g^\top g')^2.
\]
For $r=r_a\otimes r_\delta$ with independent Rademachers,
$\mathbb{E}[r_{ij}r_{i'j'}]=\mathbb{E}[r_{a,i}r_{a,i'}]\mathbb{E}[r_{\delta,j}r_{\delta,j'}]=\mathbf{1}\{i=i'\}\mathbf{1}\{j=j'\}$.
Hence $\mathbb{E}[r r^\top]=I$.
Finally, $\hat{k}_m$ is the average of $m$ i.i.d.\ unbiased coordinates, hence unbiased.
\end{proof}

\paragraph{Sketch variance and estimator caveats.}
Unbiasedness uses only second moments, while the variance depends on higher moments and can be large for polynomial-kernel sketches.
See standard analyses of Random Maclaurin features and TensorSketch for variance/accuracy tradeoffs~\citep{kar2012random,pham2013fast}; the shared-parameter sketch inherits the same behavior.

\paragraph{Computation and relation to the head.}
Computing $s(A,\Delta;r_a,r_\delta)$ can be done without forming $g_W$:
we first compute $\alpha:=r_a^\top A\in\mathbb{R}^S$ and $\beta:=r_\delta^\top \Delta\in\mathbb{R}^S$, then $s=\alpha^\top\beta$.
This costs $O(S(d_{\mathrm{in}}+d_{\mathrm{out}}))$ per coordinate and is independent of the parameter count $d_{\mathrm{in}}d_{\mathrm{out}}$.
When $S=1$, so $A=a$ and $\Delta=\delta$, Proposition~\ref{prop:shared-rm} reduces exactly to the head-layer factored sketch (Proposition~\ref{prop:factored-rm}).

\section{Preliminaries (Detailed)}
\label{app:preliminaries}

This appendix provides full definitions and derivations for Fisher information and linear CKA, expanding the condensed treatment in Section~\ref{sec:preliminaries}.

\subsection{Fisher Information and Fisher Alignment}
\label{app:fisher-detailed}

The Fisher information matrix is central to optimization and natural-gradient methods and is widely used in continual learning~\citep{amari1998natural,martens2020new,pascanu2013revisiting,kirkpatrick2017overcoming}.
For a given task $k$, define the \emph{per-example gradient} at $\theta$ as
\[
    g_\theta^{(k)}(x,y)
    := \nabla_\theta \ell_\theta^{(k)}(x,y) \in \mathbb{R}^p.
\]
We work with the \textbf{data Fisher}---also called the empirical Fisher in population form---defined as the \emph{uncentered second moment} of per-example gradients under $\mathcal{D}_k$:
\begin{equation}
    \mathcal{F}_k(\theta)
    := \mathbb{E}_{(x,y)\sim \mathcal{D}_k}
        \big[ g_\theta^{(k)}(x,y)\,g_\theta^{(k)}(x,y)^\top \big]
    \in \mathbb{R}^{p\times p}.
    \label{eq:fisher-def-app}
\end{equation}
This is the gradient outer product Fisher, standard in continual learning~\citep{kirkpatrick2017overcoming}; see Remark~\ref{rem:fisher-variants} for Fisher variants.
We assume $\mathcal{F}_k(\theta)$ exists and is positive semidefinite, and we additionally assume positive definiteness where convenient.

In practice, we approximate $\mathcal{F}_k(\theta)$ from $n_k$ i.i.d.\ samples $(x^{(k)}_i,y^{(k)}_i)\sim\mathcal{D}_k$ using the sample average:
\begin{equation}
    \hat{\mathcal{F}}_k
    := \frac{1}{n_k} \sum_{i=1}^{n_k}
        g_\theta^{(k)}(x^{(k)}_i,y^{(k)}_i)\,
        g_\theta^{(k)}(x^{(k)}_i,y^{(k)}_i)^\top.
    \label{eq:empirical-fisher-app}
\end{equation}

\begin{remark}[Fisher variants]
\label{rem:fisher-variants}
Several Fisher definitions appear in the literature:
\begin{itemize}
    \item[(i)] \textbf{True Fisher}:
    $
    \mathcal{F}^{\mathrm{true}}_k(\theta)
    = \mathbb{E}_{x \sim \mathcal{D}_{k,x}}\mathbb{E}_{y \sim p_\theta(\cdot|x)}
    [\nabla_\theta \log p_\theta(y|x)\nabla_\theta \log p_\theta(y|x)^\top]
    $,
    where $\mathcal{D}_{k,x}$ is the input marginal of $\mathcal{D}_k$;
    \item[(ii)] \textbf{Empirical Fisher (sample)}: $\hat{\mathcal{F}} = \tfrac{1}{n}\sum_i \nabla \ell_i \nabla \ell_i^\top$, averaging over data labels (Eq.~\eqref{eq:empirical-fisher-app});
    \item[(iii)] \textbf{Head Fisher}: restricting to final-layer parameters only.
\end{itemize}
The empirical Fisher can be a poor approximation to the true Fisher in some settings~\citep{kunstner2019limitations}.
Our analysis uses the empirical Fisher on head parameters, which is standard in practice~\citep{kirkpatrick2017overcoming}.
For an affine head (bias absorbed by augmenting $a$ with a constant), the gradient has Kronecker form $g^{\mathrm{head}} = a \otimes e$.
With column-major vectorization, $\mathrm{vec}(e a^\top)=a\otimes e$, where $a$ is the penultimate activation and $e = \hat{p} - y$ with $\hat{p} := \mathrm{softmax}(Wa)$ and $y$ a one-hot label.
\end{remark}

To compare the Fisher geometry of two tasks, we use the cosine similarity defined in the main text (Eq.~\eqref{eq:fisher-alignment}).
High Fisher alignment indicates that tasks $i$ and $j$ emphasize similar directions in parameter space.
Low Fisher alignment indicates that the tasks are decoupled in the Fisher metric.

In practice, we estimate alignment $\hat{A}_{\mathrm{F}}(i,j)$ by replacing $\mathcal{F}_i,\mathcal{F}_j$ in~\eqref{eq:fisher-alignment} with $\hat{\mathcal{F}}_i,\hat{\mathcal{F}}_j$.

\begin{lemma}[Expected squared gradient agreement]
\label{lem:grad-agreement}
Let $(x_i,y_i)\sim \mathcal{D}_i$ and $(x_j,y_j)\sim \mathcal{D}_j$ be independent draws.
Define $g_i := g_\theta^{(i)}(x_i,y_i)$ and $g_j := g_\theta^{(j)}(x_j,y_j)$.
Assume $\|\mathcal{F}_i\|_F,\|\mathcal{F}_j\|_F>0$; otherwise set $A_{\mathrm{F}}(i,j):=0$.
Define the transfer proxy
\[
U(i,j) := \mathbb{E}\big[(g_i^\top g_j)^2\big].
\]
Then
\[
U(i,j) = \langle \mathcal{F}_i, \mathcal{F}_j \rangle_F.
\]
Normalizing yields
\[
\frac{U(i,j)}{\|\mathcal{F}_i\|_F\,\|\mathcal{F}_j\|_F} = A_{\mathrm{F}}(i,j).
\]
\end{lemma}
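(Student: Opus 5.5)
The plan is to expand the Frobenius inner product of the two Fisher matrices using independent draws, exactly as in the proof of Theorem~\ref{thm:shared-rkhs} and Theorem~\ref{thm:three-factor}. First, I write $\mathcal{F}_i=\mathbb{E}[g_ig_i^\top]$ and $\mathcal{F}_j=\mathbb{E}[g_jg_j^\top]$. Then $\langle \mathcal{F}_i,\mathcal{F}_j\rangle_F=\mathrm{tr}(\mathcal{F}_i^\top\mathcal{F}_j)$. Since the draws $(x_i,y_i)$ and $(x_j,y_j)$ are independent, the product of expectations equals the expectation of the product:
\[
\mathrm{tr}\big(\mathbb{E}[g_ig_i^\top]\,\mathbb{E}[g_jg_j^\top]\big)=\mathbb{E}\big[\mathrm{tr}(g_ig_i^\top g_jg_j^\top)\big].
\]
Here I use linearity of trace and expectation, together with symmetry of $\mathcal{F}_i$.

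Second, I apply trace cyclicity: $\mathrm{tr}(g_ig_i^\top g_jg_j^\top)=(g_i^\top g_j)(g_j^\top g_i)=(g_i^\top g_j)^2$. This gives $U(i,j)=\langle\mathcal{F}_i,\mathcal{F}_j\rangle_F$. Dividing by $\|\mathcal{F}_i\|_F\|\mathcal{F}_j\|_F$, which is positive by assumption, yields the normalized statement directly from Eq.~\eqref{eq:fisher-alignment}. In the degenerate case the convention $A_F:=0$ applies, and nothing further needs to be proved.

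The only real obstacle is justifying the interchange of expectation and trace, i.e., that all quantities are finite. I would note that $\mathcal{F}_k$ existing as a matrix requires finite second moments of $g$. Finiteness of $\mathbb{E}[(g_i^\top g_j)^2]$ then follows from Cauchy--Schwarz and independence: $\mathbb{E}[(g_i^\top g_j)^2]\le\mathbb{E}\|g_i\|^2\,\mathbb{E}\|g_j\|^2<\infty$. So the interchange is valid entrywise, since $\mathrm{tr}(g_ig_i^\top g_jg_j^\top)=\sum_{a,b}(g_i)_a(g_i)_b(g_j)_a(g_j)_b$ is a finite sum of integrable terms by independence.
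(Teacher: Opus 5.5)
Your proposal is correct and matches the paper's proof: independence gives $\mathrm{tr}\bigl(\mathbb{E}[g_ig_i^\top]\,\mathbb{E}[g_jg_j^\top]\bigr)=\mathbb{E}\bigl[\mathrm{tr}(g_ig_i^\top g_jg_j^\top)\bigr]=\mathbb{E}\bigl[(g_i^\top g_j)^2\bigr]$, and normalizing gives $A_{\mathrm{F}}(i,j)$. Your integrability check is a valid addition that the paper leaves implicit: by Cauchy--Schwarz and independence, finite second moments of $g$ suffice.
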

\begin{proof}
Using independence and $\mathrm{tr}(AB) = \langle A,B\rangle_F$,
\[
\mathbb{E}\big[(g_i^\top g_j)^2\big]
= \mathbb{E}\big[\mathrm{tr}(g_i g_i^\top g_j g_j^\top)\big]
= \mathrm{tr}\!\left(\mathbb{E}[g_i g_i^\top]\,\mathbb{E}[g_j g_j^\top]\right)
= \langle \mathcal{F}_i, \mathcal{F}_j \rangle_F.
\]
Dividing by $\|\mathcal{F}_i\|_F\,\|\mathcal{F}_j\|_F$ gives the normalized alignment identity.
\end{proof}

\paragraph{Gradient-space kernel alignment.}
It is useful to view Fisher matrices as Gram matrices of gradient features.
Assuming $n_k>0$, define the gradient feature matrix for task $k$ as
\[
    \Phi_k
    :=
    \begin{bmatrix}
        g_\theta^{(k)}(x^{(k)}_1,y^{(k)}_1)^\top \\
        \vdots \\
        g_\theta^{(k)}(x^{(k)}_{n_k},y^{(k)}_{n_k})^\top
    \end{bmatrix}
    \in \mathbb{R}^{n_k \times p}.
\]
Then $\hat{\mathcal{F}}_k = \tfrac{1}{n_k}\Phi_k^\top \Phi_k$.
Since the cosine normalization cancels constant factors, Fisher alignment equals a \emph{kernel alignment} between gradient Gram matrices:
\[
    \hat{A}_{\mathrm{F}}(i,j)
    = \frac{\langle \Phi_i^\top \Phi_i, \Phi_j^\top \Phi_j \rangle_F}
           {\|\Phi_i^\top \Phi_i\|_F\,\|\Phi_j^\top \Phi_j\|_F}.
\]
This is formally analogous to CKA, which we now detail for representation features.

\subsection{Linear CKA for Representations}
\label{app:cka-detailed}

Linear CKA \citep{kornblith2019similarity} extends kernel alignment \citep{cristianini2001kernel, cortes2012algorithms} to compare neural network representations.
Let $Z_i \in \mathbb{R}^{n\times d_i}$ and $Z_j \in \mathbb{R}^{n\times d_j}$ be representation matrices with $n$ samples (rows) and $d_i,d_j$ features (columns).
We assume each column is sample-centered (zero empirical mean).
Write $K_i := Z_i Z_i^\top$ and $K_j := Z_j Z_j^\top$ for the corresponding Gram matrices.

The \emph{linear CKA} between $Z_i$ and $Z_j$ is defined as:
\begin{equation}
\text{CKA}(Z_i, Z_j) := \frac{\|Z_i^\top Z_j\|_F^2}{\|Z_i^\top Z_i\|_F \|Z_j^\top Z_j\|_F}
\label{eq:cka-def}
\end{equation}
We assume $\|Z_i^\top Z_i\|_F,\|Z_j^\top Z_j\|_F>0$ so CKA is well-defined.

Linear CKA is invariant to orthogonal transformations and isotropic rescaling of the feature space.
It is typically computed on \emph{paired} data: the rows of $Z_i$ and $Z_j$ correspond to the same inputs $x_1,\dots,x_n$ fed through two different networks, layers, or heads.
SVCCA \citep{raghu2017svcca} (with PCA truncation) is invariant to orthogonal rotations and isotropic rescaling, but is not invariant to arbitrary invertible linear transforms in general because the PCA truncation can change the retained subspace.
Plain CCA is invariant to invertible linear transforms on the full space.
However, practical CCA/SVCCA pipelines \citep{morcos2018insights} can be sensitive to preprocessing choices.
CKA is invariant to orthogonal rotations and isotropic rescaling without relying on a truncation step, making it popular for comparing representations across random initializations or architectures.

The key distinction we explore in the main text is between \emph{auto-covariance alignment} (which governs Fisher geometry) and \emph{cross-covariance alignment} (which CKA measures). This distinction is made precise in Section~\ref{sec:what-metrics-measure}.

\section{Experimental Protocol Details}
\label{app:protocol-details}

This appendix provides complete experimental details for the supplementary CIFAR-100 regime experiments and the synthetic overlap/similarity sweeps. The main paper FisherSketch experiments use ViT-B/16 for sketch accuracy and Llama-3.1-8B for vocab-scale transfer; full-network validation additionally includes Llama-3.1-70B (Appendix~\ref{app:llm-validation}).

\subsection{Architectures and Training (Supplementary)}
Supplementary CIFAR-100 regime experiments use torchvision ResNet-18 \citep{he2016deep} (ImageNet-style stem: $7\times7$ conv, stride 2, maxpool; no CIFAR-specific stem) on CIFAR-100; synthetic overlap/similarity sweeps use small shared-backbone MLPs with randomly generated labels.
Training uses SGD (momentum 0.9, weight decay $5 \times 10^{-4}$ applied to all parameters, including biases and BatchNorm parameters) with PyTorch cosine annealing (CosineAnnealingLR with $T_{\max}=\text{epochs}$, $\eta_{\min}=0$, stepped once per epoch; no warmup or restarts); source models are trained for 100 epochs with learning rate 0.2 and batch size 1024, and transfer runs use 50 epochs with the same optimizer settings. Data augmentation uses random crop (32, padding 4) and random horizontal flip; test-time uses only normalization (mean $(0.5071, 0.4867, 0.4408)$, std $(0.2675, 0.2565, 0.2761)$). For class-subset tasks, we filter CIFAR-100 to the specified 50 classes and remap labels to $\{0,\dots,49\}$ (25{,}000 train / 5{,}000 test; $\approx$25 steps/epoch at batch size 1024). Linear-probe heads on frozen backbones are trained with SGD (learning rate 0.1, weight decay $1 \times 10^{-4}$, cosine annealing) for 100 epochs. For the supplementary CIFAR-100 regime experiments and synthetic overlap/similarity sweeps in this appendix, we use a single fixed seed (42) and do not average over seeds unless explicitly noted; multi-seed validations are called out in Table~\ref{tab:seed-summary}.

\paragraph{Random seeds and averaging.}
Table~\ref{tab:seed-summary} summarizes which experiments use multiple seeds and
whether results are averaged.
\begin{table}[t]
\centering
\small
\caption{Random seed usage and averaging by experiment block.}
\label{tab:seed-summary}
\begingroup
\setlength{\tabcolsep}{4pt}
\begin{tabular}{@{}p{0.56\textwidth}p{0.16\textwidth}p{0.2\textwidth}@{}}
\toprule
Experiment block & Seeds / runs & Averaging \\
\midrule
Supplementary CIFAR-100 regime + synthetic overlap/similarity sweeps (App.~\ref{app:protocol-details}) & 1 (seed 42) & none \\
Sketch timing benchmark (Table~\ref{tab:timing-benchmark}) & 3 & mean over seeds \\
Vocab-scale transfer (Table~\ref{tab:vocab-scale}; App.~\ref{app:vocab-scale}) & 3 (seeds 43--45) & mean $\pm$ std; pairwise stability \\
ViT-B/16 head Fisher validation (Table~\ref{tab:vit-proxy-vs-exact}) & 5 & correlations across seed pairs \\
ResNet-50 coupling correction (Table~\ref{tab:rho-distribution}) & 3 & correlations across seed pairs \\
ViT-B/16 full-network validation (Table~\ref{tab:vit-validation}) & 5 & statistics over 50 pairs \\
LLM transfer (App.~\ref{app:llm-validation}) & deterministic splits & no averaging reported \\
\bottomrule
\end{tabular}
\endgroup
\end{table}

\subsection{Task Pairs, Overlap, and Similarity Sweeps}
\paragraph{CIFAR-100 overlap tasks.}
We define a fixed source task using classes 0--49 (50 classes). Target tasks are contiguous class ranges with controlled overlap: 0--49 (100\% overlap, 50/50 shared), 13--62 (37/50 shared; 0.74 overlap), 25--74 (25/50 shared; 0.50 overlap), 38--87 (12/50 shared; 0.24 overlap), and 50--99 (0\% overlap, disjoint). Overlap fraction is $|\mathcal{C}_{\mathrm{src}} \cap \mathcal{C}_{\mathrm{tgt}}|/50$. Class identities are preserved (no label permutation).

\paragraph{Synthetic similarity sweeps.}
For class-count/overlap sweeps in the synthetic experiments, label overlap is defined by class-set overlap: overlap $=1$ uses identical labels, overlap $=0$ uses disjoint class halves, and intermediate overlap samples labels from the overlapping class ranges. Network similarity is controlled by linearly interpolating two random networks across all layers:
$W^{(2)}_l \leftarrow \alpha W^{(1)}_l + (1-\alpha) W^{(2,\mathrm{rand})}_l$ with $\alpha \in \{0, 1/(N{-}1), \dots, 1\}$ (default $N{=}100$). We do not renormalize weights after interpolation.
Because we do not renormalize, interpolation scales weight norms by $\sqrt{\alpha^2 + (1-\alpha)^2}$, so $\alpha$ jointly controls similarity and scale.

\subsection{Metric Computation}
\begin{itemize}
    \item \textbf{CKA}: Computed on \emph{centered} penultimate-layer activations using the full CIFAR-100 test set (10,000 images) shared across models; optional deterministic subsampling is used only for fast runs.

    \item \textbf{$S_{\mathrm{cov}}(\Gamma_e)$}: Error second-moment matrices $\Gamma_e = \frac{1}{n}\sum e_i e_i^\top$ computed on each task's test split, where $e = \text{softmax}(Wa) - y$ (predictions minus one-hot labels, with masked softmax: logits set to $-\infty$ outside the task's label block; Assumption~\ref{ass:shared-output-space}). Errors are embedded into the shared 100-class space via that shared-output embedding, so coordinates outside the block are zero; linear-probe runs use the trained probe head.

    \item \textbf{Fisher alignment}: Exact head Fisher via the kernel identity (Appendix~\ref{app:fisher-computation}) is used only for head-level validation. The CIFAR-100 regime experiments report FA-CKA $= \mathrm{CKA} \cdot S_{\mathrm{cov}}(\Gamma_e)$ and the separable proxy $S_{\mathrm{cov}}(\Sigma_z) \cdot S_{\mathrm{cov}}(\Gamma_e)$ instead, where $\Sigma_z$ is the centered covariance of penultimate activations (computed on the same test set as CKA).
\end{itemize}

\subsection{Reported Correlations}
For CIFAR-100 regime experiments, correlations (Pearson corr.) are computed across the five target tasks within each regime (five overlap levels). For synthetic similarity sweeps, correlations are computed across the sweep points ($N$ trials). Because $N$ is small in the CIFAR-100 regime, correlations are reported for completeness and interpreted qualitatively.

\section{Shared Output Space: Details}
\label{app:shared-output-space}

\begin{assumption}[Shared output space embedding]
\label{ass:shared-output-space}
Tasks $i$ and $j$ predict over a common $K$-dimensional output space (e.g., CIFAR-100's 100 classes), where each task uses only a subset of coordinates (classes).
Let $\mathcal{C}_k \subseteq \{1,\ldots,K\}$ be task $k$'s label subset.
Let $P_k \in \{0,1\}^{K \times |\mathcal{C}_k|}$ be the coordinate injection (it embeds subset coordinates into the shared $K$-space).
Let $\tilde{y}_k \in \mathbb{R}^{|\mathcal{C}_k|}$ be the one-hot vector in the subset space (equivalently, $\tilde{y}_k = P_k^\top y$ for a $K$-way one-hot label $y$).
Given subset logits $\tilde{z}_k \in \mathbb{R}^{|\mathcal{C}_k|}$, let $\tilde{p}_k = \mathrm{softmax}(\tilde{z}_k)$ and $\tilde{e}_k = \tilde{p}_k - \tilde{y}_k$.
We embed the error as $e_k := P_k \tilde{e}_k \in \mathbb{R}^K$.
Equivalently, this corresponds to a shared $K$-way head with logits set to $-\infty$ outside $\mathcal{C}_k$.
\end{assumption}

\subsection{Setup}
To compare error covariances $\Gamma_{e,i}$ and $\Gamma_{e,j}$ across tasks with different label vocabularies, we use a common coordinate system.

We assume each task $k$ predicts over its label subset $\mathcal{C}_k \subseteq \{1,\ldots,K\}$, either via a subset head or a shared $K$-way head with masked softmax (logits set to $-\infty$ outside $\mathcal{C}_k$).
If tasks use subset heads, we identify their head parameters with a shared $K$-way matrix by zero-padding rows outside $\mathcal{C}_k$.
Predictions still use the masked softmax, so gradients and errors live in the common coordinate system.
We assume label identities are aligned across tasks in this shared coordinate system.
To compare error covariances across tasks, we embed these predictions into a common $K$-dimensional space via the injection $P_k$ in Assumption~\ref{ass:shared-output-space}.
The embedded error vector $e_k \in \mathbb{R}^K$ is zero in coordinates outside $\mathcal{C}_k$.
This setup is standard when tasks are subsets of a larger taxonomy (e.g., CIFAR-100 has $K{=}100$ classes and each task uses a subset).
The disjoint-support implications below rely on masked softmax (logits set to $-\infty$ outside $\mathcal{C}_k$); with an unmasked $K$-way softmax, probability mass can appear on non-task classes, so the errors need not be sparse.

\subsection{Implications}
Under this embedding:
\begin{equation}
\langle \Gamma_{e,i}, \Gamma_{e,j} \rangle_F
= \langle \Gamma_{e,i}[\mathcal{C}_{ij}], \Gamma_{e,j}[\mathcal{C}_{ij}] \rangle_F,
\qquad
S_{\mathrm{cov}}(\Gamma_{e,i}, \Gamma_{e,j})
= \frac{\langle \Gamma_{e,i}[\mathcal{C}_{ij}], \Gamma_{e,j}[\mathcal{C}_{ij}] \rangle_F}
{\|\Gamma_{e,i}\|_F \|\Gamma_{e,j}\|_F},
\label{eq:overlap-restricted}
\end{equation}
where $\mathcal{C}_{ij} := \mathcal{C}_i \cap \mathcal{C}_j$ and $\Gamma_{e,k}[\mathcal{C}_{ij}]$ denotes the submatrix restricted to rows/columns in $\mathcal{C}_{ij}$.
If $\|\Gamma_{e,i}\|_F$ or $\|\Gamma_{e,j}\|_F$ is zero, then $S_{\mathrm{cov}}$ is undefined (we $\varepsilon$-regularize in practice).
\begin{itemize}
\item \textbf{Disjoint tasks} (no shared classes, $\mathcal{C}_i \cap \mathcal{C}_j = \emptyset$): Error covariances have disjoint support, so $\langle \Gamma_{e,i}, \Gamma_{e,j} \rangle_F = 0$ and $S_{\mathrm{cov}}(\Gamma_{e,i}, \Gamma_{e,j}) = 0$ when the norms are nonzero.
\item \textbf{Partially overlapping tasks}: The overlap-restricted inner product above governs $S_{\mathrm{cov}}(\Gamma_{e,i}, \Gamma_{e,j})$. In a stylized toy model that ignores the simplex constraint $\mathbf{1}^\top e = 0$, assume $\Gamma_{e,k} = \sigma_k^2 I_{\mathcal{C}_k}$,\footnote{For softmax errors, $\mathbf{1}^\top e = 0$ implies $\Gamma_{e,k}\mathbf{1} = 0$ (rank at most $|\mathcal{C}_k|-1$), so a realizable isotropic model on the task support is $\Gamma_{e,k} \propto I_{\mathcal{C}_k} - \frac{1}{|\mathcal{C}_k|}\mathbf{1}\mathbf{1}^\top$.}
\[
S_{\mathrm{cov}}(\Gamma_{e,i}, \Gamma_{e,j})
= \frac{|\mathcal{C}_{ij}|}{\sqrt{|\mathcal{C}_i|\,|\mathcal{C}_j|}}.
\]
\item \textbf{Identical tasks}: If $\Gamma_{e,i} = c\,\Gamma_{e,j}$ with $c>0$, then $S_{\mathrm{cov}}(\Gamma_{e,i}, \Gamma_{e,j}) = 1$.
\end{itemize}

\paragraph{Implementation sanity checks.}
It is helpful to verify that (i) $e_k$ is zero outside $\mathcal{C}_k$ under masked softmax (logits set to $-\infty$ outside $\mathcal{C}_k$), (ii) Eq.~\eqref{eq:overlap-restricted} holds numerically within floating-point tolerance, and (iii) $\Gamma_{e,k}\mathbf{1} \approx 0$ for softmax errors.

This framework applies when tasks are subsets of a larger taxonomy with a fixed label alignment (e.g., selecting animal vs.\ vehicle classes from CIFAR-100).
The resulting $S_{\mathrm{cov}}(\Gamma_{e,i}, \Gamma_{e,j})$ depends on the chosen embedding. For different label vocabularies, any cross-task embedding is a modeling choice.
For tasks with fundamentally different output spaces (e.g., classification vs.\ regression), error-covariance comparison requires an explicit embedding, while representation-only metrics remain agnostic to gradient compatibility.

\section{CKA--Fisher Gap Decomposition (Head Layer)}
\label{app:cka-fisher-gap}

This appendix provides a self-contained deviation bound that relates \emph{empirical} linear CKA (on a paired probe set) to \emph{population} head Fisher alignment.
The bound is primarily diagnostic: it separates (i) finite-sample CKA estimation error, (ii) CKA's inherent cross-vs.-auto covariance mismatch, (iii) centering (mean) effects, (iv) error-geometry misalignment, and (v) activation--error dependence (i.e., Kronecker-factorization violation).

\begin{theorem}[CKA--Fisher deviation decomposition, head layer]
\label{thm:cka-fisher-gap}
Fix a probe input distribution $\mathcal{D}_0$ and two tasks indexed by $i$ and $j$.
Let $a^{(i)}(x),a^{(j)}(x)\in\mathbb{R}^d$ denote the \emph{head-input} (penultimate) activations of the two models for an input $x$.
Let $Z_i,Z_j\in\mathbb{R}^{n\times d}$ be the corresponding \emph{centered} probe matrices evaluated on \emph{paired} probe inputs $x_t\sim\mathcal{D}_0$; row $t$ of $Z_i$ equals $a^{(i)}(x_t)-\hat{\mu}_i$ and row $t$ of $Z_j$ equals $a^{(j)}(x_t)-\hat{\mu}_j$.

\paragraph{CKA terms.}
Let $\mathrm{CKA}(Z_i,Z_j)$ denote empirical linear CKA on this paired probe set. Define the population quantity and deviation
\[
S_{\mathrm{cross}}(\Sigma_{ij};\Sigma_i,\Sigma_j)
:= \frac{\|\Sigma_{ij}\|_F^2}{\|\Sigma_i\|_F\,\|\Sigma_j\|_F},
\qquad
\gamma_n := \bigl|\mathrm{CKA}(Z_i,Z_j)-S_{\mathrm{cross}}(\Sigma_{ij};\Sigma_i,\Sigma_j)\bigr|,
\]
where $\Sigma_i:=\mathrm{Cov}_{x\sim\mathcal{D}_0}[a^{(i)}(x)]$, $\Sigma_j:=\mathrm{Cov}_{x\sim\mathcal{D}_0}[a^{(j)}(x)]$, and $\Sigma_{ij}:=\mathrm{Cov}_{x\sim\mathcal{D}_0}[a^{(i)}(x),a^{(j)}(x)]$ (all covariances are centered).

Define the \emph{cross-vs-auto mismatch}
\[
\kappa_{ij} := \bigl|S_{\mathrm{cross}}(\Sigma_{ij};\Sigma_i,\Sigma_j) - S_{\mathrm{cov}}(\Sigma_i,\Sigma_j)\bigr|.
\]
We estimate $\kappa_{ij}$ using the Procrustes proxy $\hat{\kappa}$ in Appendix~\ref{app:estimation}.

\paragraph{Head Fisher terms.}
For each task $k\in\{i,j\}$, let $e^{(k)}(x,y)\in\mathbb{R}^K$ denote the backpropagated head error (i.e., the loss gradient with respect to logits).
Define $g^{(k)}(x,y):=\mathrm{vec}(\nabla_W\ell^{(k)}(x,y))=a^{(k)}(x)\otimes e^{(k)}(x,y)$ (Lemma~\ref{lem:layer-kronecker}).%
\footnote{The Kronecker order $a\otimes e$ matches our column-stacking $\mathrm{vec}$ convention (so $\mathrm{vec}(u v^\top)=v\otimes u$).
Using row-major vectorization swaps the Kronecker factors; all alignment scores are unchanged by this fixed permutation, but residuals such as $\delta_k$ must be computed with the matching order.}
Define the head Fisher blocks
\[
\Sigma_{\psi,k} := \mathbb{E}_{(x,y)\sim \mathcal{D}_k}\!\big[g^{(k)}(x,y)\,g^{(k)}(x,y)^\top\big],
\qquad
A_F^{\mathrm{head}}(i,j) := S_{\mathrm{cov}}(\Sigma_{\psi,i},\Sigma_{\psi,j}).
\]
Also define activation and error second moments
\[
M_{a,k}:=\mathbb{E}[a^{(k)}a^{(k)\top}],\quad
\Gamma_{e,k}:=\mathbb{E}[e^{(k)}e^{(k)\top}],\quad
\mu_{a,k}:=\mathbb{E}[a^{(k)}],\quad
\Sigma_{a,k}:=\mathrm{Cov}(a^{(k)}),
\]
(all expectations under $\mathcal{D}_k$).
Define the diagnostics
\[
\delta_k := \frac{\|\Sigma_{\psi,k}-M_{a,k}\otimes \Gamma_{e,k}\|_F}{\|\Sigma_{\psi,k}\|_F},
\qquad
\varepsilon_e := 1 - S_{\mathrm{cov}}(\Gamma_{e,i},\Gamma_{e,j}),
\qquad
\varepsilon_{\mu,k} := \frac{\|\mu_{a,k}\|_2^2}{\|\Sigma_{a,k}\|_F}.
\]
If a denominator is zero, we follow the paper-wide $\varepsilon$-regularized convention; in practice we set the corresponding ratio/alignment to $0$.

\paragraph{Deviation bound under matched probe/task input marginals.}
Assume the probe input marginal matches each task's input marginal,
$\mathcal{D}_0 = \mathcal{D}_i^x = \mathcal{D}_j^x$, so that $\Sigma_k=\Sigma_{a,k}$ for $k\in\{i,j\}$.
Then
\begin{equation}
\bigl|\mathrm{CKA}(Z_i,Z_j) - A_F^{\mathrm{head}}(i,j)\bigr|
\le
\gamma_n
+ \kappa_{ij}
+ 2(\varepsilon_{\mu,i}+\varepsilon_{\mu,j})
+ \varepsilon_e
+ 2(\delta_i+\delta_j).
\label{eq:cka-fisher-gap}
\end{equation}
\end{theorem}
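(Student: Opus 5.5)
The plan is a telescoping triangle-inequality chain through four intermediate quantities:
\[
\mathrm{CKA}(Z_i,Z_j)\;\to\;S_{\mathrm{cross}}(\Sigma_{ij};\Sigma_i,\Sigma_j)\;\to\;S_{\mathrm{cov}}(\Sigma_{a,i},\Sigma_{a,j})\;\to\;S_{\mathrm{cov}}(M_{a,i},M_{a,j})\;\to\;A_{\mathrm{proxy}}(i,j)\;\to\;A_F^{\mathrm{head}}(i,j).
\]
Each arrow is charged to one term on the right of \eqref{eq:cka-fisher-gap}.

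The first arrow costs $\gamma_n$ by definition. The second costs $\kappa_{ij}$, once we use the matched-marginal assumption to identify $\Sigma_k=\Sigma_{a,k}$.

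The third arrow (centering) is where we use $M_{a,k}=\Sigma_{a,k}+\mu_{a,k}\mu_{a,k}^\top$. We have $\|M_{a,k}-\Sigma_{a,k}\|_F=\|\mu_{a,k}\|_2^2=\varepsilon_{\mu,k}\|\Sigma_{a,k}\|_F$. I will invoke a standard cosine perturbation lemma (the mean-correction lemma of Appendix~\ref{app:mean-correction}, or prove it directly): if $\|A'-A\|_F\le\epsilon\|A\|_F$, then $|\cos(A',B)-\cos(A,B)|\le 2\epsilon$. The proof splits $\cos(A',B)-\cos(A,B)$ into a numerator change, bounded by $\epsilon$ via Cauchy--Schwarz, plus a normalization change $|\,\|A\|_F/\|A'\|_F-1|$. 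Applying the lemma once for each task gives $2(\varepsilon_{\mu,i}+\varepsilon_{\mu,j})$.

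The fourth arrow multiplies by $S_{\mathrm{cov}}(\Gamma_{e,i},\Gamma_{e,j})=1-\varepsilon_e$. Since $|S_{\mathrm{cov}}(M_{a,i},M_{a,j})|\le1$, the cost is at most $\varepsilon_e$. Here we use that $A_{\mathrm{proxy}}=S_{\mathrm{cov}}(M_{a,i}\otimes\Gamma_{e,i},M_{a,j}\otimes\Gamma_{e,j})$, which holds because Frobenius inner products and norms factor over Kronecker products.

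The last arrow compares $S_{\mathrm{cov}}(\Sigma_{\psi,i},\Sigma_{\psi,j})$ with the same cosine evaluated at the Kronecker approximants. By definition $\|\Sigma_{\psi,k}-M_{a,k}\otimes\Gamma_{e,k}\|_F=\delta_k\|\Sigma_{\psi,k}\|_F$, so the same perturbation lemma, applied once per argument, yields $2(\delta_i+\delta_j)$.

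The step I expect to be delicate is the perturbation lemma's normalization when $\epsilon$ is not small. The denominator $\|A'\|_F\ge(1-\epsilon)\|A\|_F$ would naively give $2\epsilon/(1-\epsilon)$, as in Remark~\ref{rem:proxy-bound}. To obtain the clean constant $2$, I will instead normalize relative to the perturbed matrix: write $\cos(A',B)-\cos(A,B)=\langle \hat A'-\hat A,\hat B\rangle$ and use $\|\hat A'-\hat A\|\le 2\|A'-A\|/\|A\|$. This is the standard bound on the difference of unit vectors, and it holds unconditionally. Also, because the cosine is always bounded by $1$, any term exceeding $1$ makes the inequality trivial, so degenerate cases can be dispatched directly.

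The remaining care is in orienting each perturbation so the denominators match the stated diagnostics. In the centering step the reference matrix should be $\Sigma_{a,k}$, and in the Fisher step it should be $\Sigma_{\psi,k}$; the chain is ordered precisely so that each lemma application has its denominator on the correct side. Summing the five charges gives \eqref{eq:cka-fisher-gap}.
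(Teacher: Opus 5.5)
Your proposal is correct and follows the paper's proof step for step. It uses the same five-link triangle-inequality chain through $S_{\mathrm{cross}}$, $S_{\mathrm{cov}}(\Sigma_{a,i},\Sigma_{a,j})$, $S_{\mathrm{cov}}(M_{a,i},M_{a,j})$ and $A_{\mathrm{proxy}}$, and it obtains the $\varepsilon_e$ and $2(\delta_i+\delta_j)$ terms the same way, via the Kronecker factorization of $A_{\mathrm{proxy}}$ and the unit-vector perturbation bound. The only difference is cosmetic and sits in the centering step: the paper applies the perturbed-denominator bound and then needs positive semidefiniteness to get $\|\Sigma_{a,k}+\mu_{a,k}\mu_{a,k}^\top\|_F\ge\|\Sigma_{a,k}\|_F$, whereas your reference-denominator form gives $2\varepsilon_{\mu,k}$ directly.
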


\begin{proof}
Define the (separable) proxy
\[
A_{\mathrm{proxy}}(i,j)
:= S_{\mathrm{cov}}(M_{a,i},M_{a,j})\cdot S_{\mathrm{cov}}(\Gamma_{e,i},\Gamma_{e,j})
= S_{\mathrm{cov}}(M_{a,i}\otimes \Gamma_{e,i},\, M_{a,j}\otimes \Gamma_{e,j}),
\]
The last equality follows from $\langle A\otimes B,\,C\otimes D\rangle_F=\langle A,C\rangle_F\langle B,D\rangle_F$ and $\|A\otimes B\|_F=\|A\|_F\|B\|_F$.

By adding and subtracting intermediate terms and applying the triangle inequality,
\begin{align*}
|\mathrm{CKA} - A_F^{\mathrm{head}}|
&\le |\mathrm{CKA} - S_{\mathrm{cross}}|
    + |S_{\mathrm{cross}} - S_{\mathrm{cov}}(\Sigma_i,\Sigma_j)| \\
&\quad + |S_{\mathrm{cov}}(\Sigma_i,\Sigma_j) - S_{\mathrm{cov}}(M_{a,i},M_{a,j})|
    + |S_{\mathrm{cov}}(M_{a,i},M_{a,j}) - A_{\mathrm{proxy}}| \\
&\quad + |A_{\mathrm{proxy}} - A_F^{\mathrm{head}}|.
\end{align*}
The first two terms are $\gamma_n$ and $\kappa_{ij}$, respectively.

For the third term, since $\Sigma_i=\Sigma_{a,i}$ and $\Sigma_j=\Sigma_{a,j}$ under the matched-marginal assumption,
Lemma~\ref{lem:mean-correction} yields
\[
|S_{\mathrm{cov}}(\Sigma_i,\Sigma_j) - S_{\mathrm{cov}}(M_{a,i},M_{a,j})|
\le 2(\varepsilon_{\mu,i}+\varepsilon_{\mu,j}).
\]

For the fourth term, write
$A_{\mathrm{proxy}} = S_{\mathrm{cov}}(M_{a,i},M_{a,j})\cdot S_{\mathrm{cov}}(\Gamma_{e,i},\Gamma_{e,j})$.
Since $0\le S_{\mathrm{cov}}(M_{a,i},M_{a,j})\le 1$ for PSD matrices,
\[
|S_{\mathrm{cov}}(M_{a,i},M_{a,j}) - A_{\mathrm{proxy}}|
= S_{\mathrm{cov}}(M_{a,i},M_{a,j})\bigl(1-S_{\mathrm{cov}}(\Gamma_{e,i},\Gamma_{e,j})\bigr)
\le \varepsilon_e.
\]

Finally, write $\Sigma_{\psi,k}=M_{a,k}\otimes\Gamma_{e,k}+R_k$ and note that $\delta_k=\|R_k\|_F/\|\Sigma_{\psi,k}\|_F$.
Apply Corollary~\ref{cor:alignment-error} with $(\Sigma_i,\Sigma_j)=(\Sigma_{\psi,i},\Sigma_{\psi,j})$ and $(\Sigma_i^{(0)},\Sigma_j^{(0)})=(M_{a,i}\otimes\Gamma_{e,i},\,M_{a,j}\otimes\Gamma_{e,j})$ to obtain
\[
|A_{\mathrm{proxy}} - A_F^{\mathrm{head}}(i,j)|
=
\bigl|S_{\mathrm{cov}}(M_{a,i}\otimes\Gamma_{e,i},\,M_{a,j}\otimes\Gamma_{e,j})
    - S_{\mathrm{cov}}(\Sigma_{\psi,i},\Sigma_{\psi,j})\bigr|
\le 2(\delta_i+\delta_j).
\]
Combining the bounds yields~\eqref{eq:cka-fisher-gap}.
\end{proof}

\paragraph{Probe mismatch.}
If the probe input marginal $\mathcal{D}_0$ differs from the task input marginals $\mathcal{D}_i^x,\mathcal{D}_j^x$, the only change is in the mean-correction step.
Insert the additional term
$\bigl|S_{\mathrm{cov}}(\Sigma_i,\Sigma_j) - S_{\mathrm{cov}}(\Sigma_{a,i},\Sigma_{a,j})\bigr|$
to account for replacing probe covariances by task covariances.
The error term $\varepsilon_e$ and coupling residuals $\delta_k$ are often the dominant obstructions in many-class settings, explaining why CKA can decouple from head Fisher when label/error structures differ.

\paragraph{Estimating $\kappa_{ij}$.}
When $d$ is moderate and we form explicit sample covariances, $\kappa_{ij}$ can be estimated directly as
$\big|S_{\mathrm{cross}}(\hat{\Sigma}_{ij};\hat{\Sigma}_i,\hat{\Sigma}_j) - S_{\mathrm{cov}}(\hat{\Sigma}_i,\hat{\Sigma}_j)\big|$.
For high-dimensional representations where it is undesirable to interpret $S_{\mathrm{cov}}(\hat{\Sigma}_i,\hat{\Sigma}_j)$
in a fixed coordinate basis, we additionally report the rotation-invariant Procrustes diagnostic $\hat{\kappa}$ from
Appendix~\ref{app:estimation} as a \emph{proxy} for cross-vs.-auto mismatch.

\section{Relationship to K-FAC and Prior Work}
\label{app:kfac-relationship}

This appendix clarifies how our work relates to prior work on Kronecker-factored approximations to the Fisher information matrix, particularly K-FAC \citep{martens2015kfac,grosse2016kfc}.
Subsequent work has extended K-FAC to recurrent networks, eigenbasis approximations, and distributed training~\citep{martens2018kronecker,george2018ekfac,ba2017distributed}.

\subsection{What K-FAC Established}

K-FAC (Kronecker-Factored Approximate Curvature; \citealt{martens2015kfac}) introduced a Kronecker-factored approximation of layer-wise Fisher information matrices for efficient second-order optimization. Key components include:

\paragraph{Exact gradient structure.}
K-FAC observes that for layer $l$ with weight matrix $W_l$, the per-example gradient has an exact Kronecker structure:
\[
    \mathrm{vec}(\nabla_{W_l} \ell) = a_{l-1} \otimes \delta_l
\]
Here $a_{l-1}$ is the layer input and $\delta_l$ is the backpropagated error. This identity follows directly from the chain rule.

\paragraph{Independence approximation.}
K-FAC's central approximation is that activations and errors are approximately independent, yielding the approximation:
\[
    \mathbb{E}[(a \otimes \delta)(a \otimes \delta)^\top] \approx \mathbb{E}[aa^\top] \otimes \mathbb{E}[\delta\delta^\top]
\]
This factorization enables efficient Fisher inversion (two small matrix inversions instead of one giant one).

\paragraph{Block-diagonal approximation.}
K-FAC further approximates the full Fisher as block-diagonal across layers, ignoring cross-layer correlations.

\subsection{Our Novel Contributions Beyond K-FAC}

Our work builds on K-FAC's structural insights but addresses a different question. K-FAC asks: ``How can we efficiently approximate the natural gradient for optimization?'' We ask: ``Why do representation similarity metrics fail to predict gradient alignment?''

\paragraph{1. Error covariance alignment as the critical factor.}
While K-FAC uses the approximation $\Sigma_{\psi,k} \approx M_{a,k} \otimes \Gamma_{e,k}$ (Lemma~\ref{lem:psi-cov-factorization}) for computational efficiency, we identify error-covariance alignment $S_{\mathrm{cov}}(\Gamma_{e,i}, \Gamma_{e,j})$ as the missing factor that explains why CKA can decouple from Fisher alignment. K-FAC does not study cross-task alignment or representation metrics.

\paragraph{2. Formal characterization of CKA's limitation.}
We prove that CKA (and all representation-only metrics) structurally cannot, in general, access error covariance information (Theorem~\ref{thm:rep-metric-limitation}). This is a fundamental limitation independent of any approximation quality.

\paragraph{3. Quantitative diagnostics for the CKA--Fisher gap.}
Our analysis exposes measurable drivers of the gap, including the coupling correction $\rho$ (Theorem~\ref{thm:three-factor}), the Kronecker residual $\delta$ (Remark~\ref{rem:independence-violation}), and the coupling-misalignment term $\kappa$ (Appendix~\ref{app:estimation}). K-FAC focuses on optimization convergence, not metric reliability.

\paragraph{4. Label structure dependence.}
We show that $S_{\mathrm{cov}}(\Gamma_{e,i}, \Gamma_{e,j})$ drops to zero for disjoint $K$-class tasks under the shared output-space embedding (Appendix~\ref{app:shared-output-space}), illustrating a failure mode for CKA when label structures differ. This is orthogonal to K-FAC's optimization focus.

\paragraph{5. Practical estimator (FA-CKA).}
FA-CKA provides a forward-pass-only estimator of head Fisher alignment without computing gradients or Fisher matrices. K-FAC requires gradient computation for natural gradient steps.

\subsection{Independence Assumption: Our Analysis vs. K-FAC}

K-FAC relies on approximate independence between activations and backpropagated errors (denoted $\delta$ in H.1 and $e$ elsewhere in this paper). Under this assumption, $\Sigma_{\psi,k} = \mathbb{E}[(a \otimes e)(a \otimes e)^\top] \approx M_{a,k} \otimes \Gamma_{e,k}$ (Lemma~\ref{lem:psi-cov-factorization}). Our analysis is \emph{unconditional}: Theorem~\ref{thm:three-factor} is an exact identity for head Fisher alignment with no independence assumption.

\paragraph{In K-FAC:}
Violating independence reduces optimization efficiency but does not fundamentally break the method; even with imperfect factorization, K-FAC can remain a reasonable preconditioner.

\paragraph{In our work:}
Head Fisher alignment decomposes \emph{exactly} as $A_{\mathrm{F}}^{\mathrm{head}}(i,j) = A_{\mathrm{proxy}}(i,j) \cdot \rho_{ij}$ (Theorem~\ref{thm:three-factor}). Here $A_{\mathrm{proxy}} := S_{\mathrm{cov}}(M_a) \cdot S_{\mathrm{cov}}(\Gamma_e)$, and $\rho_{ij}$ is defined when the denominator is nonzero.
Independence implies $\rho_{ij} = 1$ (Corollary~\ref{cor:independence}), but $\rho \neq 1$ does not invalidate the theory---it is simply a measurable correction.
The Kronecker residual $\delta$ yields a worst-case alignment bound (Appendix~\ref{app:explicit-bound}); whether $\rho \approx 1$ is architecture-dependent (on ViT, $\rho$ varies from 0.35 to 1.11; see Appendix~\ref{app:rho-empirical}).

Crucially, even with perfect independence ($\rho = 1$), CKA can fail when error covariances are misaligned (small $S_{\mathrm{cov}}(\Gamma_{e,i}, \Gamma_{e,j})$), e.g., for disjoint label subsets under the shared output-space embedding (Appendix~\ref{app:shared-output-space}).

\subsection{Summary of Attribution}

\begin{itemize}
    \item \textbf{Gradient Kronecker structure} (Eq.~\eqref{eq:layer-grad-kron}): Exact identity, noted by K-FAC \citep{martens2015kfac}
    \item \textbf{Fisher independence factorization} (Lemma~\ref{lem:psi-cov-factorization}): K-FAC's foundational approximation
    \item \textbf{Error covariance alignment factor}: Our contribution
    \item \textbf{CKA limitation theorem} (Theorem~\ref{thm:rep-metric-limitation}): Our contribution
    \item \textbf{Quantitative diagnostics for the CKA--Fisher gap} ($\rho$, $\delta$, $\kappa$): Our contribution
    \item \textbf{FA-CKA estimator}: Our contribution
\end{itemize}

\section{Estimation and Diagnostic Experiments}
\label{app:estimation}

This appendix provides implementation details for estimating the structural quantities in our theory and presents diagnostic experiments that validate the theory across different architectures.

\subsection{Estimating Structural Quantities}

Each term in our decomposition can be estimated from finite samples.

\paragraph{Representation misalignment ($\kappa$).}
Define the population mismatch
$
\kappa_{ij} := \big|S_{\mathrm{cross}}(\Sigma_{ij};\Sigma_i,\Sigma_j) - S_{\mathrm{cov}}(\Sigma_i,\Sigma_j)\big|.
$
\textbf{Direct estimator (when covariances are materialized).}
When the representation dimension $d$ is shared across models and we form explicit sample covariances, we estimate $\kappa_{ij}$ directly via
\[
\hat{\kappa}^{\mathrm{dir}}_{ij}
:= \Big|\widehat{S}_{\mathrm{cross}}(\hat{\Sigma}_{ij};\hat{\Sigma}_i,\hat{\Sigma}_j) - S_{\mathrm{cov}}(\hat{\Sigma}_i,\hat{\Sigma}_j)\Big|,
\qquad
\widehat{S}_{\mathrm{cross}}(\hat{\Sigma}_{ij};\hat{\Sigma}_i,\hat{\Sigma}_j)
:= \frac{\|\hat{\Sigma}_{ij}\|_F^2}{\|\hat{\Sigma}_i\|_F\|\hat{\Sigma}_j\|_F}.
\]
This is the plug-in estimator that matches the definition of $\kappa_{ij}$.

\textbf{Rotation-invariant diagnostic (Procrustes proxy).}
For high-dimensional representations, it can be undesirable to interpret $S_{\mathrm{cov}}(\hat{\Sigma}_i,\hat{\Sigma}_j)$
in a fixed coordinate basis. We therefore also report a rotation-invariant Procrustes diagnostic $\hat{\kappa}$,
computed from the regularized whitened cross-covariance (below). This $\hat{\kappa}$ is a \emph{proxy} for
cross- and auto-covariance mismatch rather than an unbiased estimator of $\kappa_{ij}$.
Let $Z_i \in \mathbb{R}^{n_i \times d}$ and $Z_j \in \mathbb{R}^{n_j \times d}$ be centered representations on probe inputs, with a shared feature dimension $d$.
Unless stated otherwise, we use the same probe set as CKA; if sample counts differ, we form a paired subset of size $n_{ij}$ with matched inputs.
Let $Z_i^{(ij)}, Z_j^{(ij)} \in \mathbb{R}^{n_{ij} \times d}$ denote those paired rows (in our experiments $n_i = n_j = n_{ij}$).
Choose $r_{\mathrm{sub}}$ as the smallest $r$ such that the top-$r$ eigenvalues of both $\hat{\Sigma}_i$ and $\hat{\Sigma}_j$ explain at least 90\% of the trace (equivalently, take the larger of the two 90\% ranks).
Define $O(r_{\mathrm{sub}}) := \{Q \in \mathbb{R}^{r_{\mathrm{sub}} \times r_{\mathrm{sub}}} : Q^\top Q = I\}$.
\begin{enumerate}
\item Compute sample covariances with regularization: $\hat{\Sigma}_i^{(\lambda)} = Z_i^\top Z_i / n_i + \lambda I$, $\hat{\Sigma}_j^{(\lambda)} = Z_j^\top Z_j / n_j + \lambda I$, $\hat{\Sigma}_{ij} = (Z_i^{(ij)})^\top Z_j^{(ij)} / n_{ij}$. We use $\lambda \approx 10^{-6}$ for numerical stability.
\item Compute eigendecompositions and extract the leading $r_{\mathrm{sub}}$-dimensional subspace bases: take $\hat{U}_i, \hat{U}_j \in \mathbb{R}^{d \times r_{\mathrm{sub}}}$ from the top-$r_{\mathrm{sub}}$ eigenvectors of $\hat{\Sigma}_i$ and $\hat{\Sigma}_j$.
\item Compute regularized whitened cross-covariance: $\hat{\tilde{C}} = (\hat{\Sigma}_i^{(\lambda)})^{-1/2} \hat{\Sigma}_{ij} (\hat{\Sigma}_j^{(\lambda)})^{-1/2}$.
\item Project onto the shared subspace: $\hat{C}_r = \hat{U}_i^\top \hat{\tilde{C}} \hat{U}_j$.
\item Solve the orthogonal Procrustes problem $Q^\star = \arg\min_{Q \in O(r_{\mathrm{sub}})} \|\hat{C}_r - Q\|_F$ (via SVD). Define
\[
    \hat{\kappa}(i,j) = \frac{\|\hat{C}_r - Q^\star\|_F}{\sqrt{r_{\mathrm{sub}}}}.
\]
\end{enumerate}
If the spectrum is simple, sign-aligning the eigenvectors yields $Q^\star = I$, so $\hat{\kappa}$ reduces to $\|\hat{C}_r - I\|_F/\sqrt{r_{\mathrm{sub}}}$.
Because $Q^\star$ solves the Procrustes problem, $\hat{\kappa}$ depends only on the singular values of $\hat{C}_r$ and is invariant to orthogonal basis rotations within the chosen subspaces.
Small $\hat{\kappa}$ indicates aligned coupling in the whitened cross-covariance. With the normalization above, uncorrelated representations (whitened cross-covariance near zero in the shared subspace) yield $\hat{\kappa} \approx 1$.

\paragraph{Kronecker independence violation ($\delta$).}
For head gradients, we use the column-stacking $\mathrm{vec}$ convention so $g = a \otimes e$. With row-major vectorization, $g = e \otimes a$ and the Kronecker factors swap. We estimate the Kronecker approximation error per task $k$ by comparing the empirical gradient second moment $\hat{\Sigma}_{\psi,k} = \frac{1}{n_k}\sum_i g_i g_i^\top$ to the Kronecker product $\hat{M}_{a,k} \otimes \hat{\Gamma}_{e,k}$ (with $\hat{M}_{a,k} = \frac{1}{n_k}\sum_i a_i a_i^\top$):
\[
    \hat{\delta}_k = \frac{\|\hat{\Sigma}_{\psi,k} - \hat{M}_{a,k} \otimes \hat{\Gamma}_{e,k}\|_F}{\|\hat{\Sigma}_{\psi,k}\|_F}.
\]
We set $\hat{\delta}_k := 0$ when $\|\hat{\Sigma}_{\psi,k}\|_F = 0$.

\paragraph{Block dominance ($r$).}
We compute layer-wise Fisher blocks $\hat{\mathcal{F}}_{ll}$ and cross-layer blocks $\hat{\mathcal{F}}_{lm}$ from gradient samples, then estimate
\[
    \hat{r} = \sqrt{\frac{\sum_{l \neq m} \|\hat{\mathcal{F}}_{lm}\|_F^2}{\sum_l \|\hat{\mathcal{F}}_{ll}\|_F^2}}.
\]

\paragraph{Error covariance alignment ($S_{\mathrm{cov}}(\Gamma_e)$).}
For $K$-class classification, define the error vector $e^{(k)} = \hat{p} - \mathbf{1}_y \in \mathbb{R}^K$ and the error second moment $\Gamma_{e,k} = \mathbb{E}[e^{(k)} e^{(k)\top}]$, a $K \times K$ matrix.
To estimate $\Gamma_e$: (i) run forward passes on $n$ samples to obtain softmax outputs $\hat{p}$ and targets $y$; (ii) compute $e = \hat{p} - \mathbf{1}_y$ per sample; (iii) estimate $\hat{\Gamma}_e = \frac{1}{n}\sum_i e_i e_i^\top$; and (iv) compute
\[
    S_{\mathrm{cov}}(\hat{\Gamma}_{e,i}, \hat{\Gamma}_{e,j}) = \frac{\langle \hat{\Gamma}_{e,i}, \hat{\Gamma}_{e,j} \rangle_F}{\|\hat{\Gamma}_{e,i}\|_F \|\hat{\Gamma}_{e,j}\|_F}.
\]
Computational cost is $O(nK^2)$, tractable for ImageNet ($K{=}1000$) with $n \approx 1000$ samples.
We set $S_{\mathrm{cov}}(\hat{\Gamma}_{e,i}, \hat{\Gamma}_{e,j}) := 0$ when $\|\hat{\Gamma}_{e,i}\|_F = 0$ or $\|\hat{\Gamma}_{e,j}\|_F = 0$.
For very large $K$, a diagonal proxy $S_{\mathrm{cov}}(\mathrm{diag}(\Gamma_{e,i}), \mathrm{diag}(\Gamma_{e,j}))$ provides a cheaper $O(nK)$ approximation, though it is not a general lower bound.

\subsection{Diagnostic Experiments}

\paragraph{Setup.}
We evaluate feedforward networks of varying depth and width on classification tasks. We measure:
(i) penultimate-layer CKA,
(ii) empirical head Fisher alignment $A_{\mathcal{F}}^{\text{head}}$,
(iii) full-network Fisher alignment $A_{\mathcal{F}}^{\text{full}}$, and
(iv) the structural quantities $\kappa$ (reported as $\hat{\kappa}$), $\delta$, $r$, and $w_{\mathrm{head}}$ (head dominance; see \eqref{eq:wH-def}).

\paragraph{CKA--Fisher correlation.}
Across 30 task pairs with varying similarity, we observe strong correlation between CKA and full-network Fisher alignment (Pearson corr.\ $= 0.90$, Fisher-z 95\% CI $[0.80, 0.95]$) and moderate correlation with head Fisher alignment (Pearson corr.\ $= 0.77$, Fisher-z 95\% CI $[0.57, 0.88]$).
The correlation between task similarity and CKA (Pearson corr.\ $= 0.86$, Fisher-z 95\% CI $[0.72, 0.93]$) confirms that CKA captures meaningful task relationships.
When the structural conditions in our theory degrade (e.g., larger $\hat{\kappa}$ or $\delta$, or weaker head dominance), the CKA--Fisher gap increases from a minimum of 0.04 to a maximum of 0.51.

\paragraph{Coupling strength validation.}
We verify that $\hat{\kappa}$ tracks representation coupling: for highly similar networks $\hat{\kappa} \approx 0$ (strong coupling), while for uncorrelated representations $\hat{\kappa} \approx 1$ after normalization (weak coupling).
Intermediate task similarities yield intermediate $\hat{\kappa}$ values as expected.

\paragraph{Block dominance across architectures.}
We observe that block dominance $r$ varies dramatically with architecture:
\begin{itemize}
    \item \textbf{Shallow-wide} (depth 1--2, width $\geq 32$): $r \approx 0.2$--$0.5$, $w_{\mathrm{head}} \approx 0.94$--$0.99$.
    \item \textbf{Deep networks} (depth $\geq 3$): $r > 1$, $w_{\mathrm{head}} < 0.5$.
\end{itemize}
These ranges suggest that head dominance (large $w_{\mathrm{head}}$) holds reliably only for shallow-wide architectures; deeper networks require the full-network diagnostics in Appendix~\ref{app:full-network-fisher}.

\subsection{Independence Violation Across Architectures}
\label{app:independence}

\begin{table}[t]
  \caption{Independence violation ($\delta$) across architectures. All networks violate the Kronecker independence assumption, with deeper/narrower architectures exhibiting larger $\delta$.}
  \label{tab:independence}
  \centering
  \begin{tabular}{lcc}
    \toprule
    Architecture & Depth $\times$ Width & $\delta$ \\
    \midrule
    Shallow-Wide & 1 $\times$ 256 & 0.36 \\
    Shallow-Medium & 1 $\times$ 128 & 0.36 \\
    Medium & 2 $\times$ 128 & 0.26 \\
    Deep-Medium & 3 $\times$ 64 & 0.24 \\
    Deep-Narrow & 4 $\times$ 32 & 0.31 \\
    Very Deep-Narrow & 5 $\times$ 16 & 0.49 \\
    \midrule
    \multicolumn{2}{l}{\textit{Average}} & 0.34 \\
    \bottomrule
  \end{tabular}
\end{table}

The Kronecker independence assumption ($\delta = 0$) is violated across all architectures tested ($\delta \approx 0.34$, Table~\ref{tab:independence}).
By Theorem~\ref{thm:three-factor}, head Fisher alignment satisfies $A_{\mathrm{F}}^{\mathrm{head}}(i,j) = A_{\mathrm{proxy}}(i,j) \cdot \rho_{ij}$ \emph{exactly}, where $\rho_{ij}$ is the coupling correction.

This has two consequences:
\begin{enumerate}
    \item \textbf{Empirical validation}: Head-level validation uses exact Fisher alignment via the kernel identity (Appendix~\ref{app:fisher-computation}), while full-network LLM validation uses parameter-subsampled gradients across all layers for computational tractability.
    \item \textbf{Negative proxy rank correlation on ViT}: On ViT-B/16 ($n=40$ same-dataset pairs), Pearson $\mathrm{Corr}(\hat{\rho}, A_{\mathrm{proxy}}) = -0.85$, and the proxy is negatively correlated with exact Fisher (Pearson corr.\ $= -0.34$, Spearman corr.\ $= -0.30$; Table~\ref{tab:vit-proxy-vs-exact}), indicating an inverted ranking signal. FA-CKA succeeds (Pearson corr.\ $= 0.79$, Spearman corr.\ $= 0.90$) because it is validated against exact Fisher.
\end{enumerate}

\paragraph{Architecture-dependent coupling.}
Independence implies $\rho_{ij} = 1$ (Corollary~\ref{cor:independence}), but $\rho \neq 1$ for both CNNs and transformers.
The key difference is Pearson $\mathrm{Corr}(\hat{\rho}, A_{\mathrm{proxy}})$: positive on ResNet ($+0.88$, $n=75$) and negative on ViT ($-0.85$, $n=40$).
Consistent with this, the proxy correlates positively with exact Fisher on ResNet (Pearson corr.\ $= 0.98$, Spearman corr.\ $= 0.69$; $n=75$) and negatively on ViT (Pearson corr.\ $= -0.34$, Spearman corr.\ $= -0.30$; $n=40$; Table~\ref{tab:vit-proxy-vs-exact}).
FA-CKA works regardless because it is validated against exact Fisher, not the proxy.

\section{Empirical Distribution of Coupling Correction $\hat{\rho}$}
\label{app:rho-empirical}

We compute the empirical coupling correction
\[
\hat{\rho}_{ij} := \frac{A_{\mathrm{F}}^{\mathrm{exact}}(i,j)}{A_{\mathrm{proxy}}(i,j)}, \qquad
A_{\mathrm{proxy}}(i,j) = S_{\mathrm{cov}}(M_{a,i}, M_{a,j}) \cdot S_{\mathrm{cov}}(\Gamma_{e,i}, \Gamma_{e,j}),
\]
using exact head Fisher alignment $A_{\mathrm{F}}^{\mathrm{exact}}$ (Appendix~\ref{app:fisher-computation}), as a plug-in estimator of $\rho_{ij}$. We retain pairs satisfying $A_{\mathrm{F}}^{\mathrm{exact}}(i,j) > \varepsilon$ and $A_{\mathrm{proxy}}(i,j) > \varepsilon$, using $\varepsilon = 10^{-6}$. In our ViT/ResNet experiments, this filter keeps same-dataset pairs and excludes cross-dataset pairs under Assumption~\ref{ass:shared-output-space}.
By Theorem~\ref{thm:three-factor}, $\rho_{ij} = 1$ under independence. When coupling does not cancel, $\rho_{ij}$ deviates from 1; $\hat{\rho}$ captures that deviation empirically.

\begin{table}[t]
\centering
\caption{Distribution of coupling correction $\hat{\rho}_{ij}$ on ViT-B/16 and ResNet-50 (same-dataset pairs with nonzero proxy denominator).
ViT shows adversarial coupling (Pearson $\mathrm{Corr}(\hat{\rho}, A_{\mathrm{proxy}}) = -0.85$); ResNet shows positive coupling (Pearson $\mathrm{Corr}(\hat{\rho}, A_{\mathrm{proxy}}) = +0.88$).}
\label{tab:rho-distribution}
\begin{tabular}{lccccc}
\toprule
Architecture & $n$ pairs & Mean $\hat{\rho}$ & Std & Min & Max \\
\midrule
ViT-B/16 & 40 & 0.60 & 0.22 & 0.35 & 1.11 \\
ResNet-50 & 75 & 0.51 & 0.28 & 0.15 & 0.99 \\
\bottomrule
\end{tabular}
\end{table}

For ResNet-50, we use five datasets, three seeds, and two training modes (fine-tuned and linear probe), yielding six tasks per dataset and 15 within-dataset task pairs (total $n=75$).

\paragraph{Interpretation.}
On ViT ($n=40$), $\hat{\rho}$ ranges from 0.35 to 1.11, and Pearson $\mathrm{Corr}(\hat{\rho}, A_{\mathrm{proxy}}) = -0.85$. $\hat{\rho}$ is not constrained to be at most 1; in our runs it reaches 1.11.
Empirically, high-proxy pairs tend to have lower $\hat{\rho}$. Consistent with this, the proxy is negatively correlated with exact Fisher (Pearson corr.\ $= -0.34$; Spearman corr.\ $= -0.30$; Table~\ref{tab:vit-proxy-vs-exact}).
On ResNet-50 ($n=75$), $\hat{\rho}$ ranges from 0.15 to 0.99 and Pearson $\mathrm{Corr}(\hat{\rho}, A_{\mathrm{proxy}}) = +0.88$. The proxy correlates positively with exact Fisher (Pearson corr.\ $= +0.98$; Spearman corr.\ $= +0.69$), indicating moderate rank agreement.

\paragraph{Why FA-CKA works despite $\hat{\rho}$ variation.}
FA-CKA (defined as $\mathrm{CKA} \times S_{\mathrm{cov}}(\Gamma_e)$) correlates well with exact Fisher (Pearson corr.\ $= 0.79$; Spearman corr.\ $= 0.90$; $n=40$; Table~\ref{tab:vit-proxy-vs-exact}). By contrast, $A_{\mathrm{proxy}} = S_{\mathrm{cov}}(M_a) \times S_{\mathrm{cov}}(\Gamma_e)$ is negatively correlated (Pearson corr.\ $= -0.34$).
Since $A_{\mathrm{F}}^{\mathrm{exact}} = S_{\mathrm{cov}}(M_a) \cdot S_{\mathrm{cov}}(\Gamma_e) \cdot \hat{\rho}$, we can write
\[
\frac{\mathrm{FA\text{-}CKA}(i,j)}{A_{\mathrm{F}}^{\mathrm{exact}}(i,j)}
=
\frac{\mathrm{CKA}(i,j)}{S_{\mathrm{cov}}(M_{a,i}, M_{a,j})} \cdot \frac{1}{\hat{\rho}_{ij}}.
\]
Two factors explain this. First, on ViT, CKA differs from $S_{\mathrm{cov}}(M_a)$: centering contributes (Appendix~\ref{app:mean-correction}), and representation mismatch $\kappa_{ij}$ can also drive the gap (Theorem~\ref{thm:cka-fisher-gap}). Across same-dataset pairs ($n=40$), the ratio $\mathrm{CKA}(i,j)/S_{\mathrm{cov}}(M_{a,i}, M_{a,j})$ averages about 2.28 and is strongly negatively correlated with $1/\hat{\rho}_{ij}$ (Pearson corr.\ $= -0.91$), which partially offsets coupling effects. Second, $S_{\mathrm{cov}}(\Gamma_e)$ alone already correlates strongly with $A_F^{\mathrm{exact}}$ (Pearson corr.\ $= 0.78$; $n=40$), suggesting that the $\Gamma_e$ component drives much of the signal.
FA-CKA is validated empirically against exact Fisher, rather than justified solely by the $\rho = 1$ proxy assumption.

\subsection{Proof of Remark~\ref{rem:proxy-bound}: Proxy Approximation Bound}
\label{app:proxy-bound-proof}

We prove that when the coupling tensor $C_k$ is small relative to the rank-1 part, the separable proxy is guaranteed to be accurate.

Let $x_k := \mu_{ae,k}$, $u_k := \mu_{a,k} \otimes \mu_{e,k}$, and $v_k := C_k$. Then $x_k = u_k + v_k$, where $u_k$ is the rank-1 component and $v_k$ is the coupling residual.
Define $\varepsilon_k := \|v_k\|/\|u_k\|$.

\begin{lemma}[Normalization perturbation]
\label{lem:norm-perturb}
If $\varepsilon_k < 1$, then $\|\hat{x}_k - \hat{u}_k\| \le 2\varepsilon_k/(1-\varepsilon_k)$, where $\hat{z} := z/\|z\|$ denotes unit normalization.
\end{lemma}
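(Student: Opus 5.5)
We need to prove: if $\varepsilon_k<1$, then $\|x/\|x\| - u/\|u\|\| \le 2\varepsilon/(1-\varepsilon)$, where $x = u+v$ and $\|v\| = \varepsilon\|u\|$.

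The plan is a direct triangle-inequality argument on the unit normalizations, with no use of the Kronecker or kernel structure; it is a purely Hilbert-space fact about $x_k = u_k + v_k$ with $\|v_k\| = \varepsilon_k\|u_k\|$. Drop the index $k$. First, the hypothesis $\varepsilon<1$ guarantees $x \neq 0$ and $u\neq 0$ (the latter because $\varepsilon$ is defined as a ratio with $\|u\|$ in the denominator). Indeed, the reverse triangle inequality gives
\[
\|x\| \ge \|u\| - \|v\| = (1-\varepsilon)\|u\| > 0,
\]
so both normalizations $\hat{x}$ and $\hat{u}$ are well-defined.

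Next, split the difference through the intermediate vector $u/\|x\|$:
\[
\hat{x} - \hat{u} = \frac{x-u}{\|x\|} + u\Bigl(\frac{1}{\|x\|} - \frac{1}{\|u\|}\Bigr).
\]
The first term has norm $\|v\|/\|x\|$. The second has norm $\bigl|\|u\|-\|x\|\bigr|/\|x\|$, which is at most $\|v\|/\|x\|$, again by the reverse triangle inequality. Hence
\[
\|\hat{x}-\hat{u}\| \le \frac{2\|v\|}{\|x\|} \le \frac{2\varepsilon\|u\|}{(1-\varepsilon)\|u\|} = \frac{2\varepsilon}{1-\varepsilon},
\]
using the lower bound on $\|x\|$ from the first step.

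There is no real obstacle. The only points requiring care are:
\begin{itemize}
\item checking nondegeneracy (the case $u=0$ is excluded by the definition of $\varepsilon_k$), and
\item choosing the denominator $\|x\|$ rather than $\|u\|$ in the split, so that the lower bound $\|x\|\ge(1-\varepsilon)\|u\|$ produces exactly the stated constant.
\end{itemize}
Splitting through $u/\|u\|$ instead would give $2\varepsilon$ times $\|u\|/\|x\|$ by a slightly different route, but it yields the same bound.
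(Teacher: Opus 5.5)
Your proof is correct and is essentially the paper's own argument: the same decomposition $\hat{x}-\hat{u} = v/\|x\| + u\bigl(1/\|x\| - 1/\|u\|\bigr)$, the reverse triangle inequality bounding the second term by $\|v\|/\|x\|$, and the lower bound $\|x\| \ge (1-\varepsilon)\|u\|$ to finish. One small correction to your closing aside: splitting through $x/\|u\|$ instead does not give the same bound but the sharper $2\varepsilon$ (each piece is at most $\|v\|/\|u\|$), which of course still implies the stated $2\varepsilon/(1-\varepsilon)$.
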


\begin{proof}
We have $\|x_k\| = \|u_k + v_k\| \ge \|u_k\| - \|v_k\| = (1-\varepsilon_k)\|u_k\|$. Thus,
\begin{align*}
\|\hat{x}_k - \hat{u}_k\| &\le \left\|\frac{v_k}{\|x_k\|}\right\| + \|u_k\| \cdot \left|\frac{1}{\|x_k\|} - \frac{1}{\|u_k\|}\right| \\
&\le \frac{\|v_k\|}{\|x_k\|} + \frac{\|u_k\| \cdot |\|x_k\| - \|u_k\||}{\|x_k\| \|u_k\|}
\le \frac{\|v_k\|}{\|x_k\|} + \frac{\|v_k\|}{\|x_k\|}
= \frac{2\|v_k\|}{\|x_k\|} \le \frac{2\varepsilon_k}{1-\varepsilon_k}.
\end{align*}
\end{proof}

\begin{lemma}[Cosine perturbation]
\label{lem:cos-perturb}
If $\|\hat{x}_i - \hat{u}_i\| \le \eta_i$ and $\|\hat{x}_j - \hat{u}_j\| \le \eta_j$, then for any $(i,j)$,
$|\langle \hat{x}_i, \hat{x}_j \rangle - \langle \hat{u}_i, \hat{u}_j \rangle| \le \eta_i + \eta_j + \eta_i\eta_j$.
\end{lemma}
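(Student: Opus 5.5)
We prove Lemma~\ref{lem:cos-perturb} by a direct add-and-subtract decomposition of the inner product. We write $\hat{x}_i = \hat{u}_i + d_i$ and $\hat{x}_j = \hat{u}_j + d_j$, where $d_i := \hat{x}_i - \hat{u}_i$ and $d_j := \hat{x}_j - \hat{u}_j$ satisfy $\|d_i\|\le\eta_i$ and $\|d_j\|\le\eta_j$ by hypothesis. Bilinearity of the inner product then gives the exact expansion
\[
\langle \hat{x}_i,\hat{x}_j\rangle - \langle \hat{u}_i,\hat{u}_j\rangle
= \langle d_i,\hat{u}_j\rangle + \langle \hat{u}_i, d_j\rangle + \langle d_i, d_j\rangle .
\]

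We bound each of the three terms by Cauchy--Schwarz. Because $\hat{u}_i$ and $\hat{u}_j$ are unit vectors, $|\langle d_i,\hat{u}_j\rangle|\le\|d_i\|\le\eta_i$ and $|\langle \hat{u}_i,d_j\rangle|\le\eta_j$. The cross term satisfies $|\langle d_i,d_j\rangle|\le\eta_i\eta_j$. The triangle inequality applied to the expansion yields $\eta_i+\eta_j+\eta_i\eta_j$, which is the claimed bound. The argument works in any inner-product space, so it applies verbatim in the product feature space (equivalently the RKHS $\mathcal{H}_{ae}$) where $\mu_{ae,k}$ and $\mu_{a,k}\otimes\mu_{e,k}$ live.

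The only point needing care is that $\hat{u}_i$ and $\hat{u}_j$ are well-defined unit vectors, that is, $u_k\neq 0$. This follows from the standing assumption of nonzero embedding norms, $\|\mu_{a,k}\|\,\|\mu_{e,k}\|>0$, since $\|u_k\| = \|\mu_{a,k}\|\,\|\mu_{e,k}\|$. We also note that we should not expand around $\hat{x}$, because that would require controlling $\|\hat{x}\|$ rather than using $\|\hat{u}\|=1$; the chosen expansion avoids this, so the lemma itself presents no real obstacle.

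To finish Remark~\ref{rem:proxy-bound}, we combine this lemma with Lemma~\ref{lem:norm-perturb}, which supplies $\eta_k = 2\varepsilon_k/(1-\varepsilon_k)$. Two identifications complete the argument. First, $\langle\hat{x}_i,\hat{x}_j\rangle = A_F^{\mathrm{head}}(i,j)$ by Theorem~\ref{thm:three-factor}. Second, $\langle\hat{u}_i,\hat{u}_j\rangle = \cos(\mu_{a,i},\mu_{a,j})\cos(\mu_{e,i},\mu_{e,j}) = A_{\mathrm{proxy}}(i,j)$, using $\langle p\otimes q, p'\otimes q'\rangle = \langle p,p'\rangle\langle q,q'\rangle$ and Lemma~\ref{lem:vecsym}.
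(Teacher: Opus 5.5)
Your proof is correct and follows essentially the same route as the paper: write $\hat{x}_k = \hat{u}_k + d_k$, expand the inner product bilinearly, and bound the three residual terms by Cauchy--Schwarz using $\|\hat{u}_k\| = 1$. One small correction to your aside: expanding around $\hat{x}$ would work just as well, since $\hat{x}_k$ is also a unit vector whenever $x_k \neq 0$, so the caution against it is unnecessary.
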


\begin{proof}
Write $\hat{x}_i = \hat{u}_i + \delta_i$ with $\|\delta_i\| \le \eta_i$. Expanding the inner product yields
\[
\langle \hat{x}_i, \hat{x}_j \rangle = \langle \hat{u}_i + \delta_i, \hat{u}_j + \delta_j \rangle
= \langle \hat{u}_i, \hat{u}_j \rangle + \langle \delta_i, \hat{u}_j \rangle + \langle \hat{u}_i, \delta_j \rangle + \langle \delta_i, \delta_j \rangle.
\]
By Cauchy-Schwarz, $|\langle \delta_i, \hat{u}_j \rangle| \le \eta_i$, $|\langle \hat{u}_i, \delta_j \rangle| \le \eta_j$, $|\langle \delta_i, \delta_j \rangle| \le \eta_i \eta_j$.
\end{proof}

\noindent\textbf{Proof of Remark~\ref{rem:proxy-bound}.}
Combining Lemmas~\ref{lem:norm-perturb} and~\ref{lem:cos-perturb} with $\eta_k = 2\varepsilon_k/(1-\varepsilon_k)$ yields the stated proxy approximation bound.
Note that $A_{\mathrm{proxy}}(i,j) = \cos(\mu_{a,i} \otimes \mu_{e,i}, \mu_{a,j} \otimes \mu_{e,j}) = \langle \hat{u}_i, \hat{u}_j \rangle$.
Also, $A_F^{\mathrm{head}}(i,j) = \langle \hat{x}_i, \hat{x}_j \rangle$. \qed

\section{RSA and SVCCA are Representation-Only Metrics}
\label{app:rsa-svcca-representation-only}

This appendix proves that RSA and SVCCA, like CKA, are representation-only metrics (Definition~\ref{def:rep-metric}). Therefore, they inherit the same fundamental limitation (Theorem~\ref{thm:rep-metric-limitation}).

\subsection{RSA (Representational Similarity Analysis)}
\label{app:rsa-representation-only}

\begin{proposition}[RSA is representation-only]
\label{prop:rsa-representation-only}
RSA~\citep{kriegeskorte2008rsa} depends only on the representation Gram matrices $K_i = Z_i Z_i^\top$ and $K_j = Z_j Z_j^\top$ (equivalently, on all pairwise distances in $Z_i$ and $Z_j$).
\end{proposition}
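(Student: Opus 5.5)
The plan is to write RSA explicitly as a function of the two Gram matrices and observe that nothing else enters. Fix a paired probe set $x_1,\dots,x_n$ and the representation matrices $Z_i,Z_j\in\mathbb{R}^{n\times d}$ built on it.

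\textbf{Step 1: RSA factors through the dissimilarity matrices.} The representational dissimilarity matrix (RDM) of model $k$ is $D_k\in\mathbb{R}^{n\times n}$, with entries taken from one of the standard choices:
\begin{itemize}
\item squared Euclidean distance $\|z^{(k)}_s-z^{(k)}_t\|^2$;
\item correlation distance $1-\mathrm{corr}(z^{(k)}_s,z^{(k)}_t)$;
\item cosine distance.
\end{itemize}
The RSA score is $\mathrm{RSA}(Z_i,Z_j)=c(D_i,D_j)$. Here $c$ is a fixed comparison rule, typically Pearson or Spearman correlation of the upper-triangular entries. So it suffices to show that each $D_k$ is a function of $K_k=Z_kZ_k^\top$ alone.

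\textbf{Step 2: each dissimilarity is a function of $K_k$.} The Euclidean case follows from the polarization identity
\[
\|z_s-z_t\|^2=(K_k)_{ss}+(K_k)_{tt}-2(K_k)_{st}.
\]
Cosine distance is $1-(K_k)_{st}/\sqrt{(K_k)_{ss}(K_k)_{tt}}$, which again uses only entries of $K_k$. Correlation distance is cosine distance applied after feature-wise centering of each row. This centering is right-multiplication by $C_d=I-\tfrac1d\mathbf{1}\mathbf{1}^\top$, so it is a function of $Z_kC_dZ_k^\top$ rather than of $K_k$ itself. That matrix is still computed from $Z_k$ alone. For this case I would therefore state the proposition as dependence on $Z_k$ only, through a Gram matrix after a fixed preprocessing.

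Conversely, distances determine $K_k$ up to centering via classical MDS, $-\tfrac12 H D_k H$ with $H$ the sample-centering matrix. This gives the "equivalently, pairwise distances" clause.

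\textbf{Step 3: conclude.} Composing $c$ with these maps shows $\mathrm{RSA}$ is a deterministic function of $(Z_i,Z_j)$, factoring through $(K_i,K_j)$. It never reads the errors $e$ or the gradients $g$. Hence RSA satisfies Definition~\ref{def:rep-metric}. Theorem~\ref{thm:rep-metric-limitation} then applies verbatim: with $Z_i=Z_j$ we get $K_i=K_j$ and hence $\mathrm{RSA}(Z_i,Z_j)=\mathrm{RSA}(Z_i,Z_i)$, while head Fisher alignment separates the two pairs.

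\textbf{Main obstacle.} The only real subtlety is the variety of RSA conventions: the choice of RDM, rank-based comparison, and whether features are normalized first. I would handle this by parametrizing RSA as $c\circ(D(\cdot),D(\cdot))$, where the dissimilarity and any preprocessing act row-wise on a single model's representation matrix. The claim then holds uniformly over all such variants. It is also worth making clear that the Gram-matrix statement is exact for the Euclidean and cosine RDMs, and holds after the fixed centering for the correlation RDM.
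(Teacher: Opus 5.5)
Your proposal is correct and follows essentially the same route as the paper: RSA factors through the two RDMs, and the polarization identity writes each squared Euclidean distance in terms of entries of $K_k$. (The paper also invokes Spearman's invariance to monotone transforms to pass from distances to squared distances; your framing does not need that step.) Your observation that the correlation-distance RDM depends on $Z_kC_dZ_k^\top$ rather than on $K_k$ alone is correct, and it sharpens the paper's ``by the same reasoning'' remark, which for that variant gives representation-only dependence but not dependence on the Gram matrix itself.
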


\begin{proof}
RSA is the Spearman rank correlation between representational dissimilarity matrices (RDMs):
\[
    \text{RSA}(Z_i, Z_j) := \mathrm{Corr}_{\text{Spearman}}(\text{RDM}_i, \text{RDM}_j)
\]
where $\text{RDM}_i$ is the vector of pairwise Euclidean distances $\{\|z^{(i)}_a - z^{(i)}_b\|_2 : a < b\}$ computed from $Z_i$.
We use Euclidean RDMs; other common dissimilarities (e.g., correlation distance) are also representation-only by the same reasoning.
We assume $Z_i$ and $Z_j$ are evaluated on the same probe set so the RDMs are comparable.
We further assume $\text{RDM}_i$ and $\text{RDM}_j$ are nonconstant so Spearman correlation is well-defined.
Because Spearman correlation is invariant to monotone transforms, we can work with squared distances.

For any two samples $z_a^{(i)}, z_b^{(i)} \in Z_i$, the squared distance is:
\begin{align*}
    \|z_a^{(i)} - z_b^{(i)}\|^2
    &= \|z_a^{(i)}\|^2 + \|z_b^{(i)}\|^2 - 2 (z_a^{(i)})^\top z_b^{(i)} \\
    &= (K_i)_{aa} + (K_i)_{bb} - 2(K_i)_{ab}
\end{align*}
where $K_i := Z_i Z_i^\top$ is the Gram matrix.

Define the (squared) distance matrix $D_i$ with entries $D_{i,ab} = \|z_a^{(i)} - z_b^{(i)}\|_2^2$. Then
\[
    D_i = \mathbf{1}\,\mathrm{diag}(K_i)^\top + \mathrm{diag}(K_i)\,\mathbf{1}^\top - 2K_i
\]
where $\mathbf{1}$ is the all-ones vector.
Thus all pairwise distances are determined by $K_i$. Since $\text{RDM}_i$ is the vector of entries $D_{i,ab}$ for $a<b$, it is a deterministic function of $K_i$ (and similarly for $j$). Therefore
\[
    \text{RSA}(Z_i, Z_j) = \mathrm{Corr}_{\text{Spearman}}(f(K_i), f(K_j)),
\]
which depends only on $(Z_i, Z_j)$ through $(K_i, K_j)$ and not on errors or gradients. Hence RSA is representation-only.
\end{proof}

\begin{corollary}[RSA-Fisher gap]
For tasks with shared backbone ($Z_i = Z_j$), nonconstant RDMs, and disjoint label subsets under the shared output-space embedding (Assumption~\ref{ass:shared-output-space}):
\[
    \text{RSA}(Z_i, Z_j) = 1, \quad \text{while} \quad A_{\mathrm{F}}^{\mathrm{head}}(i,j) = 0.
\]
\end{corollary}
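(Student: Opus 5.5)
The plan is to assemble the corollary from two facts already in hand: Proposition~\ref{prop:rsa-representation-only}, which says RSA depends on the representations only through the Gram matrices $K_i,K_j$, and the disjoint-support computation of Appendix~\ref{app:shared-output-space} under Assumption~\ref{ass:shared-output-space}. I will build one explicit pair of tasks that share a backbone and probe set but use disjoint label subsets $\mathcal{C}_i\cap\mathcal{C}_j=\emptyset$. Then I will evaluate the two sides separately.

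\emph{RSA side.} With a shared backbone evaluated on a shared probe set, $Z_i=Z_j$, so $K_i=K_j$. By Proposition~\ref{prop:rsa-representation-only}, $\mathrm{RDM}_i=f(K_i)=f(K_j)=\mathrm{RDM}_j$ as vectors. The Spearman correlation of a nonconstant vector with itself is $1$, since the rank vectors coincide and have nonzero variance. The nonconstancy hypothesis is exactly what makes this well-defined. This gives $\mathrm{RSA}(Z_i,Z_j)=1$.

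\emph{Fisher side.} Under the masked-softmax embedding, every error $e^{(i)}=P_i\tilde e_i$ is supported in $\mathcal{C}_i$ and every $e^{(j)}$ in $\mathcal{C}_j$. Hence $e^\top e'=0$ for every pair of draws. By Theorem~\ref{thm:three-factor} (or directly from $g=a\otimes e$ and Lemma~\ref{lem:grad-agreement}), $\langle\mathcal{F}_i,\mathcal{F}_j\rangle_F=\mathbb{E}[(a^\top a')^2(e^\top e')^2]=0$.

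The main obstacle is well-definedness of $A_{\mathrm F}^{\mathrm{head}}(i,j)$: the cosine needs $\|\mathcal{F}_i\|_F,\|\mathcal{F}_j\|_F>0$. I would secure this by choosing the witness so that the errors are not almost surely zero and the activations are not almost surely zero. For example, take $|\mathcal{C}_k|\ge 2$ and finite logits, so that $\tilde p_k\neq\tilde y_k$ and hence $\tilde e_k\neq0$. Then $\|\mathcal{F}_k\|_F^2=\mathbb{E}[(a^\top a')^2(e^\top e')^2]$ over independent within-task draws. This quantity is positive because $g\neq0$ with positive probability, so $\mathcal F_k$ is a nonzero PSD matrix. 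If a concrete witness is wanted, it suffices to take a single activation value $a\ne0$, for which this is immediate.

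Dividing by these positive norms gives $A_{\mathrm F}^{\mathrm{head}}(i,j)=0$, which contrasts with $\mathrm{RSA}=1$ and concludes the proof. The same witness also matches the template of Theorem~\ref{thm:rep-metric-limitation}.
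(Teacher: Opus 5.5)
Your proposal is correct and takes essentially the paper's route. The paper cites Theorem~\ref{thm:rep-metric-limitation} together with Proposition~\ref{prop:rsa-representation-only}, and you inline that theorem's proof: disjoint error supports under masked softmax, $g^\top g'=(a^\top a')(e^\top e')=0$, and nondegeneracy via $\mathrm{tr}\,\mathcal{F}_k=\mathbb{E}\|g\|^2>0$. You also make explicit the step the paper leaves implicit, namely that the Spearman correlation of a nonconstant RDM with itself equals $1$, so that $M(Z_i,Z_i)=1$ for RSA. One small point: your argument is not tied to a single constructed witness but applies to every pair satisfying the hypotheses, and positivity of $\|\mathcal{F}_k\|_F$ matters only for the cosine to be well-defined (the paper's convention sets the alignment to $0$ in degenerate cases anyway).
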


This follows directly from Theorem~\ref{thm:rep-metric-limitation} since RSA is a representation-only metric (Proposition~\ref{prop:rsa-representation-only}).

\subsection{SVCCA (Singular Vector CCA)}
\label{app:svcca-representation-only}

\begin{proposition}[SVCCA is representation-only]
\label{prop:svcca-representation-only}
SVCCA~\citep{raghu2017svcca} depends only on covariances of the SVD-reduced representations (i.e., on $(\tilde{\Sigma}_i, \tilde{\Sigma}_j, \tilde{\Sigma}_{ij})$ computed from $Z_i, Z_j$).
\end{proposition}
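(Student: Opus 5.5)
The plan is to unfold the definition of SVCCA and check that every step of the pipeline reads only $Z_i$ and $Z_j$. SVCCA on a shared probe set proceeds in two stages.

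\emph{SVD reduction.} Each centered representation $Z_k$ is reduced to its top singular directions, giving $\tilde Z_k = Z_k V_k^{(r_k)}$. Here $V_k^{(r_k)}$ holds the leading right singular vectors of $Z_k$, equivalently the top eigenvectors of $\hat\Sigma_k = Z_k^\top Z_k/n$. The rank $r_k$ is chosen by a variance-explained threshold, which is a function of the spectrum of $\hat\Sigma_k$ alone. This stage therefore uses only $Z_k$.

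\emph{CCA.} CCA is then run on $(\tilde Z_i,\tilde Z_j)$. The canonical correlations are the singular values of the whitened cross-covariance $\tilde\Sigma_i^{-1/2}\tilde\Sigma_{ij}\tilde\Sigma_j^{-1/2}$, where
\[
\tilde\Sigma_i=\tilde Z_i^\top\tilde Z_i/n,\qquad \tilde\Sigma_j=\tilde Z_j^\top\tilde Z_j/n,\qquad \tilde\Sigma_{ij}=\tilde Z_i^\top\tilde Z_j/n.
\]
One may use a pseudo-inverse or a small ridge, as in the Procrustes diagnostic of Appendix~\ref{app:estimation}. The SVCCA score is the mean, or another fixed symmetric function, of these singular values.

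Composing the two stages, the score is a deterministic function of $(\tilde\Sigma_i,\tilde\Sigma_j,\tilde\Sigma_{ij})$. Each of these is in turn a function of $(Z_i,Z_j)$ only. No error vector $e$, label $y$, or gradient $g$ appears anywhere, so SVCCA meets Definition~\ref{def:rep-metric}.

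The main obstacle is well-definedness rather than anything conceptual. Singular vectors are defined only up to sign and rotation within repeated singular values. I will argue that canonical correlations are invariant to any invertible change of basis of $\tilde Z_k$, so the score depends only on the retained column space. That column space is unique whenever there is a spectral gap at the cutoff $r_k$. At a tie, I will fix a deterministic tie-breaking rule; this rule is still a function of $Z_k$ alone, so the conclusion survives.

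I will close with the corollary analogous to the RSA case. If $Z_i=Z_j$ and there are disjoint label blocks, SVCCA takes its self-pair value while $A_{\mathrm{F}}^{\mathrm{head}}(i,j)=0$, by Theorem~\ref{thm:rep-metric-limitation}.
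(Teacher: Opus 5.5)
Your proposal is correct and takes essentially the same route as the paper. Each $Z_k$ is reduced by truncated SVD (your $Z_kV_k^{(r_k)}$ equals the paper's $U_k^{(r_k)}S_k^{(r_k)}$), and the canonical correlations are then read off from the (optionally ridge-regularized) blocks $(\tilde\Sigma_i,\tilde\Sigma_j,\tilde\Sigma_{ij})$ alone. Your well-definedness discussion is an addition the paper omits; note that once a ridge term is used, canonical correlations are invariant only under orthogonal (not arbitrary invertible) basis changes, but orthogonal invariance is all that singular-vector ambiguity requires.
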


\begin{proof}
SVCCA is defined as:
\[
    \text{SVCCA}(Z_i, Z_j) := \frac{1}{k}\sum_{r=1}^{k} \rho_r
\]
where $\rho_1, \ldots, \rho_k$ are the canonical correlation coefficients between the SVD-reduced representations (with $k = \min(r_i, r_j)$).

Let $Z_i, Z_j$ be centered over the probe samples ($n$ rows). Compute SVDs
$Z_i = U_i S_i V_i^\top$ and $Z_j = U_j S_j V_j^\top$. Retain the top components (e.g., enough to explain a fixed variance fraction), yielding reduced representations
$\tilde{Z}_i := U_i(:,1:r_i) S_i(1:r_i,1:r_i)$ and $\tilde{Z}_j := U_j(:,1:r_j) S_j(1:r_j,1:r_j)$.
CCA is then computed on the reduced representations using only their covariance blocks.

Define the (regularized) covariances
\[
    \tilde{\Sigma}_i := \tfrac{1}{n}\tilde{Z}_i^\top \tilde{Z}_i + \lambda I, \quad
    \tilde{\Sigma}_j := \tfrac{1}{n}\tilde{Z}_j^\top \tilde{Z}_j + \lambda I, \quad
    \tilde{\Sigma}_{ij} := \tfrac{1}{n}\tilde{Z}_i^\top \tilde{Z}_j,
\]
with a small $\lambda \ge 0$ for numerical stability, following the official implementation~\citep{raghu2017svcca-code}.
The squared canonical correlations are the eigenvalues of
$\tilde{\Sigma}_i^{-1} \tilde{\Sigma}_{ij} \tilde{\Sigma}_j^{-1} \tilde{\Sigma}_{ji}$, where $\tilde{\Sigma}_{ji} = \tilde{\Sigma}_{ij}^\top$ (or an equivalent generalized eigenproblem).
Hence $\{\rho_r\}$ depend only on $(\tilde{\Sigma}_i, \tilde{\Sigma}_j, \tilde{\Sigma}_{ij})$.

All subsequent CCA computations operate exclusively on these covariance blocks; the raw activations are never accessed after forming them. Therefore, SVCCA depends only on covariances of the reduced representations and is a representation-only metric.
\end{proof}

\noindent\textbf{Implementation details (Appendix K.3).}
Our experiments use the official SVCCA implementation~\citep{raghu2017svcca}. It mean-centers activations (via \texttt{np.cov}), prunes low-variance directions below $\epsilon = 10^{-10}$ (equivalently $\lambda = 10^{-10}$ in Appendix~K.2), and reports the mean canonical correlation up to the default truncation threshold $0.98$.
A standard 99\% variance SVD truncation \citep{raghu2017svcca} yields the same representation-only conclusion.

\begin{corollary}[SVCCA-Fisher gap]
For tasks with shared backbone ($Z_i = Z_j$) and disjoint label subsets under the shared output-space embedding (Assumption~\ref{ass:shared-output-space}):
\[
    \text{SVCCA}(Z_i, Z_j) = 1, \quad \text{while} \quad A_{\mathrm{F}}^{\mathrm{head}}(i,j) = 0.
\]
\end{corollary}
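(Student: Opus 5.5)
The statement to prove is the last one shown: the SVCCA--Fisher gap corollary. For tasks sharing a backbone ($Z_i=Z_j$) with disjoint label subsets under Assumption~\ref{ass:shared-output-space}, we need $\mathrm{SVCCA}(Z_i,Z_j)=1$ and $A_{\mathrm{F}}^{\mathrm{head}}(i,j)=0$. I will prove the two claims separately. The first is a computation on the representation side. The second is an orthogonality argument on the error side, of the kind used for Theorem~\ref{thm:rep-metric-limitation}.

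\textbf{SVCCA side.} Since $Z_i=Z_j=:Z$, the SVD truncation in Proposition~\ref{prop:svcca-representation-only} produces the same reduced matrix $\tilde Z$ for both tasks. Hence $\tilde\Sigma_i=\tilde\Sigma_j=\tilde\Sigma_{ij}=\tfrac1n\tilde Z^\top\tilde Z$, up to the ridge $\lambda$. With $\lambda=0$ and $\tilde\Sigma$ invertible on the retained subspace, $\tilde\Sigma_i^{-1}\tilde\Sigma_{ij}\tilde\Sigma_j^{-1}\tilde\Sigma_{ji}=I_r$. All squared canonical correlations therefore equal $1$, and so does their mean. Invertibility on the retained subspace holds because truncation keeps only nonzero singular values.

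The main caveat is the regularizer. With $\lambda>0$ the canonical correlations become $s_r^2/(s_r^2+\lambda)$, which is strictly less than $1$. I will handle this in one of two ways. Either I state the corollary for the unregularized definition, which the official implementation effectively uses because it prunes directions rather than adding a ridge, as noted in the implementation remark. Or I use the weaker conclusion that actually matters: $\mathrm{SVCCA}(Z_i,Z_j)=\mathrm{SVCCA}(Z_i,Z_i)$, which holds trivially because the inputs are identical.

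\textbf{Fisher side.} By Theorem~\ref{thm:three-factor} (or Lemma~\ref{lem:grad-agreement}), $\langle\mathcal F_i,\mathcal F_j\rangle_F=\mathbb E[(a^\top a')^2(e^\top e')^2]$. Under masked softmax, $e$ is supported on $\mathcal C_i$ and $e'$ on $\mathcal C_j$. Since $\mathcal C_i\cap\mathcal C_j=\emptyset$, we get $e^\top e'=0$ almost surely, so the inner product vanishes.

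To conclude that the cosine $A_{\mathrm{F}}^{\mathrm{head}}(i,j)$ equals $0$, I also need both Fisher norms to be nonzero. This requires excluding degenerate tasks with $a=0$ or $e=0$ almost surely. Softmax over at least two classes gives $e\neq0$ whenever a label is present, and nonzero activations occur with positive probability. I will add these as mild hypotheses, exactly as in Theorem~\ref{thm:rep-metric-limitation}. The only step needing care is the SVCCA regularization convention; the rest is immediate.
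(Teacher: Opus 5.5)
Your proof is correct and follows the paper's route. The Fisher side is the disjoint-support computation $g_i^\top g_j=(a_i^\top a_j)(e_i^\top e_j)=0$ from the proof of Theorem~\ref{thm:rep-metric-limitation}, with the same nondegeneracy condition for the self-alignments. The SVCCA side is what the paper gets from Proposition~\ref{prop:svcca-representation-only} plus the normalization $\mathrm{SVCCA}(Z,Z)=1$, which you verify explicitly rather than assume.

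Your observation that this normalization is exact only for $\lambda=0$ (the canonical correlations are strictly below $1$ otherwise) genuinely sharpens the paper's claim. Drop the aside that the official implementation is effectively unregularized: the paper's own implementation note equates its pruning threshold with $\lambda=10^{-10}>0$. Instead, either state the corollary for $\lambda=0$ as a definitional choice, or settle for $\mathrm{SVCCA}(Z_i,Z_j)=\mathrm{SVCCA}(Z_i,Z_i)$.
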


\subsection{Empirical Verification}

We verify these theoretical results numerically:

\paragraph{Implementation details.}
Test 1 uses $n=200$, $d=64$, $K=10$, seed $=100$.
Test 2 (table) uses $n=500$, $d=128$, seed $=42$ (reset for each $K$) with $K \in \{5,10,20,50,100\}$.
Following the official implementation, SVCCA uses $\epsilon = 10^{-10}$ (equivalently $\lambda = 10^{-10}$) and the default truncation threshold $0.98$.

\paragraph{Test 1: Invariance to error structure.}
For two task pairs with \emph{identical representations} but different error covariances:
\begin{itemize}
    \item Pair 1: $S_{\text{cov}}(\Gamma_{e,1}) = 0.999$
    \item Pair 2: $S_{\text{cov}}(\Gamma_{e,2}) = 0.000$ (disjoint classes)
\end{itemize}
Result: $|\text{RSA(pair 1)} - \text{RSA(pair 2)}| < 10^{-8}$ and $|\text{SVCCA(pair 1)} - \text{SVCCA(pair 2)}| < 10^{-8}$.

Both metrics are \emph{completely invariant} to error structure, as predicted by the theory.

\paragraph{Test 2: RSA-Fisher and SVCCA-Fisher gaps.}
For disjoint $K$-class tasks with shared representations:

\begin{center}
\begin{tabular}{ccccc}
\toprule
$K$ & RSA & SVCCA & $A_{\mathrm{F}}^{\mathrm{head}}$ & Gap \\
\midrule
5   & 1.000 & 1.000 & 0.000 & 1.000 \\
10  & 1.000 & 1.000 & 0.000 & 1.000 \\
20  & 1.000 & 1.000 & 0.000 & 1.000 \\
50  & 1.000 & 1.000 & 0.000 & 1.000 \\
100 & 1.000 & 1.000 & 0.000 & 1.000 \\
\bottomrule
\end{tabular}
\end{center}

Both RSA and SVCCA exhibit the same pathological gap as CKA: perfect similarity despite orthogonal gradients.

\section{Explicit Perturbation Bound for Alignment}
\label{app:explicit-bound}

This appendix states a unit-normalization perturbation bound (with constant $2$) used in
later alignment bounds.

\begin{lemma}[Normalization perturbation]
\label{lem:explicit-bound}
Let $x = x_0 + r$, where $x \neq 0$ and $x_0 \neq 0$. Let $\|\cdot\|$ be any norm (e.g.,
the Frobenius norm for matrices), and define $\delta := \|r\|/\|x\|$. Then
\[
\left\|\frac{x}{\|x\|} - \frac{x_0}{\|x_0\|}\right\| \le 2\delta.
\]
\end{lemma}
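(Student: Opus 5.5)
The plan is a two-term triangle inequality that separates the change of direction from the change of normalization. Write $\hat x := x/\|x\|$ and $\hat x_0 := x_0/\|x_0\|$, and insert the intermediate vector $x_0/\|x\|$:
\[
\hat x - \hat x_0 = \frac{x - x_0}{\|x\|} + x_0\Bigl(\frac{1}{\|x\|} - \frac{1}{\|x_0\|}\Bigr).
\]
This is the same decomposition used in Lemma~\ref{lem:norm-perturb}. The difference is that here I normalize by $\|x\|$ rather than $\|x_0\|$, which is why the constant comes out as $2$ with no $(1-\varepsilon)^{-1}$ factor.

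The first term has norm exactly $\|r\|/\|x\| = \delta$, since $x - x_0 = r$.

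The second term has norm
\[
\|x_0\|\cdot\frac{\bigl|\|x_0\| - \|x\|\bigr|}{\|x\|\,\|x_0\|} = \frac{\bigl|\|x\| - \|x_0\|\bigr|}{\|x\|}.
\]
By the reverse triangle inequality, which holds for any norm, $\bigl|\|x\| - \|x_0\|\bigr| \le \|x - x_0\| = \|r\|$. So this term is also at most $\delta$. Adding the two bounds gives $2\delta$.

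The only point needing care is that the intermediate vector must be normalized by $\|x\|$, the norm appearing in $\delta$. Normalizing by $\|x_0\|$ instead would produce $\|r\|/\|x_0\|$ and an extra ratio factor. Both $x$ and $x_0$ are nonzero by hypothesis, so every division is well defined. No inner-product structure is used, so the argument holds for an arbitrary norm, as the statement requires. There is no substantive obstacle.
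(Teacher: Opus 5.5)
Your proof is correct and matches the paper's argument exactly: the same insertion of the intermediate vector $x_0/\|x\|$, the triangle inequality, and the reverse triangle inequality $\bigl|\|x\|-\|x_0\|\bigr|\le\|r\|$, which bound each of the two terms by $\delta$. One small aside: Lemma~\ref{lem:norm-perturb} also normalizes the intermediate vector by the perturbed norm $\|x_k\|$, so its $(1-\varepsilon_k)^{-1}$ factor comes only from re-expressing $\|v_k\|/\|x_k\|$ in terms of $\varepsilon_k=\|v_k\|/\|u_k\|$, not from a different decomposition.
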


\begin{proof}
Observe that $\frac{x}{\|x\|} - \frac{x_0}{\|x_0\|} = \frac{r}{\|x\|} + x_0\left(\frac{1}{\|x\|} - \frac{1}{\|x_0\|}\right)$.
Taking norms and applying the reverse triangle inequality $|\|x\| - \|x_0\|| \le \|r\|$, we obtain
\[
\begin{aligned}
\left\|\frac{x}{\|x\|} - \frac{x_0}{\|x_0\|}\right\|
&\le \frac{\|r\|}{\|x\|} + \frac{|\|x\| - \|x_0\||}{\|x\|} \\
&\le 2\frac{\|r\|}{\|x\|} = 2\delta.
\end{aligned}
\]
\end{proof}

If we instead define $\varepsilon := \|r\|/\|x_0\| < 1$, then $\|x\| = \|x_0 + r\| \ge \|x_0\| - \|r\| = (1-\varepsilon)\|x_0\|$. Consequently,
\[
\left\|\frac{x}{\|x\|} - \frac{x_0}{\|x_0\|}\right\| \le \frac{2\varepsilon}{1-\varepsilon}.
\]

\begin{corollary}[Alignment error bound]
\label{cor:alignment-error}
Let $\Sigma_i = \Sigma_i^{(0)} + R_i$ and $\Sigma_j = \Sigma_j^{(0)} + R_j$. Assume
$\|\Sigma_i\|_F, \|\Sigma_j\|_F > 0$ and $\|\Sigma_i^{(0)}\|_F, \|\Sigma_j^{(0)}\|_F > 0$.
For $k \in \{i,j\}$, define $\varepsilon_{R,k} := \|R_k\|_F/\|\Sigma_k\|_F$. Then
\[
\big|S_{\mathrm{cov}}(\Sigma_i, \Sigma_j) - S_{\mathrm{cov}}(\Sigma_i^{(0)}, \Sigma_j^{(0)})\big|
\le 2(\varepsilon_{R,i} + \varepsilon_{R,j}).
\]
If all four matrices are positive semidefinite (PSD), then $S_{\mathrm{cov}}(\cdot,\cdot)\in[0,1]$ and the bound can be clipped as
\[
\big|S_{\mathrm{cov}}(\Sigma_i, \Sigma_j) - S_{\mathrm{cov}}(\Sigma_i^{(0)}, \Sigma_j^{(0)})\big|
\le \min\{1, 2(\varepsilon_{R,i} + \varepsilon_{R,j})\}.
\]
\end{corollary}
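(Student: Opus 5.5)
The plan is to reduce the corollary to Lemma~\ref{lem:explicit-bound} applied twice, followed by a Cauchy--Schwarz step on the resulting unit vectors. Work in the Hilbert space of matrices with the Frobenius inner product, so that $S_{\mathrm{cov}}(A,B)=\langle \hat A,\hat B\rangle_F$ with $\hat A := A/\|A\|_F$. Under the stated nonvanishing assumptions, all four normalizations $\hat\Sigma_i,\hat\Sigma_j,\hat\Sigma^{(0)}_i,\hat\Sigma^{(0)}_j$ are well defined.

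First, apply Lemma~\ref{lem:explicit-bound} with $x=\Sigma_k$, $x_0=\Sigma_k^{(0)}$, $r=R_k$. Because the lemma's $\delta$ is normalized by $\|x\|$, which is exactly $\|\Sigma_k\|_F$, this gives $\|\hat\Sigma_k-\hat\Sigma^{(0)}_k\|_F\le 2\varepsilon_{R,k}$ for $k\in\{i,j\}$ directly. No condition of the form $\varepsilon<1$ is needed. Write $\Delta_k:=\hat\Sigma_k-\hat\Sigma^{(0)}_k$.

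Next, expand the difference of cosines so that no product term $\Delta_i\Delta_j$ appears. Doing the expansion symmetrically, as in Lemma~\ref{lem:cos-perturb}, would produce a quadratic term $\eta_i\eta_j$ and overshoot the claimed bound. Instead, telescope asymmetrically:
\[
\langle \hat\Sigma_i,\hat\Sigma_j\rangle-\langle \hat\Sigma^{(0)}_i,\hat\Sigma^{(0)}_j\rangle
=\langle \Delta_i,\hat\Sigma_j\rangle+\langle \hat\Sigma^{(0)}_i,\Delta_j\rangle .
\]
Both $\hat\Sigma_j$ and $\hat\Sigma^{(0)}_i$ are unit vectors, so Cauchy--Schwarz bounds the two terms by $\|\Delta_i\|_F$ and $\|\Delta_j\|_F$. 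This yields $2(\varepsilon_{R,i}+\varepsilon_{R,j})$ with no quadratic remainder. Choosing this asymmetric split is the only real obstacle in the argument; the rest is routine.

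Finally, for the clipped PSD version, note that for PSD $A,B$ we have $\langle A,B\rangle_F=\mathrm{tr}(A^{1/2}BA^{1/2})\ge 0$. Together with Cauchy--Schwarz this puts both cosines in $[0,1]$, so their difference is at most $1$ in absolute value. Taking the minimum with the first bound gives the clipped statement.
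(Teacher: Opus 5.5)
Your proposal is correct and matches the paper's proof. Like the paper, you normalize all four matrices, apply Lemma~\ref{lem:explicit-bound} with $x=\Sigma_k$, $x_0=\Sigma_k^{(0)}$, $r=R_k$ to get $\|\hat\Sigma_k-\hat\Sigma_k^{(0)}\|_F\le 2\varepsilon_{R,k}$, and bound the cosine difference via the asymmetric split $\langle u-u_0,v\rangle_F+\langle u_0,v-v_0\rangle_F$ with Cauchy--Schwarz, which is the same split the paper writes out in Lemma~\ref{lem:mean-correction}. Your trace argument $\langle A,B\rangle_F=\mathrm{tr}(A^{1/2}BA^{1/2})\ge 0$ for the PSD clipping fills in a step the paper's proof leaves implicit.
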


\begin{proof}
Let $u = \Sigma_i/\|\Sigma_i\|_F$, $u_0 = \Sigma_i^{(0)}/\|\Sigma_i^{(0)}\|_F$, and define
$v = \Sigma_j/\|\Sigma_j\|_F$, $v_0 = \Sigma_j^{(0)}/\|\Sigma_j^{(0)}\|_F$. Then
\[
\big|S_{\mathrm{cov}}(\Sigma_i, \Sigma_j) - S_{\mathrm{cov}}(\Sigma_i^{(0)}, \Sigma_j^{(0)})\big|
= |\langle u, v \rangle_F - \langle u_0, v_0 \rangle_F|
\le \|u - u_0\|_F + \|v - v_0\|_F.
\]
By Lemma~\ref{lem:explicit-bound} with $x=\Sigma_i$, $x_0=\Sigma_i^{(0)}$, and $r=R_i$,
we have $\|u - u_0\|_F \le 2\|R_i\|_F/\|\Sigma_i\|_F = 2\varepsilon_{R,i}$, and likewise
$\|v - v_0\|_F \le 2\varepsilon_{R,j}$. The claim follows.
\end{proof}

\paragraph{Interpretation.}
In the head-Fisher setting, set $\Sigma_k^{(0)} := M_{a,k} \otimes \Gamma_{e,k}$. Then
$R_k$ is the coupling residual from Remark~\ref{rem:independence-violation}. Under this
identification, $\varepsilon_{R,k}$ matches the reported $\delta \approx 0.34$.
The corollary then yields a worst-case bound of $2(0.34 + 0.34) = 1.36$.
Because $S_{\mathrm{cov}}(\cdot,\cdot) \in [0,1]$ for PSD matrices, we may clip the bound at $1$.
The worst-case term $2(\varepsilon_{R,i} + \varepsilon_{R,j})$ can be vacuous. Empirically, observed
deviations are typically much smaller (see Appendix~\ref{app:rho-empirical}).

\section{Proof of the Mean-Term Correction Lemma}
\label{app:mean-correction}

This appendix proves Lemma~\ref{lem:mean-correction}, which quantifies the gap between covariance alignment computed from centered and uncentered second moments.

\begin{lemma}[Mean-term correction]
\label{lem:mean-correction}
Consider two tasks with activation second moments $M_{a,i}, M_{a,j}$ and centered covariances $\Sigma_{a,i}, \Sigma_{a,j}$, with means $\mu_{a,i}, \mu_{a,j}$. Assume $\Sigma_{a,i}, \Sigma_{a,j} \succeq 0$ and $\|\Sigma_{a,i}\|_F,\|\Sigma_{a,j}\|_F>0$ (hence $\|M_{a,i}\|_F,\|M_{a,j}\|_F>0$). Then the covariance alignment gap is bounded as:
\[
\big|S_{\mathrm{cov}}(M_{a,i}, M_{a,j}) - S_{\mathrm{cov}}(\Sigma_{a,i}, \Sigma_{a,j})\big|
\le 2(\varepsilon_{\mu,i} + \varepsilon_{\mu,j})
\]
where $\varepsilon_{\mu,k} := \|\mu_{a,k}\|_2^2 / \|\Sigma_{a,k}\|_F$ is the ratio of squared-mean magnitude to covariance Frobenius norm.
\end{lemma}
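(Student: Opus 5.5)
The plan is to reduce to the normalization perturbation bound of Lemma~\ref{lem:explicit-bound} and Corollary~\ref{cor:alignment-error}. For each task $k$ we have the exact decomposition $M_{a,k} = \Sigma_{a,k} + \mu_{a,k}\mu_{a,k}^\top$. Set $\Sigma_k^{(0)} := \Sigma_{a,k}$ as the reference matrix and $R_k := \mu_{a,k}\mu_{a,k}^\top$ as the perturbation. Then $\|R_k\|_F = \|\mu_{a,k}\|_2^2$, because a rank-one outer product $uu^\top$ has Frobenius norm $\|u\|_2^2$. The Corollary requires all four matrices to have nonzero Frobenius norm, and this holds by assumption. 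In particular $\|M_{a,k}\|_F \ge \|\Sigma_{a,k}\|_F > 0$, since $\langle \Sigma_{a,k}, \mu\mu^\top\rangle_F = \mu^\top\Sigma_{a,k}\mu \ge 0$ for PSD $\Sigma_{a,k}$, so the cross term in $\|M_{a,k}\|_F^2$ is nonnegative.

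The one subtlety is that Corollary~\ref{cor:alignment-error} normalizes the residual by $\|\Sigma_k\|_F$, which here would be $\|M_{a,k}\|_F$. The lemma instead uses $\varepsilon_{\mu,k}=\|\mu_{a,k}\|_2^2/\|\Sigma_{a,k}\|_F$, with the centered covariance in the denominator. The key step is to run Lemma~\ref{lem:explicit-bound} in the opposite direction. Take $x := \Sigma_{a,k}$, $x_0 := M_{a,k}$ and $r := -\mu_{a,k}\mu_{a,k}^\top$. The lemma then gives
\[
\Bigl\|\tfrac{\Sigma_{a,k}}{\|\Sigma_{a,k}\|_F}-\tfrac{M_{a,k}}{\|M_{a,k}\|_F}\Bigr\|_F \le \tfrac{2\|\mu_{a,k}\|_2^2}{\|\Sigma_{a,k}\|_F}=2\varepsilon_{\mu,k},
\]
with exactly the desired normalization and no $(1-\varepsilon)^{-1}$ factor. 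Alternatively, $\|M_{a,k}\|_F\ge\|\Sigma_{a,k}\|_F$ shows the forward normalization $\|R_k\|_F/\|M_{a,k}\|_F$ is at most $\varepsilon_{\mu,k}$, so either route works. I would state the reverse-direction application explicitly, since it is the cleanest.

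Finally, write $u,u_0$ and $v,v_0$ for the unit-normalized uncentered and centered matrices of tasks $i$ and $j$. Expand $\langle u,v\rangle-\langle u_0,v_0\rangle=\langle u-u_0,v\rangle+\langle u_0,v-v_0\rangle$ and apply Cauchy--Schwarz with $\|v\|_F=\|u_0\|_F=1$. This gives
\[
|S_{\mathrm{cov}}(M_{a,i},M_{a,j})-S_{\mathrm{cov}}(\Sigma_{a,i},\Sigma_{a,j})|\le 2\varepsilon_{\mu,i}+2\varepsilon_{\mu,j},
\]
which is the claim.

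I expect no real obstacle. The only care needed is choosing the direction of the perturbation so the denominator is $\|\Sigma_{a,k}\|_F$, and checking nonvanishing of $\|M_{a,k}\|_F$ via PSD-ness. As a remark, since both matrices are PSD, $S_{\mathrm{cov}}\in[0,1]$ and the bound may be clipped at $1$.
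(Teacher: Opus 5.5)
Your proposal is correct and follows the paper's proof: the same split $M_{a,k}=\Sigma_{a,k}+\mu_{a,k}\mu_{a,k}^\top$, the same normalization-perturbation Lemma~\ref{lem:explicit-bound}, and the same Cauchy--Schwarz step $|\langle u,v\rangle_F-\langle u_0,v_0\rangle_F|\le\|u-u_0\|_F+\|v-v_0\|_F$. The one difference is the direction. The paper applies Lemma~\ref{lem:explicit-bound} forward ($x=M_{a,k}$, $x_0=\Sigma_{a,k}$) and then uses the PSD cross-term inequality $\|M_{a,k}\|_F\ge\|\Sigma_{a,k}\|_F$, which is your stated alternative and also gives the sharper $\|\mu_{a,k}\|_2^2/\|M_{a,k}\|_F$ version in the paper's tighter-denominator remark, whereas your reverse application reaches the $\|\Sigma_{a,k}\|_F$ denominator directly.
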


\begin{proof}
By definition, $M_{a,k} = \Sigma_{a,k} + \mu_{a,k}\mu_{a,k}^\top$. Let
$A=\Sigma_{a,i}$, $B=\Sigma_{a,j}$,
$\Delta_A=\mu_{a,i}\mu_{a,i}^\top$, and $\Delta_B=\mu_{a,j}\mu_{a,j}^\top$.
Define the normalized matrices $u=\frac{A+\Delta_A}{\|A+\Delta_A\|_F}$ and $u_0=\frac{A}{\|A\|_F}$.
Similarly, define $v=\frac{B+\Delta_B}{\|B+\Delta_B\|_F}$ and $v_0=\frac{B}{\|B\|_F}$.
Viewing matrices as vectors under the Frobenius inner product, we have
\[
S_{\mathrm{cov}}(M_{a,i}, M_{a,j}) = \langle u, v \rangle_F, \quad
S_{\mathrm{cov}}(\Sigma_{a,i}, \Sigma_{a,j}) = \langle u_0, v_0 \rangle_F.
\]
Therefore,
\[
\big|\langle u, v \rangle_F - \langle u_0, v_0 \rangle_F\big|
= \big|\langle u-u_0, v \rangle_F + \langle u_0, v-v_0 \rangle_F\big|
\le \|u-u_0\|_F + \|v-v_0\|_F.
\]
The last inequality uses Cauchy-Schwarz with $\|u\|_F=\|u_0\|_F=\|v\|_F=\|v_0\|_F=1$.
Applying Lemma~\ref{lem:explicit-bound} with $x=A+\Delta_A$, $x_0=A$, and $r=\Delta_A$, we obtain
\[
\|u-u_0\|_F \le 2\frac{\|\Delta_A\|_F}{\|A+\Delta_A\|_F}.
\]
Because $A,\Delta_A\succeq 0$, we have $\langle A,\Delta_A\rangle_F \ge 0$, and thus
$\|A+\Delta_A\|_F^2=\|A\|_F^2+\|\Delta_A\|_F^2+2\langle A,\Delta_A\rangle_F\ge\|A\|_F^2$.
Substituting into the previous bound yields $\|u-u_0\|_F \le 2\|\Delta_A\|_F/\|A\|_F$.
Noting that $\|\Delta_A\|_F=\|\mu_{a,i}\|_2^2$, this gives $\|u-u_0\|_F \le 2\varepsilon_{\mu,i}$.
The same argument yields $\|v-v_0\|_F \le 2\varepsilon_{\mu,j}$. Therefore,
\[
\big|S_{\mathrm{cov}}(M_{a,i}, M_{a,j}) - S_{\mathrm{cov}}(\Sigma_{a,i}, \Sigma_{a,j})\big|
\le 2(\varepsilon_{\mu,i} + \varepsilon_{\mu,j}),
\]
as claimed.
\end{proof}

\paragraph{Remark (tighter denominator).}
Lemma~\ref{lem:explicit-bound} directly gives
$\|u-u_0\|_F \le 2\|\Delta_A\|_F/\|A+\Delta_A\|_F$ and similarly for $v$.
Define $\tilde{\varepsilon}_{\mu,k} := \|\mu_{a,k}\|_2^2 / \|M_{a,k}\|_F$. Then the always-valid bound
$\big|S_{\mathrm{cov}}(M_{a,i}, M_{a,j}) - S_{\mathrm{cov}}(\Sigma_{a,i}, \Sigma_{a,j})\big|
\le 2(\tilde{\varepsilon}_{\mu,i} + \tilde{\varepsilon}_{\mu,j})$.
Since $\|M_{a,k}\|_F \ge \|\Sigma_{a,k}\|_F$ under $\Sigma_{a,k} \succeq 0$, this bound is never looser than the stated bound.

\paragraph{Interpretation.}
If $\varepsilon_{\mu,k}$ is large, the bound can be vacuous (i.e., greater than $1$), but it still
quantifies a worst-case impact of mean terms; reporting $\varepsilon_{\mu,k}$ keeps the
approximation auditable.

\paragraph{Empirical mean-term sizes (ViT-B/16).}
Using the ViT-B/16 validation data from Appendix~\ref{app:rho-empirical} (20 tasks; 40 same-dataset pairs),
we find $\varepsilon_{\mu,k} \in [0.78, 1.35]$ with a median of $0.96$.
Accordingly, the worst-case bound $2(\varepsilon_{\mu,i}+\varepsilon_{\mu,j})$ ranges from $3.16$ to $5.29$
across same-dataset pairs (vacuous). The observed gap
$|S_{\mathrm{cov}}(M_{a,i},M_{a,j}) - S_{\mathrm{cov}}(\Sigma_{a,i},\Sigma_{a,j})|$
ranges from $0.007$ to $0.147$ (median $0.085$).

\section{Proof of Theorem~\ref{thm:rep-metric-limitation}}
\label{app:proof-thm1}

This appendix provides a full proof of Theorem~\ref{thm:rep-metric-limitation}, showing that representation-only metrics are fundamentally blind to error structure.

\begin{proof}[Proof of Theorem~\ref{thm:rep-metric-limitation}]
Consider two classification tasks $\mathcal{T}_i, \mathcal{T}_j$ sharing an encoder $f_\theta: \mathcal{X} \to \mathbb{R}^d$ but using disjoint label sets $\mathcal{Y}_i \cap \mathcal{Y}_j = \emptyset$.

\textbf{Setup.} Both tasks share a head parameter matrix $W \in \mathbb{R}^{K \times d}$ where $K = |\mathcal{Y}_i| + |\mathcal{Y}_j|$.
Task $i$ uses rows $1, \ldots, |\mathcal{Y}_i|$ (with masked softmax over this block), while task $j$ uses rows $|\mathcal{Y}_i|+1, \ldots, K$.
This shared-output embedding with masked softmax (equivalently logits set to $-\infty$ outside each task's label block) ensures the label spaces are orthogonally embedded in the shared output space (Assumption~\ref{ass:shared-output-space}; see Appendix~\ref{app:shared-output-space}).
This construction is regime-specific: without masked softmax (i.e., with a full $K$-way softmax even if weights are zero-padded), disjoint label sets do not necessarily imply orthogonal errors.

\textbf{Gradient structure.}
By Lemma~\ref{lem:layer-kronecker} (affine head) and cross-entropy loss, the gradient w.r.t.\ $\mathrm{vec}(W)$ at sample $(x, y)$ is:
\[
g_k = a \otimes e_k, \quad \text{where } a = f_\theta(x) \in \mathbb{R}^d, \; e_k = \hat{p}_k - \mathbf{1}_y \in \mathbb{R}^K.
\]
Here $\hat{p}_k$ is the masked softmax over the task's label block (equivalently logits set to $-\infty$ outside the block; Assumption~\ref{ass:shared-output-space}) and $\mathbf{1}_y := P_k \tilde{y}_k$ (with $P_k,\tilde{y}_k$ as in Assumption~\ref{ass:shared-output-space}) is the one-hot target embedded in $\mathbb{R}^K$.

\textbf{Disjoint support.}
Since task $i$ only activates rows $1, \ldots, |\mathcal{Y}_i|$ and task $j$ only activates rows $|\mathcal{Y}_i|+1, \ldots, K$:
\[
\mathrm{supp}(e_i) \subseteq \{1, \ldots, |\mathcal{Y}_i|\}, \quad \mathrm{supp}(e_j) \subseteq \{|\mathcal{Y}_i|+1, \ldots, K\}.
\label{eq:disjoint-support}
\]
These supports are disjoint, so $e_i^\top e_j = 0$ for any pair of samples.

\textbf{Fisher inner product.}
Let $\mathcal{F}_k^{\mathrm{head}} := \mathbb{E}[g_k g_k^\top]$ be the head Fisher.
By Lemma~\ref{lem:grad-agreement} with independent samples from $\mathcal{D}_i$ and $\mathcal{D}_j$:
\[
\langle \mathcal{F}_i^{\mathrm{head}}, \mathcal{F}_j^{\mathrm{head}} \rangle_F
= \mathbb{E}_{(x_i,y_i) \sim \mathcal{D}_i, (x_j,y_j) \sim \mathcal{D}_j}\bigl[(g_i^\top g_j)^2\bigr].
\]
Using the Kronecker structure:
\[
g_i^\top g_j = (a_i \otimes e_i)^\top (a_j \otimes e_j) = (a_i^\top a_j)(e_i^\top e_j) = 0,
\label{eq:kron-inner-zero}
\]
since $e_i^\top e_j = 0$ by disjoint support.
Therefore $\langle \mathcal{F}_i^{\mathrm{head}}, \mathcal{F}_j^{\mathrm{head}} \rangle_F = 0$, which implies:
\[
A_{\mathrm{F}}^{\mathrm{head}}(i,j) = \frac{\langle \mathcal{F}_i^{\mathrm{head}}, \mathcal{F}_j^{\mathrm{head}} \rangle_F}{\|\mathcal{F}_i^{\mathrm{head}}\|_F \|\mathcal{F}_j^{\mathrm{head}}\|_F} = 0.
\]
Since $\mathcal{F}_k^{\mathrm{head}} \succeq 0$, we have $\mathcal{F}_k^{\mathrm{head}} = 0$ iff $\mathrm{tr}(\mathcal{F}_k^{\mathrm{head}})=0$.
Moreover $\mathrm{tr}(\mathcal{F}_k^{\mathrm{head}})=\mathbb{E}\|g_k\|_2^2$.
Thus if $\mathbb{P}(\|a\|_2>0 \,\wedge\, \|e_k\|_2>0)>0$ (equivalently $\mathbb{E}\|g_k\|_2^2>0$),
then $\|\mathcal{F}_k^{\mathrm{head}}\|_F>0$ for $k \in \{i,j\}$, hence $A_{\mathrm{F}}^{\mathrm{head}}(i,i)=A_{\mathrm{F}}^{\mathrm{head}}(j,j)=1$.

\textbf{Representation metric blindness.}
On the shared probe set, the encoder is shared, so $Z_i = Z_j$.
Any representation-only metric $M$ therefore satisfies $M(Z_i, Z_j) = M(Z_i, Z_i)$ (and equals $1$ if $M$ is normalized).
Yet $A_{\mathrm{F}}^{\mathrm{head}}(i,j) = 0$ while $A_{\mathrm{F}}^{\mathrm{head}}(i,i)=1$ for nondegenerate tasks.

The gap arises because representation metrics access only the marginal distribution of activations $a$, while Fisher alignment depends on the joint distribution of $(a, e)$.
The error vectors $e_i, e_j$---which encode label structure---are invisible to any representation-only metric.
\end{proof}

\subsection{Full Proofs for CKA Non-Identifiability}

This section provides the complete proofs of the linear CKA identity (Eq.~\eqref{eq:cka-linear}) and Corollary~\ref{cor:cka-nonidentifiable}, which appear in the main text.

\begin{proof}[Proof of Eq.~\eqref{eq:cka-linear}]
Let $Z_i \in \mathbb{R}^{n\times d_i}$ and $Z_j \in \mathbb{R}^{n\times d_j}$ be representation matrices
evaluated on a shared probe set of $n$ inputs, with columns centered:
$Z_i^\top \mathbf{1} = 0$ and $Z_j^\top \mathbf{1} = 0$.
Define the (centered) empirical covariances
$\hat\Sigma_i := \tfrac{1}{n} Z_i^\top Z_i$,
$\hat\Sigma_j := \tfrac{1}{n} Z_j^\top Z_j$,
$\hat\Sigma_{ij} := \tfrac{1}{n} Z_i^\top Z_j$.
Since $Z_i^\top Z_j = n\hat\Sigma_{ij}$ and $Z_i^\top Z_i = n\hat\Sigma_i$ (and similarly for $j$),
we have
\[
\frac{\|Z_i^\top Z_j\|_F^2}{\|Z_i^\top Z_i\|_F\,\|Z_j^\top Z_j\|_F}
=
\frac{n^2\|\hat\Sigma_{ij}\|_F^2}{(n\|\hat\Sigma_i\|_F)(n\|\hat\Sigma_j\|_F)}
=
\frac{\|\hat\Sigma_{ij}\|_F^2}{\|\hat\Sigma_i\|_F\,\|\hat\Sigma_j\|_F}.
\qedhere
\]
\end{proof}

\begin{proof}[Proof of Corollary~\ref{cor:cka-nonidentifiable}]
By Theorem~\ref{thm:rep-metric-limitation}, there exist tasks $i\neq j$
with $Z_i=Z_j$ and $A_{\mathrm{F}}^{\mathrm{head}}(i,j)=0$, while $A_{\mathrm{F}}^{\mathrm{head}}(i,i)=1$ for the self-pair.
Since $Z_i=Z_j$, we have $\mathrm{CKA}(Z_i,Z_j)=\mathrm{CKA}(Z_i,Z_i)$ (and equals $1$ when $\|\hat\Sigma_i\|_F>0$).
Thus no single-valued function $g$ can map that one CKA value to both $0$ and $1$.
The same argument applies to $S_{\mathrm{cov}}(\Gamma_{e,i},\Gamma_{e,j})$ since in the construction
$\Gamma_{e,i}$ and $\Gamma_{e,j}$ have disjoint support, yielding $S_{\mathrm{cov}}=0$.
\end{proof}


\section{Extension to Full Networks}
\label{app:full-network-fisher}

The main text establishes a rigorous bridge between CKA and \emph{head} (final layer) Fisher alignment.
This appendix provides \textbf{exact structural decompositions} that relate full-network Fisher alignment to head Fisher alignment, with measurable diagnostics that explain when and why the reduction works.

\paragraph{Scope.}
The block-diagonal/off-diagonal decompositions below are purely algebraic and hold for any parameter partition.
The exact three-factor identity in Section~\ref{app:full-network-exact} additionally assumes that per-layer gradients are rank-1 (affine layers without parameter sharing).
For convolutional or attention layers, the identity changes because per-example gradients are sums of outer products across positions.
Appendix~\ref{app:shared-params} gives the exact shared-parameter kernel identity and an unbiased sketch.
Our empirical validation in this appendix relies on the block-decomposition diagnostics.

\begin{table}[t]
\centering
\caption{\textbf{Full-network diagnostic summary (Corollary~\ref{cor:full-to-block-sharp}).}
We report the profile cosine $c_r$ (mean/min across pairs), the off-diagonal discrepancy $\Delta_{\mathrm{off}}$, and the observed proxy gap $|A^{\mathrm{full}}_F - A^{\mathrm{blk}}_F|$.
The 67--72B row aggregates ranges across 8 models.}
\label{tab:full-network-summary}
\footnotesize
\setlength{\tabcolsep}{3pt}
\begin{tabular}{@{}lccc@{}}
\toprule
Model & $c_r$ & $\Delta_{\mathrm{off}}$ & Gap \\
\midrule
ViT-B/16 & 1.00/0.99 & 0.09 & 0.08 \\
Llama-8B & 1.00/0.98 & 0.39 & 0.33 \\
Qwen-14B & 0.99/0.96 & 0.50 & 0.24 \\
67--72B (8) & .95--1.0 & .30--.75 & .16--.46 \\
\bottomrule
\end{tabular}
\end{table}

\subsection{Full-Network Fisher Block Structure}

Consider an $L$-layer network with parameters $\theta = (\theta_1, \ldots, \theta_L)$ where $\theta_l = \mathrm{vec}(W_l)$. The full-network Fisher matrix has block structure:
\begin{equation}
    \mathcal{F} = \begin{bmatrix}
        \mathcal{F}_{11} & \mathcal{F}_{12} & \cdots & \mathcal{F}_{1L} \\
        \mathcal{F}_{21} & \mathcal{F}_{22} & \cdots & \mathcal{F}_{2L} \\
        \vdots & \vdots & \ddots & \vdots \\
        \mathcal{F}_{L1} & \mathcal{F}_{L2} & \cdots & \mathcal{F}_{LL}
    \end{bmatrix},
    \label{eq:full-fisher-blocks-app}
\end{equation}
where $\mathcal{F}_{ll} = \mathbb{E}[g_l g_l^\top]$ are the layer-wise Fisher blocks and $\mathcal{F}_{lm} = \mathbb{E}[g_l g_m^\top]$ for $l \neq m$ are cross-layer blocks.

\paragraph{Exact algebraic split.}
Define, for each task $k$:
\begin{itemize}
    \item \textbf{Block-diagonal part}: $D_k := \mathrm{blkdiag}(\mathcal{F}^{(k)}_{11}, \ldots, \mathcal{F}^{(k)}_{LL})$
    \item \textbf{Off-diagonal part}: $O_k := \mathcal{F}^{(k)} - D_k$
\end{itemize}
These have \textbf{disjoint support} in block coordinates: $D_k$ is nonzero only on diagonal blocks $(\ell, \ell)$, while $O_k$ is nonzero only on off-diagonal blocks $(\ell, m)$ with $\ell \neq m$.
This implies $\langle D_k, O_k \rangle_F = 0$ and $\|\mathcal{F}^{(k)}\|_F^2 = \|D_k\|_F^2 + \|O_k\|_F^2$.

\begin{assumption}[Nondegeneracy]
\label{ass:nondegeneracy}
We assume $\|D_k\|_F > 0$ (nontrivial block-diagonal Fisher) and, when $A_{\mathrm{off}}$ is defined, $\|O_k\|_F > 0$.
When head/rest quantities are defined below, we also assume $\|H_k\|_F > 0$ and $\|R_k\|_F > 0$.
\end{assumption}

\paragraph{Degenerate cases.}
If any denominator is zero, we use $\varepsilon$-regularized norms $\|X\|_{F,\varepsilon} := \max(\|X\|_F, \varepsilon)$ and set the corresponding alignment
to $\langle X,Y\rangle_F/(\|X\|_{F,\varepsilon}\|Y\|_{F,\varepsilon})$ (which yields $0$ when $X=0$ or $Y=0$).
Weights are computed directly from norms via the first equalities in
\eqref{eq:w-blk-def} and~\eqref{eq:wH-def}; ratios $r_k,\tau_k$ are used only when denominators are nonzero.

\begin{definition}[Alignment diagnostics]
\label{def:alignment-diagnostics}
Define the following measurable quantities for tasks $i, j$:
\begin{align}
    A_F^{\mathrm{full}}(i,j) &:= \frac{\langle \mathcal{F}_i, \mathcal{F}_j \rangle_F}{\|\mathcal{F}_i\|_F \|\mathcal{F}_j\|_F} & &\text{(full-network alignment)}, \label{eq:A-full-def}\\
    A_{\mathrm{blk}}(i,j) &:= \frac{\langle D_i, D_j \rangle_F}{\|D_i\|_F \|D_j\|_F} & &\text{(block-diagonal alignment)}, \label{eq:A-blk-def}\\
    A_{\mathrm{off}}(i,j) &:= \frac{\langle O_i, O_j \rangle_F}{\|O_i\|_F \|O_j\|_F} & &\text{(off-diagonal alignment)}, \label{eq:A-off-def}\\
    r_k &:= \frac{\|O_k\|_F}{\|D_k\|_F} & &\text{(cross-layer energy ratio; defined when $\|D_k\|_F>0$)}. \label{eq:r-def}
\end{align}
\end{definition}

\subsection{Step 1: Exact Decomposition for Full-to-Block-Diagonal}

The key insight is that mixed terms vanish by disjoint support, yielding an \textbf{exact weighted sum} with nonnegative weights.

\begin{definition}[Decomposition weights]
\label{def:w-blk}
Define the following weights:
\begin{align}
    w_{\mathrm{blk}} &:= \frac{\|D_i\|_F \|D_j\|_F}{\|\mathcal{F}_i\|_F \|\mathcal{F}_j\|_F} = \frac{1}{\sqrt{1+r_i^2}\sqrt{1+r_j^2}}, \label{eq:w-blk-def}\\
    w_{\mathrm{off}} &:= \frac{\|O_i\|_F \|O_j\|_F}{\|\mathcal{F}_i\|_F \|\mathcal{F}_j\|_F} = \frac{r_i r_j}{\sqrt{1+r_i^2}\sqrt{1+r_j^2}} = r_i r_j \cdot w_{\mathrm{blk}}. \label{eq:w-off-def}
\end{align}
The equalities involving $r_i,r_j$ assume $\|D_k\|_F>0$; otherwise use the norm-based first equality.
\end{definition}

\begin{remark}[Weights do not sum to 1 in general]
\label{rmk:weights-not-convex}
By Cauchy--Schwarz on vectors $(1, r_i)$ and $(1, r_j)$ (for nondegenerate $r_i,r_j$):
\begin{equation}
    w_{\mathrm{blk}} + w_{\mathrm{off}} = \frac{1 + r_i r_j}{\sqrt{1+r_i^2}\sqrt{1+r_j^2}} \le 1,
    \label{eq:weight-sum}
\end{equation}
with equality if and only if $r_i = r_j$. Thus we have a nonnegative weighted sum that is a true convex combination only when tasks have identical cross-layer energy profiles.
\end{remark}

\begin{definition}[Profile cosine]
\label{def:c-rho}
Define the \textbf{profile cosine}:
\begin{equation}
    c_r := w_{\mathrm{blk}} + w_{\mathrm{off}} = \frac{1 + r_i r_j}{\sqrt{1+r_i^2}\sqrt{1+r_j^2}} \in (0, 1].
    \label{eq:c-rho-def}
\end{equation}
This measures the alignment of the energy profiles $(\|D_k\|_F, \|O_k\|_F)$ between tasks.
\end{definition}

\begin{lemma}[Exact full-to-block-diagonal decomposition]
\label{lem:full-to-block-exact}
For any two tasks $i, j$ satisfying Assumption~\ref{ass:nondegeneracy}:
\begin{equation}
\boxed{
    A_F^{\mathrm{full}}(i,j) = w_{\mathrm{blk}} \cdot A_{\mathrm{blk}}(i,j) + w_{\mathrm{off}} \cdot A_{\mathrm{off}}(i,j)
}
\label{eq:full-weighted-sum}
\end{equation}
This is an \textbf{exact identity}.
\end{lemma}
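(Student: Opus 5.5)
The plan is to expand the Frobenius inner product of the full Fisher matrices using the disjoint-support split $\mathcal{F}_k = D_k + O_k$, and then renormalize each piece by its own norm. By bilinearity,
\[
\langle \mathcal{F}_i,\mathcal{F}_j\rangle_F = \langle D_i,D_j\rangle_F + \langle D_i,O_j\rangle_F + \langle O_i,D_j\rangle_F + \langle O_i,O_j\rangle_F .
\]
The two mixed terms vanish. The Frobenius inner product is a sum of entrywise products, and $D_i$ is supported only on the diagonal blocks $(\ell,\ell)$ while $O_j$ is supported only on the off-diagonal blocks $(\ell,m)$ with $\ell\neq m$. Every entrywise product $D_i[p,q]\,O_j[p,q]$ is therefore zero, and the same argument applies to $\langle O_i,D_j\rangle_F$. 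This is the same disjoint-support observation already used to obtain $\langle D_k,O_k\rangle_F=0$, now applied across two tasks.

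Next I divide by $\|\mathcal{F}_i\|_F\|\mathcal{F}_j\|_F$, which is positive under Assumption~\ref{ass:nondegeneracy} since $\|\mathcal{F}_k\|_F^2=\|D_k\|_F^2+\|O_k\|_F^2\ge\|D_k\|_F^2>0$. In each surviving term I multiply and divide by the norms of its own pieces:
\[
\frac{\langle D_i,D_j\rangle_F}{\|\mathcal{F}_i\|_F\|\mathcal{F}_j\|_F} = \frac{\|D_i\|_F\|D_j\|_F}{\|\mathcal{F}_i\|_F\|\mathcal{F}_j\|_F}\cdot\frac{\langle D_i,D_j\rangle_F}{\|D_i\|_F\|D_j\|_F} = w_{\mathrm{blk}}\,A_{\mathrm{blk}}(i,j).
\]
The off-diagonal term becomes $w_{\mathrm{off}}A_{\mathrm{off}}(i,j)$ in exactly the same way, using Definitions~\ref{def:alignment-diagnostics} and~\ref{def:w-blk}. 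Adding the two gives \eqref{eq:full-weighted-sum}.

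The only delicate step is the degenerate off-diagonal case. If $\|O_i\|_F=0$ or $\|O_j\|_F=0$, then $\langle O_i,O_j\rangle_F=0$, so under the $\varepsilon$-regularized convention both $w_{\mathrm{off}}$ (computed from norms) and $A_{\mathrm{off}}$ (set to $0$) make the off-diagonal contribution vanish, and the identity still holds. I will note this explicitly. No inequality is needed anywhere, so the statement is an exact identity.
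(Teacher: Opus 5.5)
Your proposal is correct and follows the paper's proof. It kills the cross terms $\langle D_i,O_j\rangle_F$ and $\langle O_i,D_j\rangle_F$ by disjoint block support, then renormalizes the two surviving inner products by their own norms to get $w_{\mathrm{blk}}A_{\mathrm{blk}}(i,j)+w_{\mathrm{off}}A_{\mathrm{off}}(i,j)$. Your remarks that $\|\mathcal{F}_k\|_F\ge\|D_k\|_F>0$ and that the zero-$O_k$ case still satisfies the identity under the norm-based weights are valid details the paper leaves implicit.
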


\begin{proof}
By disjoint block support, the cross terms vanish identically:
\[
    \langle D_i, O_j \rangle_F = 0, \quad \langle O_i, D_j \rangle_F = 0.
\]
This follows from block structure: $D_i$ has nonzero entries only on diagonal blocks $(\ell, \ell)$, while $O_j$ has nonzero entries only on off-diagonal blocks $(\ell, m)$ with $\ell \neq m$.

Therefore:
\[
    \langle \mathcal{F}_i, \mathcal{F}_j \rangle_F = \langle D_i, D_j \rangle_F + \langle O_i, O_j \rangle_F.
\]

Expanding $A_F^{\mathrm{full}}$:
\begin{align*}
    A_F^{\mathrm{full}} &= \frac{\langle D_i, D_j \rangle_F + \langle O_i, O_j \rangle_F}{\|\mathcal{F}_i\|_F \|\mathcal{F}_j\|_F} \\
    &= \frac{\|D_i\|_F \|D_j\|_F}{\|\mathcal{F}_i\|_F \|\mathcal{F}_j\|_F} \cdot A_{\mathrm{blk}} + \frac{\|O_i\|_F \|O_j\|_F}{\|\mathcal{F}_i\|_F \|\mathcal{F}_j\|_F} \cdot A_{\mathrm{off}} \\
    &= w_{\mathrm{blk}} \cdot A_{\mathrm{blk}} + w_{\mathrm{off}} \cdot A_{\mathrm{off}}. \qedhere
\end{align*}
\end{proof}

\paragraph{Exact gap identity.}
Rewriting the weighted sum using the profile cosine:
\begin{equation}
    A_F^{\mathrm{full}} = (w_{\mathrm{blk}} + w_{\mathrm{off}}) A_{\mathrm{blk}} + w_{\mathrm{off}}(A_{\mathrm{off}} - A_{\mathrm{blk}}) = c_r \cdot A_{\mathrm{blk}} + w_{\mathrm{off}}(A_{\mathrm{off}} - A_{\mathrm{blk}}).
\end{equation}
Thus the gap has the exact form:
\begin{equation}
\boxed{
    A_F^{\mathrm{full}} - A_{\mathrm{blk}} = (c_r - 1) A_{\mathrm{blk}} + w_{\mathrm{off}} (A_{\mathrm{off}} - A_{\mathrm{blk}})
}
\label{eq:full-gap-exact}
\end{equation}

\begin{definition}[Off-diagonal non-adversariality]
\label{def:delta-off}
Define the \textbf{off-diagonal discrepancy}:
\begin{equation}
    \Delta_{\mathrm{off}} := |A_{\mathrm{off}} - A_{\mathrm{blk}}|.
    \label{eq:delta-off-def}
\end{equation}
This measures how ``adversarial'' the off-diagonal alignment is relative to the block-diagonal alignment. Note that $A_{\mathrm{off}}$ can be negative since off-diagonal blocks $O_k$ are not PSD, but $|A_{\mathrm{off}}| \le 1$ still holds by Cauchy--Schwarz on the Frobenius inner product.
\end{definition}

\begin{corollary}[Sharp bound via profile cosine and non-adversariality]
\label{cor:full-to-block-sharp}
Since $|A_{\mathrm{blk}}| \le 1$:
\begin{equation}
\boxed{
    |A_F^{\mathrm{full}} - A_{\mathrm{blk}}| \le (1 - c_r) + w_{\mathrm{off}} \cdot \Delta_{\mathrm{off}}
}
\label{eq:full-to-block-sharp}
\end{equation}
This bound does not require the energy ratios $r_i$ or $r_j$ to be less than $1$.
\end{corollary}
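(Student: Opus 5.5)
The plan is to start from the exact gap identity~\eqref{eq:full-gap-exact} established right after Lemma~\ref{lem:full-to-block-exact}:
\[
A_F^{\mathrm{full}} - A_{\mathrm{blk}} = (c_r - 1) A_{\mathrm{blk}} + w_{\mathrm{off}} (A_{\mathrm{off}} - A_{\mathrm{blk}}).
\]
I will take absolute values and apply the triangle inequality. This bounds the gap by $|c_r-1|\,|A_{\mathrm{blk}}| + w_{\mathrm{off}}\,|A_{\mathrm{off}}-A_{\mathrm{blk}}|$. Here I use that $w_{\mathrm{off}}\ge 0$, which holds because it is a ratio of products of Frobenius norms (Definition~\ref{def:w-blk}).

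Two facts then finish the bound. First, $c_r\in(0,1]$ by Remark~\ref{rmk:weights-not-convex}, via Cauchy--Schwarz on the profile vectors $(\|D_k\|_F,\|O_k\|_F)$. Hence $|c_r-1| = 1-c_r$. Second, $|A_{\mathrm{blk}}|\le 1$ by Cauchy--Schwarz for the Frobenius inner product. Together these give $|c_r-1|\,|A_{\mathrm{blk}}|\le 1-c_r$. The second term is exactly $w_{\mathrm{off}}\Delta_{\mathrm{off}}$ by Definition~\ref{def:delta-off}.

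For the remark that no condition $r_k<1$ is needed, I will note that nothing in the argument uses the size of $r_i$ or $r_j$. The only inputs are nonnegativity of the weights and the two Cauchy--Schwarz bounds. I will also handle the degenerate cases. If $\|O_k\|_F=0$, then $w_{\mathrm{off}}=0$ and the term involving $A_{\mathrm{off}}$ drops out (with the $\varepsilon$-regularized convention setting $A_{\mathrm{off}}=0$), so the bound still holds. In that case the norm-based form of $c_r$ should be used rather than the $r$-form.

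There is no real obstacle. The only point needing care is the sign of $c_r-1$: the inequality $c_r\le 1$ must come from Cauchy--Schwarz on the profile vectors, not from assuming that the weights form a convex combination.
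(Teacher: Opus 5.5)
Your proposal is correct and matches the paper's argument: the paper also obtains the bound from the exact gap identity~\eqref{eq:full-gap-exact} by the triangle inequality, using $w_{\mathrm{off}}\ge 0$, $c_r\le 1$ (Remark~\ref{rmk:weights-not-convex}), and $|A_{\mathrm{blk}}|\le 1$. Your treatment of the degenerate case $\|O_k\|_F=0$ goes slightly beyond the paper and is consistent with its $\varepsilon$-regularized convention.
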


\paragraph{Why this works for deep nets.}
The bound has two measurable terms; each is small when:
\begin{itemize}
    \item \textbf{Profile mismatch}: $(1 - c_r)$ is small when $r_i \approx r_j$ (tasks have similar cross-layer energy profiles).
    \item \textbf{Adversarial off-diagonal}: $w_{\mathrm{off}} \cdot \Delta_{\mathrm{off}}$ is small when $\Delta_{\mathrm{off}}$ is small (off-diagonal alignment tracks block-diagonal).
\end{itemize}
When tasks share similar architectures and training regimes, both conditions typically hold.

\subsection{Step 2: Block-Diagonal to Head Fisher}

Now split the block-diagonal part into head vs.\ rest:
\begin{itemize}
    \item \textbf{Head block}: $H_k := \mathcal{F}^{(k)}_{LL}$
    \item \textbf{Non-head blocks}: $R_k := \mathrm{blkdiag}(\mathcal{F}^{(k)}_{11}, \ldots, \mathcal{F}^{(k)}_{L-1,L-1})$
\end{itemize}
Again, these have disjoint support: $\langle H_k, R_k \rangle_F = 0$ and $\|D_k\|_F^2 = \|H_k\|_F^2 + \|R_k\|_F^2$.

\begin{definition}[Head dominance diagnostics]
\label{def:head-diagnostics}
Define:
\begin{align}
    A_{\mathrm{head}}(i,j) &:= \frac{\langle H_i, H_j \rangle_F}{\|H_i\|_F \|H_j\|_F} & &\text{(head alignment)}, \label{eq:A-head-def}\\
    A_{\mathrm{rest}}(i,j) &:= \frac{\langle R_i, R_j \rangle_F}{\|R_i\|_F \|R_j\|_F} & &\text{(non-head alignment)}, \label{eq:A-rest-def}\\
    \tau_k &:= \frac{\|R_k\|_F}{\|H_k\|_F} & &\text{(non-head energy ratio; defined when $\|H_k\|_F>0$)}. \label{eq:tau-def}
\end{align}
\end{definition}
Since $H_k$ is the head block, $A_{\mathrm{head}}(i,j)$ coincides with the head Fisher alignment $A_F^{\mathrm{head}}(i,j)$.

\begin{definition}[Head/rest weights and profile cosine]
\label{def:head-weights}
Define the following weights:
\begin{align}
    w_{\mathrm{head}} &:= \frac{\|H_i\|_F \|H_j\|_F}{\|D_i\|_F \|D_j\|_F} = \frac{1}{\sqrt{1+\tau_i^2}\sqrt{1+\tau_j^2}}, \label{eq:wH-def}\\
    w_{\mathrm{rest}} &:= \frac{\|R_i\|_F \|R_j\|_F}{\|D_i\|_F \|D_j\|_F} = \frac{\tau_i \tau_j}{\sqrt{1+\tau_i^2}\sqrt{1+\tau_j^2}} = \tau_i \tau_j \cdot w_{\mathrm{head}}, \label{eq:wR-def}\\
    c_\tau &:= w_{\mathrm{head}} + w_{\mathrm{rest}} = \frac{1 + \tau_i \tau_j}{\sqrt{1+\tau_i^2}\sqrt{1+\tau_j^2}} \in (0, 1]. \label{eq:c-tau-def}
\end{align}
As with $c_r$, we have $c_\tau \le 1$ by Cauchy--Schwarz, with equality iff $\tau_i = \tau_j$.
The equalities involving $\tau_i,\tau_j$ assume $\|H_k\|_F>0$; otherwise use the norm-based first equality.
\end{definition}

\begin{lemma}[Exact block-diagonal to head decomposition]
\label{lem:block-to-head-exact}
For any two tasks $i, j$ satisfying Assumption~\ref{ass:nondegeneracy}:
\begin{equation}
\boxed{
    A_{\mathrm{blk}}(i,j) = w_{\mathrm{head}} \cdot A_{\mathrm{head}}(i,j) + w_{\mathrm{rest}} \cdot A_{\mathrm{rest}}(i,j)
}
\label{eq:blk-weighted-sum}
\end{equation}
This is an \textbf{exact identity}.
\end{lemma}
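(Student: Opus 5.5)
The plan mirrors the proof of Lemma~\ref{lem:full-to-block-exact}, with the pair $(D_k,O_k)$ replaced by $(H_k,R_k)$. The whole argument rests on disjoint block support inside the block-diagonal matrix $D_k$.

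First, I record the support facts. $H_k$ is nonzero only on the head diagonal block $(L,L)$. $R_k$ is nonzero only on the diagonal blocks $(\ell,\ell)$ with $\ell<L$. Because these coordinate sets are disjoint, the Frobenius inner product of any matrix supported on the first set with any matrix supported on the second vanishes. In particular, for the two tasks we get
\[
\langle H_i,R_j\rangle_F=0,\qquad \langle R_i,H_j\rangle_F=0.
\]
Since $D_k=H_k+R_k$, bilinearity then gives
\[
\langle D_i,D_j\rangle_F=\langle H_i,H_j\rangle_F+\langle R_i,R_j\rangle_F.
\]

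Second, I divide this identity by $\|D_i\|_F\|D_j\|_F$, which is positive by Assumption~\ref{ass:nondegeneracy}. I then multiply and divide each term on the right by the matching norm product:
\begin{itemize}
\item the head term by $\|H_i\|_F\|H_j\|_F$;
\item the rest term by $\|R_i\|_F\|R_j\|_F$.
\end{itemize}
These norms are also positive under Assumption~\ref{ass:nondegeneracy}. This produces exactly $w_{\mathrm{head}}A_{\mathrm{head}}+w_{\mathrm{rest}}A_{\mathrm{rest}}$, using the norm-based first equalities in \eqref{eq:wH-def} and \eqref{eq:wR-def}.

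Third, for completeness, I check that the $\tau$-forms of the weights agree with the norm forms. This follows from $\|D_k\|_F^2=\|H_k\|_F^2+\|R_k\|_F^2$, which is the case $i=j$ of the first step, so $\|D_k\|_F=\|H_k\|_F\sqrt{1+\tau_k^2}$.

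I expect no real obstacle. The only point needing care is the degenerate case in which $\|H_k\|_F$ or $\|R_k\|_F$ is zero. There I would invoke the $\varepsilon$-regularized convention: the corresponding term is then identically zero on both sides, so the identity still holds.
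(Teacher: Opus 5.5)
Your proposal is correct and matches the paper's proof. The paper runs the same disjoint-block-support argument as Lemma~\ref{lem:full-to-block-exact}, with $(D,O,r,w_{\mathrm{blk}},w_{\mathrm{off}})$ replaced by $(H,R,\tau,w_{\mathrm{head}},w_{\mathrm{rest}})$ and $\langle H_i,R_j\rangle_F=0$. Your degenerate-case remark is unnecessary, since Assumption~\ref{ass:nondegeneracy} already guarantees $\|H_k\|_F,\|R_k\|_F>0$.
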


\begin{proof}
Identical to Lemma~\ref{lem:full-to-block-exact}, with $(D, O, r, w_{\mathrm{blk}}, w_{\mathrm{off}}) \to (H, R, \tau, w_{\mathrm{head}}, w_{\mathrm{rest}})$ and using that $\langle H_i, R_j \rangle_F = 0$ by disjoint block support.
\end{proof}

\paragraph{Exact gap identity.}
\begin{equation}
\boxed{
    A_{\mathrm{blk}} - A_{\mathrm{head}} = (c_\tau - 1) A_{\mathrm{head}} + w_{\mathrm{rest}} (A_{\mathrm{rest}} - A_{\mathrm{head}})
}
\label{eq:blk-gap-exact}
\end{equation}

\begin{definition}[Non-head discrepancy]
\label{def:delta-rest}
Define the \textbf{non-head discrepancy}:
\begin{equation}
    \Delta_{\mathrm{rest}} := |A_{\mathrm{rest}} - A_{\mathrm{head}}|.
    \label{eq:delta-rest-def}
\end{equation}
\end{definition}

\begin{corollary}[Sharp bound via profile cosine and non-adversariality]
\label{cor:head-dominance-sharp}
Since $|A_{\mathrm{head}}| \le 1$:
\begin{equation}
\boxed{
    |A_{\mathrm{blk}} - A_{\mathrm{head}}| \le (1 - c_\tau) + w_{\mathrm{rest}} \cdot \Delta_{\mathrm{rest}}
}
\label{eq:blk-to-head-sharp}
\end{equation}
This bound does not require $\tau_i$ or $\tau_j$ to be less than $1$.
\end{corollary}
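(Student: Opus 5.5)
The statement to prove is Corollary~\ref{cor:head-dominance-sharp}. It is the exact analogue of Corollary~\ref{cor:full-to-block-sharp}, so I will follow the same route: start from the exact gap identity \eqref{eq:blk-gap-exact}, apply the triangle inequality, and bound each of the two terms separately.

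The first step is to make sure the identity itself is justified. Lemma~\ref{lem:block-to-head-exact} gives $A_{\mathrm{blk}} = w_{\mathrm{head}}A_{\mathrm{head}} + w_{\mathrm{rest}}A_{\mathrm{rest}}$. The lemma relies on $\langle H_i,R_j\rangle_F=0$, which holds by disjoint block support. Adding and subtracting $w_{\mathrm{rest}}A_{\mathrm{head}}$ and using $c_\tau = w_{\mathrm{head}}+w_{\mathrm{rest}}$ from Definition~\ref{def:head-weights} gives
\[
A_{\mathrm{blk}} - A_{\mathrm{head}} = (c_\tau-1)A_{\mathrm{head}} + w_{\mathrm{rest}}(A_{\mathrm{rest}}-A_{\mathrm{head}}).
\]

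The second step bounds the two terms. For the first term:
\begin{itemize}
\item Cauchy--Schwarz in the Frobenius inner product gives $|A_{\mathrm{head}}|\le 1$.
\item Cauchy--Schwarz on the vectors $(1,\tau_i)$ and $(1,\tau_j)$ gives $c_\tau\in(0,1]$; in the degenerate case one uses the norm-based form $(\|H_k\|_F,\|R_k\|_F)$ instead.
\item Together these give $|(c_\tau-1)A_{\mathrm{head}}|\le 1-c_\tau$.
\end{itemize}
For the second term, $w_{\mathrm{rest}}\ge 0$ because it is a ratio of norms, so it is bounded by $w_{\mathrm{rest}}\Delta_{\mathrm{rest}}$ by Definition~\ref{def:delta-rest}. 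Summing the two bounds yields \eqref{eq:blk-to-head-sharp}.

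No step of the argument ever assumes $\tau_k<1$, which is exactly the claim in the statement's final sentence. The only point that needs care is the degenerate case where $\|R_k\|_F=0$ and $A_{\mathrm{rest}}$ is undefined. There the $\varepsilon$-regularization convention sets $w_{\mathrm{rest}}=0$, and the term vanishes regardless of how $A_{\mathrm{rest}}$ is assigned. Beyond that bookkeeping I expect no real obstacle.
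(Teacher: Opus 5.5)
Your proposal is correct and follows the paper's own argument: you apply the triangle inequality to the exact gap identity \eqref{eq:blk-gap-exact}, using $|A_{\mathrm{head}}|\le 1$ and $c_\tau\le 1$ to bound the first term and $w_{\mathrm{rest}}\ge 0$ to turn the second term into $w_{\mathrm{rest}}\Delta_{\mathrm{rest}}$. Your treatment of the degenerate case through the norm-based weights is a harmless addition that the paper leaves implicit.
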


\subsection{Combined Full-Network Theorem}

\begin{theorem}[CKA to full-network Fisher: sharp bound]
\label{thm:full-network}
Under Assumption~\ref{ass:nondegeneracy} and finite fourth moments (so all alignments are well-defined, with $\varepsilon$-regularization in degenerate cases):
\begin{equation}
\boxed{
    |\mathrm{CKA} - A_F^{\mathrm{full}}| \le \underbrace{|\mathrm{CKA} - A_{\mathrm{head}}|}_{\text{head-layer term}} + \underbrace{\big[(1 - c_\tau) + w_{\mathrm{rest}} \Delta_{\mathrm{rest}}\big]}_{\text{non-head contribution}} + \underbrace{\big[(1 - c_r) + w_{\mathrm{off}} \Delta_{\mathrm{off}}\big]}_{\text{cross-layer contribution}}
}
\label{eq:full-network-sharp}
\end{equation}
where all terms are measurable or can be bounded from activations and gradient samples.
\end{theorem}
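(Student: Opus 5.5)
The plan is to chain the triangle inequality through the two intermediate alignments $A_{\mathrm{head}}$ and $A_{\mathrm{blk}}$, and then bound each link with results already proved in this appendix. We write
\[
|\mathrm{CKA} - A_F^{\mathrm{full}}|
\le |\mathrm{CKA} - A_{\mathrm{head}}| + |A_{\mathrm{head}} - A_{\mathrm{blk}}| + |A_{\mathrm{blk}} - A_F^{\mathrm{full}}|.
\]
The first term is kept as is; it is the head-layer term. The statement does not require bounding it further. Theorem~\ref{thm:cka-fisher-gap} can supply such a bound when desired, and this is where the remark that the terms are ``measurable or can be bounded'' comes from. We also note that $A_{\mathrm{head}}$ equals the head Fisher alignment $A_F^{\mathrm{head}}$, as observed after Definition~\ref{def:head-diagnostics}. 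This lets us identify the first term with the CKA--head gap.

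For the second term, we invoke Corollary~\ref{cor:head-dominance-sharp}, which gives $|A_{\mathrm{blk}} - A_{\mathrm{head}}| \le (1-c_\tau) + w_{\mathrm{rest}}\Delta_{\mathrm{rest}}$. That corollary comes from the exact gap identity \eqref{eq:blk-gap-exact} together with $|A_{\mathrm{head}}|\le 1$. For the third term, we invoke Corollary~\ref{cor:full-to-block-sharp}, which gives $|A_F^{\mathrm{full}} - A_{\mathrm{blk}}| \le (1-c_r) + w_{\mathrm{off}}\Delta_{\mathrm{off}}$. Summing the three bounds yields \eqref{eq:full-network-sharp} directly.

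The only real care needed is well-definedness, and we expect this to be the main obstacle, albeit a mild one. Both corollaries require Assumption~\ref{ass:nondegeneracy}. This covers $\|D_k\|_F,\|O_k\|_F,\|H_k\|_F,\|R_k\|_F>0$, so that every cosine and weight is defined. The finite fourth moment hypothesis guarantees that the Fisher matrices, and hence their Frobenius norms, exist. We also need $|A_{\mathrm{head}}|,|A_{\mathrm{blk}}|\le 1$, which follows from Cauchy--Schwarz. For the degenerate cases, we check that the $\varepsilon$-regularized convention preserves the exact decompositions of Lemmas~\ref{lem:full-to-block-exact} and~\ref{lem:block-to-head-exact}. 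The weights are computed from regularized norms, so a zero block contributes a zero inner product and a zero weight. The regularized cosines still have absolute value at most $1$, and hence both corollaries, and the summed bound, continue to hold.
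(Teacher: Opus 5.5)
Your proposal is correct and follows the paper's own argument exactly. You apply the triangle inequality through $A_{\mathrm{head}}$ and $A_{\mathrm{blk}}$, bound the two middle links by Corollaries~\ref{cor:head-dominance-sharp} and~\ref{cor:full-to-block-sharp}, and leave the head-layer term as is.

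Your degenerate-case discussion goes beyond what the theorem needs, since it assumes Assumption~\ref{ass:nondegeneracy}, and as worded it is slightly off. A zero block gets weight exactly zero only because the paper computes the weights from the unregularized norms and uses $\varepsilon$-regularized norms only inside the cosines. With regularized norms in the weights, a zero block would receive a weight of order $\varepsilon$ rather than zero.
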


Whenever a head-layer bound of the form $|\mathrm{CKA} - A_{\mathrm{head}}| \le \varepsilon_{\mathrm{head}}$ is available, it plugs directly into \eqref{eq:full-network-sharp} to yield a closed-form CKA-to-full-network bound (e.g., Theorem~\ref{thm:cka-fisher-gap} in Appendix~\ref{app:cka-fisher-gap} under matched probe/input distributions).

\begin{proof}
By triangle inequality and the bounds from Corollaries~\ref{cor:full-to-block-sharp} and~\ref{cor:head-dominance-sharp}:
\begin{align*}
    |\mathrm{CKA} - A_F^{\mathrm{full}}| &\le |\mathrm{CKA} - A_{\mathrm{head}}| + |A_{\mathrm{head}} - A_{\mathrm{blk}}| + |A_{\mathrm{blk}} - A_F^{\mathrm{full}}| \\
    &\le |\mathrm{CKA} - A_{\mathrm{head}}| + \big[(1 - c_\tau) + w_{\mathrm{rest}} \Delta_{\mathrm{rest}}\big] + \big[(1 - c_r) + w_{\mathrm{off}} \Delta_{\mathrm{off}}\big]. \qedhere
\end{align*}
\end{proof}

\paragraph{Why this theorem ``works'' for deep nets.}
Unlike worst-case bounds that become vacuous when $r, \tau > 1$, this bound remains informative because it has four measurable terms that are small when:
\begin{itemize}
    \item \textbf{Cross-layer profile mismatch}: $(1 - c_r)$ is small when $r_i \approx r_j$.
    \item \textbf{Head/rest profile mismatch}: $(1 - c_\tau)$ is small when $\tau_i \approx \tau_j$.
    \item \textbf{Adversarial off-diagonal}: $w_{\mathrm{off}} \cdot \Delta_{\mathrm{off}}$ is small when $\Delta_{\mathrm{off}}$ is small.
    \item \textbf{Adversarial non-head}: $w_{\mathrm{rest}} \cdot \Delta_{\mathrm{rest}}$ is small when $\Delta_{\mathrm{rest}}$ is small.
\end{itemize}
When tasks share similar architectures/training (so $r_i \approx r_j$, $\tau_i \approx \tau_j$), the profile cosines $c_r, c_\tau \approx 1$.
When the network exhibits consistent alignment patterns across blocks, $\Delta_{\mathrm{off}}, \Delta_{\mathrm{rest}}$ are small.
Both conditions typically hold in practice.

\subsection{Estimating Diagnostics from Gradient Samples}

All quantities can be estimated from mini-batch gradient samples:

\paragraph{Weights and profile cosines.}
For each layer $l$, collect per-example gradients $g_l^{(t)}$. Estimate Fisher block norms:
\[
    \|\hat{\mathcal{F}}_{lm}\|_F^2 \approx \left\|\frac{1}{n}\sum_{t=1}^n g_l^{(t)} g_m^{(t)\top}\right\|_F^2.
\]
Then compute $w_{\mathrm{blk}}, w_{\mathrm{off}}, c_r$ from~\eqref{eq:w-blk-def}--\eqref{eq:c-rho-def} and $w_{\mathrm{head}}, w_{\mathrm{rest}}, c_\tau$ from~\eqref{eq:wH-def}--\eqref{eq:c-tau-def}.
This plug-in estimator is biased for finite $n$; an unbiased alternative is the U-statistic
$\frac{1}{n(n-1)}\sum_{t\neq t'} \langle g_l^{(t)}, g_l^{(t')}\rangle \langle g_m^{(t)}, g_m^{(t')}\rangle$.

\paragraph{Discrepancies $\Delta_{\mathrm{off}}, \Delta_{\mathrm{rest}}$.}
Compute cross-task alignments $A_{\mathrm{off}}, A_{\mathrm{rest}}, A_{\mathrm{blk}}, A_{\mathrm{head}}$ from gradient inner products, then take differences.

\subsection{Exact Full-Network Decomposition with Depth Coupling}
\label{app:full-network-exact}

While the previous subsections provide bounds relating full-network Fisher to head Fisher, here we establish an \textbf{exact identity} analogous to Theorem~\ref{thm:three-factor} but for the full network.

\begin{definition}[Depth-profile vectors]
\label{def:depth-profile}
For independent samples $x\sim\mathcal{D}_i$ and $x'\sim\mathcal{D}_j$, with layer-wise activations $a_{\ell-1}^{(i)}, a_{\ell-1}^{(j)}$ and backpropagated errors $\delta_\ell^{(i)}, \delta_\ell^{(j)}$, define:
\begin{align}
u \in \mathbb{R}^L, \quad u_\ell &:= \langle a_{\ell-1}^{(i)}, a_{\ell-1}^{(j)} \rangle \quad \text{(forward similarity profile)}, \\
v \in \mathbb{R}^L, \quad v_\ell &:= \langle \delta_\ell^{(i)}, \delta_\ell^{(j)} \rangle \quad \text{(backward similarity profile)}.
\end{align}
\end{definition}

\paragraph{Rank-1 per-layer gradients.}
For affine layers applied to a single activation vector (no parameter sharing), the per-example gradient satisfies $g_\ell=\mathrm{vec}(\delta_\ell a_{\ell-1}^\top)$ (Lemma~\ref{lem:layer-kronecker}).
In this rank-1 setting, the full gradient inner product factorizes across depth:
\begin{equation}
g^{(i)\top} g^{(j)} = \sum_{\ell=1}^L \langle a_{\ell-1}^{(i)}, a_{\ell-1}^{(j)} \rangle \cdot \langle \delta_\ell^{(i)}, \delta_\ell^{(j)} \rangle = \sum_{\ell=1}^L u_\ell v_\ell = u^\top v.
\label{eq:gradient-depth-product}
\end{equation}
Biases can be incorporated by augmenting activations with a constant $1$, as in the head-layer analysis.

\begin{definition}[Stacked second moments]
\label{def:stacked-moments}
For task $k$, define block-diagonal matrices of stacked second moments:
\begin{align}
M_{A,k} &:= \mathrm{blkdiag}\big(\mathbb{E}[a_0 a_0^\top], \ldots, \mathbb{E}[a_{L-1} a_{L-1}^\top]\big), \\
\Gamma_{\Delta,k} &:= \mathrm{blkdiag}\big(\mathbb{E}[\delta_1 \delta_1^\top], \ldots, \mathbb{E}[\delta_L \delta_L^\top]\big).
\end{align}
\end{definition}

\begin{definition}[Full-network coupling coefficient]
\label{def:rho-full}
Let $\mathbb{E}_{i,j}$ denote expectation over independent draws $x\sim\mathcal{D}_i$ and $x'\sim\mathcal{D}_j$ (and the resulting depth-profile vectors $u,v$ from Definition~\ref{def:depth-profile}).
Define:
\begin{equation}
\chi^{\mathrm{full}}_{ij} := \frac{\mathbb{E}_{i,j}[(u^\top v)^2]}{\mathbb{E}_{i,j}[\|u\|^2] \cdot \mathbb{E}_{i,j}[\|v\|^2]}, \quad
\rho^{\mathrm{full}}_{ij} := \frac{\chi^{\mathrm{full}}_{ij}}{\sqrt{\chi^{\mathrm{full}}_{ii} \chi^{\mathrm{full}}_{jj}}}.
\end{equation}
\end{definition}

\begin{theorem}[Full-network three-factor decomposition]
\label{thm:full-network-exact}
For an $L$-layer feedforward network with affine weight matrices and rank-1 per-layer gradients (no parameter sharing), full-network Fisher alignment satisfies the exact identity:
\begin{equation}
\boxed{
A_F^{\mathrm{full}}(i,j) = S_{\mathrm{cov}}(M_{A,i}, M_{A,j}) \cdot S_{\mathrm{cov}}(\Gamma_{\Delta,i}, \Gamma_{\Delta,j}) \cdot \rho^{\mathrm{full}}_{ij}
}
\label{eq:full-network-three-factor}
\end{equation}
This is structurally identical to Theorem~\ref{thm:three-factor} (head layer), with the single-layer quantities replaced by depth-stacked versions.
\end{theorem}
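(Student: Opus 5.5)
The plan is to mirror the proof of Theorem~\ref{thm:three-factor}: express every Frobenius inner product in \eqref{eq:full-network-three-factor} as an expectation over independent cross-task draws, in terms of the depth-profile vectors $u,v$ of Definition~\ref{def:depth-profile}. The identity should then follow by bookkeeping, with no approximation.

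First, the numerator. Applying Lemma~\ref{lem:grad-agreement} to the full gradient gives $\langle \mathcal{F}_i,\mathcal{F}_j\rangle_F=\mathbb{E}_{i,j}[(g^{(i)\top}g^{(j)})^2]$. Under the rank-1, no-sharing hypothesis, the depth factorization \eqref{eq:gradient-depth-product} gives $g^{(i)\top}g^{(j)}=u^\top v$. Hence
\[
\langle \mathcal{F}_i,\mathcal{F}_j\rangle_F=\mathbb{E}_{i,j}[(u^\top v)^2].
\]

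Second, the marginal factors. Since $M_{A,k}$ and $\Gamma_{\Delta,k}$ are block diagonal (Definition~\ref{def:stacked-moments}), their Frobenius inner products split over layers:
\[
\langle M_{A,i},M_{A,j}\rangle_F=\sum_\ell \langle \mathbb{E}[a_{\ell-1}a_{\ell-1}^\top],\mathbb{E}[a'_{\ell-1}a_{\ell-1}'^\top]\rangle_F .
\]
By independence of the two draws and $\mathrm{tr}(aa^\top a'a'^\top)=(a^\top a')^2$, each summand equals $\mathbb{E}_{i,j}[u_\ell^2]$. So $\langle M_{A,i},M_{A,j}\rangle_F=\mathbb{E}_{i,j}\|u\|^2$, and likewise $\langle \Gamma_{\Delta,i},\Gamma_{\Delta,j}\rangle_F=\mathbb{E}_{i,j}\|v\|^2$. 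Combined with the first step, this gives
\[
\langle \mathcal{F}_i,\mathcal{F}_j\rangle_F=\chi^{\mathrm{full}}_{ij}\,\langle M_{A,i},M_{A,j}\rangle_F\,\langle \Gamma_{\Delta,i},\Gamma_{\Delta,j}\rangle_F .
\]

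Third, normalization. Specializing the same identity to $i=j$ (two independent draws from the same task) gives $\|\mathcal{F}_k\|_F^2=\chi^{\mathrm{full}}_{kk}\|M_{A,k}\|_F^2\|\Gamma_{\Delta,k}\|_F^2$. Dividing the cross identity by $\|\mathcal{F}_i\|_F\|\mathcal{F}_j\|_F$ then yields
\[
A_F^{\mathrm{full}}(i,j)=S_{\mathrm{cov}}(M_{A,i},M_{A,j})\,S_{\mathrm{cov}}(\Gamma_{\Delta,i},\Gamma_{\Delta,j})\,\frac{\chi^{\mathrm{full}}_{ij}}{\sqrt{\chi^{\mathrm{full}}_{ii}\chi^{\mathrm{full}}_{jj}}}.
\]
The last factor is $\rho^{\mathrm{full}}_{ij}$ by Definition~\ref{def:rho-full}.

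The main obstacle is not algebraic but a matter of well-definedness. The ratios $\chi^{\mathrm{full}}$ require $\mathbb{E}\|u\|^2,\mathbb{E}\|v\|^2>0$ and $\chi_{kk}>0$, and dividing by $\|\mathcal{F}_k\|_F$ requires nondegeneracy. I would assume finite fourth moments and nonzero norms, as in the head case. When a cross-task marginal inner product vanishes, both sides are $0$ (the numerator vanishes by Cauchy--Schwarz, since $(u^\top v)^2\le\|u\|^2\|v\|^2$), and I would adopt the same $\rho=1$ / $\varepsilon$-regularization convention used after Theorem~\ref{thm:three-factor}.

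One more point needs explicit care: the positivity of $\chi_{kk}$ and the use of independent (rather than identical) draws in the self-term. The Fisher norm is a V-type expectation over two independent copies, which matches Lemma~\ref{lem:grad-agreement} with $i=j$, so no diagonal correction is needed at the population level.
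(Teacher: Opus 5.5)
Your proposal is correct and follows the paper's proof essentially step for step: the depth factorization gives $\langle \mathcal{F}_i,\mathcal{F}_j\rangle_F=\mathbb{E}_{i,j}[(u^\top v)^2]$; block-diagonality plus independence of the two draws gives $\langle M_{A,i},M_{A,j}\rangle_F=\mathbb{E}_{i,j}\|u\|^2$ and $\langle \Gamma_{\Delta,i},\Gamma_{\Delta,j}\rangle_F=\mathbb{E}_{i,j}\|v\|^2$; and normalizing with the $i=j$ specialization yields $\rho^{\mathrm{full}}_{ij}=\chi^{\mathrm{full}}_{ij}/\sqrt{\chi^{\mathrm{full}}_{ii}\chi^{\mathrm{full}}_{jj}}$. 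Your remarks on nondegeneracy and the zero-marginal convention tighten what the paper leaves implicit (the identity is essentially definitional once $\chi^{\mathrm{full}}$ is introduced); they do not change the route.
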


\begin{proof}
The proof follows the same structure as Theorem~\ref{thm:three-factor}.

From \eqref{eq:gradient-depth-product}, the Fisher inner product becomes:
\begin{equation}
\langle \mathcal{F}_i, \mathcal{F}_j \rangle_F = \mathbb{E}_{i,j}[(g^{(i)\top} g^{(j)})^2] = \mathbb{E}_{i,j}[(u^\top v)^2].
\end{equation}

For the stacked second moments (with independent draws):
\begin{align}
\langle M_{A,i}, M_{A,j} \rangle_F &= \sum_{\ell=1}^L \mathrm{tr}\big(\mathbb{E}[a_{\ell-1}^{(i)} a_{\ell-1}^{(i)\top}] \cdot \mathbb{E}[a_{\ell-1}^{(j)} a_{\ell-1}^{(j)\top}]\big) \\
&= \sum_{\ell=1}^L \mathbb{E}_{i,j}[u_\ell^2] = \mathbb{E}_{i,j}[\|u\|^2], \\
\langle \Gamma_{\Delta,i}, \Gamma_{\Delta,j} \rangle_F &= \sum_{\ell=1}^L \mathbb{E}_{i,j}[v_\ell^2] = \mathbb{E}_{i,j}[\|v\|^2].
\end{align}

Define $\chi^{\mathrm{full}}_{ij}$ such that $\mathbb{E}_{i,j}[(u^\top v)^2] = \chi^{\mathrm{full}}_{ij} \cdot \mathbb{E}_{i,j}[\|u\|^2] \cdot \mathbb{E}_{i,j}[\|v\|^2]$.

Similarly, $\|\mathcal{F}_i\|_F^2 = \chi^{\mathrm{full}}_{ii} \cdot \mathbb{E}_{i,i}[\|u\|^2] \cdot \mathbb{E}_{i,i}[\|v\|^2]$ and likewise $\|\mathcal{F}_j\|_F^2 = \chi^{\mathrm{full}}_{jj} \cdot \mathbb{E}_{j,j}[\|u\|^2] \cdot \mathbb{E}_{j,j}[\|v\|^2]$.
Normalizing to alignment form and defining $\rho^{\mathrm{full}}_{ij} = \chi^{\mathrm{full}}_{ij}/\sqrt{\chi^{\mathrm{full}}_{ii}\chi^{\mathrm{full}}_{jj}}$ yields \eqref{eq:full-network-three-factor}.
\end{proof}

\begin{remark}[Beyond rank-1 layers]
For convolutional or attention layers with parameter sharing, per-example gradients are sums of outer products across positions.
In that regime \eqref{eq:gradient-depth-product} and Theorem~\ref{thm:full-network-exact} require modification; Appendix~\ref{app:shared-params} gives the shared-parameter kernel identity and an unbiased sketch, while the block-decomposition identities here remain exact.
\end{remark}

\paragraph{Geometric interpretation.}
The coupling coefficient $\rho^{\mathrm{full}}$ measures \textbf{depth interference}: whether forward and backward similarity profiles align in depth space. Since $(u^\top v)^2 = \|u\|^2 \|v\|^2 \cos^2\theta$ where $\theta$ is the angle between $u$ and $v$:
\begin{equation}
\chi^{\mathrm{full}}_{ij} = \frac{\mathbb{E}_{i,j}[\|u\|^2 \|v\|^2 \cos^2\theta]}{\mathbb{E}_{i,j}[\|u\|^2] \cdot \mathbb{E}_{i,j}[\|v\|^2]}.
\end{equation}
When forward and backward similarities co-localize in depth (all layers contribute similarly), $\cos\theta \approx 1$ yields constructive interference. Misalignment across depth yields destructive interference.

\begin{corollary}[Constant depth profiles imply head = full]
\label{cor:isometry}
Assume the rank-1 per-layer gradient setting of Theorem~\ref{thm:full-network-exact}.
Suppose that for every pair of samples $(x,x')$ used in the Fisher inner products,
the forward and backward similarity profiles are constant across depth:
\[
u_\ell(x,x') \equiv u_\star(x,x') \quad \text{and} \quad
v_\ell(x,x') \equiv v_\star(x,x') \qquad \forall \ell \in \{1,\dots,L\}.
\]
Equivalently, for each sample pair the per-layer gradient inner products are identical:
\[
\langle g_\ell(x), g_\ell(x')\rangle \equiv \langle g_L(x), g_L(x')\rangle \qquad \forall \ell.
\]
Then for all task pairs $(i,j)$,
\[
A_F^{\mathrm{full}}(i,j) = A_{\mathrm{head}}(i,j).
\]
Moreover, in this regime the depth-coupling term in Theorem~\ref{thm:full-network-exact}
reduces to the head-layer coupling term (it need not equal $1$).
\end{corollary}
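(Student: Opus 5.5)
The statement to prove is Corollary~\ref{cor:isometry}. I will show directly that the full and head Fisher inner products coincide up to a common factor $L^2$. By Eq.~\eqref{eq:gradient-depth-product}, for any sample pair $(x,x')$ we have $g(x)^\top g(x') = \sum_{\ell=1}^L u_\ell v_\ell$. Under the constant-profile hypothesis this equals $L\,u_\star v_\star = L\,\langle g_L(x), g_L(x')\rangle$; the last step uses the head identity $\langle g_L(x),g_L(x')\rangle = u_L v_L$ from Lemma~\ref{lem:layer-kronecker}. I will write the equivalent hypothesis directly as per-layer gradient inner products all equal to the head one. This form is cleaner, since it is exactly what enters the sum.

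Next I will apply Lemma~\ref{lem:grad-agreement}. It gives $\langle \mathcal{F}_i,\mathcal{F}_j\rangle_F = \mathbb{E}_{i,j}[(g^\top g')^2] = L^2\,\mathbb{E}_{i,j}[\langle g_L,g_L'\rangle^2] = L^2 \langle H_i,H_j\rangle_F$. The final equality applies the same lemma to the head block, where $H_k=\mathcal{F}^{(k)}_{LL}$. The identity holds for every task pair, including the self-pairs $(i,i)$ and $(j,j)$, because the hypothesis is assumed for all sample pairs used in the Fisher inner products. Hence $\|\mathcal{F}_k\|_F = L\,\|H_k\|_F$. The factor $L^2$ cancels in the normalized cosine, so $A_F^{\mathrm{full}}(i,j) = A_{\mathrm{head}}(i,j)$, which is the first claim. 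If a head norm vanishes, both sides vanish and the $\varepsilon$-regularized convention makes the two alignments agree.

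For the \emph{moreover} claim, I will plug the constant profiles into Definition~\ref{def:rho-full}. The constant profiles give $\|u\|^2 = L u_\star^2$, $\|v\|^2 = L v_\star^2$, and $(u^\top v)^2 = L^2 u_\star^2 v_\star^2$. So $\chi^{\mathrm{full}}_{ij} = \mathbb{E}[u_\star^2 v_\star^2]/(\mathbb{E}[u_\star^2]\,\mathbb{E}[v_\star^2])$, where the powers of $L$ cancel ($L^2/L^2$). Since $u_\star = u_L$ and $v_\star = v_L$, this is exactly the head-layer $\chi_{ij} = S_{ae}/(S_a S_e)$ from Appendix~\ref{app:linear-time-fisher}, and normalizing gives $\rho^{\mathrm{full}}_{ij} = \rho_{ij}$. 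A quick example, such as $a$ and $e$ dependent within a task, shows that $\rho_{ij}$ need not equal $1$.

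The main obstacle is the quantifier in the hypothesis. The cancellation requires constancy for cross-task pairs and also for within-task pairs, since those determine the norms. I will state explicitly that the hypothesis covers pairs drawn from $\mathcal{D}_i\times\mathcal{D}_i$ and $\mathcal{D}_j\times\mathcal{D}_j$, and note that the first formulation (constant $u_\ell$ and $v_\ell$ separately) implies the second (constant products), which is all the argument uses.
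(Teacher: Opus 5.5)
Your proof is correct. The paper states this corollary without a written proof. The statement's wording (the depth-coupling term ``reduces to the head-layer coupling term'') suggests reading it off Theorem~\ref{thm:full-network-exact} factor by factor. Under constant profiles, $\langle M_{A,i},M_{A,j}\rangle_F=\mathbb{E}_{i,j}\|u\|^2=L\,\mathbb{E}_{i,j}[u_L^2]=L\,\langle M_{a,i},M_{a,j}\rangle_F$, and likewise for $\Gamma_\Delta$ and for the self-pairs. So both moment cosines equal their head counterparts, $\rho^{\mathrm{full}}_{ij}=\rho_{ij}$, and the two three-factor products coincide. You instead get the first claim directly from $\langle\mathcal{F}_i,\mathcal{F}_j\rangle_F=L^2\langle H_i,H_j\rangle_F$ via Eq.~\eqref{eq:gradient-depth-product} and Lemma~\ref{lem:grad-agreement}, never using the three-factor identity. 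This buys a weaker hypothesis. $A_F^{\mathrm{full}}=A_{\mathrm{head}}$ then needs only that the products $u_\ell v_\ell$ be constant in $\ell$, whereas the factor-by-factor route needs $u_\ell$ and $v_\ell$ to be constant separately.

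The same observation means your closing remark (``which is all the argument uses'') is right only for the first claim. Your \emph{moreover} computation uses $\|u\|^2=Lu_\star^2$ and $\|v\|^2=Lv_\star^2$, i.e.\ the stronger formulation, and it has to. Despite the statement's ``Equivalently'', the two formulations are not equivalent, and the weaker one does not give $\rho^{\mathrm{full}}_{ij}=\rho_{ij}$. For instance, take $L=2$, start from $a_0=a_1$ and $\delta_1=\delta_2$ samplewise, and rescale $a_0\mapsto\sqrt{c_k}\,a_0$, $\delta_1\mapsto\delta_1/\sqrt{c_k}$ on task-$k$ samples. Every product $u_\ell v_\ell$ is preserved, so $A_F^{\mathrm{full}}=A_{\mathrm{head}}$ still holds. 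However, $\chi^{\mathrm{full}}_{ij}=\tfrac{4c_ic_j}{(1+c_ic_j)^2}\chi_{ij}$, and $c_i=1$, $c_j=2$ gives $\rho^{\mathrm{full}}_{ij}=\tfrac{10}{9}\rho_{ij}$.

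Also, dependence between $a$ and $e$ does not by itself force $\rho_{ij}\neq 1$; for $d=K=1$ it always equals $1$. You should therefore give an explicit witness. For example, take $L=1$ (so the hypothesis holds trivially), $d=1$, $K=2$, and let $b_1,b_2$ be the standard basis. Let task $i$ be uniform on $(a,e)\in\{(1,b_1),(2,b_2)\}$, and task $j$ have $a=1$ with $e$ uniform on $\{b_1,b_2\}$. Both moment cosines equal $1$, but $A_F^{\mathrm{head}}(i,j)=5/\sqrt{34}$, so $\rho_{ij}\neq 1$.
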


\begin{remark}[A sufficient special case: deep linear orthogonal networks]
In a deep linear network $a_\ell = W_\ell a_{\ell-1}$ with no biases and
orthogonal weights $W_\ell^\top W_\ell = I$, we have
$\langle a_\ell(x), a_\ell(x')\rangle = \langle a_0(x), a_0(x')\rangle$ for all $\ell$.
Backpropagated errors satisfy an analogous invariance, yielding constant depth profiles and
hence $A_F^{\mathrm{full}} = A_{\mathrm{head}}$.
Such networks exhibit perfect dynamical isometry (input--output Jacobian singular values all $1$).
\end{remark}

\paragraph{Relation to profile diagnostics.}
The profile cosine $c_r$ (Definition~\ref{def:c-rho}) and discrepancy $\Delta_{\mathrm{off}}$ (Definition~\ref{def:delta-off}) from the bound analysis are \emph{indirect} measures of depth coupling. Corollary~\ref{cor:full-to-block-sharp} bounds the gap between full and block-diagonal Fisher; the exact theorem above shows the gap is controlled by $\rho^{\mathrm{full}}$ deviating from the head-layer coupling.

\paragraph{Computational note.}
The head theorem (Theorem~\ref{thm:three-factor}, $L=1$) is a special case: the depth profiles reduce to scalars $(u, v \in \mathbb{R}^1)$, so $u^\top v = uv$ and $\rho^{\mathrm{full}} = \rho$.

\subsection{Controlled Transfer Validation (CIFAR-100)}
\label{app:controlled-transfer}

We validate that $S_{\mathrm{cov}}(\Gamma_e)$ predicts transfer in a controlled setting where class overlap can be precisely manipulated.

\paragraph{Setup.}
Using ResNet-18 on CIFAR-100, we train a source task on classes 0--49 (fixed), then measure \emph{linear probe} transfer to target tasks with varying class overlap (0\%, 24\%, 50\%, 74\%, 100\%; i.e., 0/50, 12/50, 25/50, 37/50, 50/50 shared).
Linear probe transfer---training only the classification head while freezing the backbone---matches the head-layer theory assumptions. Because the backbone is shared, representation similarity is constant (CKA $\approx 1$), so any variation in transfer must come from error geometry.
We compare $S_{\mathrm{cov}}(\Gamma_e)$ against standard transferability metrics (LogME, LEEP, H-score, NCE)~\citep{you2021logme,nguyen2020leep,bao2019information,tran2019transferability} that require explicit label information.

\paragraph{Results.}
Table~\ref{tab:controlled-transfer} shows that $S_{\mathrm{cov}}(\Gamma_e)$ tracks class overlap (0.00 at 0\% overlap, 1.00 at 100\%) and predicts linear probe transfer gain as well as dedicated transferability metrics.

\begin{table}[t]
\centering
\caption{Controlled transfer validation (CIFAR-100, ResNet-18, linear probe).
$S_{\mathrm{cov}}(\Gamma_e)$ achieves correlation 0.977 with transfer gain, matching LogME (0.976) and exceeding LEEP (0.964).}
\label{tab:controlled-transfer}
\begin{small}
\begin{tabular}{@{}lccccc@{}}
\toprule
Class Overlap & $S_{\mathrm{cov}}(\Gamma_e)$ & CKA & Probe Gain & Probe Acc \\
\midrule
100\% & 1.00 & 1.00 & 41.8\% & 60.8\% \\
74\% & 0.63 & 1.00 & 35.8\% & 57.3\% \\
50\% & 0.44 & 1.00 & 31.1\% & 51.4\% \\
24\% & 0.20 & 1.00 & 30.2\% & 51.3\% \\
0\% & 0.00 & 1.00 & 27.8\% & 49.5\% \\
\bottomrule
\end{tabular}
\end{small}
\end{table}

\paragraph{Why CKA fails.}
With a frozen backbone, CKA is constant by construction (CKA $\approx 1.00$), so it cannot discriminate transfer quality: all source-target pairs appear identical to representation-only metrics regardless of label compatibility.
$S_{\mathrm{cov}}(\Gamma_e)$ captures the error structure that CKA misses, enabling meaningful source ranking.

\paragraph{Baseline comparison.}
All metrics achieve high correlation with transfer gain ($\rho > 0.96$) because overlap is monotonically related to transfer.
The key advantage of $S_{\mathrm{cov}}(\Gamma_e)$ is scalability: standard LogME/LEEP-style finite-label implementations require explicit label maps or class statistics and are not designed to estimate full vocabulary-scale error covariance; FisherSketch estimates the corresponding joint update geometry in one pass without materializing $\Gamma_e$, using $O(d+K+m)$ working memory plus projection state.

\paragraph{Limitation.}
With only 5 overlap levels, correlation comparisons have limited statistical power.
The vocab-scale experiment (100 domains, 24 candidates/target, 4,188 transfer pairs) provides stronger evidence.

\subsection{Vision Transformer Validation}
\label{app:vit-validation}

We validate the full-network decomposition (Corollary~\ref{cor:full-to-block-sharp}) on Vision Transformers to address concerns about modern attention-based architectures.

\paragraph{Experimental setup.}
We fine-tune ViT-B/16~\citep{dosovitskiy2021image} (pretrained on ImageNet-21k) on four downstream classification tasks: CIFAR-100 (100 classes), Flowers102 (102 classes), Oxford Pets (37 classes), and DTD (47 classes).
Each task is trained with 5 random seeds, yielding 20 models total.
We analyze 50 pairwise task comparisons (40 same-dataset pairs with different seeds, 10 cross-dataset pairs, subsampled).
Images are resized to $224 \times 224$, representing a substantial scale increase from the CIFAR-32 experiments in the main text.

\paragraph{Full parameter partition.}
Unlike prior work that tracks only the classification head, we collect per-example gradients across the \emph{complete} parameter partition: patch embedding, all 12 transformer blocks, final layer norm, and classification head (14 parameter groups total).
This ensures our validation matches the theorem's assumptions---no layers are omitted.
Gradients are computed in float64 precision with 1000 samples per task to ensure numerical stability.

\paragraph{Head gradient comparability across datasets.}
To compare head gradients across datasets with different class counts ($K_{\text{CIFAR}}=100$, $K_{\text{Flowers}}=102$, $K_{\text{Pets}}=37$, $K_{\text{DTD}}=47$), we embed all error and prediction vectors into a shared $K_{\mathrm{shared}} = 286$ dimensional space.
Each dataset occupies a contiguous, disjoint block: CIFAR-100 uses coordinates $[0, 100)$, Flowers102 uses $[100, 202)$, Oxford Pets uses $[202, 239)$, and DTD uses $[239, 286)$.
Predictions use masked softmax (logits set to $-\infty$ outside their respective blocks) and one-hot labels are embedded via the shared-output injection, yielding 286-dimensional error vectors with zeros outside their respective blocks.
Head gradients thus have uniform dimension $K_{\mathrm{shared}} \times d + K_{\mathrm{shared}} = 286 \times 768 + 286 = 219\,934$ parameters.
For cross-dataset pairs, this construction yields \emph{exactly} zero Fisher alignment: the error vectors have disjoint support, so $\langle e_i, e_j \rangle = 0$ for all sample pairs, making head gradient inner products zero.

\paragraph{Key finding: profile cosines remain near-unity.}
Despite the attention-based architecture and larger scale, profile cosines $c_r$ remain extremely high (Table~\ref{tab:vit-validation}):
\begin{equation}
    c_r = 0.9995 \pm 0.0011 \quad \text{(mean $\pm$ std, $n=50$)}.
\end{equation}
This confirms that ViT models trained on related fine-tuning tasks share similar cross-layer energy profiles, even though the absolute energy ratio $r \approx 2.75$ substantially exceeds unity (off-diagonal energy exceeds diagonal energy by $\sim$2.7$\times$).
The deterministic bound is satisfied for all 50 pairs (numerical check), and the mean gap (0.077) closely matches the mean bound (0.078; Table~\ref{tab:vit-validation}).

\begin{table}[t]
  \caption{ViT-B/16 validation of full-network decomposition (Corollary~\ref{cor:full-to-block-sharp}). 20 models (4 datasets $\times$ 5 seeds), 50 task pairs, $224 \times 224$ resolution. Full parameter partition: patch embedding + 12 transformer blocks + layer norm + head (14 groups). Per-example gradients in float64. Bound satisfaction is reported as a numerical sanity check; tightness is reflected by the gap vs.\ bound rows.}
  \label{tab:vit-validation}
  \centering
  \begin{tabular}{lccc}
    \toprule
    Diagnostic & Mean & Range & Interpretation \\
    \midrule
    $c_r$ (profile cosine) & 0.9995 & [0.995, 1.00] & Near-perfect alignment \\
    $r$ (cross-layer ratio) & 2.75 & [2.03, 2.97] & Off-diag exceeds diagonal \\
    $\Delta_{\mathrm{off}}$ & 0.088 & [0.049, 0.150] & Moderate discrepancy \\
    \midrule
    Gap $|A_F^{\mathrm{full}} - A_{\mathrm{blk}}|$ & 0.077 & [0.042, 0.132] & Measured deviation \\
    Bound $(1-c_r) + w_{\mathrm{off}}\Delta_{\mathrm{off}}$ & 0.078 & [0.042, 0.132] & Theoretical bound \\
    \midrule
    \multicolumn{3}{l}{Bound satisfied (sanity check)} & 50/50 (100\%) \\
    \bottomrule
  \end{tabular}
\end{table}

\paragraph{Constructed orthogonality under shared-output embedding (sanity check).}
Table~\ref{tab:vit-regime} (main text) provides an illustrative regime on ViT: for cross-dataset pairs (e.g., CIFAR-100 vs.\ Flowers102), the error structures are orthogonal because disjoint label spaces are embedded via the masked-softmax shared-output embedding (logits set to $-\infty$ outside each block).
CKA remains moderately high ($\approx 0.57$) because representations share similar geometry, but $S_{\mathrm{cov}}(\Gamma_e) = 0$ and FA-CKA drops to zero, as required by the construction.
(This orthogonality is with respect to the disjoint-block embedding of label spaces; alternative embeddings would define different cross-task error geometry.)

\begin{figure}[htbp]
  \centering
  \includegraphics[width=0.6\columnwidth]{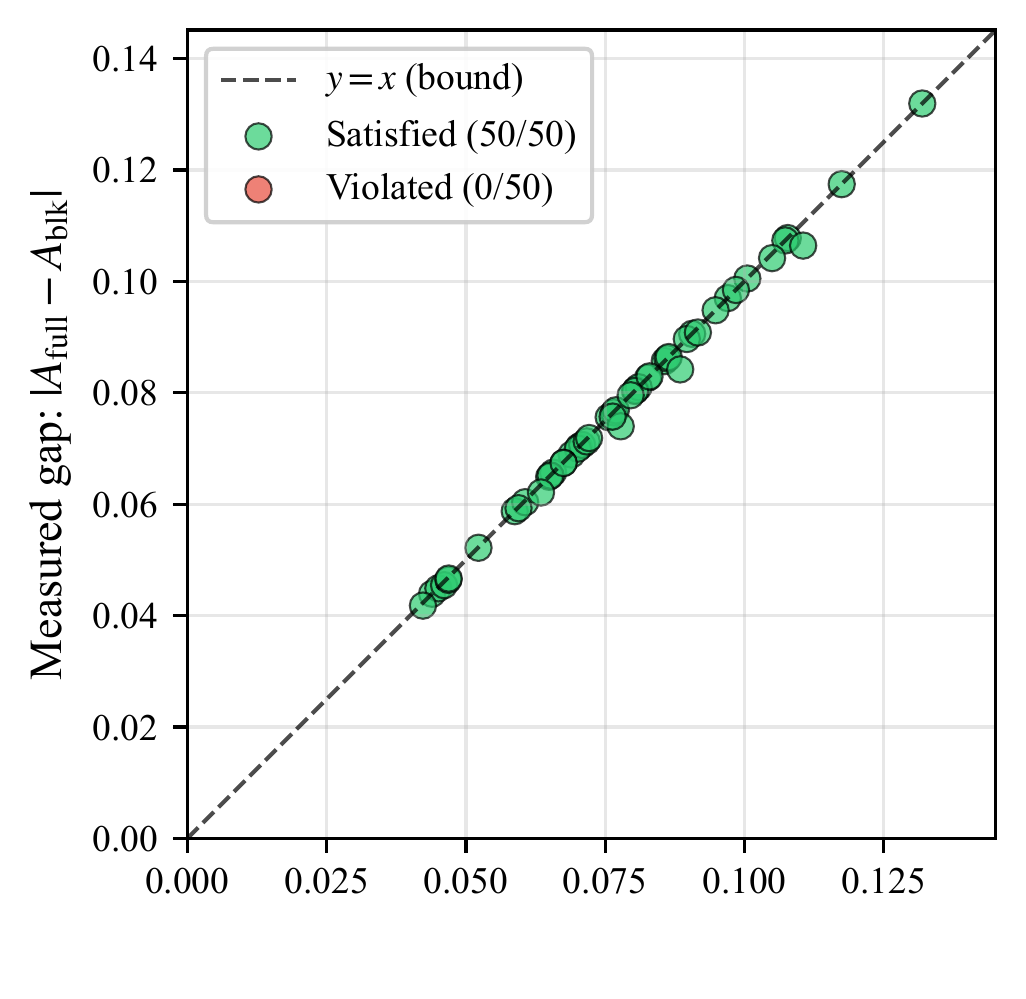}
  \caption{ViT-B/16 bound tightness. Each point is a task pair; $x$-axis: theoretical bound $(1-c_r) + w_{\mathrm{off}}\Delta_{\mathrm{off}}$; $y$-axis: measured gap $|A_F^{\mathrm{full}} - A_{\mathrm{blk}}|$. All 50 points fall below $y=x$ as required by the deterministic bound; proximity to the diagonal indicates tightness.}
  \label{fig:vit-bound-tightness}
\end{figure}

\paragraph{Kronecker proxy vs.\ exact head Fisher.}
On the 40 same-dataset pairs (where exact Fisher is nonzero), we compare the Kronecker proxy with exact head Fisher computed via the kernel trick (Appendix~\ref{app:fisher-computation}).
The Kronecker proxy is \emph{anti-correlated} with exact Fisher ($r = -0.34$, $\rho = -0.30$), while FA-CKA maintains strong correlation ($r = 0.79$, $\rho = 0.90$); see Table~\ref{tab:vit-proxy-vs-exact}.
This confirms that FA-CKA's validity does not depend on the Kronecker approximation.

\subsection{Large Language Model Validation}
\label{app:llm-validation}

We extend the full-network decomposition validation to Large Language Models, demonstrating that Corollary~\ref{cor:full-to-block-sharp} generalizes beyond vision architectures to modern autoregressive transformers.

\paragraph{Experimental setup.}
We evaluate on two architectures: \textbf{Llama-3.1-8B}~\citep{grattafiori2024llama3} (32 layers, 4096 hidden dim) and \textbf{Qwen2.5-14B}~\citep{yang2024qwen25} (48 layers, 5120 hidden dim). Using 30 MMLU subjects as tasks provides a natural multi-task setting with shared $K=4$ output space (4-way multiple choice), eliminating masked-softmax block-embedding confounds. For each model, we sample 100 task pairs from the 435 possible combinations, collecting 100 gradient samples per subject across the full parameter partition: embeddings, all transformer layers, final layer norm, and a 4-way classification head.

\paragraph{Empirical tightness of the deterministic bound.}
Despite architectural differences (32 vs.\ 48 layers, different training objectives), both models show $c_r \approx 0.99$ and mean gaps 0.332/0.243 (Table~\ref{tab:llm-validation}). The inequality itself is deterministic, so satisfaction is a numerical check rather than a discovery:
\begin{equation}
    |A_F^{\mathrm{full}} - A_{\mathrm{blk}}| \leq (1 - c_r) + w_{\mathrm{off}} \Delta_{\mathrm{off}} \quad \text{for all 100 pairs.}
\end{equation}
High profile cosines indicate consistent depth profiles in residual-stream architectures.

\begin{table}[t]
  \caption{LLM validation of full-network decomposition (Corollary~\ref{cor:full-to-block-sharp}). 30 MMLU subjects, 100 task pairs per model, 100 samples per subject. Parameter-subsampled gradient collection enables tractable gradients across $\sim$8--14B parameters. Bound \% is reported as a numerical sanity check; tightness is reflected in gap magnitudes.}
  \label{tab:llm-validation}
  \centering
  \begin{tabular}{lcccc}
    \toprule
    \textbf{Model} & \textbf{$c_r$ (mean/min)} & \textbf{$\Delta_{\mathrm{off}}$ (mean)} & \textbf{Gap (mean)} & \textbf{Bound \%} \\
    \midrule
    Llama-3.1-8B & 0.998 / 0.978 & 0.392 & 0.332 & 100\% \\
    Qwen2.5-14B & 0.992 / 0.956 & 0.495 & 0.243 & 99\% \\
    \bottomrule
  \end{tabular}
\end{table}

\paragraph{Scale validation: 70B parameters.}
To validate that the decomposition extends to frontier-scale models, we evaluate on 8 models at 67--72B parameters across three architectures (Table~\ref{tab:llm-validation-70b}). Profile cosines remain high ($c_r \approx 0.98$) and gaps range 0.158--0.464 across models; bound satisfaction is a numerical check rather than a discovery. This suggests the residual stream geometry remains consistent even at 10$\times$ scale.

\begin{table}[t]
  \caption{LLM validation at 70B scale. Same protocol as Table~\ref{tab:llm-validation}: 30 MMLU subjects, 100 task pairs per model, 100 samples per subject. Parameter-subsampled gradient collection enables tractable gradients across 67--72B parameters. Bound \% is reported as a numerical sanity check; tightness is reflected in gap magnitudes.}
  \label{tab:llm-validation-70b}
  \centering
  \begin{tabular}{lcccc}
    \toprule
    \textbf{Model} & \textbf{$c_r$ (mean/min)} & \textbf{$\Delta_{\mathrm{off}}$} & \textbf{Gap} & \textbf{Bound \%} \\
    \midrule
    Llama-2-70b-hf & 0.982 / 0.920 & 0.527 & 0.344 & 100\% \\
    Llama-2-70b-chat-hf & 0.990 / 0.952 & 0.594 & 0.464 & 100\% \\
    Llama-3.1-70B & 0.992 / 0.946 & 0.556 & 0.318 & 100\% \\
    Llama-3.1-70B-Instruct & 0.963 / 0.725 & 0.579 & 0.408 & 99\% \\
    Qwen2.5-72B & 0.996 / 0.972 & 0.559 & 0.158 & 100\% \\
    Qwen2.5-72B-Instruct & 0.993 / 0.935 & 0.298 & 0.278 & 100\% \\
    deepseek-llm-67b-base & 0.978 / 0.858 & 0.521 & 0.196 & 100\% \\
    deepseek-llm-67b-chat & 0.954 / 0.796 & 0.754 & 0.436 & 100\% \\
    \bottomrule
  \end{tabular}
\end{table}

\paragraph{Comparison with ViT.}
LLMs exhibit higher $\Delta_{\mathrm{off}}$ than ViT (0.25--0.75 vs.\ 0.088), reflecting more cross-layer interaction variance in deeper models (32--80 layers vs.\ 12 layers). However, the bound remains tight: mean gap tracks mean bound closely across all 10 models tested. Bound satisfaction is expected from the inequality; the key insight is that profile cosines $c_r \approx 0.98$--$0.99$ dominate the bound, making the block-diagonal reduction reliable even when $\Delta_{\mathrm{off}}$ is elevated.

\paragraph{Vocabulary-scale transfer validation.}
To validate FisherSketch at vocabulary scale, we measure correlation between FisherSketch-estimated Fisher alignment and actual LoRA transfer success.
\textbf{Protocol}: (1) Compute base perplexity for each domain; (2) fine-tune LoRA adapter on source domain (rank=8, $\alpha$=16, up to 50 steps with early stopping on source-dev loss, patience=3, lr=$5 \times 10^{-5}$); (3) evaluate on target domain; (4) compute normalized transfer score in loss space: $(\log \text{base}_{\text{ppl}} - \log \text{transfer}_{\text{ppl}}) / (\log \text{base}_{\text{ppl}} - \log \text{self}_{\text{ppl}})$, with a small denominator floor (0.01) for stability. Negative transfer (transfer PPL $>$ base PPL) can occur; the normalized score handles this naturally.
\textbf{Setup}: 100 domains constructed from eight HF datasets (Appendix~\ref{app:vocab-scale-data}), 500 samples/domain for FisherSketch ($m$=4096). LoRA transfer uses 200 train / 100 eval samples per domain; we evaluate 4,188 cross-domain pairs (union of uniform + stratified candidates) and report top-1 under the uniform 24-candidate protocol. Sketching and LoRA splits are disjoint by construction; candidate sets and sketch sample indices are saved for audit.
\textbf{Result}: FisherSketch with SRHT error projections achieves 45.7\% $\pm$ 5.0 top-1 source selection ($\approx$10.9$\times$ random) with max regret 0.119 $\pm$ 0.013 (mean $\pm$ std over seeds 43--45), confirming that FisherSketch predicts cross-domain transfer at vocabulary scale ($K$=128,256) where storing $\Gamma_e$ scales as $O(K^2)$. Per-target Spearman corr.\ $= 0.47 \pm 0.02$; global off-diagonal Spearman corr.\ $= 0.47 \pm 0.01$ is lower because it mixes heterogeneous task families. A diagonal error-geometry proxy (cosine between $\mathrm{diag}(\Gamma_e)$ vectors; LEEP-style) is weaker: 44.0\% $\pm$ 2.2 top-1 and max regret 0.210 $\pm$ 0.011 under the same uniform-24 protocol, indicating off-diagonal error covariance carries signal at vocab scale. Sketch-only stability with fixed LoRA outcomes is moderate: across SRHT seeds 43--45, pairwise top-1 agreement is 0.51 $\pm$ 0.02 and off-diagonal Spearman is 0.982 $\pm$ 0.003.

\subsection{Empirical Findings}

\paragraph{What we measure.}
\begin{itemize}
    \item \textbf{Shallow-wide networks} (depth 1--2): $c_r \approx 0.95$--$0.99$, $c_\tau \approx 0.95$--$0.99$. Profile cosines close to 1 mean tasks have similar energy profiles.
    \item \textbf{Deep networks} (depth $\geq 3$): $c_r, c_\tau$ can be lower when tasks have different $r, \tau$ values, but this does \textit{not} imply the bound is vacuous.
    \item \textbf{Vision Transformers} (ViT-B/16, Table~\ref{tab:vit-validation}): $c_r = 0.9995$, with mean gap 0.077 vs bound 0.078 across 50 task pairs. Full 14-layer partition (patch embedding through classification head) confirms profile alignment persists in modern attention-based architectures.
    \item \textbf{Large Language Models} (8B--70B, Tables~\ref{tab:llm-validation}--\ref{tab:llm-validation-70b}): $c_r \approx 0.95$--$1.00$, with mean gaps 0.158--0.464 across 8B--70B models. The residual stream architecture maintains geometric consistency from 32 to 80 transformer layers.
\end{itemize}

\paragraph{The key finding: discrepancies are small.}
Even when $c_r, c_\tau < 1$ (deep nets with varying energy profiles), we observe:
\begin{itemize}
    \item $\Delta_{\mathrm{rest}} \approx 0.10$--$0.20$: non-head alignment tracks head alignment.
    \item \textbf{ViT-B/16}: $\Delta_{\mathrm{off}} \approx 0.05$--$0.15$ (mean 0.088), with mean gap 0.077 vs bound 0.078 across the full 14-layer parameter partition (Figure~\ref{fig:vit-bound-tightness}).
    \item \textbf{LLMs} (8B--70B): $\Delta_{\mathrm{off}} = 0.25$--$0.75$ (higher due to deeper architectures), with gaps 0.158--0.464; high profile cosines ($c_r \approx 0.95$--$1.00$) keep the bound tight.
\end{itemize}
This explains why, in residual-stream architectures with aligned profile cosines, CKA can correlate strongly with full-network Fisher ($r > 0.89$) in our validation suite (Vision Transformers and Large Language Models).
This does not contradict non-identifiability for transfer: when error geometry differs, CKA can remain high while head alignment (and transfer) collapse.

\paragraph{Key takeaway.}
The exact decompositions~\eqref{eq:full-weighted-sum} and~\eqref{eq:blk-weighted-sum} show that what matters is not ``$r < 1$'' but ``$c_r \approx 1$ and $\Delta_{\mathrm{off}}$ small.''
Our theory predicts \textit{exactly what to measure}: the profile cosines $c_r, c_\tau$ and discrepancies $\Delta_{\mathrm{off}}, \Delta_{\mathrm{rest}}$.
When these are favorable, the head-layer analysis transfers to full-network Fisher---and empirically, they are.

\section{Why Error Geometry Ranks While Activation Geometry Screens}
\label{app:ranking-vs-screening}

This appendix provides a stylized explanation for the empirical asymmetry observed
in Section~\ref{sec:vocab-scale-exp}: the error-geometry score $S_{\mathrm{cov}}(\Gamma_e)$ can yield higher
top-1 source selection accuracy, while incorporating activation geometry reduces
worst-case regret. We model source selection for a fixed target task as follows.

Let $\mathcal{S}$ be a finite set of candidate sources.
For each $s\in\mathcal{S}$ define a nonnegative \emph{error score}
$E_s\in(0,1]$ (e.g., $E_s := S_{\mathrm{cov}}(\Gamma_{e,s},\Gamma_{e,t})$).
Define also a nonnegative \emph{compatibility factor} $A_s\ge 0$
(e.g., an activation/coupling term such as $S_{\mathrm{cov}}(M_{a,s},M_{a,t})$,
optionally folding $\rho$ into $A_s$).
The product score is $P_s := A_s E_s$.
We compare the error-only selector $\hat s_E := \arg\max_{s\in\mathcal{S}} E_s$
and the product selector $\hat s_P := \arg\max_{s\in\mathcal{S}} P_s$.

\begin{lemma}[Tight-race flips under multiplicative universality noise]
\label{lem:tight-race-flip}
Let $\mathcal{C}$ be a finite set and let $\{E_s\}_{s\in\mathcal{C}}$ be deterministic
scores with a unique maximizer $s^\star := \arg\max_{s\in\mathcal{C}} E_s$ and $E_s>0$ for all $s\in\mathcal{C}$.
Let $\{Z_s\}_{s\in\mathcal{C}}$ be independent mean-zero $\sigma$-subGaussian random variables,
i.e., $\mathbb{E}[\exp(\lambda Z_s)] \le \exp(\lambda^2\sigma^2/2)$ for all $\lambda\in\mathbb{R}$.
Define $A_s := \exp(Z_s)$ and $P_s := A_s E_s$.

Then the probability that the multiplicative factors change the maximizer satisfies
\[
\mathbb{P}\!\left(\arg\max_{s\in\mathcal{C}} P_s \neq s^\star\right)
\;\le\;
\sum_{s\in\mathcal{C}\setminus\{s^\star\}}
\exp\!\left(
-\frac{\log^2\!\big(\frac{E_{s^\star}}{E_s}\big)}{4\sigma^2}
\right).
\]
In particular, letting $E_{(2)} := \max_{s\in\mathcal{C}\setminus\{s^\star\}} E_s$ and
$\Delta := \log(E_{s^\star}/E_{(2)})$, we have
\[
\mathbb{P}\!\left(\arg\max_{s\in\mathcal{C}} P_s \neq s^\star\right)
\;\le\;
(|\mathcal{C}|-1)\exp\!\left(-\frac{\Delta^2}{4\sigma^2}\right).
\]
\end{lemma}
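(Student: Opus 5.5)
The plan is a union bound over pairwise ``overtake'' events, followed by a Chernoff bound for the difference of two independent subGaussians.

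If $\arg\max_{s} P_s \neq s^\star$ (with any tie-breaking convention), then some $s \neq s^\star$ must satisfy $P_s \ge P_{s^\star}$. Hence
\[
\mathbb{P}\big(\arg\max_s P_s \neq s^\star\big) \le \sum_{s\neq s^\star} \mathbb{P}(P_s \ge P_{s^\star}).
\]
For a fixed $s$, take logarithms; this is legitimate because every $E_s > 0$ and every $A_s > 0$. The event $P_s \ge P_{s^\star}$ is then equivalent to
\[
Z_s - Z_{s^\star} \ge \log(E_{s^\star}/E_s) =: \Delta_s,
\]
and $\Delta_s > 0$ because $s^\star$ is the unique maximizer.

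The key step is to control $W := Z_s - Z_{s^\star}$. The variables $Z_s$ and $Z_{s^\star}$ are independent, each is mean-zero and $\sigma$-subGaussian, and $-Z_{s^\star}$ is also $\sigma$-subGaussian since the defining inequality holds for all $\lambda \in \mathbb{R}$. Multiplying the moment generating functions gives
\[
\mathbb{E}\, e^{\lambda W} \le e^{\lambda^2 (2\sigma^2)/2},
\]
so $W$ is $\sqrt{2}\sigma$-subGaussian. Markov's inequality applied to $e^{\lambda W}$, optimized at $\lambda = \Delta_s/(2\sigma^2)$, then yields
\[
\mathbb{P}(W \ge \Delta_s) \le \exp\!\big(-\Delta_s^2/(4\sigma^2)\big).
\]
Summing over $s \neq s^\star$ gives the first displayed bound.

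For the second bound, note that $E_s \le E_{(2)}$ for every $s \neq s^\star$. This gives $\Delta_s \ge \Delta > 0$, so each summand is at most $\exp(-\Delta^2/(4\sigma^2))$, and there are exactly $|\mathcal{C}|-1$ summands.

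The argument has no real obstacle. The only points needing care are the variance proxy of the difference: the $4\sigma^2$ in the denominator comes precisely from the doubled variance $2\sigma^2$, so no extra factor is lost. The other is handling ties, which requires using the non-strict event $P_s \ge P_{s^\star}$ in the union bound so that it still covers the failure event.
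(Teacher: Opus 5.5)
Your proposal is correct and follows the same route as the paper's proof. Both arguments reduce each overtake event $P_s \ge P_{s^\star}$ to $Z_s - Z_{s^\star} \ge \log(E_{s^\star}/E_s)$, use that the difference of two independent $\sigma$-subGaussians is $\sqrt{2}\sigma$-subGaussian to obtain the $\exp(-t^2/(4\sigma^2))$ tail, union bound over $s \neq s^\star$, and derive the second bound from $\log(E_{s^\star}/E_s) \ge \Delta$. Your explicit MGF/Chernoff optimization and your use of the non-strict event to handle ties simply fill in details the paper leaves implicit.
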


\begin{proof}
Fix $s\neq s^\star$. If $P_s \ge P_{s^\star}$ then
\[
Z_s - Z_{s^\star} \ge \log\!\Big(\frac{E_{s^\star}}{E_s}\Big).
\]
Since $Z_s$ and $Z_{s^\star}$ are independent $\sigma$-subGaussian,
$Z_s - Z_{s^\star}$ is $\sqrt{2}\sigma$-subGaussian, hence
$\mathbb{P}(Z_s - Z_{s^\star} \ge t)\le \exp(-t^2/(4\sigma^2))$.
Applying this with $t=\log(E_{s^\star}/E_s)$ and union bounding over
$s\in\mathcal{C}\setminus\{s^\star\}$ yields the first inequality.
The second follows because $\log(E_{s^\star}/E_s)\ge \Delta$ for all $s\neq s^\star$.
\end{proof}

\begin{theorem}[When the product screens catastrophes and can lose top-1 in tight races]
\label{thm:ranking-vs-screening}
Let $\mathcal{S}$ be a finite candidate set and fix a target.
Let $U_s\in\mathbb{R}$ be the (unknown) transfer utility of selecting source $s$,
and let $r(s):=U_{s^\star}-U_s$ denote regret, where $s^\star := \arg\max_{s\in\mathcal{S}} U_s$
is the oracle source.

Assume there exists a subset $\mathcal{C}\subseteq\mathcal{S}$ of \emph{compatible} sources such that:

\begin{enumerate}
\item[(A1)] (\emph{Oracle is compatible}) $s^\star\in\mathcal{C}$.
\item[(A2)] (\emph{$E$ ranks within $\mathcal{C}$}) There exist deterministic scores $E_s\in(0,1]$
with a unique maximizer over $\mathcal{C}$ at $s^\star$, i.e.,
$s^\star = \arg\max_{s\in\mathcal{C}} E_s$.
\item[(A3)] (\emph{Universality within $\mathcal{C}$}) For $c\in\mathcal{C}$, the compatibility factors satisfy
$A_c = \exp(Z_c)$ where $\{Z_c\}_{c\in\mathcal{C}}$ are independent, mean-zero, $\sigma$-subGaussian random variables.
\item[(A4)] (\emph{Screening outside $\mathcal{C}$}) For all $b\notin\mathcal{C}$, $A_b \le \beta$ for some $\beta\in(0,1)$.
\item[(A5)] (\emph{Catastrophic incompatibility}) There exists $R>0$ such that
$r(c) < R$ for all $c\in\mathcal{C}$ and $r(b)\ge R$ for all $b\notin\mathcal{C}$
(i.e., catastrophes occur iff an incompatible source is selected).
\end{enumerate}

Define the selectors $\hat s_E := \arg\max_{s\in\mathcal{S}} E_s$ and
$\hat s_P := \arg\max_{s\in\mathcal{S}} (A_sE_s)$.

\paragraph{(1) Product screening (tail safety).}
If $E_{s^\star}>\beta$, then
\[
\mathbb{P}(\hat s_P \notin \mathcal{C})
\;\le\;
\exp\!\left(
-\frac{\log^2\!\big(\frac{E_{s^\star}}{\beta}\big)}{2\sigma^2}
\right).
\]
By (A5),
\[
\mathbb{P}(r(\hat s_P)\ge R)
\;=\;
\mathbb{P}(\hat s_P \notin \mathcal{C})
\;\le\;
\exp\!\left(
-\frac{\log^2\!\big(\frac{E_{s^\star}}{\beta}\big)}{2\sigma^2}
\right).
\]

\paragraph{(2) Tight-race degradation (top-1 loss).}
Let $\Delta := \min_{c\in\mathcal{C}\setminus\{s^\star\}} \log(E_{s^\star}/E_c)$.
Then
\[
\mathbb{P}(\hat s_P \neq s^\star,\ \hat s_P\in\mathcal{C})
\;\le\;
(|\mathcal{C}|-1)\exp\!\left(-\frac{\Delta^2}{4\sigma^2}\right).
\]
In particular, when $\Delta \lesssim \sigma$ (tight races), the product selector
can mis-rank within $\mathcal{C}$ with non-negligible probability.

\paragraph{(3) Top-1 comparison from bounds.}
Let $p_{\mathrm{bad}} := \mathbb{P}(\hat s_E \notin \mathcal{C})$,
where the probability is over any randomness in candidate-set generation or score estimation
(if $E$ is deterministic and $\mathcal{S}$ is fixed, then $p_{\mathrm{bad}}\in\{0,1\}$).
Under (A1)--(A2), $\mathbb{P}(\hat s_E = s^\star)=1-p_{\mathrm{bad}}$.
Moreover,
\[
\mathbb{P}(\hat s_P = s^\star)
\;\ge\;
1
-
\exp\!\left(
-\frac{\log^2\!\big(\frac{E_{s^\star}}{\beta}\big)}{2\sigma^2}
\right)
-
(|\mathcal{C}|-1)\exp\!\left(-\frac{\Delta^2}{4\sigma^2}\right).
\]
Define
\[
B \;:=\;
\exp\!\left(
-\frac{\log^2\!\big(\frac{E_{s^\star}}{\beta}\big)}{2\sigma^2}
\right)
+
(|\mathcal{C}|-1)\exp\!\left(-\frac{\Delta^2}{4\sigma^2}\right).
\]
If $p_{\mathrm{bad}} > B$, then the product rule is guaranteed to have higher top-1
selection probability than the error-only rule.
If $p_{\mathrm{bad}} < B$, these bounds do not determine which rule has higher top-1;
either ordering is possible depending on the distribution of the multiplicative factors.
\end{theorem}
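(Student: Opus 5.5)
The plan is to prove the three parts in order, using only two tools: the one-sided sub-Gaussian tail $\mathbb{P}(Z\le -t)\le \exp(-t^2/(2\sigma^2))$ for $t\ge 0$, and Lemma~\ref{lem:tight-race-flip}. Throughout, ties in $\arg\max$ are treated pessimistically. That is, ``$\hat s_P\neq s^\star$'' is taken to mean that some $s\neq s^\star$ has $P_s\ge P_{s^\star}$. This only enlarges the bad events, so the bounds still hold.

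\emph{Part (1).} If $\hat s_P=b\notin\mathcal{C}$, then $A_bE_b\ge A_{s^\star}E_{s^\star}$. By (A4) and $E_b\le 1$ we have $A_bE_b\le\beta$. Hence $\exp(Z_{s^\star})E_{s^\star}\le\beta$, which rearranges to $Z_{s^\star}\le -\log(E_{s^\star}/\beta)$. Since $E_{s^\star}>\beta$, the threshold $t=\log(E_{s^\star}/\beta)$ is positive. The Chernoff bound for the single $\sigma$-sub-Gaussian variable $Z_{s^\star}$ then gives $\exp(-t^2/(2\sigma^2))$. Because the event is controlled entirely by $s^\star$'s own factor, no union over $b$ is needed and the constant is $2\sigma^2$ rather than $4\sigma^2$. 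The regret identity $\mathbb{P}(r(\hat s_P)\ge R)=\mathbb{P}(\hat s_P\notin\mathcal{C})$ follows immediately from (A5), since $r<R$ exactly on $\mathcal{C}$.

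\emph{Part (2).} The event $\{\hat s_P\neq s^\star,\ \hat s_P\in\mathcal{C}\}$ is contained in $\bigcup_{c\in\mathcal{C}\setminus\{s^\star\}}\{P_c\ge P_{s^\star}\}$. On $\mathcal{C}$, (A2) and (A3) are exactly the hypotheses of Lemma~\ref{lem:tight-race-flip}. Its per-pair bound $\exp(-\log^2(E_{s^\star}/E_c)/(4\sigma^2))$, together with $\log(E_{s^\star}/E_c)\ge\Delta$ and a union bound, gives $(|\mathcal{C}|-1)\exp(-\Delta^2/(4\sigma^2))$. One point to check: the lemma is stated for the argmax over $\mathcal{C}$, whereas here the argmax is over $\mathcal{S}$. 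This causes no difficulty, because I reuse only the pairwise bound from inside the lemma's proof, which holds regardless of which set the argmax ranges over.

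\emph{Part (3).} First, $\hat s_E=s^\star$ if and only if $\hat s_E\in\mathcal{C}$. One direction is (A1). For the other, if the global maximizer of $E$ lies in $\mathcal{C}$, it is in particular the maximizer over $\mathcal{C}$, which is unique and equal to $s^\star$ by (A2). This gives $\mathbb{P}(\hat s_E=s^\star)=1-p_{\mathrm{bad}}$. Next, $\{\hat s_P\neq s^\star\}$ is the disjoint union of $\{\hat s_P\notin\mathcal{C}\}$ and $\{\hat s_P\in\mathcal{C}\setminus\{s^\star\}\}$. Bounding these by Parts (1) and (2) yields $\mathbb{P}(\hat s_P=s^\star)\ge 1-B$. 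If $p_{\mathrm{bad}}>B$, then $1-B>1-p_{\mathrm{bad}}$, which is the claimed comparison. For the case $p_{\mathrm{bad}}<B$, the inconclusiveness claim only requires noting that a lower bound cannot order the two probabilities. If desired, a two-point example would exhibit both orderings: take $\sigma$ large to make the product rule lose, and take $p_{\mathrm{bad}}$ just below a small $B$ to make it win.

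The main obstacle is bookkeeping rather than analysis. It consists of the tie convention, the mismatch between Lemma~\ref{lem:tight-race-flip}'s argmax over $\mathcal{C}$ and the argmax over $\mathcal{S}$ used here, and the use of $E_b\le1$ in Part (1). Each is resolved as indicated above.
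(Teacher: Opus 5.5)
Your proposal is correct and follows the paper's proof essentially step for step: a single lower-tail bound on $Z_{s^\star}$ (using $A_bE_b\le\beta$) for (1), the $\sqrt{2}\sigma$-subGaussian difference $Z_c-Z_{s^\star}$ with a union bound for (2), and for (3) the observation that $\hat s_E\in\mathcal{C}$ forces $\hat s_E=s^\star$, combined with splitting $\{\hat s_P\neq s^\star\}$ into the events of (1) and (2). One small caveat on your optional example: taking $p_{\mathrm{bad}}$ just below a small $B$ does not by itself make the product rule win, because $B$ only upper-bounds the product rule's failure probability; you also need that actual failure probability to lie below $p_{\mathrm{bad}}$ (for instance, $Z_c\equiv 0$, which is $\sigma$-subGaussian for every $\sigma$, makes it zero).
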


\begin{proof}
(1) For any $b\notin\mathcal{C}$, $A_bE_b \le \beta\cdot 1 = \beta$ since $E_b\in(0,1]$.
Thus $\hat s_P\notin\mathcal{C}$ implies $\max_{c\in\mathcal{C}} A_cE_c \le \beta$,
which in particular implies $A_{s^\star}E_{s^\star} \le \beta$.
Equivalently $Z_{s^\star} \le -\log(E_{s^\star}/\beta)$.
A subGaussian lower-tail bound gives
$\mathbb{P}(Z_{s^\star} \le -t)\le \exp(-t^2/(2\sigma^2))$, yielding the first inequality.
By (A5), $r(\hat s_P)\ge R$ iff $\hat s_P\notin\mathcal{C}$, so
$\mathbb{P}(r(\hat s_P)\ge R)=\mathbb{P}(\hat s_P\notin\mathcal{C})$.

(2) Conditional on $\hat s_P\in\mathcal{C}$, the event $\hat s_P\neq s^\star$ implies that for some
$c\in\mathcal{C}\setminus\{s^\star\}$ we have $A_cE_c \ge A_{s^\star}E_{s^\star}$,
i.e., $Z_c-Z_{s^\star} \ge \log(E_{s^\star}/E_c)\ge \Delta$.
Applying Lemma~\ref{lem:tight-race-flip} (or directly union bounding the same tail event)
yields the stated bound.

(3) If $\hat s_E\in\mathcal{C}$, then by (A2) the unique maximizer of $E$ on $\mathcal{C}$ is $s^\star$,
so $\hat s_E=s^\star$. Hence $\mathbb{P}(\hat s_E=s^\star)=1-\mathbb{P}(\hat s_E\notin\mathcal{C})=1-p_{\mathrm{bad}}$.
For the product, $\mathbb{P}(\hat s_P\neq s^\star)$ is at most the probability of selecting outside $\mathcal{C}$
plus the probability of selecting a suboptimal element within $\mathcal{C}$; combine (1) and (2).
Comparing $\mathbb{P}(\hat s_E=s^\star)=1-p_{\mathrm{bad}}$ with
$\mathbb{P}(\hat s_P=s^\star)\ge 1-B$ yields the stated sufficient condition
$p_{\mathrm{bad}} > B$ for the product rule to be more accurate in top-1.
For $p_{\mathrm{bad}} < B$, no ordering follows from these bounds alone.
\end{proof}

\section{LEEP as a Diagonal Error-Geometry Proxy}
\label{app:leep-connection}

We formalize the claim that LEEP~\citep{nguyen2020leep} leverages error geometry.
The statement below is deliberately about LEEP, not LogME: LogME is used elsewhere as a finite-label evidence baseline, while this appendix proves only a stylized LEEP--error-geometry connection.
Empirically, the diagonal proxy underperforms at vocabulary scale (top-1 accuracy 44.0\% $\pm$ 2.2; max regret 0.210 $\pm$ 0.011; Appendix~\ref{app:vocab-scale}), consistent with the theory below.
Let $(X,Y)\sim\mathcal{D}$ be the target distribution with $Y\in[K]$, and let $p(\cdot\mid x)\in\Delta^{K-1}$ be a source classifier's predictive distribution.
Let $Z\mid X \sim \mathrm{Cat}(p(\cdot\mid X))$.
Define the soft confusion matrix
$C_{y,z} := \mathbb{E}[\mathbf{1}\{Y=y\}\cdot p(z\mid X)]$.
Define the marginal $q_z := \sum_y C_{y,z}$.
Then $C_{y,z} = \mathbb{P}(Y=y, Z=z)$ and $q_z = \mathbb{P}(Z=z)$. Assume $q_z>0$, and define
$p(y\mid z) := C_{y,z}/q_z = \mathbb{P}(Y=y\mid Z=z)$.
Assume $p(y\mid z) > 0$ whenever $\mathbb{P}(Y=y, Z=z) > 0$ (or define $\mathrm{LEEP}=-\infty$ otherwise).
The population LEEP score is
\begin{equation}
\label{eq:leep-def}
\mathrm{LEEP}(p;\mathcal{D}) := \mathbb{E}_{X,Y}\log\sum_{z=1}^K p(Y\mid z)\,p(z\mid X)
= \mathbb{E}_{X,Y}\log \mathbb{E}_{Z\mid X}[p(Y\mid Z)].
\end{equation}

\begin{lemma}[Negative conditional entropy lower bounds LEEP]
\label{lem:leep-entropy}
Let $Z\mid X \sim \mathrm{Cat}(p(\cdot\mid X))$ be the sampled prediction. Then
\[
\mathrm{LEEP}(p;\mathcal{D}) \;\ge\; -H(Y\mid Z),
\]
with equality when predictions are hard (deterministic).
\end{lemma}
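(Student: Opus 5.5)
The inequality is Jensen's inequality applied pointwise in $(X,Y)$, and equality in the hard-prediction case follows by direct evaluation. We first rewrite $-H(Y\mid Z)$ in a form comparable to the LEEP expression. By definition,
\[
-H(Y\mid Z) = \mathbb{E}_{Y,Z}\log p(Y\mid Z).
\]
The joint law of $(X,Y,Z)$ factorizes as $\mathcal{D}(x,y)\,p(z\mid x)$, because $Z$ depends on $(X,Y)$ only through $X$. Hence we can also write
\[
-H(Y\mid Z)=\mathbb{E}_{X,Y}\,\mathbb{E}_{Z\mid X}[\log p(Y\mid Z)].
\]
This uses the identification $C_{y,z}=\mathbb{P}(Y=y,Z=z)$ stated before the lemma. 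That identification holds since $C_{y,z}=\mathbb{E}[\mathbf{1}\{Y=y\}\,\mathbb{P}(Z=z\mid X)]$ and $Z\perp Y\mid X$, and it guarantees that $p(y\mid z)$ is exactly the conditional law of $Y$ given $Z$ under this joint law.

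Next, fix $(x,y)$ and apply Jensen's inequality to the concave function $\log$ under the distribution $Z\mid X=x$:
\[
\log \mathbb{E}_{Z\mid X=x}[p(y\mid Z)] \;\ge\; \mathbb{E}_{Z\mid X=x}[\log p(y\mid Z)].
\]
Taking the outer expectation over $(X,Y)$ turns the left side into $\mathrm{LEEP}(p;\mathcal{D})$ by \eqref{eq:leep-def}, and the right side into $-H(Y\mid Z)$ by the rewriting above.

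For the equality case, suppose predictions are hard, so $p(\cdot\mid x)$ is a point mass at $z(x)$. Then $Z\mid X=x$ is degenerate and Jensen's inequality holds with equality for every $(x,y)$. Therefore $\mathrm{LEEP}=\mathbb{E}\log p(Y\mid z(X))=-H(Y\mid Z)$.

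The main obstacle is handling integrability and $-\infty$ values, which I would treat explicitly. On the support of the joint law we have $p(Y\mid Z)>0$ by the stated positivity assumption, so $\log p(Y\mid Z)\in(-\infty,0]$. Both sides are therefore well-defined expectations of nonpositive quantities, possibly equal to $-\infty$. Jensen's inequality remains valid in the extended sense, since the integrand is bounded above by $0$. If positivity fails, the convention $\mathrm{LEEP}=-\infty$ is adopted. In that situation the right-hand side is also $-\infty$, because $(Y,Z)$ then places mass on a zero-probability conditional, so the inequality still holds.
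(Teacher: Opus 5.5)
Your proof is correct and follows the paper's argument exactly: pointwise Jensen for the concave $\log$ under $Z\mid X$, then the outer expectation over $(X,Y)$, with equality because $Z\mid X$ is degenerate when predictions are hard. Your extra steps (justifying $\mathbb{E}_{X,Y}\mathbb{E}_{Z\mid X}[\log p(Y\mid Z)]=-H(Y\mid Z)$ via $C_{y,z}=\mathbb{P}(Y=y,Z=z)$, and handling extended values) only make explicit what the paper leaves implicit; note also that the positivity-failure case you treat cannot occur, since $p(y\mid z)=C_{y,z}/q_z>0$ whenever $C_{y,z}>0$.
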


\begin{proof}
By Jensen's inequality on the concave $\log$:
$\log \mathbb{E}_{Z\mid X}[p(Y\mid Z)] \ge \mathbb{E}_{Z\mid X}[\log p(Y\mid Z)]$.
Taking $\mathbb{E}_{X,Y}$ yields the bound; equality holds when $Z$ is deterministic given $X$.
\end{proof}

\begin{proposition}[Hard-prediction LEEP is monotone in diagonal $\Gamma_e^{\mathrm{disc}}$ under uniform confusion]
\label{prop:leep-diagonal}
Assume hard predictions ($Z$ deterministic given $X$), uniform prior $\mathbb{P}(Y=y)=1/K$, $K\ge 2$, and diagonal confusion:
$\mathbb{P}(Y=y\mid Z=z) = \alpha$ if $y=z$, else $(1-\alpha)/(K-1)$.
This specifies the confusion in the reverse direction---$\mathbb{P}(Y\mid Z)$ (conditioned on predictions)---rather than the more common $\mathbb{P}(Z\mid Y)$.
Let $\mathbf{1}_Z,\mathbf{1}_Y\in\mathbb{R}^K$ denote the one-hot vectors for $Z$ and $Y$, and
let $e^{\mathrm{disc}} := \mathbf{1}_Z-\mathbf{1}_Y$ and define
$\Gamma_e^{\mathrm{disc}} := \mathbb{E}[e^{\mathrm{disc}} e^{\mathrm{disc}\top}]$.
This is the hard-prediction error; it differs from the soft error $e=\mathrm{softmax}(Wa)-y$ used elsewhere.
Let $d := (\Gamma_e^{\mathrm{disc}})_{cc} = 2(1-\alpha)/K$ be the diagonal error moment.
Then
\begin{equation}
\label{eq:leep-d}
-H(Y\mid Z) = (1-\tfrac{K}{2}d)\log(1-\tfrac{K}{2}d) + \tfrac{K}{2}d\log\tfrac{Kd/2}{K-1},
\end{equation}
and $\mathrm{LEEP}(p;\mathcal{D})=-H(Y\mid Z)$ is strictly decreasing in $d$ for accuracies above chance ($\alpha>1/K$).
\end{proposition}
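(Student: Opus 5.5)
The plan is to reduce everything to the single parameter $\alpha$: compute $-H(Y\mid Z)$ and $d$ as explicit functions of $\alpha$, eliminate $\alpha$, and check monotonicity by one derivative. By Lemma~\ref{lem:leep-entropy}, hard predictions give $\mathrm{LEEP}(p;\mathcal{D})=-H(Y\mid Z)$ with equality, so it suffices to work with the conditional entropy.

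\emph{Step 1 (marginal of $Z$).} I first pin down $q_z=\mathbb{P}(Z=z)$, which is forced by the uniform prior. Write $\beta:=(1-\alpha)/(K-1)$. For each $y$,
\[
\tfrac1K=\mathbb{P}(Y=y)=\alpha q_y+\beta(1-q_y)=\beta+(\alpha-\beta)q_y .
\]
When $\alpha>1/K$ we have $\alpha\neq\beta$, so $q_y$ takes the same value for every $y$. Since the $q_y$ sum to one, $q_y=1/K$.

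\emph{Step 2 (entropy and diagonal moment).} Since $\mathbb{P}(Y\mid Z=z)$ has the same profile for every $z$,
\[
-H(Y\mid Z)=\alpha\log\alpha+(1-\alpha)\log\tfrac{1-\alpha}{K-1},
\]
with the convention $0\log 0=0$ at $\alpha=1$. For the diagonal entry, $(e^{\mathrm{disc}}_c)^2=1$ exactly when one, but not both, of $Z=c$ and $Y=c$ holds. Hence
\[
d=\mathbb{P}(Z=c,\,Y\neq c)+\mathbb{P}(Y=c,\,Z\neq c).
\]
The first term is $q_c(1-\alpha)=(1-\alpha)/K$. The second is $\mathbb{P}(Y=c)-\mathbb{P}(Y=c,Z=c)=1/K-\alpha/K$. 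So $d=2(1-\alpha)/K$, and in particular $d$ does not depend on $c$. Substituting $1-\alpha=Kd/2$ and $\alpha=1-\tfrac K2 d$ into the entropy expression gives \eqref{eq:leep-d}.

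\emph{Step 3 (monotonicity).} Let $f(\alpha)=\alpha\log\alpha+(1-\alpha)\log\frac{1-\alpha}{K-1}$. Then
\[
f'(\alpha)=\log\alpha-\log\tfrac{1-\alpha}{K-1},
\]
which is strictly positive exactly when $\alpha>(1-\alpha)/(K-1)$, i.e.\ when $\alpha>1/K$. So $f$ is strictly increasing on $(1/K,1]$ (continuous at $1$). Since $d=2(1-\alpha)/K$ is a strictly decreasing affine function of $\alpha$, the composition $-H(Y\mid Z)=\mathrm{LEEP}$ is strictly decreasing in $d$ on the corresponding range $d\in[0,\,2(K-1)/K^2)$.

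\emph{Main obstacle.} The only non-mechanical step is Step 1. The statement fixes $\mathbb{P}(Y\mid Z)$ rather than $\mathbb{P}(Z\mid Y)$, so the marginal of $Z$ is not given. It is recovered only by inverting the prior equation, and that inversion uses $\alpha\neq 1/K$, which is exactly the above-chance hypothesis. I would state this dependence explicitly in the proof. Everything else is direct substitution.
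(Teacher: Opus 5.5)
Your proposal is correct and follows the paper's proof in substance. The paper also recovers $\mathbb{P}(Z)=\mathbf{1}/K$ by inverting $\mathbb{P}(Y)=M\,\mathbb{P}(Z)$, phrased via the eigenvalues $1$ and $\alpha-\frac{1-\alpha}{K-1}$ of the doubly stochastic $M$ rather than your coordinatewise equation; it computes $d$ as $\mathbb{P}(Z=c)+\mathbb{P}(Y=c)-2\mathbb{P}(Y=c,Z=c)$, substitutes into the entropy, and invokes Lemma~\ref{lem:leep-entropy}, differentiating directly in $d$ where you differentiate in $\alpha$ and compose with the decreasing affine map $\alpha\mapsto 2(1-\alpha)/K$ (the same chain-rule computation).
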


\begin{proof}
Let $M_{y,z}:=\mathbb{P}(Y=y\mid Z=z)$. With the stated symmetry, $M$ is doubly stochastic and has eigenvalues
$1$ and $\alpha - \frac{1-\alpha}{K-1}$. Viewing $\mathbb{P}(Y)$ and $\mathbb{P}(Z)$ as $K$-vectors, the system
$\mathbb{P}(Y)=M\mathbb{P}(Z)$ implies $\mathbb{P}(Z)=\mathbf{1}/K$ for $\alpha\neq 1/K$.
At $\alpha=1/K$, $M$ is rank-1 and $\mathbb{P}(Z)$ is not identifiable from $\mathbb{P}(Y)$ alone; we set $\mathbb{P}(Z)=\mathbf{1}/K$ by convention.
Thus
$(\Gamma_e^{\mathrm{disc}})_{cc} = \mathbb{E}[e_c^2] = \mathbb{P}(Z=c) + \mathbb{P}(Y=c) - 2\mathbb{P}(Y=c,Z=c) = 2(1-\alpha)/K$.
The conditional entropy is
\begin{equation}
\label{eq:leep-entropy}
H(Y\mid Z) = -\alpha\log\alpha - (1-\alpha)\log\!\left(\frac{1-\alpha}{K-1}\right),
\end{equation}
so substituting $\alpha = 1-\tfrac{K}{2}d$ yields the displayed expression.
Differentiating gives
\[
\frac{d}{dd}\big(-H(Y\mid Z)\big) = -\frac{K}{2}\log\!\left(\frac{\alpha(K-1)}{1-\alpha}\right)
= \frac{K}{2}\log\!\left(\frac{Kd}{2(K-1)(1-\tfrac{K}{2}d)}\right),
\]
which is negative for $\alpha>1/K$.
Hard predictions imply $\mathrm{LEEP}(p;\mathcal{D})=-H(Y\mid Z)$ by Lemma~\ref{lem:leep-entropy}.
\end{proof}

This shows the conditional-entropy lower bound (and LEEP in the hard-prediction limit) is a monotone function of a diagonal error second moment under the uniform-confusion model.
In this stylized regime, LEEP is a ``diagonal proxy for error geometry'' in a formal sense.
This formal connection should not be read as a derivation of LogME; LogME remains a finite-label evidence baseline in our comparisons.

\section{Task Signatures and Retrieval}
\label{app:task-signatures}

Algorithm~\ref{alg:fishersketch} produces per-task mean embeddings $\hat{\mu}_{ae,k} \in \mathbb{R}^m$ as a direct byproduct of computing pairwise Fisher alignments.
These embeddings, which we call \emph{task signatures}, enable efficient indexing, retrieval, and dynamic updates without recomputing all pairwise alignments.

\subsection{Signature Storage}
\label{app:signature-storage}

\paragraph{Storage footprint.}
Each task signature consists of the joint embedding $\hat{\mu}_{ae} \in \mathbb{R}^m$ stored in float32:
\begin{equation}
\text{Storage per task} = m \times 4\,\text{bytes} = 16\,\text{KB at } m{=}4096.
\end{equation}
Optionally, the factorized components $(\hat{\mu}_a, \hat{\mu}_e)$ can be stored for $S_{\mathrm{cov}}(\Gamma_e)$ screening, yielding 48\,KB per task.

\paragraph{Compute-time footprint.}
During streaming, our implementation maintains six float64 accumulators per task for split-sample estimation (Appendix~\ref{app:estimation}); Algorithm~\ref{alg:fishersketch} shows the simplified single-embedding update. This requires:
\begin{equation}
\text{Streaming state per task} = 6 \times m \times 8\,\text{bytes} = 192\,\text{KB at } m{=}4096.
\end{equation}
This is transient during signature construction; once signatures are computed, retrieval only requires the float32 $\hat{\mu}_{ae}$ vectors (16\,KB per task). If component embeddings $(\hat{\mu}_a,\hat{\mu}_e)$ are also stored (as in FisherAtlas by default), persistent per-task storage is 48\,KB, and the FAISS index stores another copy of $\hat{\mu}_{ae}$ for fast search.

\paragraph{Comparison to Task2Vec.}
Task2Vec~\citep{achille2019task2vec} stores the diagonal of the Fisher information matrix, requiring $O(p)$ storage.
Here $p$ is the number of parameters. For a 70B-parameter model, this is approximately 280\,GB per task (float32).
FisherSketch compresses this to 16\,KB---a factor of ${\sim}10^7$ reduction---while preserving pairwise alignment structure via the product-kernel embedding.

\subsection{Retrieval Procedure}
\label{app:retrieval-procedure}

\paragraph{Prompt signature.}
Given a held-out prompt, we compute its signature by streaming over token positions using next-token prediction as the self-supervised objective.
Specifically, for a prompt with $T$ tokens, we extract $(a_t, e_t)$ pairs for $t = 1, \ldots, T-1$. Here $a_t$ is the hidden state before the head, and $e_t = \mathrm{softmax}(W_{\mathrm{head}} a_t) - \mathbf{1}_{y_{t+1}}$ is the prediction error.
The prompt signature is:
\begin{equation}
\hat{\mu}_{\mathrm{prompt}} = \frac{1}{T-1} \sum_{t=1}^{T-1} \Psi_m(a_t, e_t),
\end{equation}
where $\Psi_m$ is the factored Random Maclaurin feature map (Section~\ref{app:linear-time-fisher}).

\paragraph{Nearest-neighbor retrieval.}
Given an index of $K$ task signatures $\{\hat{\mu}_{ae,k}\}_{k=1}^K$, retrieval returns:
\begin{equation}
k^* = \arg\max_{k \in [K]} \cos(\hat{\mu}_{\mathrm{prompt}}, \hat{\mu}_{ae,k}) = \arg\max_{k \in [K]} \hat{\mu}_{\mathrm{prompt}}^\top \hat{\mu}_{ae,k}.
\end{equation}
Since all embeddings are $\ell_2$-normalized, retrieval reduces to a single matrix-vector product.

\subsection{Retrieval Evaluation}
\label{app:retrieval-eval}

We evaluate training-free task retrieval on 700 domains from Natural Instructions~\citep{wang2022supernaturalinstructions} using Llama-3.1-8B signatures at $m{=}4096$.

\paragraph{Protocol.}
For each domain, we compute a signature from 200 training samples.
Evaluation uses held-out prompts (10 per domain, 6974 total).
For each held-out prompt, we compute its signature and retrieve the nearest domain.

\paragraph{Results.}

\begin{table}[h]
\centering
\caption{Training-free task retrieval on 700 domains (Natural Instructions, Llama-3.1-8B).}
\label{tab:retrieval}
\begin{tabular}{lccc}
\toprule
Method & Top-1 Acc. & Top-3 Acc. & Training \\
\midrule
FisherSketch (joint) & 55.5\% & 80.4\% & None \\
Mean hidden state & 51.7\% & 78.6\% & None \\
Last token embedding & 54.1\% & 77.3\% & None \\
\midrule
Learned linear probe & 62.9\% & 86.0\% & 345s \\
Learned MLP & 58.9\% & 85.2\% & 73s \\
\bottomrule
\end{tabular}
\end{table}

FisherSketch achieves 55.5\% top-1 accuracy without any training, compared to 51.7\% for mean-hidden baselines.
Learned probes achieve higher accuracy (62.9\%) but require labeled training data and cannot support open-set addition.
A pairwise diagnostic over all 244{,}650 domain pairs shows substantial activation-dark structure: activation similarity and Fisher alignment have Spearman $0.525$, and one nearest-neighbor pair has $S_a=0.904$ but Fisher alignment $0.088$. Under the fixed high-activation/low-Fisher nearest-neighbor diagnostic, 121 of 700 domains have an activation-confusable but update-distinct neighbor, while only 11 show the reverse pattern.

\begin{figure}[H]
\centering
\includegraphics[width=0.9\textwidth]{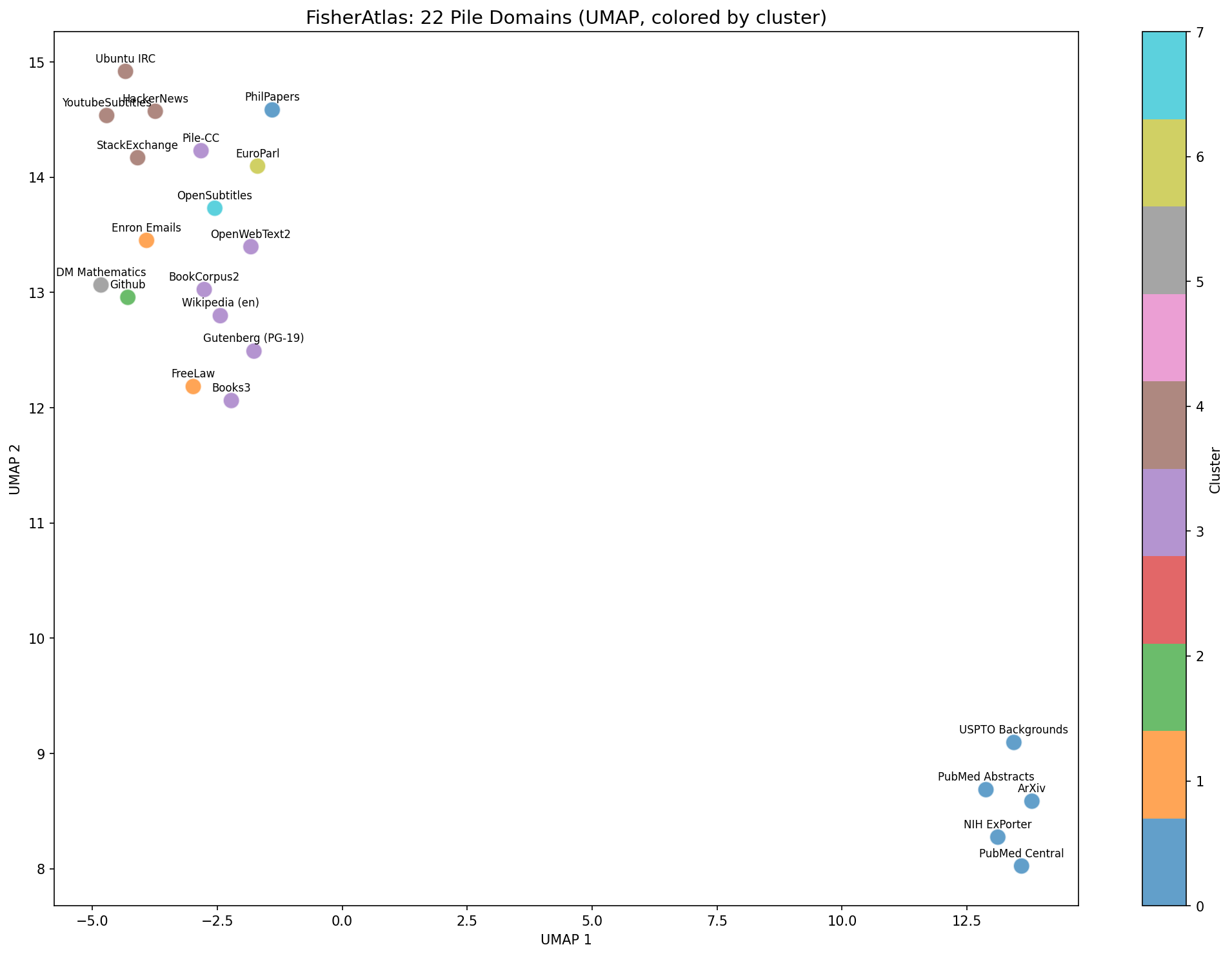}
\caption{FisherAtlas UMAP visualization of 22 Pile domains using FisherSketch task signatures (colored by cluster).}
\label{fig:umap-labeled}
\end{figure}

\subsection{Molecular SMILES Proof of Concept}
\label{app:smiles-proof}

We include a small scientific-sequence check because shared-vocabulary source selection is especially natural for molecular strings. Using Llama-3.1-8B, we form nine molecular SMILES domains (brominated, chlorinated, fluorinated, highly aromatic, large complex, multi-halogen, nitrogen-heavy, simple aliphatic, and sulfur-containing). FisherSketch signatures use $m=2048$ and 180 molecules per domain; transfer is measured by cross-domain perplexity reduction under a 20-example in-context adaptation protocol, averaged over seeds 42--44 with 50 evaluation molecules per target.

\begin{table}[H]
\centering
\small
\caption{Molecular SMILES proof of concept. Spearman/Mantel correlations compare pairwise similarity scores to cross-domain perplexity reduction over the 36 off-diagonal domain pairs.}
\label{tab:smiles-proof}
\begin{tabular}{lcc}
\toprule
Score & $\rho_s$ & $p$ \\
\midrule
FisherSketch joint score & 0.530 & 0.0064 \\
Activation-only score & 0.370 & 0.0807 \\
Error-only score & 0.516 & 0.0051 \\
\bottomrule
\end{tabular}
\end{table}

The point of this experiment is not to claim a complete chemistry benchmark. It checks the same activation-dark mechanism in a scientific string setting: activation similarity is high and not significant as a transfer predictor at this scale, while the joint FisherSketch score and the error marginal correlate with held-out cross-domain perplexity reduction.

\subsection{Dynamic Addition}
\label{app:dynamic-addition}

\paragraph{Protocol.}
To add a new domain to an existing index:
\begin{enumerate}
\item Collect 200 samples from the new domain.
\item Compute the domain signature via a single forward pass (4.7 seconds on an A100).
\item Append the signature to the index.
\end{enumerate}
No retraining is required---signatures are simply appended.

\paragraph{Evaluation.}
We evaluate on four held-out domains (code, medical, legal, math), each added to an existing 674-domain index.
Retrieval accuracy on 30 held-out prompts from each new domain:

\begin{table}[h]
\centering
\caption{Dynamic addition: retrieval accuracy on held-out prompts from newly added domains.}
\label{tab:dynamic-addition}
\begin{tabular}{lcc}
\toprule
New Domain & Held-out Acc. & Time to Add \\
\midrule
code & 100\% (30/30) & 4.7s \\
medical & 100\% (30/30) & 4.8s \\
legal & 100\% (30/30) & 4.6s \\
math & 100\% (30/30) & 4.7s \\
\bottomrule
\end{tabular}
\end{table}

This is 64$\times$ faster than retraining a learned classifier on the expanded domain set.

\subsection{Error Geometry Tracks Label Overlap}
\label{app:overlap-sweep}

The non-identifiability witness in the main text uses the zero-overlap endpoint for clarity. To check that the effect is not merely a corner case, we run a fixed-representation label-overlap sweep in which activation geometry is held constant while the shared-output label support varies.

\begin{table}[H]
\centering
\small
\caption{Label-overlap continuum. CKA/activation geometry remains fixed, while Fisher alignment falls smoothly as error support diverges. The relative gap is $(\mathrm{CKA}-A_F)/\mathrm{CKA}$.}
\label{tab:overlap-continuum}
\begin{tabular}{lcccc}
\toprule
Overlap & CKA & Fisher $A_F$ & $S_{\mathrm{cov}}(\Gamma_e)$ & Rel. gap \\
\midrule
100\% & 1.000 & 0.7881 & 0.9388 & 21.2\% \\
75\% & 1.000 & 0.3655 & 0.6473 & 63.4\% \\
50\% & 1.000 & 0.3473 & 0.5716 & 65.3\% \\
25\% & 1.000 & 0.2536 & 0.3625 & 74.6\% \\
0\% & 1.000 & 0.0556 & 0.0558 & 94.4\% \\
\bottomrule
\end{tabular}
\end{table}

The gap grows continuously as overlap decreases. Thus the theorem isolates a clean endpoint of a graded structural phenomenon: representation metrics see the unchanged activation geometry, whereas Fisher alignment also sees label-conditioned error geometry.

\section{Additional Validation Diagnostics}
\label{app:additional-validation-diagnostics}

\subsection{Rho/Delta Calibration for Proxy Reliability}
\label{app:rho-delta-calibration}
The coupling diagnostic should be used as a warning signal, not as a universal threshold. In the calibration below, lower pairwise $\rho$ and higher $\delta$ coincide with larger gaps between exact Fisher alignment and the separable proxy. High $\rho$ is not a guarantee, but low $\rho$ is a practical red flag that separable activation/error proxies may be unreliable.

\begin{table}[H]
\centering
\small
\caption{ViT-B/16 calibration of separable-proxy risk. Proxy gap is mean $|A_F-A_{\mathrm{proxy}}|$ over same-dataset pairs.}
\label{tab:rho-delta-calibration}
\begin{tabular}{lrrr}
\toprule
Dataset & Mean $\rho$ & Mean $\delta$ & Proxy gap \\
\midrule
CIFAR-100 & 0.262 & 0.978 & 0.265 \\
DTD & 0.492 & 0.879 & 0.139 \\
Flowers102 & 0.582 & 0.872 & 0.083 \\
Oxford Pets & 0.914 & 0.798 & 0.026 \\
\bottomrule
\end{tabular}
\end{table}

\subsection{Gradient and Diagonal-Fisher Baselines}
\label{app:gradient-baselines}
Table~\ref{tab:gradient-baselines} compares FisherSketch with first-moment and diagonal-Fisher baselines in the conditional 100-domain LLM regime. We separate the SRHT row used for the vocabulary-scale backend from a dense-projection check to avoid mixing estimator provenance. FisherSketch improves top-1 over these gradient baselines, while activation-only, error-only, and product proxies remain competitive on rank correlation; this is why the main text avoids unconditional dominance claims.

\begin{table}[H]
\centering
\small
\caption{Conditional-regime gradient baselines. Top-1/Top-3 are within the fixed candidate set; Spearman is mean target-wise Spearman. The SRHT row matches the vocabulary-scale backend; the dense row is a same-seed projection check and is reported separately because near ties can change Top-1.}
\label{tab:gradient-baselines}
\begin{tabular}{lrrr}
\toprule
Method & Top-1 & Top-3 & Mean Spearman \\
\midrule
FisherSketch (SRHT, EB-shrunk) & 50\% & 89\% & 0.437 \\
FisherSketch (dense, EB-shrunk check) & 53\% & 88\% & 0.468 \\
Mean-gradient cosine & 48\% & 88\% & 0.338 \\
Diagonal-Fisher cosine & 47\% & 88\% & 0.340 \\
Activation-only & 49\% & 87\% & 0.411 \\
Error-only & 47\% & 87\% & 0.492 \\
Product proxy & 46\% & 88\% & 0.491 \\
\bottomrule
\end{tabular}
\end{table}

\subsection{Sketch-Dimension Sweep}
\label{app:m-sweep-camera-ready}
Table~\ref{tab:m-sweep-camera-ready} gives empirical guidance for the 100-domain LLM regime. It should not be overinterpreted as a monotone top-1 law: larger sketches stabilize the estimator, but source-selection accuracy also depends on transfer noise, candidate composition, and near ties. The $m=4096$ sweep row comes from the SRHT sweep extraction, while Table~\ref{tab:gradient-baselines} reports the SRHT EB-shrunk row for the gradient-baseline comparison.

\begin{table}[H]
\centering
\small
\caption{Sketch-dimension sweep in the 100-domain LLM regime.}
\label{tab:m-sweep-camera-ready}
\begin{tabular}{rrrrrrr}
\toprule
$m$ & F Top-1 & F Top-3 & F Spear. & Proxy Top-1 & Proxy Top-3 & Proxy Spear. \\
\midrule
256 & 50\% & 85\% & 0.438 & 46\% & 91\% & 0.532 \\
512 & 52\% & 89\% & 0.476 & 41\% & 87\% & 0.464 \\
1024 & 45\% & 92\% & 0.414 & 51\% & 87\% & 0.479 \\
2048 & 47\% & 86\% & 0.467 & 49\% & 88\% & 0.485 \\
4096 & 49\% & 89\% & 0.437 & 47\% & 90\% & 0.487 \\
8192 & 51\% & 89\% & 0.479 & 52\% & 88\% & 0.487 \\
\bottomrule
\end{tabular}
\end{table}

\section{Heterogeneous Outputs and Internal-Layer Directional Fisher}
\label{app:heterogeneous-internal}
Head-level Fisher alignment requires a shared output basis. When label spaces or heads differ, head Fisher is undefined for cross-dataset pairs rather than merely weak. We therefore treat internal-layer Fisher descriptors as gradient diagnostics rather than replacements for representation metrics. Table~\ref{tab:heterogeneous-internal} reports per-target Spearman correlations from the heterogeneous-output ViT-B/16 validation. We use the target-normalized directional descriptors for the internal Fisher rows; raw symmetric block Fisher was near zero or negative in this experiment and should not be interpreted as the successful signal.

\begin{table}[H]
\centering
\small
\caption{Heterogeneous-output validation with per-target Spearman only. Internal directional Fisher descriptors are complementary diagnostics; CKA remains competitive in this validation.}
\label{tab:heterogeneous-internal}
\begin{tabular}{lcc}
\toprule
Metric & Linear probe & LoRA \\
\midrule
CKA & 0.671 & 0.664 \\
LogME & 0.357 & 0.243 \\
Directed internal Fisher, MLP6 & 0.221 & 0.286 \\
Directed internal Fisher, QKV/block & 0.214 & 0.321 \\
Directed block descriptor, 4 layers & 0.121 & 0.243 \\
Raw symmetric block Fisher & $-0.071$ & $-0.064$ \\
\bottomrule
\end{tabular}
\end{table}

\paragraph{One-step LoRA tangent bridge.}
For a layer $W\in\mathbb{R}^{d_{\rm out}\times d_{\rm in}}$ with loss gradient $G=\nabla_W\ell$, a LoRA parameterization writes the update as $BA$ with $B\in\mathbb{R}^{d_{\rm out}\times r}$ and $A\in\mathbb{R}^{r\times d_{\rm in}}$~\citep{hu2022lora}. At the common initialization used in many LoRA implementations ($B=0$, $A$ fixed), one infinitesimal gradient step gives $B^+=-\eta G A^\top$ and hence the induced weight update
\[
\Delta W = B^+A = -\eta G(A^\top A).
\]
Therefore, for a fixed LoRA tangent subspace, one-step update inner products are
\[
\langle \Delta W_i,\Delta W_j\rangle_F
= \eta^2\,\mathrm{tr}\!\left((A^\top A)G_i^\top G_j(A^\top A)\right),
\]
i.e., a gradient/Fisher alignment after a fixed low-rank projection. This is the intended bridge between internal Fisher descriptors and LoRA transfer. It does not characterize multi-step nonlinear LoRA training, and Table~\ref{tab:heterogeneous-internal} should be read as diagnostic evidence rather than a dominance claim over CKA.

The stored descriptor is still $m$ floats per task per layer after pooling, but internal shared-token collection has a larger transient state. At $n=500$ and $m=4096$, the transient sketch-feature state is about 8\,MiB per task per layer in fp32; in this run, four internal layers across eight models used about 250\,MiB of transient feature state and peaked at 4.19\,GiB GPU memory during Fisher collection. For the QKV validation, pooled sketch descriptors store about $432\times$ fewer values than saving all per-example QKV gradients.

\section{Verbalizer-Shift Experiment Details}
\label{app:verbalizer-shift}

This section provides full experimental details for the verbalizer-shift controlled experiment (Section~6.2 of the main text).
This experiment demonstrates that FisherSketch predicts transfer in a regime where representation-only metrics are maximally uninformative.

\subsection{Experimental Design}

\paragraph{Core insight.}
For prompt-based binary classification with a shared prefix, the hidden state at the answer position depends only on the prefix tokens due to causal attention.
Therefore, changing only the \emph{label verbalizer tokens} (e.g., from \texttt{Yes/No} to \texttt{True/False}) yields:
\begin{itemize}
\item \textbf{Identical activations}: $M_a$ is the same across all verbalizers $\Rightarrow$ $S_{\mathrm{cov}}(M_a) \approx 1$ for all pairs.
\item \textbf{Different errors}: The error vector $e = \mathrm{softmax}(\ell) - \mathrm{onehot}(y)$ changes because the target token ID differs $\Rightarrow$ $S_{\mathrm{cov}}(\Gamma_e)$ varies across pairs.
\end{itemize}

This isolates error geometry as the only source of variation, providing a clean test of whether FisherSketch can predict transfer when representation-only metrics are degenerate.

\paragraph{Connection to Theorem~\ref{thm:rep-metric-limitation}.}
This provides a dense-vocabulary instantiation of the non-identifiability result.
Even when representations are identical ($Z_i = Z_j$), head Fisher alignment is not identified because error geometry can differ.
Unlike the masked-softmax witness construction, this uses the \emph{full} vocabulary softmax ($K{=}128{,}256$) with standard cross-entropy loss.

\subsection{Verbalizer Configuration}

We use six single-token verbalizers on Llama-3.1-8B (vocabulary size 128,256).
All tokens were verified to be single tokens using the Llama-3.1 tokenizer.

\begin{table}[h]
\centering
\caption{Verbalizer tokens and IDs (Llama-3.1-8B tokenizer). All tokens include a leading space and are single tokens.}
\label{tab:verbalizer-tokens}
\begin{tabular}{lcccc}
\toprule
Name & True Token & Token ID & False Token & Token ID \\
\midrule
\texttt{yes\_no} & \texttt{ Yes} & 7566 & \texttt{ No} & 2360 \\
\texttt{true\_false} & \texttt{ True} & 3082 & \texttt{ False} & 3641 \\
\texttt{correct\_wrong} & \texttt{ Correct} & 41070 & \texttt{ Wrong} & 41856 \\
\texttt{right\_wrong} & \texttt{ Right} & 10291 & \texttt{ Wrong} & 41856 \\
\texttt{positive\_negative} & \texttt{ Positive} & 45003 & \texttt{ Negative} & 51957 \\
\texttt{no\_yes} (flipped) & \texttt{ No} & 2360 & \texttt{ Yes} & 7566 \\
\bottomrule
\end{tabular}
\end{table}

\paragraph{Flipped verbalizer (\texttt{no\_yes}).}
The \texttt{no\_yes} verbalizer inverts the label mapping: ``No'' indicates true, ``Yes'' indicates false.
This tests whether FisherSketch captures output-space structure beyond semantic intuition.

\subsection{Datasets and Prompt Template}

\paragraph{Datasets.}
We evaluate on three binary classification datasets:
\begin{itemize}
\item \textbf{BoolQ}~\citep{clark2019boolq}: Yes/no reading comprehension (9,427 train examples).
\item \textbf{RTE}~\citep{wang2019superglue}: Textual entailment (2,490 train examples).
\item \textbf{CB}~\citep{wang2019superglue}: Commitment bank, binarized to entailment vs.\ not (250 train examples).
\end{itemize}

\paragraph{Prompt template.}
All datasets use the same template (prefix only, no label token in input):
\begin{verbatim}
Passage: {passage}

Question: {question}

The answer is
\end{verbatim}

For RTE/CB, we map hypothesis/premise to the passage/question fields.
The model predicts the next token (the verbalizer) given this prefix.

\paragraph{Data splits.}
\begin{itemize}
\item \textbf{Sketch}: 500 samples for FisherSketch computation (seed 42/43/44).
\item \textbf{Train}: 200 samples for LoRA fine-tuning.
\item \textbf{Eval}: 100 samples for transfer evaluation (50 early-stopping, 50 final eval).
\end{itemize}

For CB (250 total examples), splits are scaled proportionally: sketch = 118, train = 94, eval = 38.
Sketch and LoRA splits are disjoint.

\subsection{FisherSketch Configuration}

\begin{itemize}
\item \textbf{Sketch dimension}: $m = 4096$
\item \textbf{Sketch samples}: $n = 500$ per verbalizer (scaled for CB)
\item \textbf{Error projection}: SRHT on the full $K=128{,}256$ vocabulary error side; dense Rademacher/Gaussian projections are used only in controlled small-dimensional estimator comparisons
\item \textbf{Activation extraction}: Last layer hidden state at answer position
\item \textbf{Error computation}: $e = \mathrm{softmax}(\ell) - \mathrm{onehot}(y)$ over full 128K vocabulary
\end{itemize}

\subsection{LoRA Fine-Tuning Configuration}

\begin{itemize}
\item \textbf{LoRA rank}: $r = 8$
\item \textbf{LoRA alpha}: $\alpha = 16$
\item \textbf{Target modules}: \texttt{q\_proj}, \texttt{k\_proj}, \texttt{v\_proj}, \texttt{o\_proj}
\item \textbf{Training steps}: 100
\item \textbf{Learning rate}: $5 \times 10^{-5}$
\item \textbf{Batch size}: 2 (gradient accumulation 4, effective batch 8)
\item \textbf{Loss}: Full-vocabulary cross-entropy on the single target token (not restricted to the two verbalizer tokens)
\item \textbf{Early stopping}: Patience 5 on held-out 50 samples
\end{itemize}

\paragraph{Critical implementation detail.}
The loss is computed over the \emph{full} vocabulary:
\[
\mathcal{L} = -\log p(y_{\mathrm{verbalizer}} \mid \mathrm{prefix})
\]
where $p$ is the full softmax over 128K tokens.
We do \emph{not} renormalize over only the two verbalizer tokens---this avoids reintroducing masked-softmax artifacts.

\subsection{Transfer Metric}

We use the same normalized transfer metric as the main text (Table~1):
\[
U_{s \to t} = \frac{\ell_t^{\mathrm{base}} - \ell_{s \to t}}{\max(\ell_t^{\mathrm{base}} - \ell_t^{\mathrm{self}}, 0.01)}
\]
where:
\begin{itemize}
\item $\ell_t^{\mathrm{base}}$: Base model loss on target verbalizer (no fine-tuning)
\item $\ell_t^{\mathrm{self}}$: Loss after fine-tuning on target verbalizer itself
\item $\ell_{s \to t}$: Loss on target after fine-tuning on source verbalizer
\end{itemize}

\paragraph{Source selection metrics.}
For each target verbalizer $t$, we select the best source from the other 5 verbalizers:
\begin{itemize}
\item \textbf{Top-1 accuracy}: Does the predicted source match the oracle (argmax transfer)?
\item \textbf{\% Oracle}: Mean $U_{\mathrm{predicted}} / U_{\mathrm{oracle}}$ across targets.
\item \textbf{Max regret}: $\max_t (U_{\mathrm{oracle},t} - U_{\mathrm{predicted},t})$.
\end{itemize}

\subsection{Per-Run Results}

\begin{table}[h]
\centering
\caption{Verbalizer-shift results: per-run breakdown (3 seeds $\times$ 3 datasets = 9 runs).}
\label{tab:verbalizer-per-run}
\begin{tabular}{llcccc}
\toprule
Seed & Dataset & Top-1 & \% Oracle & Max Regret & $S_a$ Mean \\
\midrule
42 & BoolQ & 83.3\% & 100\% & 0.560 & 1.0018 \\
42 & RTE & 66.7\% & 100\% & 1.169 & 1.0006 \\
42 & CB & 66.7\% & 100\% & 0.264 & 1.0018 \\
43 & BoolQ & 50.0\% & 61\% & 0.732 & 1.0016 \\
43 & RTE & 66.7\% & 100\% & 0.667 & 1.0007 \\
43 & CB & 83.3\% & 100\% & 0.795 & 1.0016 \\
44 & BoolQ & 66.7\% & 100\% & 1.008 & 1.0015 \\
44 & RTE & 66.7\% & 100\% & 0.770 & 1.0004 \\
44 & CB & 50.0\% & 100\% & 2.607 & 1.0018 \\
\midrule
\multicolumn{2}{l}{\textbf{Aggregate}} & \textbf{66.7\%} & \textbf{95.7\%} & 0.952 & 1.0013 \\
\multicolumn{2}{l}{Random baseline} & 20\% & -- & -- & -- \\
\bottomrule
\end{tabular}
\end{table}

\paragraph{Key observations.}
\begin{enumerate}
\item \textbf{Activation invariance confirmed}: $S_{\mathrm{cov}}(M_a)$ mean is 1.0013 $\pm$ 0.0006 across all runs, confirming that activations are identical.
\item \textbf{Error divergence confirmed}: $S_{\mathrm{cov}}(\Gamma_e)$ ranges from near-zero to 0.99 across verbalizer pairs.
\item \textbf{FisherSketch predicts transfer}: 66.7\% top-1 (3.3$\times$ random), 95.7\% of oracle.
\end{enumerate}

\subsection{Flipped Verbalizer Analysis}

A surprising finding: the flipped verbalizer (\texttt{no\_yes} $\to$ \texttt{yes\_no}) shows \emph{positive} transfer (U $\approx$ 0.7), not negative transfer as naive semantic intuition would predict.

\paragraph{Explanation.}
The base Llama-3.1-8B does not concentrate probability on \texttt{Yes}/\texttt{No} tokens---it spreads mass across the full vocabulary.
Fine-tuning on \texttt{no\_yes} teaches the model that ``the answer is one of \{\texttt{Yes}, \texttt{No}\}''---this \emph{output space learning} dominates over the semantic mapping.

After fine-tuning on \texttt{no\_yes}:
\begin{itemize}
\item $p(\texttt{Yes} \mid \text{true}) \approx 0.30$ (up from $\approx 0.05$ in base model)
\item $p(\texttt{No} \mid \text{true}) \approx 0.60$
\end{itemize}

Even though the mapping is flipped, $p(\text{correct})$ increased from 5\% to 30\%, so the loss decreases.

\paragraph{FisherSketch correctly predicts this.}
$S_{\mathrm{cov}}(\Gamma_e)$ between \texttt{yes\_no} and \texttt{no\_yes} is $\approx 0.9$ (they share the same output tokens), so FisherSketch predicts high transfer---matching the empirical result.

This demonstrates that FisherSketch captures \emph{what fine-tuning actually learns} (output-space structure), not what we semantically assume it learns.

\subsection{Error Similarity Matrix}

Table~\ref{tab:verbalizer-s-e} shows $S_{\mathrm{cov}}(\Gamma_e)$ for seed 42, BoolQ.
The matrix exhibits interpretable structure:
\begin{itemize}
\item Same-token pairs (\texttt{yes\_no} $\leftrightarrow$ \texttt{no\_yes}) have high similarity ($\approx 0.9$).
\item Semantically related pairs (\texttt{correct\_wrong} $\leftrightarrow$ \texttt{right\_wrong}) have moderate similarity ($\approx 0.3$).
\item Unrelated pairs have near-zero similarity.
\end{itemize}

\begin{table}[h]
\centering
\caption{$S_{\mathrm{cov}}(\Gamma_e)$ matrix (seed 42, BoolQ). Y/N = yes\_no, T/F = true\_false, C/W = correct\_wrong, R/W = right\_wrong, P/N = positive\_negative, N/Y = no\_yes.}
\label{tab:verbalizer-s-e}
\begin{tabular}{lcccccc}
\toprule
& Y/N & T/F & C/W & R/W & P/N & N/Y \\
\midrule
Y/N & 1.00 & 0.00 & 0.00 & 0.01 & 0.01 & \textbf{0.91} \\
T/F & 0.00 & 1.00 & 0.00 & 0.02 & 0.01 & 0.00 \\
C/W & 0.00 & 0.00 & 1.00 & \textbf{0.30} & 0.01 & 0.00 \\
R/W & 0.01 & 0.02 & \textbf{0.30} & 1.00 & 0.00 & 0.00 \\
P/N & 0.01 & 0.01 & 0.01 & 0.00 & 1.00 & 0.01 \\
N/Y & \textbf{0.91} & 0.00 & 0.00 & 0.00 & 0.01 & 1.00 \\
\bottomrule
\end{tabular}
\end{table}

\subsection{Compute and Reproducibility}

\paragraph{Hardware.}
The experiments reported in Appendix~\ref{app:verbalizer-shift} were run on a single NVIDIA A100 (40GB).

\paragraph{Timing.}
\begin{itemize}
\item FisherSketch computation (Phase 1): $\approx$30 minutes per seed (3 datasets $\times$ 6 verbalizers)
\item LoRA fine-tuning + transfer evaluation (Phase 2): $\approx$2 hours per seed
\item Total: $\approx$7.5 hours for all 3 seeds
\end{itemize}

\paragraph{Reproducibility.}
Sketch seeds (42, 43, 44) and LoRA seeds are fixed and released.
All indices (sketch/train/eval splits) are deterministic given the seed.

\end{document}